\pgfplotsset{compat=1.18}
\begin{document}

\title{The Statistical Fairness-Accuracy Frontier}
\author{Alireza Fallah\thanks{Department of Computer Science and Ken Kennedy Institute, Rice University} \and Michael I. Jordan\thanks{Departments of Electrical Engineering and Computer Sciences and Statistics, University of California, Berkeley; Inria Paris} \and Annie Ulichney\thanks{Department of Statistics, University of California, Berkeley}}

\date{November 2025}

\maketitle

\sloppy

\begin{abstract}
We study fairness–accuracy tradeoffs when a single predictive model must serve multiple demographic groups. A useful tool for understanding this tradeoff is the fairness–accuracy (FA) Pareto frontier, which characterizes the set of models that cannot be improved in either fairness or accuracy without worsening the other. While characterizing the FA frontier requires full knowledge of the data distribution, we focus on the finite-sample regime, quantifying how well a designer can approximate any point on the frontier from limited data and bounding the worst-case gap. In particular, we derive worst-case-optimal estimators that depend on the designer’s knowledge of the covariate distribution. For each estimator, we characterize how finite-sample effects asymmetrically impact each group’s welfare and identify optimal sample allocation strategies. Finally, we provide uniform finite-sample bounds for the entire FA frontier, yielding confidence bands that quantify the reliability of welfare comparisons across alternative fairness–accuracy tradeoffs.    
\end{abstract}

\section{Introduction}
Across many domains where prediction guides high-stakes decisions---from lending and hiring to healthcare and education---policymakers increasingly rely on algorithms that must serve heterogeneous populations. A central, practically relevant challenge arises when a \emph{single predictive model} must serve multiple groups, whether due to legal, logistical, or normative constraints. For instance, in the United States, the Equal Credit Opportunity Act (ECOA) prohibits lending decisions based on protected attributes such as race or sex and is commonly interpreted to disallow the use of different predictive models for different groups. Similarly, in hiring, automated screening systems typically apply a single predictive model to all applicants, since Title VII of the Civil Rights Act imposes legal constraints that, along with operational challenges, discourage the use of group-specific models.\footnote{See \citet{raghavan2020mitigating} for a discussion of U.S. employment discrimination law and the guidelines on ``disparate treatment".}

In such cases, when a single predictor is deployed for multiple demographic groups, two objectives collide. On the one hand, an efficiency-minded social planner wants to minimize prediction error, since errors translate into misallocated loans, mistargeted treatments, or inefficient school placements \citep{kleinberg2015prediction, kleinberg2018human}. On the other hand, an equity-minded planner cares about how those errors are distributed across groups: large systematic gaps in error rates are perceived as unfair and may violate anti-discrimination law \citep{bartlett2022consumer, fuster2022predictably, dastin2022amazon}. These competing forces give rise to a \emph{fairness-accuracy} tradeoff, a special kind of \emph{equity--efficiency} tradeoff, in which the model's overall predictive performance is in tension with its allocation of errors across groups.\footnote{See \cite{liang2021algorithm} for further discussion relating fairness--accuracy tradeoffs and equity--efficiency tradeoffs.} The resulting \emph{fairness--accuracy (FA) frontier} summarizes the tradeoff: for each possible predictive model, we can compute losses across groups, and the frontier collects those models for which no other model is strictly better in both accuracy and fairness. This object---studied in detail by \citet{liang2021algorithm}---
has a natural ordinal interpretation: it is the set of models that are not dominated by any other model that simultaneously delivers (weakly) lower errors for each group and (weakly) lower disparity in error across groups, with at least one strict improvement. This ordering is compatible with a broad range of planners’ preferences, from a utilitarian perspective that minimizes a (possibly weighted) average of the two groups’ errors to a more egalitarian perspective that places particular weight on reducing the absolute difference between them.  

Existing characterizations of the FA frontier, including \citep{liang2021algorithm}, take as a primitive a planner who knows the joint population distribution over outcomes, covariates, and group identity. In that idealized benchmark, the frontier is a purely technological object: given a fully known data-generating process and a feasible set of prediction rules, one can trace out the attainable combinations of group-wise errors and make a clean, ex ante welfare comparison across policies. But a policymaker in practice never observes the population distribution. They have access to only a finite number of samples---often imbalanced across groups---and must choose how to construct an empirical estimator to make predictions from those data. As a result, the realized policy is not the population-optimal point on the FA frontier, but an empirical approximation. This raises a set of questions that are invisible in the full-information benchmark: How large is the \textit{welfare loss} from replacing an oracle-optimal predictor with an empirically estimated one? How should a planner \textit{allocate limited sampling resources} across groups to minimize that loss? And how does sampling error distort the \textit{realized distribution of errors across groups}, relative to the planner’s stated fairness preferences?

We address these questions in a stylized yet rich linear regression environment with two demographic groups, ``red” and ``blue.” Outcomes are generated by group-specific linear models with homoskedastic noise, but the covariate distributions and regression coefficients differ across groups. However, a single linear predictor must be used for everyone, reflecting the aforementioned institutional constraints that prohibit group-specific models in many settings. For any candidate predictor, we define the error or \textit{risk} for each group as the expected squared loss, and take fairness to be the absolute difference in group risks (known as the \textit{equalized loss} in the fairness literature; e.g., see \citep{zhang2019group, khalili2023loss}). As illustrated in \Cref{fig:Frontier_Intro_0}, the FA frontier can be visualized in a two-dimensional plot. The risks for the red and blue groups are represented on the two axes, while the population FA frontier traces the lower boundary of the feasible set of group risk pairs. For homoskedastic linear regression, the FA frontier is  \textit{group-balanced} in the sense of \cite{liang2021algorithm}: the most fair point lies between the two group-optimal endpoints. Hence, as displayed in \Cref{fig:Frontier_Intro_0}, the two endpoints of the FA frontier lie at the group-$r$ optimum $\beta_r$ and the group-$b$ optimum $\beta_b$.  

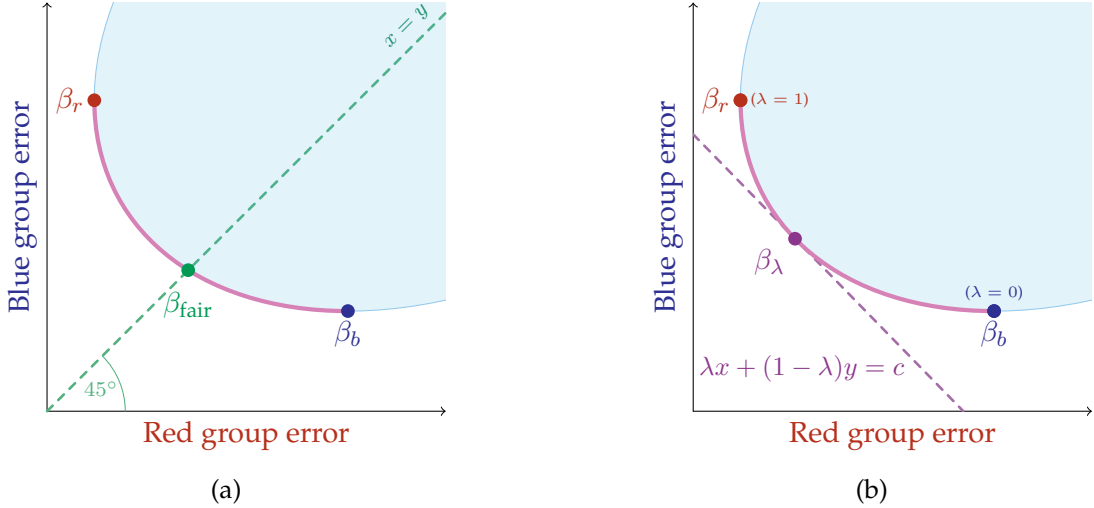
\begin{figure}[t]
\centering
\begin{subfigure}{.48\textwidth}
  \centering
  \begin{tikzpicture}[scale=13, line cap=round, line join=round]

\pgfmathsetmacro{\cx}{0.43}
\pgfmathsetmacro{\cy}{0.42}
\pgfmathsetmacro{\a}{0.41}
\pgfmathsetmacro{\b}{0.28}
\pgfmathsetmacro{\theta}{30}
\pgfmathsetmacro{\ct}{cos(\theta)}
\pgfmathsetmacro{\st}{sin(\theta)}
\pgfmathsetmacro{\pointSize}{0.007}
\pgfmathsetmacro{\lineWidth}{0.35mm}

\pgfmathsetmacro{\tYraw}{atan2(\b*\ct,\a*\st)}
\pgfmathsetmacro{\tY}{\tYraw + 180}
\pgfmathsetmacro{\betaBx}{\cx + \a*cos(\tY)*\ct - \b*sin(\tY)*\st}
\pgfmathsetmacro{\betaBy}{\cy + \a*cos(\tY)*\st + \b*sin(\tY)*\ct}

\pgfmathsetmacro{\tXraw}{atan2(-\b*\st,\a*\ct)}
\pgfmathsetmacro{\tX}{\tXraw + 180}
\pgfmathsetmacro{\betaRx}{\cx + \a*cos(\tX)*\ct - \b*sin(\tX)*\st}
\pgfmathsetmacro{\betaRy}{\cy + \a*cos(\tX)*\st + \b*sin(\tX)*\ct}

\pgfmathsetmacro{\posOnFrontier}{0.44}

\pgfmathsetmacro{\tStart}{\tX}
\pgfmathsetmacro{\tEnd}{\tY}
\pgfmathsetmacro{\tEndAdjusted}{\tY < \tX ? \tY + 360 : \tY}

\pgfmathsetmacro{\tLambda}{\tStart + \posOnFrontier*(\tEndAdjusted - \tStart)}
\pgfmathsetmacro{\betaLambdax}{\cx + \a*cos(\tLambda)*\ct - \b*sin(\tLambda)*\st}
\pgfmathsetmacro{\betaLambday}{\cy + \a*cos(\tLambda)*\st + \b*sin(\tLambda)*\ct}

\pgfmathsetmacro{\dxdt}{-\a*sin(\tLambda)*\ct - \b*cos(\tLambda)*\st}
\pgfmathsetmacro{\dydt}{-\a*sin(\tLambda)*\st + \b*cos(\tLambda)*\ct}
\pgfmathsetmacro{\mtan}{\dydt/\dxdt}
\pgfmathsetmacro{\xinter}{\betaLambdax - \betaLambday/\mtan} 
\pgfmathsetmacro{\yinter}{\betaLambday - \mtan*\betaLambdax} 

\pgfmathsetmacro{\Acoef}{\a*(\ct - \st)}
\pgfmathsetmacro{\Bcoef}{\b*(-\st - \ct)}
\pgfmathsetmacro{\Delta}{\cy - \cx}
\pgfmathsetmacro{\Rcoef}{sqrt(\Acoef*\Acoef + \Bcoef*\Bcoef)}
\pgfmathsetmacro{\phi}{atan2(\Bcoef,\Acoef)}
\pgfmathsetmacro{\ratio}{min(1,max(-1,\Delta/\Rcoef))}
\pgfmathsetmacro{\alpha}{acos(\ratio)}
\pgfmathsetmacro{\tFairA}{\phi + \alpha}
\pgfmathsetmacro{\tFairB}{\phi - \alpha}
\pgfmathsetmacro{\xA}{\cx + \a*cos(\tFairA)*\ct - \b*sin(\tFairA)*\st}
\pgfmathsetmacro{\yA}{\cy + \a*cos(\tFairA)*\st + \b*sin(\tFairA)*\ct}
\pgfmathsetmacro{\xB}{\cx + \a*cos(\tFairB)*\ct - \b*sin(\tFairB)*\st}
\pgfmathsetmacro{\yB}{\cy + \a*cos(\tFairB)*\st + \b*sin(\tFairB)*\ct}
\pgfmathsetmacro{\xfair}{(\xA<=\cx && \yA<=\cy) ? \xA : \xB}
\pgfmathsetmacro{\yfair}{(\xA<=\cx && \yA<=\cy) ? \yA : \yB}

\pgfmathsetmacro{\xmax}{max(\betaBx,\betaRx) + 0.10}
\pgfmathsetmacro{\ymax}{max(\betaBy,\betaRy) + 0.10}
\pgfmathsetmacro{\diag}{min(\xmax,\ymax)}

\draw[->] (0,0) -- (\xmax,0);
\draw[->] (0,0) -- (0,\ymax);

\node[anchor=north, BrickRed] at ({\xmax/2},0) {Red group error};
\node[rotate=90, anchor=south, Blue] at (0,{\ymax/2}) {Blue group error};

\draw[dashed, ForestGreen!70, line width=\lineWidth] (0,0) -- (\diag,\diag);

\draw[ForestGreen!70] (0.08,0) arc[start angle=0, end angle=45, radius=0.08];

\pgfmathsetmacro{\angr}{0.06}
\node[ForestGreen!70] at ({\angr*cos(22.5)},{\angr*sin(22.5)}) {\scriptsize $45^\circ$};

\node[ForestGreen!80!black, rotate=45, anchor=south east]
  at (\diag,\diag) {\scriptsize $x = y$};

\begin{scope}
  \clip (0,0) rectangle (\xmax,\ymax);

  \begin{scope}[shift={(\cx,\cy)}, rotate=\theta]
    \fill[CornflowerBlue, opacity=0.15] (0,0) ellipse [x radius=\a, y radius=\b];
    \draw[CornflowerBlue!60] (0,0) ellipse [x radius=\a, y radius=\b];
  \end{scope}


  \begin{scope}[shift={(\cx,\cy)}, rotate=\theta]
    \draw[ultra thick, Thistle]
      ({\a*cos(\tStart)},{\b*sin(\tStart)})
        arc[start angle=\tStart, end angle=\tEndAdjusted, x radius=\a, y radius=\b];
  \end{scope}
\end{scope}

\fill[Blue]     (\betaBx,\betaBy) circle[radius=\pointSize] node[below, Blue] {$\beta_b$};
\fill[BrickRed] (\betaRx,\betaRy) circle[radius=\pointSize] node[left,  BrickRed] {$\beta_r$};

\fill[ForestGreen] (\xfair,\yfair) circle[radius=\pointSize];
\node[ForestGreen] at ({\xfair},{\yfair-0.035}) {$\beta_{\text{fair}}$};


\end{tikzpicture}
  \caption{}
  \label{fig:Frontier_Intro_0}
\end{subfigure}
\hfill
\begin{subfigure}{.48\textwidth}
  \centering
  \begin{tikzpicture}[scale=13, line cap=round, line join=round]

\pgfmathsetmacro{\cx}{0.43}
\pgfmathsetmacro{\cy}{0.42}
\pgfmathsetmacro{\a}{0.41}
\pgfmathsetmacro{\b}{0.28}
\pgfmathsetmacro{\theta}{30}
\pgfmathsetmacro{\ct}{cos(\theta)}
\pgfmathsetmacro{\st}{sin(\theta)}
\pgfmathsetmacro{\pointSize}{0.007}
\pgfmathsetmacro{\lineWidth}{0.35mm}

\pgfmathsetmacro{\tYraw}{atan2(\b*\ct,\a*\st)}
\pgfmathsetmacro{\tY}{\tYraw + 180}
\pgfmathsetmacro{\betaBx}{\cx + \a*cos(\tY)*\ct - \b*sin(\tY)*\st}
\pgfmathsetmacro{\betaBy}{\cy + \a*cos(\tY)*\st + \b*sin(\tY)*\ct}

\pgfmathsetmacro{\tXraw}{atan2(-\b*\st,\a*\ct)}
\pgfmathsetmacro{\tX}{\tXraw + 180}
\pgfmathsetmacro{\betaRx}{\cx + \a*cos(\tX)*\ct - \b*sin(\tX)*\st}
\pgfmathsetmacro{\betaRy}{\cy + \a*cos(\tX)*\st + \b*sin(\tX)*\ct}

\pgfmathsetmacro{\posOnFrontier}{0.44}

\pgfmathsetmacro{\tStart}{\tX}
\pgfmathsetmacro{\tEnd}{\tY}
\pgfmathsetmacro{\tEndAdjusted}{\tY < \tX ? \tY + 360 : \tY}

\pgfmathsetmacro{\tLambda}{\tStart + \posOnFrontier*(\tEndAdjusted - \tStart)}
\pgfmathsetmacro{\betaLambdax}{\cx + \a*cos(\tLambda)*\ct - \b*sin(\tLambda)*\st}
\pgfmathsetmacro{\betaLambday}{\cy + \a*cos(\tLambda)*\st + \b*sin(\tLambda)*\ct}

\pgfmathsetmacro{\dxdt}{-\a*sin(\tLambda)*\ct - \b*cos(\tLambda)*\st}
\pgfmathsetmacro{\dydt}{-\a*sin(\tLambda)*\st + \b*cos(\tLambda)*\ct}
\pgfmathsetmacro{\mtan}{\dydt/\dxdt}
\pgfmathsetmacro{\xinter}{\betaLambdax - \betaLambday/\mtan} 
\pgfmathsetmacro{\yinter}{\betaLambday - \mtan*\betaLambdax} 

\pgfmathsetmacro{\Acoef}{\a*(\ct - \st)}
\pgfmathsetmacro{\Bcoef}{\b*(-\st - \ct)}
\pgfmathsetmacro{\Delta}{\cy - \cx}
\pgfmathsetmacro{\Rcoef}{sqrt(\Acoef*\Acoef + \Bcoef*\Bcoef)}
\pgfmathsetmacro{\phi}{atan2(\Bcoef,\Acoef)}
\pgfmathsetmacro{\ratio}{min(1,max(-1,\Delta/\Rcoef))}
\pgfmathsetmacro{\alpha}{acos(\ratio)}
\pgfmathsetmacro{\tFairA}{\phi + \alpha}
\pgfmathsetmacro{\tFairB}{\phi - \alpha}
\pgfmathsetmacro{\xA}{\cx + \a*cos(\tFairA)*\ct - \b*sin(\tFairA)*\st}
\pgfmathsetmacro{\yA}{\cy + \a*cos(\tFairA)*\st + \b*sin(\tFairA)*\ct}
\pgfmathsetmacro{\xB}{\cx + \a*cos(\tFairB)*\ct - \b*sin(\tFairB)*\st}
\pgfmathsetmacro{\yB}{\cy + \a*cos(\tFairB)*\st + \b*sin(\tFairB)*\ct}
\pgfmathsetmacro{\xfair}{(\xA<=\cx && \yA<=\cy) ? \xA : \xB}
\pgfmathsetmacro{\yfair}{(\xA<=\cx && \yA<=\cy) ? \yA : \yB}

\pgfmathsetmacro{\xmax}{max(\betaBx,\betaRx) + 0.10}
\pgfmathsetmacro{\ymax}{max(\betaBy,\betaRy) + 0.10}
\pgfmathsetmacro{\diag}{min(\xmax,\ymax)}

\draw[->] (0,0) -- (\xmax,0);
\draw[->] (0,0) -- (0,\ymax);

\node[anchor=north, BrickRed] at ({\xmax/2},0) {Red group error};
\node[rotate=90, anchor=south, Blue] at (0,{\ymax/2}) {Blue group error};





\begin{scope}
  \clip (0,0) rectangle (\xmax,\ymax);

  \begin{scope}[shift={(\cx,\cy)}, rotate=\theta]
    \fill[CornflowerBlue, opacity=0.15] (0,0) ellipse [x radius=\a, y radius=\b];
    \draw[CornflowerBlue!60] (0,0) ellipse [x radius=\a, y radius=\b];
  \end{scope}

  \draw[dashed, Fuchsia!70, line width=\lineWidth] (0,\yinter) -- (\xinter,0);

  \begin{scope}[shift={(\cx,\cy)}, rotate=\theta]
    \draw[ultra thick, Thistle]
      ({\a*cos(\tStart)},{\b*sin(\tStart)})
        arc[start angle=\tStart, end angle=\tEndAdjusted, x radius=\a, y radius=\b];
  \end{scope}
\end{scope}

\fill[Blue]     (\betaBx,\betaBy) circle[radius=\pointSize] node[below, Blue] {$\beta_b$};
\fill[Blue]     (\betaBx,\betaBy) circle[radius=\pointSize] node[above, Blue] {\tiny($\lambda = 0$)};
\fill[BrickRed] (\betaRx,\betaRy) circle[radius=\pointSize] node[left,  BrickRed] {$\beta_r$};
\fill[BrickRed] (\betaRx,\betaRy) circle[radius=\pointSize] node[right, BrickRed] {\tiny($\lambda = 1$)};
\fill[Fuchsia]  (\betaLambdax,\betaLambday) circle[radius=\pointSize] node[below left, Fuchsia] {$\beta_\lambda$};


\node[text=Fuchsia, anchor=south east]
  at ({\xinter-0.05}, {0.02}) {\footnotesize{$\lambda x + (1 - \lambda) y = c$}};

\end{tikzpicture}
  \caption{}
  \label{figs:Frontier_Intro_1}
\end{subfigure}
\caption{(a) Population FA frontier: The blue shaded area represents the set of risk pairs that are achievable by some linear model, and $\beta_r$ and $\beta_b$ are the error-minimizing models for groups $r$ and $b$, respectively. $\beta_\text{fair}$ is the error-equalizing model, which falls between $\beta_r$ and $\beta_b$ in the group-balanced case. The FA frontier corresponds to the purple region along the boundary of the feasible set of error pairs. (b) For any $\lambda \in [0,1]$, $\beta_\lambda$ is the first point of tangency between the line $\lambda x + (1-\lambda) y=c$ and the FA frontier as we increase $c>0$. This point  moves along the frontier from $\beta_r$ to $\beta_b$ as $\lambda$ ranges from $1 \to 0$. }
\label{fig:FA_Intro_1}
\end{figure}


We parametrize this frontier by a weight $\lambda \in [0,1]$ on the red group’s risk in a utilitarian social objective: for each $\lambda$, the policy $\beta_\lambda$ minimizes a weighted average of the two group risks and therefore represents the planner’s \textit{target} point on the fairness–accuracy frontier. Consequently, $\beta_\lambda$ is the first point of tangency between the frontier and a line of the form $\lambda x + (1 - \lambda) y = c$ as $c > 0$ increases, as illustrated in \Cref{figs:Frontier_Intro_1}. By varying $\lambda$ continuously from 0 to 1, we trace out the entire FA frontier.

From a welfare perspective, the vector of group risks is a pair of per-capita losses, and the planner’s choice of $\lambda$ encodes the relative social marginal cost of errors across groups, in the spirit of distribution-sensitive welfare weights used in public economics \citep[see, e.g.,][]{saez2016generalized}. To better highlight this, suppose 90\% of the population belongs to the red group. A natural choice would then be to set $\lambda = 0.9$. However, if we are training a predictive model for patients or for hiring across different demographics, we might want to ensure that the smaller group is not overshadowed. 
In that case, we could increase its weight and even assign equal importance to both groups, i.e., set $\lambda = 0.5$. This approach, often referred to as ``\textit{weighting by inverse class frequency},'' is common in the fairness literature \citep{cui2019class, steininger2021density}. There are also smoothed variants that interpolate between these extremes, offering a context-dependent balance between accuracy and fairness \citep{mikolov2013distributed}.

Once we leave the population idealization, every empirically implementable predictor sits \textit{inside} the feasible region: because $\beta_\lambda$ is defined as the minimizer of the population objective, any estimator based on a finite number of observations must incur \textit{excess risk}. \Cref{figs/Frontier_Intro_2} depicts this as the outward shift of the planner’s iso-welfare line: instead of achieving her desired frontier point corresponding to $\beta_\lambda$, the policymaker implements an estimator $\hat{\beta_\lambda}$ whose group risk vector lies on a worse indifference curve. We interpret this shift as a \textit{welfare cost of statistical uncertainty}: 
while the planner hopes to find the optimal estimator corresponding to a fixed choice of $\lambda$, reflecting her fairness–accuracy trade-offs, she cannot realize that objective with finite data. Moreover, as \Cref{figs/Frontier_Intro_3} highlights, finite-sample error generally has \textit{asymmetric group-wise effects}: the displacement from $\beta_\lambda$ to $\hat{\beta_\lambda}$ has both a horizontal and a vertical component, so one group’s risk may increase substantially more than the other’s. Thus the realized outcome of a ``fair” design problem, when implemented empirically, can tilt the distribution of errors toward one group, implicitly implementing a different set of welfare weights than those the planner intended.

\begin{figure}[t]
\centering
\begin{subfigure}{.48\textwidth}
  \centering
  \begin{tikzpicture}[scale=13, line cap=round, line join=round]

\pgfmathsetmacro{\cx}{0.43}
\pgfmathsetmacro{\cy}{0.42}
\pgfmathsetmacro{\a}{0.41}
\pgfmathsetmacro{\b}{0.28}
\pgfmathsetmacro{\theta}{30}
\pgfmathsetmacro{\ct}{cos(\theta)}
\pgfmathsetmacro{\st}{sin(\theta)}
\pgfmathsetmacro{\pointSize}{0.007}
\pgfmathsetmacro{\lineWidth}{0.35mm}

\pgfmathsetmacro{\tYraw}{atan2(\b*\ct,\a*\st)}
\pgfmathsetmacro{\tY}{\tYraw + 180}
\pgfmathsetmacro{\betaBx}{\cx + \a*cos(\tY)*\ct - \b*sin(\tY)*\st}
\pgfmathsetmacro{\betaBy}{\cy + \a*cos(\tY)*\st + \b*sin(\tY)*\ct}

\pgfmathsetmacro{\tXraw}{atan2(-\b*\st,\a*\ct)}
\pgfmathsetmacro{\tX}{\tXraw + 180}
\pgfmathsetmacro{\betaRx}{\cx + \a*cos(\tX)*\ct - \b*sin(\tX)*\st}
\pgfmathsetmacro{\betaRy}{\cy + \a*cos(\tX)*\st + \b*sin(\tX)*\ct}

\pgfmathsetmacro{\posOnFrontier}{0.44}

\pgfmathsetmacro{\tStart}{\tX}
\pgfmathsetmacro{\tEnd}{\tY}
\pgfmathsetmacro{\tEndAdjusted}{\tY < \tX ? \tY + 360 : \tY}

\pgfmathsetmacro{\tLambda}{\tStart + \posOnFrontier*(\tEndAdjusted - \tStart)}
\pgfmathsetmacro{\betaLambdax}{\cx + \a*cos(\tLambda)*\ct - \b*sin(\tLambda)*\st}
\pgfmathsetmacro{\betaLambday}{\cy + \a*cos(\tLambda)*\st + \b*sin(\tLambda)*\ct}

\pgfmathsetmacro{\dxdt}{-\a*sin(\tLambda)*\ct - \b*cos(\tLambda)*\st}
\pgfmathsetmacro{\dydt}{-\a*sin(\tLambda)*\st + \b*cos(\tLambda)*\ct}
\pgfmathsetmacro{\mtan}{\dydt/\dxdt}
\pgfmathsetmacro{\xinter}{\betaLambdax - \betaLambday/\mtan} 
\pgfmathsetmacro{\yinter}{\betaLambday - \mtan*\betaLambdax} 

\pgfmathsetmacro{\xmax}{max(\betaBx,\betaRx) + 0.10}
\pgfmathsetmacro{\ymax}{max(\betaBy,\betaRy) + 0.10}

\draw[->] (0,0) -- (\xmax,0);
\draw[->] (0,0) -- (0,\ymax);
\node[anchor=north, BrickRed] at ({\xmax/2},0) {Red group error};
\node[rotate=90, anchor=south, Blue] at (0,{\ymax/2}) {Blue group error};

\begin{scope}
  \clip (0,0) rectangle (\xmax,\ymax);

  \begin{scope}[shift={(\cx,\cy)}, rotate=\theta]
    \fill[CornflowerBlue, opacity=0.15] (0,0) ellipse [x radius=\a, y radius=\b];
    \draw[CornflowerBlue!60] (0,0) ellipse [x radius=\a, y radius=\b];
  \end{scope}

  \draw[dashed, Fuchsia!70, line width=\lineWidth] (0,\yinter) -- (\xinter,0);

  \begin{scope}[shift={(\cx,\cy)}, rotate=\theta]
    \draw[ultra thick, Thistle]
      ({\a*cos(\tStart)},{\b*sin(\tStart)})
        arc[start angle=\tStart, end angle=\tEndAdjusted, x radius=\a, y radius=\b];
  \end{scope}
\end{scope}

\pgfmathsetmacro{\kshift}{1.5}                 
\pgfmathsetmacro{\xinterUp}{\kshift*\xinter}
\pgfmathsetmacro{\yinterUp}{\kshift*\yinter}

\pgfmathsetmacro{\db}{\yinterUp - \yinter}
\pgfmathsetmacro{\den}{\mtan*\mtan + 1}
\pgfmathsetmacro{\xhat}{\betaLambdax + (-\mtan)*\db/\den}
\pgfmathsetmacro{\yhat}{\betaLambday + \db/\den}

\pgfmathsetmacro{\along}{-0.08*\ymax}
\pgfmathsetmacro{\normLine}{sqrt(1+\mtan*\mtan)} 
\pgfmathsetmacro{\xhat}{\xhat + \along/\normLine}
\pgfmathsetmacro{\yhat}{\yhat + \mtan*\along/\normLine}

\draw[dashed, Orange, line width=\lineWidth] (0,\yinterUp) -- (\xinterUp,0);

\draw[-stealth, Orange!90!black, line width=\lineWidth]
  (\betaLambdax,\betaLambday) -- (\xhat-0.005,\yhat-0.005);

\node[text=Orange!90!black, anchor=west]
  at ({\xhat-0.08}, {\yhat+0.09}) {\tiny{$\lambda x + (1 - \lambda) y = c'$}};

\fill[Fuchsia]  (\betaLambdax,\betaLambday) circle[radius=\pointSize]
  node[below left, Fuchsia] {$\beta_\lambda$};
\fill[Blue]     (\betaBx,\betaBy) circle[radius=\pointSize]
  node[below, Blue] {$\beta_b$};
\fill[BrickRed] (\betaRx,\betaRy) circle[radius=\pointSize]
  node[left, BrickRed] {$\beta_r$};
\fill[Orange!90!black] (\xhat,\yhat) circle[radius=\pointSize]
  node[right, Orange!90!black] {$\widehat{\beta}_\lambda$};

\pgfmathsetmacro{\perpOff}{0.28*\ymax}
\pgfmathsetmacro{\xzero}{\betaLambdax}
\pgfmathsetmacro{\yzero}{\betaLambday - \perpOff}

\pgfmathsetmacro{\xPerpP}{(\xzero + \mtan*(\yzero - \yinter))/\den}
\pgfmathsetmacro{\yPerpP}{\mtan*\xPerpP + \yinter}
\pgfmathsetmacro{\xPerpO}{(\xzero + \mtan*(\yzero - \yinterUp))/\den}
\pgfmathsetmacro{\yPerpO}{\mtan*\xPerpO + \yinterUp}

\draw[stealth-stealth, line width=\lineWidth]
  (\xPerpP,\yPerpP) -- (\xPerpO,\yPerpO)
  node[midway, below, sloped, yshift=1pt] {\tiny{excess risk}};

\end{tikzpicture}
  \caption{}
  \label{figs/Frontier_Intro_2}
\end{subfigure}
\hfill
\begin{subfigure}{.48\textwidth}
  \centering
  \begin{tikzpicture}[scale=13, line cap=round, line join=round]

\pgfmathsetmacro{\cx}{0.43}
\pgfmathsetmacro{\cy}{0.42}
\pgfmathsetmacro{\a}{0.41}
\pgfmathsetmacro{\b}{0.28}
\pgfmathsetmacro{\theta}{30}
\pgfmathsetmacro{\ct}{cos(\theta)}
\pgfmathsetmacro{\st}{sin(\theta)}
\pgfmathsetmacro{\pointSize}{0.007}
\pgfmathsetmacro{\lineWidth}{0.35mm}

\pgfmathsetmacro{\tYraw}{atan2(\b*\ct,\a*\st)}
\pgfmathsetmacro{\tY}{\tYraw + 180}
\pgfmathsetmacro{\betaBx}{\cx + \a*cos(\tY)*\ct - \b*sin(\tY)*\st}
\pgfmathsetmacro{\betaBy}{\cy + \a*cos(\tY)*\st + \b*sin(\tY)*\ct}

\pgfmathsetmacro{\tXraw}{atan2(-\b*\st,\a*\ct)}
\pgfmathsetmacro{\tX}{\tXraw + 180}
\pgfmathsetmacro{\betaRx}{\cx + \a*cos(\tX)*\ct - \b*sin(\tX)*\st}
\pgfmathsetmacro{\betaRy}{\cy + \a*cos(\tX)*\st + \b*sin(\tX)*\ct}

\pgfmathsetmacro{\posOnFrontier}{0.44}

\pgfmathsetmacro{\tStart}{\tX}
\pgfmathsetmacro{\tEnd}{\tY}
\pgfmathsetmacro{\tEndAdjusted}{\tY < \tX ? \tY + 360 : \tY}

\pgfmathsetmacro{\tLambda}{\tStart + \posOnFrontier*(\tEndAdjusted - \tStart)}
\pgfmathsetmacro{\betaLambdax}{\cx + \a*cos(\tLambda)*\ct - \b*sin(\tLambda)*\st}
\pgfmathsetmacro{\betaLambday}{\cy + \a*cos(\tLambda)*\st + \b*sin(\tLambda)*\ct}

\pgfmathsetmacro{\dxdt}{-\a*sin(\tLambda)*\ct - \b*cos(\tLambda)*\st}
\pgfmathsetmacro{\dydt}{-\a*sin(\tLambda)*\st + \b*cos(\tLambda)*\ct}
\pgfmathsetmacro{\mtan}{\dydt/\dxdt}
\pgfmathsetmacro{\xinter}{\betaLambdax - \betaLambday/\mtan} 
\pgfmathsetmacro{\yinter}{\betaLambday - \mtan*\betaLambdax} 

\pgfmathsetmacro{\xmax}{max(\betaBx,\betaRx) + 0.10}
\pgfmathsetmacro{\ymax}{max(\betaBy,\betaRy) + 0.10}

\draw[->] (0,0) -- (\xmax,0);
\draw[->] (0,0) -- (0,\ymax);
\node[anchor=north, BrickRed] at ({\xmax/2},0) {Red group error};
\node[rotate=90, anchor=south, Blue] at (0,{\ymax/2}) {Blue group error};

\begin{scope}
  \clip (0,0) rectangle (\xmax,\ymax);

  \begin{scope}[shift={(\cx,\cy)}, rotate=\theta]
    \fill[CornflowerBlue, opacity=0.15] (0,0) ellipse [x radius=\a, y radius=\b];
    \draw[CornflowerBlue!60] (0,0) ellipse [x radius=\a, y radius=\b];
  \end{scope}

  \draw[dashed, Fuchsia!70, line width=\lineWidth] (0,\yinter) -- (\xinter,0);

  \begin{scope}[shift={(\cx,\cy)}, rotate=\theta]
    \draw[ultra thick, Thistle]
      ({\a*cos(\tStart)},{\b*sin(\tStart)})
        arc[start angle=\tStart, end angle=\tEndAdjusted, x radius=\a, y radius=\b];
  \end{scope}
\end{scope}

\pgfmathsetmacro{\kshift}{1.5}                  
\pgfmathsetmacro{\xinterUp}{\kshift*\xinter}
\pgfmathsetmacro{\yinterUp}{\kshift*\yinter}

\pgfmathsetmacro{\db}{\yinterUp - \yinter}
\pgfmathsetmacro{\den}{\mtan*\mtan + 1}
\pgfmathsetmacro{\xhat}{\betaLambdax + (-\mtan)*\db/\den}
\pgfmathsetmacro{\yhat}{\betaLambday + \db/\den}

\pgfmathsetmacro{\along}{-0.08*\ymax}
\pgfmathsetmacro{\normLine}{sqrt(1+\mtan*\mtan)} 
\pgfmathsetmacro{\xhat}{\xhat + \along/\normLine}
\pgfmathsetmacro{\yhat}{\yhat + \mtan*\along/\normLine}

\draw[dashed, Orange, line width=\lineWidth] (0,\yinterUp) -- (\xinterUp,0);

\draw[-stealth, Orange!90!black, line width=\lineWidth]
  (\betaLambdax,\betaLambday) -- (\xhat-0.005,\yhat-0.005);

\node[text=Orange!90!black, anchor=west]
  at ({\xhat-0.08}, {\yhat+0.09}) {\tiny{$\lambda x + (1 - \lambda) y = c'$}};

\draw[-stealth, Blue, line width=\lineWidth]
  (\betaLambdax,\betaLambday) -- (\betaLambdax,\yhat); 
\draw[-stealth, BrickRed, line width=\lineWidth]
  (\betaLambdax,\betaLambday) -- (\xhat,\betaLambday); 

\draw[dashed, BrickRed, line width=\lineWidth]
  (\betaLambdax,\yhat) -- (\xhat,\yhat);  
\draw[dashed, Blue, line width=\lineWidth]
  (\xhat,\betaLambday) -- (\xhat,\yhat);  

\fill[Fuchsia]  (\betaLambdax,\betaLambday) circle[radius=\pointSize]
  node[below left, Fuchsia] {$\beta_\lambda$};
\fill[Blue]     (\betaBx,\betaBy) circle[radius=\pointSize]
  node[below, Blue] {$\beta_b$};
\fill[BrickRed] (\betaRx,\betaRy) circle[radius=\pointSize]
  node[left, BrickRed] {$\beta_r$};
\fill[Orange!90!black] (\xhat,\yhat) circle[radius=\pointSize]
  node[right, Orange!90!black] {$\widehat{\beta}_\lambda$};

\end{tikzpicture}
  \caption{}
  \label{figs/Frontier_Intro_3}
\end{subfigure}
\caption{(a) Finite-sample estimation: The error pair corresponding to the empirical estimator $\widehat{\beta}_\lambda$ lies on the line $\lambda x + (1 - \lambda) y = c'$, where $c - c'$ is the excess risk. (b) Asymmetric group-wise impact: The displacement from $\beta_\lambda$ to $\widehat{\beta}_\lambda$ decomposes into a vertical change in the blue-group error and a horizontal change in the red-group error; their unequal magnitudes show that the estimator affects the two groups asymmetrically.}
\label{fig:FA_Intro_2}
\end{figure}
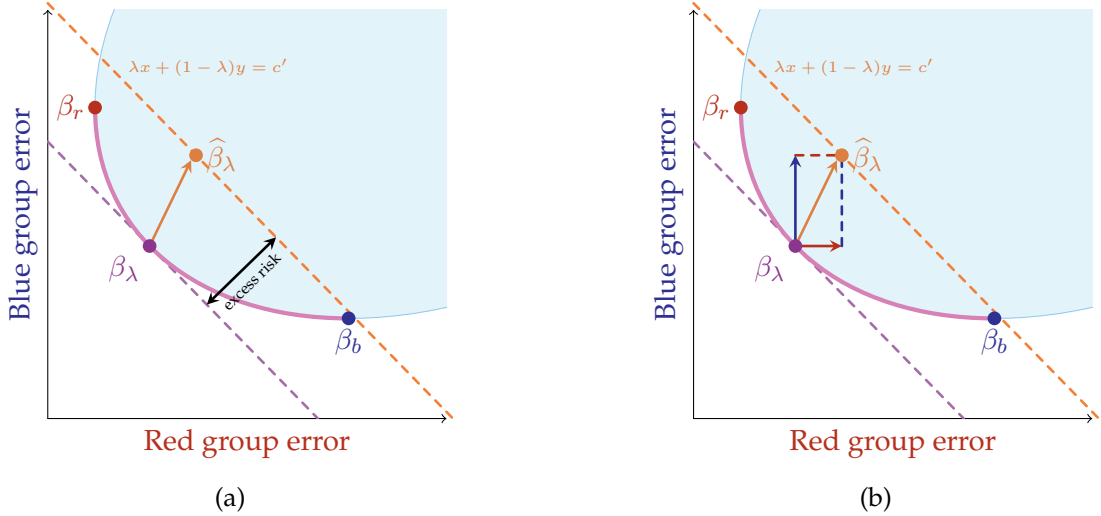


Building on this welfare interpretation, we study the FA frontier as a design problem under resource and information constraints. We first fix a target $\lambda$ and ask: among all estimators that a planner might construct from $n_r$ and $n_b$ samples from the two groups, which estimator minimizes the worst-case excess risk over a broad class of linear data-generating processes? And given a total sampling budget $n_r + n_b$, how should the planner choose $n_r$ and $n_b$ to minimize that worst-case welfare loss? We tackle these questions in two distinct informational environments that are natural in applications. In the first, the planner knows the distribution of the covariates in each group (or just their covariance matrices) but not the outcome model; for instance, they may have access to large administrative data on covariates but only a small subsample with observed outcomes. In the second, more demanding case, neither the outcome model nor the covariate distribution is known.

Our first main contribution is to characterize the minimax-optimal estimator when group-specific covariance matrices are known. We show that in this environment the optimal finite-sample estimator preserves the structure of the oracle policy $\beta_\lambda$ but replaces the unknown group coefficients with their least-squares estimates. We derive matching upper and lower bounds on its worst-case expected excess risk, showing that this estimator achieves the optimal rate under broad distributional assumptions. In particular, we establish that the welfare loss decomposes into a weighted sum of per-group error terms that depend on the covariance matrices and the sample sizes. This decomposition yields a straightforward sampling rule: for a given total data-collection \textit{budget}, the optimal allocation across groups is proportional to the product of the group’s covariance norm and its welfare weight $\lambda$ (or $1 – \lambda$). This rule is a direct analogue of \textit{Neyman allocation} in stratified sampling: the planner invests more samples in groups that both matter more in her social objective and exhibit more variable covariates. This result tells a policymaker which parts of the population to oversample in order to move as close as possible to the desired point on the FA frontier.

Our second contribution is to analyze the case in which covariances must themselves be estimated. Here, we take the ordinary least squares estimator that minimizes the sample analogue of the weighted loss. We show that the welfare cost of estimation has two distinct components. The first is a \textit{variance} term that reflects the uncertainty about the group-specific regression coefficients and coincides with the error from the known-covariance case. The second is a \textit{bias} term that arises purely from uncertainty about the covariances: even if both groups’ regression coefficients ($\beta_r$ and $\beta_b$) were known, mis-estimating the covariates’ second moments leads the planner away from the desired convex combination encoded by $\beta_\lambda$. We derive non-asymptotic upper bounds for both terms under certain distributional assumptions, and show via matching lower bounds that no estimator can improve on ordinary least squares by more than a constant factor. 

Interestingly, the bias term is governed by the \textit{heterogeneity} of the groups in their regression parameters, i.e., $\|\beta_r-\beta_b\|$. When groups are similar, the variance term dominates and the optimal sampling rule mirrors the known-covariance case; when groups are very different, the bias term dominates and the welfare-efficient sampling design shifts toward balancing $n_r$ and $n_b$ in order to reduce covariance-estimation bias. Thus the optimal allocation of data across groups is itself state-dependent: a policymaker facing very heterogeneous groups should collect data more evenly, even if her welfare weights $\lambda$ would otherwise favor one group, because asymmetric sampling can amplify misalignment between her intended and realized fairness–accuracy trade-off.

Third, we move from a pointwise analysis at a fixed $\lambda$ to a \textit{frontier-wide} perspective. In many applications, the FA frontier is valuable not because the planner has committed to a single set of welfare weights ex ante, but because it summarizes the menu of feasible equity–efficiency trade-offs. However, when the same dataset is used to estimate the entire frontier, estimation errors at different $\lambda$’s are highly correlated. We therefore derive a uniform, high-probability bound on the deviation between the empirical and population frontiers, which yields a \textit{confidence band} around the empirical FA curve. The width of this band shrinks at a rate governed by the inverse square roots of the group sample sizes, quantifying how much additional data are needed to make reliable welfare comparisons between alternative points on the frontier.

Finally, we study the group-specific impact of moving from $\beta_\lambda$ to its empirical counterpart and show that, even for the minimax-optimal estimator, the welfare loss can be markedly uneven across groups, driven by differences in covariate dispersion and by asymmetric bias terms. These results provide a formal language for describing the welfare risk borne by each group when policymakers implement fairness-aware algorithms estimated from finite data.

Taken together, our results reinterpret the fairness–accuracy frontier as a welfare object that is shaped not only by the structural relationship between covariates and outcomes, but also by the \textit{statistical environment} in which algorithms are designed and deployed. They show how sampling design, group heterogeneity, and the choice between oracle and empirical estimators translate into predictable patterns of welfare loss across groups. Hence, our work connects the FA frontier to familiar tools from welfare analysis and optimal sampling, and provides finite-sample guarantees that are directly relevant for regulators and policymakers who must decide which data to collect and which prediction rules to trust.

\subsection{Related Work}
 We build on \citet{liang2021algorithm}, who formalize the FA Pareto frontier for assessing fairness-accuracy trade-offs under the assumption of full knowledge of the covariate and outcome distributions. \citet{liu2024inference} relax the full-knowledge assumption and introduce a consistent estimator of the FA frontier, deriving the asymptotic distribution of the estimator and using it to design statistical tests for properties such as the gap between a given algorithm and the fairest alternative. \citet{auerbach2024testing} further contribute a statistical test for determining whether a model achieving a given accuracy admits a Pareto improvement in fairness. We likewise address the practical challenges of analyzing the FA frontier from finite data, but take a fully non-asymptotic approach, providing finite-sample guarantees for estimators that target a social planner's fairness-accuracy preferences. 

Our work contributes to the literature on fairness in regression, a topic that has traditionally received less attention than fair classification \citep[see, e.g.,][]{donini2018empirical}. The most closely related work is that of \citet{chzhen2022minimax}, who develop a minimax-optimal fair regression estimator for an explicit fairness constraint based on the Wasserstein distance between the loss distributions across groups. Beyond this, several other works have introduced various notions of fairness for the regression setting and have proposed corresponding optimal estimation procedures \citep{calders2013controlling, berk2017convex, fitzsimons2019general, agarwal2019fair, chzhen2020fair, chzhen2020fairb, oneto2020general} and for estimating the minimal utility cost associated with achieving fairness in regression \citep{zhao2021costs}.

A growing body of work bridges perspectives on fairness-accuracy trade-offs in computer science and statistics with analyses of social welfare \citep{hu2020fair, liang2024algorithmic, gans2024algorithmic, rosenfeld2025machine}. 
Our results on the FA frontier can also be interpreted in this context if we view each individual’s utility as the risk associated with their group. More broadly, in the algorithmic fairness literature, it is widely recognized that no single fairness criterion can capture the diverse preferences that decision-makers may have over these trade-offs \citep{casacuberta2023augmenting, verma2018fairness, kleinberg2016inherent}. Economists make this point in a similar way by documenting the heterogeneity in how individuals weigh equity against efficiency \citep{capozza2024should, lockwood2015gustibus}. Our framework builds on this connection by adopting a flexible, weight-based formulation that accommodates this kind of heterogeneity \citep{saez2016generalized, hendren2020measuring, kasy2021fairness}.


Finally, much attention has been devoted to the fundamental question of when fairness and accuracy are in conflict and when they are aligned \citep{corbett2017algorithmic, wick2019unlocking, blum2019recovering, liang2021algorithm, andersen2020big, dutta2020there, cooper2021emergent}.  
In settings where these objectives are at odds, a variety of methods have been developed to identify \emph{Pareto improvements}, where either fairness or accuracy can be improved while keeping the other one unchanged. One line of work designs algorithms for computing fairer alternatives to a given model for specific applications \citep{fish2016confidence, coston2021characterizing, wei2022fairness, chohlas2024learning, dehdashtian2024utility}. A closely related line of work draws on the legal notion of a \emph{Less Discriminatory Alternative}, or a fairer model that maintains the original model's accuracy \citep{gillis2024operationalizing, laufer2025constitutes, samad2025selecting}. Others take the reverse approach, focusing on accuracy improvements subject to fairness guarantees \citep{agarwal2018reductions, wang2023aleatoric}, or, more generally, methods for balancing multiple competing objectives in learning \citep{rolf2020balancing, martinez2020minimax}. 
 
\section{Framework}\label{sec:formulation}
We consider a population where each individual is represented by a covariate vector $X \in \mathcal{X} \subset \mathbb{R}^d$, a response (outcome) variable $Y \in \mathbb{R}$, and a group label $g \in \mathcal{G} := \{r, b\}$ (e.g., red, blue). For each group $g$, we observe $n_g$ independent data points, $\mathcal{S}_g:= \{(X_i^{(g)}, Y_i^{(g)})\}_{i=1}^{n_g}$, each drawn from the conditional distribution $P(X, Y | G = g).$

Let $\mathcal{H}$ be a class of prediction functions $f: \mathcal{X} \to \RR$. We quantify the error of a predictor $f(X)$ in estimating the output variable $Y$ using a nonnegative loss function, $\ell: \mathbb{R} \times \mathbb{R} \to \mathbb{R}_{\geq 0}$. Specifically, the loss incurred on a single example is given by $\ell(f(X), Y)$.
The group-wise population risk and its empirical counterpart for prediction function $f \in \mathcal{H}$ are given, respectively, by
\begin{equation*}
    \mathcal{R}_g(f) \coloneqq \mathbb{E}_{(X, Y) \sim P(\cdot|G = g)}[\ell(f(X), Y)], \quad \widehat{\mathcal{R}}_g(f) \coloneqq \frac{1}{n_g} \sum_{i=1}^{n_g} \ell\left(f\left(X_i^{(g)}\right), Y_i^{(g)}\right).
\end{equation*}
Each prediction function $f \in \mathcal{H}$ induces a pair of risks $(\mathcal{R}_r(f), \mathcal{R}_b(f))$. Here, $\mathcal{R}_r(f)$ and $\mathcal{R}_b(f)$ represent the prediction accuracy of $f$ over groups $r$ and $b$, respectively.

Following \cite{liang2021algorithm}, we measure the \textit{fairness} of a prediction function $f$ using the risk disparity $|\mathcal{R}_r(f) - \mathcal{R}_b(f)|$, which quantifies the difference in performance of the model between the two groups. For regression tasks with a continuous outcome variable $Y$---the primary focus of this paper---this notion coincides with the definition of \textit{Equalized Loss} proposed by \cite{zhang2019group}.\footnote{For binary classifiers, i.e., when the outcome variable $Y$ is binary, \citet[Appendix O.1]{liang2021algorithm} provide a detailed discussion on how this definition relates to well-known fairness notions in the binary setting such as Equalized Odds \citep{hardt2016equality}.}
\subsection{Fairness-Accuracy Frontier}
To investigate the fairness–accuracy trade-off, we begin by examining how one can maneuver between group accuracies and fairness by changing the prediction function. More formally, we start by defining the set of achievable population risk pairs over the class $\mathcal{H}$, given by
\begin{equation*}
    \begin{aligned}
        \mathcal{E}(\mathcal{H}) \coloneqq \{ (\mathcal{R}_r(f), \mathcal{R}_b(f)): f \in \mathcal{H}\}.
    \end{aligned}
\end{equation*}
We next adopt the definition of the \emph{fairness-accuracy (FA) Pareto frontier} from \cite{liang2021algorithm}. To do so, we first revisit the definition of \textit{FA dominance}.
\begin{definition}[Fairness-Accuracy (FA) Dominance]\label{def:fa_dominance}
We say that a function $f' \in \mathcal{H}$ \emph{FA-dominates} a function $f \in \mathcal{H}$, denoted by $f' \succ f$, if $\mathcal{R}_r(f') \leq \mathcal{R}_r(f)$, $\mathcal{R}_b(f') \leq \mathcal{R}_b(f)$, and $|\mathcal{R}_r(f') - \mathcal{R}_b(f')| \leq |\mathcal{R}_r(f) - \mathcal{R}_b(f)|$, with at least one of these three inequalities being strict.     
\end{definition}
In other words, a function $f'$ FA-dominates a function $f$ if it achieves no higher risk on either group and no greater risk disparity between groups, with strict improvement for at least one. The FA frontier is then defined as the subset of achievable risk pairs that are not FA-dominated by any other point.
\begin{definition}[Fairness-Accuracy (FA) Frontier]\label{def:fa_frontier}
The FA frontier, denoted by $\mathcal{F}(\mathcal{H})$, is defined as: 
\begin{equation*}
\mathcal{F}(\mathcal{H}) \coloneqq  \left \{ \left(\mathcal{R}_r(f), \mathcal{R}_b(f) \right) \in \mathcal{E}(\mathcal{H}) : \nexists f' \in \mathcal{H}: f' \succ f \right \}.
\end{equation*}
\end{definition}
Insight into the shape of the fairness-accuracy (FA) frontier can be obtained by focusing on three key points: the two points corresponding to the prediction functions that achieve the best accuracy for each group, and the point that achieves the best fairness guarantee. More formally, for any $g \in \mathcal{G}$, let $f_g$ be defined as
\begin{equation}
f_g := \arg\min_{f \in \mathcal{H}} \mathcal{R}_g(f),
\end{equation}
and define $f_f$ as
\begin{equation}
f_f := \arg\min_{f \in \mathcal{H}} |\mathcal{R}_r(f) - \mathcal{R}_b(f)|.
\end{equation}

It is straightforward to verify that the pairs of risks corresponding to all three functions lie on the FA frontier. In fact, the endpoints of the FA frontier are two of the three points associated with $\{f_r, f_b, f_f\}$. \cite{liang2021algorithm} define a distribution $P$ as \textit{group-skewed} if the point corresponding to $f_f$ lies at one of the endpoints of the frontier. In contrast, $P$ is said to be \textit{group-balanced} if the endpoints correspond to $f_r$ and $f_b$, and the most fair point $f_f$ lies between them. An illustration of group-skewed and group-balanced distributions is provided in \Cref{fig:FA_Liang}.

\begin{figure}
\centering
\begin{subfigure}{.5\textwidth}
  \centering
  \begin{tikzpicture}[scale=6]

  \draw[->] (0,0) -- (1,0) node[anchor=north west] {$\mathcal{R}_r$};
  \draw[->] (0,0) -- (0,1) node[anchor=south east] {$\mathcal{R}_b$};

  \draw[dashed, gray] (0,0) -- (1,1) 
    node[anchor=south east, rotate=45] {\small $\: \: \mathcal{R}_b = \mathcal{R}_r$};
    
\pgfmathsetmacro{\pointSize}{0.01}

  \pgfmathsetmacro{\cx}{0.5}
  \pgfmathsetmacro{\cy}{0.5}
  \pgfmathsetmacro{\a}{0.4}
  \pgfmathsetmacro{\b}{0.25}
  \pgfmathsetmacro{\theta}{30}
  \pgfmathsetmacro{\contractionScale}{0.6}

  \pgfmathsetmacro{\ct}{cos(\theta)}
  \pgfmathsetmacro{\st}{sin(\theta)}

  \pgfmathsetmacro{\tYraw}{atan2(\b*\ct,\a*\st)} 
  \pgfmathsetmacro{\tY}{\tYraw + 180} 
  \pgfmathsetmacro{\betaBx}{\cx + \a*cos(\tY)*\ct - \b*sin(\tY)*\st}
  \pgfmathsetmacro{\betaBy}{\cy + \a*cos(\tY)*\st + \b*sin(\tY)*\ct}

  \pgfmathsetmacro{\tXraw}{atan2(-\b*\st,\a*\ct)}
  \pgfmathsetmacro{\tX}{\tXraw + 180}
  \pgfmathsetmacro{\betaRx}{\cx + \a*cos(\tX)*\ct - \b*sin(\tX)*\st}
  \pgfmathsetmacro{\betaRy}{\cy + \a*cos(\tX)*\st + \b*sin(\tX)*\ct}

  \begin{scope}
    \clip (0,0) rectangle (0.9,0.9);
    \fill[CornflowerBlue, opacity=0.3, rotate around={\theta:(\cx,\cy)}]
      (\cx,\cy) ellipse [x radius=\a, y radius=\b];
  \end{scope}

  \node at (0.7,0.7) {$\mathcal{E}(\mathcal{H})$};

  \pgfmathsetmacro{\tStart}{\tX}
  \pgfmathsetmacro{\tEnd}{\tY}
  \pgfmathsetmacro{\tEndAdjusted}{\tY < \tX ? \tY + 360 : \tY}

  \draw[ultra thick, Thistle, rotate around={\theta:(\cx,\cy)}, opacity=0.8, line width=2pt]
    (\cx,\cy) ++(\tStart:\a cm and \b cm) 
    arc[start angle=\tStart, end angle=\tEndAdjusted, x radius=\a, y radius=\b];

  \pgfmathsetmacro{\fairx}{\cx - 1/sqrt((cos(\theta)+sin(\theta))^2/\a^2+(sin(\theta)-cos(\theta))^2/\b^2)}
  \pgfmathsetmacro{\fairy}{\cy - 1/sqrt((cos(\theta)+sin(\theta))^2/\a^2+(sin(\theta)-cos(\theta))^2/\b^2)}

  \fill[Blue] (\betaBx,\betaBy) circle[radius=\pointSize];
  \node[Blue, below] at (\betaBx,\betaBy) {$f_b$};
  \fill[BrickRed] (\betaRx,\betaRy) circle[radius=\pointSize];
  \node[BrickRed, left] at (\betaRx,\betaRy) {$f_r$};
  \fill[Fuchsia] (\fairx,\fairy) circle[radius=\pointSize];
  \node[Fuchsia, left] at (\fairx,\fairy) {$f_f$};

\end{tikzpicture}
  \caption{Group-Balanced}
  \label{fig:Group_Balanced}
\end{subfigure}
\begin{subfigure}{.5\textwidth}
  \centering
  \begin{tikzpicture}[scale=6]

  \draw[->] (0,0) -- (1,0) node[anchor=north west] {$\mathcal{R}_r$};
  \draw[->] (0,0) -- (0,1) node[anchor=south east] {$\mathcal{R}_b$};

  \draw[dashed, gray] (0,0) -- (1,1) 
    node[anchor=south east, rotate=45] {\small $\: \: \mathcal{R}_b = \mathcal{R}_r$};
    
\pgfmathsetmacro{\pointSize}{0.01}

  \pgfmathsetmacro{\cx}{0.3}
  \pgfmathsetmacro{\cy}{0.6}
  \pgfmathsetmacro{\a}{0.25}
  \pgfmathsetmacro{\b}{0.15}
  \pgfmathsetmacro{\theta}{30}
  \pgfmathsetmacro{\contractionScale}{0.6}

  \pgfmathsetmacro{\ct}{cos(\theta)}
  \pgfmathsetmacro{\st}{sin(\theta)}

  \pgfmathsetmacro{\tYraw}{atan2(\b*\ct,\a*\st)} 
  \pgfmathsetmacro{\tY}{\tYraw + 180} 
  \pgfmathsetmacro{\betaBx}{\cx + \a*cos(\tY)*\ct - \b*sin(\tY)*\st}
  \pgfmathsetmacro{\betaBy}{\cy + \a*cos(\tY)*\st + \b*sin(\tY)*\ct}

  \pgfmathsetmacro{\tXraw}{atan2(-\b*\st,\a*\ct)}
  \pgfmathsetmacro{\tX}{\tXraw + 180}
  \pgfmathsetmacro{\betaRx}{\cx + \a*cos(\tX)*\ct - \b*sin(\tX)*\st}
  \pgfmathsetmacro{\betaRy}{\cy + \a*cos(\tX)*\st + \b*sin(\tX)*\ct}

  \begin{scope}
    \clip (0,0) rectangle (0.9,0.9);
    \fill[CornflowerBlue, opacity=0.3, rotate around={\theta:(\cx,\cy)}]
      (\cx,\cy) ellipse [x radius=\a, y radius=\b];
  \end{scope}

  \node at (0.7,0.7) {$\mathcal{E}(\mathcal{H})$};

  \pgfmathsetmacro{\tStart}{\tX}
  \pgfmathsetmacro{\tEnd}{\tY}
  \pgfmathsetmacro{\tEndAdjusted}{\tY < \tX ? \tY + 360 : \tY}

\pgfmathsetmacro{\tLambda}{\tStart + 2*(\tEndAdjusted - \tStart)}
\pgfmathsetmacro{\fairx}{\cx + \a*cos(\tLambda)*\ct - \b*sin(\tLambda)*\st}
\pgfmathsetmacro{\fairy}{\cy + \a*cos(\tLambda)*\st + \b*sin(\tLambda)*\ct}

\draw[ultra thick, Thistle, rotate around={\theta:(\cx,\cy)}, opacity=0.8, line width=2pt]
(\cx,\cy) ++(\tStart:\a cm and \b cm) 
arc[start angle=\tStart, end angle=\tStart + 2*(\tEndAdjusted - \tStart), x radius=\a, y radius=\b];

  \fill[Blue] (\betaBx,\betaBy) circle[radius=\pointSize];
  \node[Blue, below] at (\betaBx,\betaBy) {$f_b$};
  \fill[BrickRed] (\betaRx,\betaRy) circle[radius=\pointSize];
  \node[BrickRed, left] at (\betaRx,\betaRy) {$f_r$};
  \fill[Fuchsia] (\fairx,\fairy) circle[radius=\pointSize];
  \node[Fuchsia, left] at (\fairx,\fairy) {$f_f$};

\end{tikzpicture}
  \caption{Group-Skewed}
  \label{fig:Group_Skewed}
\end{subfigure}
\caption{Group-Balanced vs. Group-Skewed. To avoid cluttering the figure, we label points by their corresponding predictors; for example, $f_r$ denotes the point $(\mathcal{R}_r(f_r), \mathcal{R}_b(f_r))$.}
\label{fig:FA_Liang}
\end{figure}
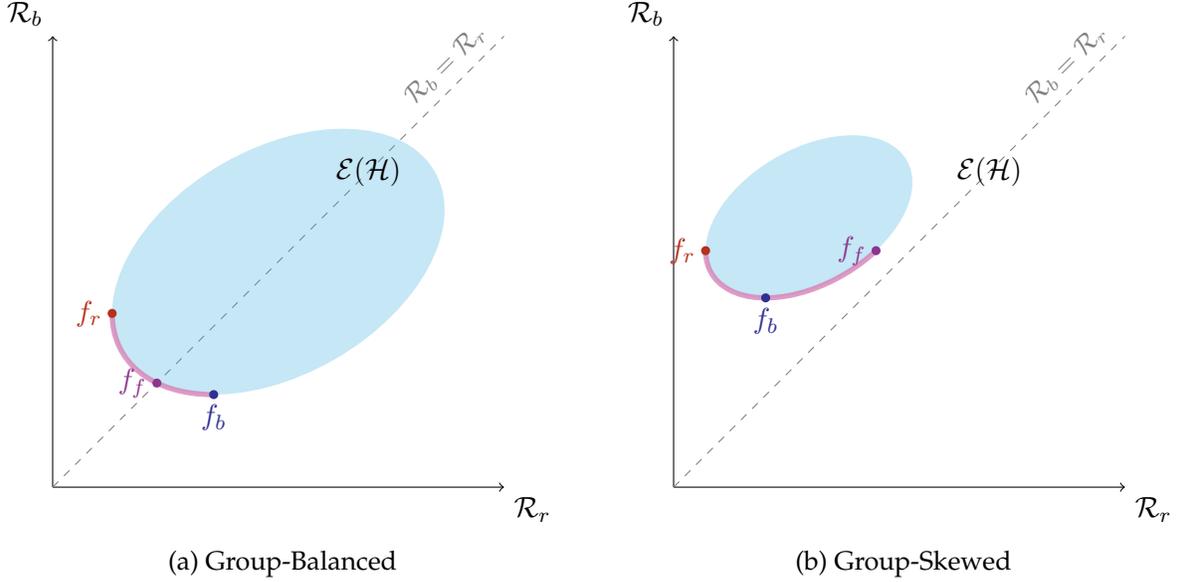

In the group-skewed scenario, the points associated with the risk pairs of both $f_r$ and $f_b$ lie on one side of the $\mathcal{R}_b = \mathcal{R}_r$ line. For instance, in \Cref{fig:Group_Skewed}, $\mathcal{R}_r(f_b)$ is lower than $\mathcal{R}_b(f_b)$ even though $f_b$ is the optimal predictor for group $b$.
In contrast, in the group-balanced case, the fair predictor $f_f$ falls between $f_b$ and $f_r$, indicating that each group achieves a lower risk at its respective optimal predictor. As a result, achieving fairness may require trading off some accuracy between the two groups.
In the next section, we show that the regression problem, under the assumption of equal noise variance, falls into the group-balanced category. Therefore, we focus on this setting for the remainder of the paper.

For the group-balanced setting, the FA frontier can be traversed by varying a parameter $\lambda \in [0, 1]$ which represents the designer's preference in trading off between fairness and accuracy for each group. In particular, given a weight $\lambda$, the decision-maker seeks a decision policy $f_\lambda$ that achieves the desired tradeoff in errors between groups. This corresponds to solving the weighted risk minimization:
\begin{equation}
    \begin{aligned}
        f_\lambda = \arg\min_{f \in \mathcal{H}} \mathcal{R}_\lambda(f) \quad \text{with} \quad
        \mathcal{R}_\lambda(f):= \lambda \mathcal{R}_r(f) + (1 - \lambda) \mathcal{R}_b(f).
    \end{aligned}
\end{equation}
Here, $\lambda$ encodes the relative cost of errors affecting group $r$ to those affecting group $b$. The extreme cases $\lambda = 1$ and $\lambda = 0$ recover the group-optimal decision policies $f_r$ and $f_b$, respectively. In general, $f_\lambda$ is the first point in $\mathcal{E}(\mathcal{H})$ that intersects the line $\lambda \mathcal{R}_r + (1-\lambda) \mathcal{R}_b = c$ as $c$ increases---that is, as the line shifts upward and to the right. See \Cref{fig:Frontier-Lambda} for an illustration. 
This formulation thus allows us to target a point on the FA frontier and trade off between fairness and accuracy. 

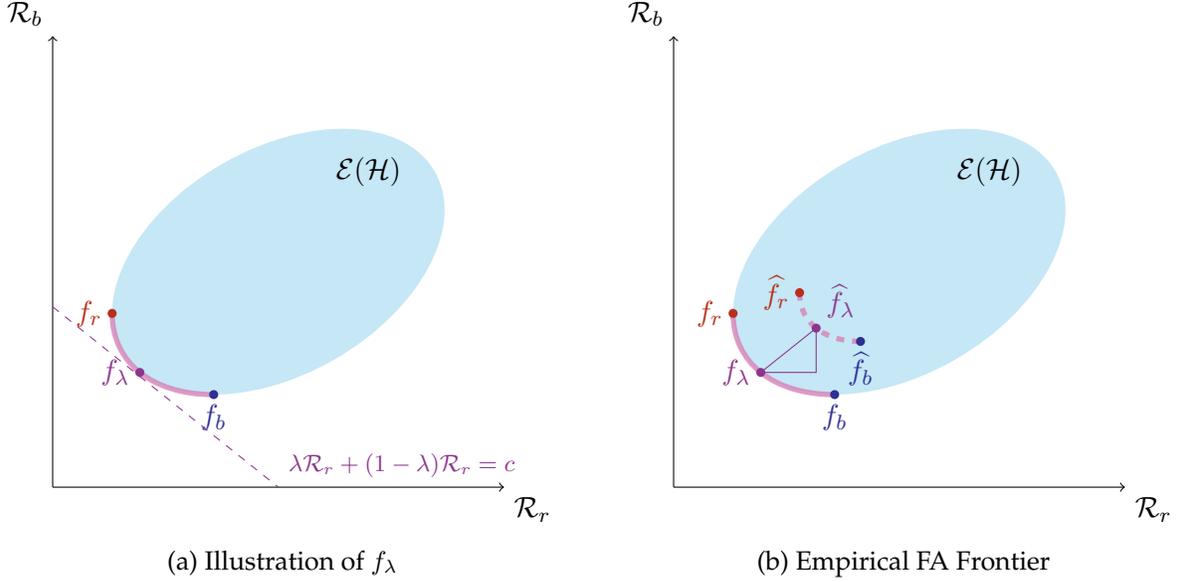
\begin{figure}
\centering
\begin{subfigure}{.5\textwidth}
  \centering
  \begin{tikzpicture}[scale=6]

  \draw[->] (0,0) -- (1,0) node[anchor=north west] {$\mathcal{R}_r$};
  \draw[->] (0,0) -- (0,1) node[anchor=south east] {$\mathcal{R}_b$};
  \definecolor{DarkerGreen}{rgb}{0, 0.4, 0}

    
\pgfmathsetmacro{\pointSize}{0.01}

  \pgfmathsetmacro{\cx}{0.5}
  \pgfmathsetmacro{\cy}{0.5}
  \pgfmathsetmacro{\a}{0.4}
  \pgfmathsetmacro{\b}{0.25}
  \pgfmathsetmacro{\theta}{30}
  \pgfmathsetmacro{\contractionScale}{0.6}

  \pgfmathsetmacro{\ct}{cos(\theta)}
  \pgfmathsetmacro{\st}{sin(\theta)}

  \pgfmathsetmacro{\tYraw}{atan2(\b*\ct,\a*\st)} 
  \pgfmathsetmacro{\tY}{\tYraw + 180} 
  \pgfmathsetmacro{\betaBx}{\cx + \a*cos(\tY)*\ct - \b*sin(\tY)*\st}
  \pgfmathsetmacro{\betaBy}{\cy + \a*cos(\tY)*\st + \b*sin(\tY)*\ct}

  \pgfmathsetmacro{\tXraw}{atan2(-\b*\st,\a*\ct)}
  \pgfmathsetmacro{\tX}{\tXraw + 180}
  \pgfmathsetmacro{\betaRx}{\cx + \a*cos(\tX)*\ct - \b*sin(\tX)*\st}
  \pgfmathsetmacro{\betaRy}{\cy + \a*cos(\tX)*\st + \b*sin(\tX)*\ct}

  \begin{scope}
    \clip (0,0) rectangle (0.9,0.9);
    \fill[CornflowerBlue, opacity=0.3, rotate around={\theta:(\cx,\cy)}]
      (\cx,\cy) ellipse [x radius=\a, y radius=\b];
  \end{scope}

  \node at (0.7,0.7) {$\mathcal{E}(\mathcal{H})$};

  \pgfmathsetmacro{\tStart}{\tX}
  \pgfmathsetmacro{\tEnd}{\tY}
  \pgfmathsetmacro{\tEndAdjusted}{\tY < \tX ? \tY + 360 : \tY}

  \draw[ultra thick, Thistle, rotate around={\theta:(\cx,\cy)}, opacity=0.8, line width=2pt]
    (\cx,\cy) ++(\tStart:\a cm and \b cm) 
    arc[start angle=\tStart, end angle=\tEndAdjusted, x radius=\a, y radius=\b];

  \pgfmathsetmacro{\tLambda}{\tStart + 0.5*(\tEndAdjusted - \tStart)}
  \pgfmathsetmacro{\betaLambdax}{\cx + \a*cos(\tLambda)*\ct - \b*sin(\tLambda)*\st}
  \pgfmathsetmacro{\betaLambday}{\cy + \a*cos(\tLambda)*\st + \b*sin(\tLambda)*\ct}

  \fill[Blue] (\betaBx,\betaBy) circle[radius=\pointSize];
  \node[Blue, below] at (\betaBx,\betaBy) {$f_b$};
  \fill[BrickRed] (\betaRx,\betaRy) circle[radius=\pointSize];
  \node[BrickRed, left] at (\betaRx,\betaRy) {$f_r$};
  \fill[Fuchsia] (\betaLambdax,\betaLambday) circle[radius=\pointSize];
  \node[Fuchsia, left] at (\betaLambdax,\betaLambday) {$f_\lambda$};

  \draw[dashed, Fuchsia] (0,0.4) -- (0.5,0) 
    node[anchor=south west] {\footnotesize $\lambda \mathcal{R}_r + (1-\lambda) \mathcal{R}_b = c$};

\end{tikzpicture}
  \caption{Illustration of $f_\lambda$}
  \label{fig:Frontier-Lambda}
\end{subfigure}
\begin{subfigure}{.5\textwidth}
  \centering
  \begin{tikzpicture}[scale=6]

  \draw[->] (0,0) -- (1,0) node[anchor=north west] {$\mathcal{R}_r$};
  \draw[->] (0,0) -- (0,1) node[anchor=south east] {$\mathcal{R}_b$};

    
\pgfmathsetmacro{\pointSize}{0.01}

  \pgfmathsetmacro{\cx}{0.5}
  \pgfmathsetmacro{\cy}{0.5}
  \pgfmathsetmacro{\a}{0.4}
  \pgfmathsetmacro{\b}{0.25}
  \pgfmathsetmacro{\theta}{30}
  \pgfmathsetmacro{\contractionScale}{0.6}

  \pgfmathsetmacro{\ct}{cos(\theta)}
  \pgfmathsetmacro{\st}{sin(\theta)}

  \pgfmathsetmacro{\tYraw}{atan2(\b*\ct,\a*\st)} 
  \pgfmathsetmacro{\tY}{\tYraw + 180} 
  \pgfmathsetmacro{\betaBx}{\cx + \a*cos(\tY)*\ct - \b*sin(\tY)*\st}
  \pgfmathsetmacro{\betaBy}{\cy + \a*cos(\tY)*\st + \b*sin(\tY)*\ct}

  \pgfmathsetmacro{\tXraw}{atan2(-\b*\st,\a*\ct)}
  \pgfmathsetmacro{\tX}{\tXraw + 180}
  \pgfmathsetmacro{\betaRx}{\cx + \a*cos(\tX)*\ct - \b*sin(\tX)*\st}
  \pgfmathsetmacro{\betaRy}{\cy + \a*cos(\tX)*\st + \b*sin(\tX)*\ct}

  \begin{scope}
    \clip (0,0) rectangle (0.9,0.9);
    \fill[CornflowerBlue, opacity=0.3, rotate around={\theta:(\cx,\cy)}]
      (\cx,\cy) ellipse [x radius=\a, y radius=\b];
  \end{scope}

  \node at (0.7,0.7) {$\mathcal{E}(\mathcal{H})$};

  \pgfmathsetmacro{\tStart}{\tX}
  \pgfmathsetmacro{\tEnd}{\tY}
  \pgfmathsetmacro{\tEndAdjusted}{\tY < \tX ? \tY + 360 : \tY}

  \draw[ultra thick, Thistle, rotate around={\theta:(\cx,\cy)}, opacity=0.8, line width=2pt]
    (\cx,\cy) ++(\tStart:\a cm and \b cm) 
    arc[start angle=\tStart, end angle=\tEndAdjusted, x radius=\a, y radius=\b];

  \draw[ultra thick, Thistle, rotate around={\theta:(\cx,\cy)}, 
        opacity=0.8, line width=2pt, dashed]
    (\cx,\cy) ++(\tStart:\a*\contractionScale cm and \b*\contractionScale cm) 
    arc[start angle=\tStart, end angle=\tEndAdjusted, 
       x radius=\a*\contractionScale, y radius=\b*\contractionScale];

  \pgfmathsetmacro{\tLambda}{\tStart + 0.5*(\tEndAdjusted - \tStart)}
  \pgfmathsetmacro{\betaLambdax}{\cx + \a*cos(\tLambda)*\ct - \b*sin(\tLambda)*\st}
  \pgfmathsetmacro{\betaLambday}{\cy + \a*cos(\tLambda)*\st + \b*sin(\tLambda)*\ct}

  \pgfmathsetmacro{\hatBetaBx}{\cx + \contractionScale*(\betaBx - \cx)}
  \pgfmathsetmacro{\hatBetaBy}{\cy + \contractionScale*(\betaBy - \cy)}
  \pgfmathsetmacro{\hatBetaRx}{\cx + \contractionScale*(\betaRx - \cx)}
  \pgfmathsetmacro{\hatBetaRy}{\cy + \contractionScale*(\betaRy - \cy)}
  \pgfmathsetmacro{\hatBetaLambdax}{\cx + \contractionScale*(\betaLambdax - \cx)}
  \pgfmathsetmacro{\hatBetaLambday}{\cy + \contractionScale*(\betaLambday - \cy)}

  \pgfmathsetmacro{\rightAngleX}{\hatBetaLambdax}
  \pgfmathsetmacro{\rightAngleY}{\betaLambday}

  \fill[Blue] (\betaBx,\betaBy) circle[radius=\pointSize];
  \node[Blue, below] at (\betaBx,\betaBy) {$f_b$};
  \fill[BrickRed] (\betaRx,\betaRy) circle[radius=\pointSize];
  \node[BrickRed, left] at (\betaRx,\betaRy) {$f_r$};
  \fill[Fuchsia] (\betaLambdax,\betaLambday) circle[radius=\pointSize];
  \node[Fuchsia, left] at (\betaLambdax,\betaLambday) {$f_\lambda$};

  \fill[Blue] (\hatBetaBx,\hatBetaBy) circle[radius=\pointSize];
  \node[Blue, below] at (\hatBetaBx,\hatBetaBy) {$\widehat{f}_b$};
  \fill[BrickRed] (\hatBetaRx,\hatBetaRy) circle[radius=\pointSize];
  \node[BrickRed, left] at (\hatBetaRx,\hatBetaRy) {$\widehat{f}_r$};
  \fill[Fuchsia] (\hatBetaLambdax,\hatBetaLambday) circle[radius=\pointSize];
  \node[Fuchsia, above right] at (\hatBetaLambdax,\hatBetaLambday) {$\widehat{f}_\lambda$};

  \draw[Fuchsia, thin, solid] 
    (\betaLambdax,\betaLambday) -- 
    (\rightAngleX,\rightAngleY) -- 
    (\hatBetaLambdax,\hatBetaLambday) -- cycle;

\end{tikzpicture}
  \caption{Empirical FA Frontier}
  \label{fig:EmpiricalFrontier}
\end{subfigure}
\caption{Illustration of FA frontier, parametrized by $\lambda$, and its empirical version.}
\label{fig:FA_Parametrized}
\end{figure}

\subsection{Empirical Fairness-Accuracy Frontier}
In practice, the true distribution $P$ is unknown, so $f_\lambda$ is not available. Instead, we begin by obtaining an empirical estimate $\widehat{f}_\lambda \in \mathcal{H}$ based on the dataset $\{\mathcal{S}_r, \mathcal{S}_b\}$. The error associated with this estimate, known as the \textit{excess risk}, is given by $\mathcal{R}_\lambda(\widehat{f}_\lambda) - \mathcal{R}_\lambda(f_\lambda)$. It is nonnegative, since $f_\lambda$ minimizes $\mathcal{R}_\lambda(\cdot)$. 

Note that the corresponding pair of risks $(\mathcal{R}_r(\widehat{f}_\lambda), \mathcal{R}_b(\widehat{f}_\lambda))$ will always lie in $\mathcal{E}(\mathcal{H})$, and may fall strictly in its interior. Moreover, for any estimator $\widehat{f}_\lambda$, the corresponding excess risk quantifies how much the frontier is pushed inward in the direction orthogonal to the line $\lambda \mathcal{R}_r + (1-\lambda) \mathcal{R}_b = \text{constant}$. An illustration of this phenomenon is shown in \Cref{fig:EmpiricalFrontier}.

Now, assuming that the distribution $P$ belongs to a class of distributions $\mathcal{P}$, we can proceed without knowledge of $P$ by finding an estimator $\widehat{f}_\lambda$ that minimizes the worst-case excess risk. More formally, we aim to solve the following problem:
\begin{equation}\label{eqn:minimax_empirical_frontier}
 \inf_{\widehat{f}_\lambda \in \mathcal{H}} \sup_{P \in \mathcal{P}} \mathbb{E} \left[ \mathcal{R}_\lambda(\widehat{f}_\lambda) - \mathcal{R}_\lambda(f_\lambda) \right],  
\end{equation}
where the expectation is taken with respect to the draw of the dataset $\{\mathcal{S}_r, \mathcal{S}_b\}$.
In the remainder of the paper, we study this minimax problem in the setting of linear regression. We present a lower bound for the worst-case excess risk in \eqref{eqn:minimax_empirical_frontier}, and develop an estimator that matches this lower bound with respect to many of the problem's parameters. We also discuss how the risks for the two groups change when this optimal estimator is used.


\subsection{Notation}
For a vector $v$, denote by $\norm{v}$ its Euclidean norm. For a positive-semidefinite (PSD) matrix $A$, we denote its trace by $\Tr{A}$, its minimum and maximum eigenvalues by $\lambda_{\min}(A)$ and $\lambda_{\max}(A)$, respectively, and its operator (spectral) norm by $\norm{A}$. For any matrix $A$ and vector $v$, define $\norm{v}_A^2 \coloneqq v^\top A v$. We write $I_d$ for the $d \times d$ identity matrix and use the Loewner order $A \preceq B$ to mean that $B - A$ is PSD. For two probability measures $P, Q$, we write $\tvdist{P, Q} \coloneq \frac{1}{2} \int |dP - dQ|$ and $\kldiv{P}{Q} \coloneq \int \log \frac{dP}{dQ} dP$ for their total variation and Kullback-Leibler divergences, respectively. For a subgaussian random variable $Z$, we define its $\psi_2$-norm as $\normpsitwo{Z} \coloneqq \inf \{t > 0: \EE[\exp(Z^2/t^2)] \leq 2 \}$. For two functions $g(x)$ and $h(x)$, we write $g \lesssim h$ or $g = \mathcal{O}(h)$ if there exist a constant $C$ (independent of $x$) and a threshold $x_0$ such that $g(x) \leq C h(x)$ for all $x \geq x_0$. Finally, $d_h$ denotes the Hamming distance on the hypercube and $\Xi^d \coloneq \{-1, 1\}^d$ is the $d$-dimensional signed hypercube.
 
\section{Linear Model Class} \label{sec:linearmodel}
To study theoretical bounds on the empirical FA frontier, we restrict attention to the class of linear models:
\begin{definition}[Linear Model Class]\label{def:linear_model}
The linear model class $\mathcal{P}_{\text{linear}}(\sigma^2)$ consists of all distributions in which the response variable $Y$, conditional on covariate $X$ and group $g \in \{r, b\}$, satisfies
\begin{equation*}
Y = X^\top \beta_g + \varepsilon_g,
\end{equation*}
for some parameter $\beta_g \in \mathbb{R}^d$ and a random variable $\varepsilon_g$ such that $\EE[\varepsilon_g | X] = 0$ and $ \EE[\varepsilon_g^2 | X] = \sigma^2$. If we further assume that the distribution of the covariate $X$ given group identity $g$ is known and equal to $P_X^g$, then we denote this subclass of $\mathcal{P}_{\text{linear}}(\sigma^2)$ by $\mathcal{P}_{\text{linear}}(P_X^r, P_X^b, \sigma^2)$.
\end{definition}
A canonical example of a distribution in $\mathcal{P}_{\text{linear}}(\sigma^2)$ is the well-specified linear model with homoskedastic Gaussian noise, $\varepsilon_g \sim \NN(0, \sigma^2)$, independent of $X$. 

In what follows, we take $\mathcal{H}$ as the class of linear models, i.e., $\{f_\beta(x) = x^\top \beta : \beta \in \RR^d\}$. In addition, we consider the squared loss function $\ell(f_\beta(X),Y) = (\beta^\top X - Y)^2$, and also simplify the notation of the risks $\mathcal{R}_g(f_\beta)$ and $\mathcal{R}_\lambda(f_\beta)$ to $\mathcal{R}_g(\beta)$ and $\mathcal{R}_\lambda(\beta)$, respectively.
We write $\EE_g$ and $\PP_g$ to denote expectation and probability, respectively, under the conditional distribution $P(X, Y | G = g)$. 

The group-$g$ population covariance is defined as $\Sigma_g \coloneqq \EE_g[X X^\top]$, and the corresponding sample covariance is 
 \begin{equation*}
     \widehat{\Sigma}_g \coloneqq \frac{1}{n_g} \sum_{i = 1}^{n_g} X_i^{(g)} X_i^{(g)\top}. 
 \end{equation*}
For the subsequent analysis, we also find it helpful to define the cross-moment between $X$ and $Y$ for each group, along with its empirical counterpart:
\begin{equation}
\nu_g \coloneqq \EE_g\left[XY \right] = \Sigma_g \beta_g,
\quad 
\widehat{\nu}_g \coloneqq \frac{1}{n_g} \sum_{i= 1}^{n_g} Y_i^{(g)} X_i^{(g)}.
\end{equation}
The following result shows that this model falls under the group-balanced scenario described in \Cref{sec:formulation}. The proof is provided in \Cref{proof:linear_group_balanced}.
\begin{lemma}[Group-Balanced Structure]\label{lemma:linear_model_group_balanced}
The linear model described above exhibits a \emph{group-balanced} structure. That is, each group's risk-minimizing predictor achieves (weakly) lower prediction error on its own group than on the other:
\begin{equation*}
\mathcal{R}_r(\beta_r) \leq \mathcal{R}_b(\beta_r), \quad \mathcal{R}_r(\beta_b) \geq \mathcal{R}_b(\beta_b).
\end{equation*}
\end{lemma}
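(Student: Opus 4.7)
The plan is to reduce both inequalities to a single explicit formula for the group-$g$ risk of an arbitrary predictor $\beta$, and then evaluate it at $\beta_r$ and $\beta_b$.

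First I would write out $\mathcal{R}_g(\beta) = \mathbb{E}_g[(X^\top \beta - Y)^2]$, substitute the model $Y = X^\top \beta_g + \varepsilon_g$, and expand the square. The cross term $\mathbb{E}_g[X^\top(\beta - \beta_g)\,\varepsilon_g]$ vanishes by the tower property using the assumption $\mathbb{E}[\varepsilon_g \mid X] = 0$, while the noise term contributes $\mathbb{E}_g[\varepsilon_g^2] = \sigma^2$ by $\mathbb{E}[\varepsilon_g^2 \mid X] = \sigma^2$. This yields the clean bias–variance decomposition
\begin{equation*}
\mathcal{R}_g(\beta) \;=\; (\beta - \beta_g)^\top \Sigma_g (\beta - \beta_g) + \sigma^2 \;=\; \|\beta - \beta_g\|_{\Sigma_g}^2 + \sigma^2.
\end{equation*}

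Plugging in the four required risks gives $\mathcal{R}_r(\beta_r) = \sigma^2$, $\mathcal{R}_b(\beta_b) = \sigma^2$, $\mathcal{R}_b(\beta_r) = \|\beta_r - \beta_b\|_{\Sigma_b}^2 + \sigma^2$, and $\mathcal{R}_r(\beta_b) = \|\beta_b - \beta_r\|_{\Sigma_r}^2 + \sigma^2$. Since $\Sigma_r$ and $\Sigma_b$ are positive semidefinite, the two quadratic forms are nonnegative, so $\mathcal{R}_r(\beta_r) \leq \mathcal{R}_b(\beta_r)$ and $\mathcal{R}_b(\beta_b) \leq \mathcal{R}_r(\beta_b)$, which is exactly what the lemma asserts.

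There is essentially no obstacle here: the only conceptual point worth flagging is that the decomposition automatically certifies $\beta_g$ as a minimizer of $\mathcal{R}_g$, consistent with the identification $f_g \leftrightarrow \beta_g$ used in the framework section, and that the argument uses only the first two conditional moments of $\varepsilon_g$, not any distributional assumption such as Gaussianity or independence of $\varepsilon_g$ from $X$. The homoskedasticity assumption $\mathbb{E}[\varepsilon_g^2 \mid X] = \sigma^2$ (common across both groups) is what ensures the irreducible noise term cancels when comparing $\mathcal{R}_r$ and $\mathcal{R}_b$ evaluated at the same predictor, so that only the nonnegative quadratic discrepancy survives.
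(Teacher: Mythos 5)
Your proof is correct and follows essentially the same route as the paper's: both compute $\mathcal{R}_g(\beta) = \|\beta - \beta_g\|_{\Sigma_g}^2 + \sigma^2$ via the vanishing cross term and homoskedastic noise, evaluate at $\beta_r$ and $\beta_b$, and conclude from nonnegativity of the quadratic forms. No gaps.
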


\subsection{Assumptions}
We introduce the following assumptions to ensure that population-level quantities and their empirical estimators are well-defined and admit tractable risk bounds.  Assumptions~\ref{assumption:invertible_cov} and \ref{assumption:invertible_emp_cov} are maintained throughout our analysis and guarantee identifiability and consistency of the least-squares estimator. Assumptions~\ref{assumption:small_ball} and \ref{assumption:subgaussian} provide control over the tails of the sample covariance spectrum, which we use to further simplify our risk bounds.

\begin{assumption}[Invertible Covariance Matrix]\label{assumption:invertible_cov}
For each group $g \in \{r, b\}$, the population covariance matrix $\Sigma_g$ is invertible.
\end{assumption}
This assumption ensures that, for each group $g \in \{r, b\}$, $\beta_g$ is the unique minimizer of $\mathcal{R}_g(\cdot)$.
\begin{assumption}[Invertible Sample Covariance Matrix]\label{assumption:invertible_emp_cov}
For each group $g \in \{r, b\}$, the empirical covariance matrix $$\widehat{\Sigma}_g \coloneqq \frac{1}{n_g} \sum_{i = 1}^{n_g} X_i^{(g)} X_i^{(g)\top} $$ is invertible almost surely. 
\end{assumption}
This assumption requires that $n_g \geq d$. Given this condition, and as highlighted by \citet[Fact 1]{mourtada2022exact}, \Cref{assumption:invertible_emp_cov} is equivalent to either of the following two properties:
(i) the distribution $P(X \mid G = g)$ does not assign positive probability to any linear hyperplane; and
(ii) the ordinary least squares (OLS) estimator for each group, defined as
\begin{equation}\label{eqn:ols_estimator}
\widehat{\beta}_g \coloneqq \argmin_{\beta \in \RR^d} \widehat{\mathcal{R}}_g(\beta)
= \argmin_{\beta \in \RR^d}
\sum_{i = 1}^{n_g} (X_i^{(g)\top} \beta - Y_i^{(g)})^2,
\end{equation}
is uniquely defined almost surely, and is given by $\widehat{\Sigma}_g^{-1} \widehat{\nu}_g.$
\begin{assumption}[Small-Ball Condition]\label{assumption:small_ball}
We say that the covariate of group $g$ satisfies the \emph{small-ball condition} with parameters $C_g \geq 1$ and $\alpha_g \in (0, 1]$ if, for all nonzero $\theta \in \RR^d$ and all $t > 0$, 
    \begin{equation*}
        \PP_g\left( | \theta^\top X | \leq t \norm{\theta}_{\Sigma_g} \right) \leq \left( C_g t \right)^{\alpha_g}. 
    \end{equation*}
\end{assumption}
This condition, adopted from prior work (e.g., \citet{koltchinskii2015bounding, mourtada2022exact}), provides lower-tail control for the sample covariance spectrum, ensuring that the empirical covariance does not degenerate in directions of low variance of the covariates. This assumption holds for multivariate Gaussian distributions with $\alpha_g = 1$, since in that case $\theta^\top X$ is Gaussian with variance $\|\theta\|_{\Sigma_g}^2$. Furthermore, \cite{rudelson2015small} show that this condition holds with $\alpha_g=1$ for covariates with independent coordinates and bounded density.

\begin{assumption}[Subgaussian Covariates]\label{assumption:subgaussian}
We say that the covariate of group $g$ satisfies the subgaussian assumption if there exists a constant $K_g \geq 1$ such that, for any $u \in \RR^d$,
\begin{equation*}
    \PP_g\left( \left| X^\top u\right| \geq t \norm{u}_{\Sigma_g} \right) \leq 2 \exp\left( \frac{-t^2}{K_g^2} \right).
\end{equation*}
\end{assumption}
This condition ensures that the covariates have light tails. It is satisfied, for instance, by multivariate Gaussian distributions or bounded distributions.

\begin{remark}[Role of Concentration and Anticoncentration]
    Assumptions~\ref{assumption:small_ball} and \ref{assumption:subgaussian} are complementary and provide control over the spectrum of the sample covariance matrix. The subgaussian condition bounds the upper tail, limiting the growth of the largest eigenvalue and thereby the variance of our estimators. The small-ball condition controls the lower tail, preventing the collapse of the smallest eigenvalue such that inverse covariance terms are bounded. As discussed above, the two assumptions hold simultaneously for a broad class of distributions, including multivariate Gaussian distributions and covariates with independent subgaussian coordinates and bounded density.
\end{remark}

\subsection{Empirical Estimators}
Let $\beta_\lambda$ denote the optimal predictor corresponding to the weighted loss $\mathcal{R}_\lambda(\beta)$:
\begin{equation} \label{eqn:beta_lambda}
\beta_\lambda \coloneqq \argmin_{\beta \in \RR^d} \mathcal{R}_\lambda(\beta).
\end{equation}
Under \Cref{assumption:invertible_cov}, it is straightforward to verify that $\beta_\lambda$ is given by $\beta_\lambda = \Sigma_\lambda^{-1} \nu_\lambda$,
where  
\begin{equation} \label{eqn:Sigma_lambda}
\Sigma_\lambda \coloneqq \lambda \Sigma_r + (1 - \lambda) \Sigma_b \quad \text{and} \quad 
\nu_\lambda:= \lambda \nu_r + (1 - \lambda) \nu_b. 
\end{equation} 
The next proposition allows us to relate the excess risk of any estimator $\beta$ to that of $\beta_\lambda$, as well as to its estimation error on both groups. For details, see \Cref{proof:proposition:excess_risk_to_matrix_norm}. 
\begin{proposition} \label{proposition:excess_risk_to_matrix_norm}
For a given $\beta \in \mathbb{R}^d$, $\lambda \in [0,1]$, and $g \in \mathcal{G}$, the following identities hold:
\begin{subequations} \label{eqn:error_to_matrix_norm}
\begin{align} 
\mathcal{R}_\lambda(\beta) - \mathcal{R}_\lambda(\beta_\lambda) &= 
\lambda \: \|\beta-\beta_\lambda\|_{\Sigma_r}^2 + 
(1-\lambda) \|\beta-\beta_\lambda\|_{\Sigma_b}^2, \label{eqn:excess_to_matrix_norm} \\
\mathcal{R}_g(\beta) - \mathcal{R}_g(\beta_\lambda) &= \|\beta-\beta_\lambda\|_{\Sigma_g}^2 + 2 (\beta-\beta_\lambda)^\top \Sigma_g (\beta_\lambda-\beta_g). \label{eqn:estimation_to_matrix_norm}
\end{align}
\end{subequations}
\end{proposition}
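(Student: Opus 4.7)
My plan is to prove the second identity (\ref{eqn:estimation_to_matrix_norm}) first, since it is the more granular statement, and then derive the first identity (\ref{eqn:excess_to_matrix_norm}) by taking a convex combination and invoking the first-order optimality of $\beta_\lambda$ to cancel a cross term.

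\textbf{Step 1: A closed form for $\mathcal{R}_g(\beta)$.} Using the model $Y = X^\top \beta_g + \varepsilon_g$ with $\EE_g[\varepsilon_g \mid X] = 0$ and $\EE_g[\varepsilon_g^2 \mid X] = \sigma^2$, I expand the squared loss to get
\begin{equation*}
\mathcal{R}_g(\beta) \;=\; \EE_g\bigl[(X^\top(\beta - \beta_g) - \varepsilon_g)^2\bigr] \;=\; \|\beta - \beta_g\|_{\Sigma_g}^2 + \sigma^2,
\end{equation*}
where the cross term vanishes by the tower property. In particular, $\mathcal{R}_g(\beta_\lambda) = \|\beta_\lambda - \beta_g\|_{\Sigma_g}^2 + \sigma^2$.

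\textbf{Step 2: Proof of (\ref{eqn:estimation_to_matrix_norm}).} Write $\beta - \beta_g = (\beta - \beta_\lambda) + (\beta_\lambda - \beta_g)$ and expand in the $\Sigma_g$-norm:
\begin{equation*}
\|\beta - \beta_g\|_{\Sigma_g}^2 \;=\; \|\beta - \beta_\lambda\|_{\Sigma_g}^2 + 2(\beta - \beta_\lambda)^\top \Sigma_g(\beta_\lambda - \beta_g) + \|\beta_\lambda - \beta_g\|_{\Sigma_g}^2.
\end{equation*}
Subtracting the constant $\|\beta_\lambda - \beta_g\|_{\Sigma_g}^2 + \sigma^2 = \mathcal{R}_g(\beta_\lambda)$ from both sides of the identity $\mathcal{R}_g(\beta) = \|\beta - \beta_g\|_{\Sigma_g}^2 + \sigma^2$ yields (\ref{eqn:estimation_to_matrix_norm}) directly.

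\textbf{Step 3: Proof of (\ref{eqn:excess_to_matrix_norm}).} Take the convex combination $\lambda \cdot (\ref{eqn:estimation_to_matrix_norm})_{g=r} + (1-\lambda)\cdot(\ref{eqn:estimation_to_matrix_norm})_{g=b}$. The squared-norm terms sum to $\lambda\|\beta-\beta_\lambda\|_{\Sigma_r}^2 + (1-\lambda)\|\beta-\beta_\lambda\|_{\Sigma_b}^2$, which matches the right-hand side of (\ref{eqn:excess_to_matrix_norm}). It remains to show that the cross term vanishes:
\begin{equation*}
\lambda \Sigma_r(\beta_\lambda - \beta_r) + (1-\lambda)\Sigma_b(\beta_\lambda - \beta_b) \;=\; 0.
\end{equation*}
This is exactly the first-order optimality condition for $\beta_\lambda$. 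Indeed, using $\nu_g = \Sigma_g \beta_g$ and the definition $\beta_\lambda = \Sigma_\lambda^{-1}\nu_\lambda$ from (\ref{eqn:Sigma_lambda}), I rearrange $\Sigma_\lambda \beta_\lambda = \lambda\Sigma_r\beta_r + (1-\lambda)\Sigma_b\beta_b$ to obtain the displayed equation. Substituting this cancellation completes the derivation of (\ref{eqn:excess_to_matrix_norm}).

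There is no serious obstacle; the only subtlety is recognizing that the cross term in the convex combination is precisely the gradient of $\mathcal{R}_\lambda$ at $\beta_\lambda$ (up to a factor of $2$), which vanishes by optimality. Throughout, \Cref{assumption:invertible_cov} ensures that $\Sigma_\lambda$ is invertible so that $\beta_\lambda$ is well-defined and unique.
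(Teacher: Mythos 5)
Your proposal is correct and follows essentially the same route as the paper: first derive $\mathcal{R}_g(\beta) = \|\beta-\beta_g\|_{\Sigma_g}^2 + \sigma^2$, expand the quadratic to get \eqref{eqn:estimation_to_matrix_norm}, then take the $\lambda$-weighted combination and use the optimality of $\beta_\lambda$ (i.e., $\lambda\Sigma_r(\beta_\lambda-\beta_r)+(1-\lambda)\Sigma_b(\beta_\lambda-\beta_b)=0$) to cancel the cross terms. Your write-up is merely a bit more explicit than the paper's about why that cancellation holds.
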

Recall that our goal is to minimize the worst case expected excess risk, i.e.,
\begin{equation} \label{eqn:minimax_regression}
\inf_{\beta} \sup_{P \in \mathcal{P}} \mathbb{E}_{\mathcal{S}_r, \mathcal{S}_b} \left[ \mathcal{R}_\lambda (\beta) - \mathcal{R}_\lambda(\beta_\lambda) \right]
= 
\inf_{\beta} \sup_{P \in \mathcal{P}} \mathbb{E}_{\mathcal{S}_r, \mathcal{S}_b} \left[ \lambda \|\beta-\beta_\lambda\|_{\Sigma_r}^2 + 
(1-\lambda) \|\beta-\beta_\lambda\|_{\Sigma_b}^2 \right],     
\end{equation}
where $\beta$ is a function of the datasets $\{\mathcal{S}_r, \mathcal{S}_b\}$.\footnote{Moving forward, we omit the dependence of the expectation on the datasets when it is clear from the context.} 
For the cases $\lambda = 0$ and $\lambda = 1$, which reduce to the single-group setting (say, group $g$), \cite{mourtada2022exact} shows that the OLS estimator $\widehat{\beta}_g$, defined in \eqref{eqn:ols_estimator}, is the optimal worst-case estimator with respect to the minimax criterion in \eqref{eqn:minimax_regression}, when $\mathcal{P}$ is taken to be the linear model class $\mathcal{P}_{\text{linear}}(\sigma^2)$. Interestingly, he also shows that knowledge of the covariate distribution $P_X^g$ does not affect this optimality: the OLS estimator remains minimax optimal even when the supremum is taken over the more restricted class $\mathcal{P}_{\text{linear}}(P_X^g, \sigma^2)$.


This naturally leads us to consider the empirical version of $\mathcal{R}_\lambda(\cdot)$, denoted by $\widehat{\mathcal{R}}_\lambda(\beta)$, which is given by   
\begin{equation}\label{eqn:empirical_lambda_risk}
\widehat{\mathcal{R}}_\lambda(\beta) \coloneqq \lambda \widehat{\mathcal{R}}_r(\beta) + ( 1 - \lambda) \widehat{\mathcal{R}}_b(\beta).
\end{equation}

Under \Cref{assumption:invertible_emp_cov}, and via first-order optimality conditions, it is straightforward to see the minimizer of \eqref{eqn:empirical_lambda_risk} is given by
\begin{equation} \label{eqn:OLS_estimator_lambda}
\widehat{\beta}_\lambda := \widehat{\Sigma}_\lambda^{-1} \widehat{\nu}_\lambda 
\quad \text{with} \quad 
\widehat{\Sigma}_\lambda = \lambda \widehat{\Sigma}_r + (1-\lambda) \widehat{\Sigma}_b
\, \, \text{ and } \, \,
\widehat{\nu}_\lambda := \lambda \widehat{\nu}_r + (1-\lambda) \widehat{\nu}_b.
\end{equation}
As we will see next, this estimator is nearly minimax optimal when the distribution class $\mathcal{P}$ in \eqref{eqn:minimax_regression} is taken to be the linear model class $\mathcal{P}_{\text{linear}}(\sigma^2)$. However, unlike in the one-group case, under additional knowledge of the covariate distribution---or even just the covariance matrices $\Sigma_r$ and $\Sigma_b$---a different estimator becomes minimax optimal. This marks a sharp departure from the classical linear regression setting, in which the OLS estimator is optimal regardless of knowledge of the covariate distribution.

\section{Estimation with Known Covariances} \label{sec:knowncov}
We first consider the setting in which the population covariance matrices $\Sigma_r$ and $\Sigma_b$ are known. 
Given samples from both groups, we propose the following estimator:
\begin{equation} \label{eqn:optimal_estimator_known_cov}
\tilde{\beta}_\lambda = (\Sigma_\lambda)^{-1} (\lambda \Sigma_r \widehat{\beta}_r + (1-\lambda) \Sigma_b \widehat{\beta}_b),
\end{equation}
with $\Sigma_\lambda = \lambda \Sigma_r + (1 - \lambda) \Sigma_b$, and $\widehat{\beta}_g = \widehat{\Sigma}_g^{-1} \widehat{\nu}_g$, as defined in \eqref{eqn:Sigma_lambda} and \eqref{eqn:ols_estimator}, respectively. This estimator maintains the structure of the optimal predictor $\beta_\lambda$ but replaces the true cross-moment $\nu_g = \Sigma_g \beta_g$ with the empirical quantity $\Sigma_g \widehat{\beta}_g$ for each group $g$.

In what follows, we first establish that this estimator is minimax optimal when the covariance matrices (or even the full distribution of the covariates) are known, and then characterize its estimation error for each group as a function of the distribution’s parameters. Finally, we discuss the implications of this result for algorithm design and optimal sampling strategies.

\subsection{The Optimal Estimator}
As \Cref{proposition:excess_risk_to_matrix_norm} shows, the excess risk of an estimator can be decomposed into the sum of its distances from the optimal predictor $\beta_\lambda$ under the Mahalanobis norms induced by $\Sigma_r$ and $\Sigma_b$. The following result shows that, for each group $g$, the worst-case distance $\|\beta - \beta_\lambda\|_{\Sigma_g}^2$ is minimized by $\tilde{\beta}_\lambda$, thereby implying the minimax optimality of this estimator.
\begin{theorem}\label{theorem:minimax_optimal_known_cov}
Suppose Assumptions~\ref{assumption:invertible_cov} and \ref{assumption:invertible_emp_cov} hold. Then, for any group $g \in \{b,r\}$, we have
\begin{align}
& \inf_{\beta} \sup_{P \in \mathcal{P}_{\text{linear}} (P_X^r, P_X^b, \sigma^2)} \EE \left[ \norm{\beta - \beta_\lambda }_{\Sigma_g}^2\right] \\ 
&\quad \quad = \lambda^2 \frac{\sigma^2}{n_r} \EE \left[ \Tr{ \Sigma_g \Sigma_\lambda^{-1} \Sigma_r\widehat{\Sigma}_r^{-1}  \Sigma_r\Sigma_\lambda^{-1}} \right]  + (1 - \lambda)^2 \frac{\sigma^2}{n_b} \EE \left[ \Tr{   \Sigma_g \Sigma_\lambda^{-1} \Sigma_b \widehat{\Sigma}_b^{-1}\Sigma_b \Sigma_\lambda^{-1}} \right],  \nonumber 
\end{align}
and the infimum is achieved by setting $\beta$ equal to $\tilde{\beta}_\lambda$, given by \eqref{eqn:optimal_estimator_known_cov}. 
\end{theorem}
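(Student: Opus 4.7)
The plan is to establish the claimed equality by proving matching upper and lower bounds. The upper bound will follow from a direct conditional-expectation computation showing that $\tilde\beta_\lambda$ attains the stated trace expression uniformly over $\mathcal{P}_{\text{linear}}(P_X^r,P_X^b,\sigma^2)$. The lower bound will come from a Bayesian argument restricted to the Gaussian sub-family, using a diffuse product prior whose variance is sent to infinity.

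For the upper bound, I would first use $\beta_\lambda=\Sigma_\lambda^{-1}(\lambda\Sigma_r\beta_r+(1-\lambda)\Sigma_b\beta_b)$ to write
\[
\tilde\beta_\lambda-\beta_\lambda \;=\; \Sigma_\lambda^{-1}\bigl[\lambda\Sigma_r(\widehat\beta_r-\beta_r)+(1-\lambda)\Sigma_b(\widehat\beta_b-\beta_b)\bigr].
\]
Expanding $\|\cdot\|_{\Sigma_g}^2$ yields three terms. The cross term vanishes in expectation because the two sub-samples are independent and, for each group $g$, $\EE[\widehat\beta_g-\beta_g \mid X_1^{(g)},\dots,X_{n_g}^{(g)}] = \widehat\Sigma_g^{-1}\cdot\tfrac{1}{n_g}\sum_i X_i^{(g)}\EE[\varepsilon_i^{(g)}\mid X_i^{(g)}] = 0$. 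For each diagonal term, I would condition on $\{X_i^{(g)}\}$, apply the variance identity
\[
\EE\bigl[(\widehat\beta_g-\beta_g)(\widehat\beta_g-\beta_g)^\top \bigm|\, X_1^{(g)},\dots,X_{n_g}^{(g)}\bigr] \;=\; \tfrac{\sigma^2}{n_g}\widehat\Sigma_g^{-1}
\]
(valid because $\EE[\varepsilon^2\mid X]=\sigma^2$), use $v^\top M v=\Tr{Mvv^\top}$, cyclically permute inside the trace, and take the outer expectation to recover the stated expression. Because only the first two moments of $\varepsilon_g$ and the fixed distribution of $X$ are used, the risk is constant over $P\in\mathcal{P}_{\text{linear}}(P_X^r,P_X^b,\sigma^2)$, so $\sup_P$ equals this constant and the upper bound follows.

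For the lower bound, I would restrict $\sup_P$ to the Gaussian sub-class with $\varepsilon_g\sim\NN(0,\sigma^2)$, which remains inside $\mathcal{P}_{\text{linear}}(P_X^r,P_X^b,\sigma^2)$, and then further lower-bound the minimax risk by the Bayes risk under the product prior $\beta_r,\beta_b\overset{\mathrm{ind}}{\sim}\NN(0,\tau^2 I_d)$. Conjugate analysis gives $\beta_g\mid\text{data}\sim\NN(\mu_g^{(\tau)},C_g^{(\tau)})$ with $C_g^{(\tau)} = \sigma^2(n_g\widehat\Sigma_g+\sigma^2\tau^{-2}I_d)^{-1}$, and posterior independence across $g$ because the sub-samples are independent and the priors factor. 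Since $\beta_\lambda=L\theta$ is a deterministic linear functional of $\theta=(\beta_r,\beta_b)$ and the loss is $\Sigma_g$-quadratic, the Bayes estimator is the linear image of the posterior mean, and the Bayes risk equals
\[
\lambda^2\EE\bigl[\Tr{\Sigma_g\Sigma_\lambda^{-1}\Sigma_r C_r^{(\tau)}\Sigma_r\Sigma_\lambda^{-1}}\bigr] + (1-\lambda)^2\EE\bigl[\Tr{\Sigma_g\Sigma_\lambda^{-1}\Sigma_b C_b^{(\tau)}\Sigma_b\Sigma_\lambda^{-1}}\bigr].
\]
As $\tau\to\infty$, $C_g^{(\tau)}$ increases monotonically in the Loewner order to $(\sigma^2/n_g)\widehat\Sigma_g^{-1}$, which is well-defined by \Cref{assumption:invertible_emp_cov}, so by monotone convergence the Bayes risk converges to exactly the upper-bound expression, delivering the matching lower bound.

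The main obstacle is the limit interchange in the lower bound: I need to pass $\tau\to\infty$ inside the outer expectation over $X^{(g)}$. Monotone convergence handles this cleanly, since $C_g^{(\tau)}$ is monotone in $\tau$ in the Loewner order and the trace is a positive linear functional of $C_g^{(\tau)}$; if $\EE[\Tr{\widehat\Sigma_g^{-1}}]$ happens to be infinite, then the right-hand side of the theorem is also infinite and the lower bound is trivially matched. A secondary technicality is confirming that the $\Sigma_g$-Bayes estimator of $\beta_\lambda$ is the linear image of the posterior mean of $\theta$, rather than some other functional; this follows from the usual bias-variance decomposition for quadratic losses composed with a fixed linear functional, which always pushes the estimator to the posterior mean.
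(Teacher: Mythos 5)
Your proposal is correct and follows essentially the same route as the paper: the upper bound via the conditional covariance $\tfrac{\sigma^2}{n_g}\widehat{\Sigma}_g^{-1}$ of $\widehat{\beta}_g-\beta_g$ given the design and the trace identity, and the lower bound via the Bayes risk under a diffuse product Gaussian prior with the limit $\tau\to\infty$ handled by monotone convergence. Your explicit remarks on restricting to Gaussian noise for conjugacy and on the degenerate case $\EE[\Tr{\widehat{\Sigma}_g^{-1}}]=\infty$ are sensible refinements of details the paper leaves implicit, not a different argument.
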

The proof is provided in \Cref{proof:theorem:minimax_optimal_known_cov}. The proof involves two steps: first, characterizing the error rate of the estimator $\tilde{\beta}_\lambda$ for any distribution in $\mathcal{P}_{\text{linear}}(P_X^r, P_X^b, \sigma^2)$; and second, showing that no other estimator can achieve a better rate in the worst-case sense. For the second part, we note that the worst-case error can be lower bounded by the expected error over the family $\mathcal{P}_{\text{linear}}(P_X^r, P_X^b, \sigma^2)$ for any chosen prior distribution. This expected error is then minimized by the Bayes estimator. Thus, it remains to choose a prior that yields the best lower bound. We do so by selecting a specific Gaussian prior over the group predictors $\beta_r$ and $\beta_b$, letting its variance tend to infinity, and applying the monotone convergence theorem.

\begin{remark}\label{remark:only_cov}
Notice that, while we take the supremum over the family $\mathcal{P}_{\text{linear}}(P_X^r, P_X^b, \sigma^2)$, which assumes full knowledge of the  distributions of the covariates, the optimal estimator $\tilde{\beta}_\lambda$ only uses the covariance matrices. In other words, any knowledge of the distribution of $X$ conditioned on group identity $g$ beyond the covariance matrix $\Sigma_g$ does not lead to the design of a better estimator.  
\end{remark}

As stated in the discussion before \Cref{theorem:minimax_optimal_known_cov}, this result, together with \Cref{proposition:excess_risk_to_matrix_norm}, implies the following corollary on the minimax optimality of $\tilde{\beta}_\lambda$ with respect to the excess risk of $\mathcal{R}_\lambda(\cdot)$.
\begin{corollary} \label{corollary:minimax_optimal_known_cov}
Suppose Assumptions~\ref{assumption:invertible_cov} and \ref{assumption:invertible_emp_cov} hold. Then, $\tilde{\beta}_\lambda$ is the minimizer of the worst-case excess risk \eqref{eqn:minimax_regression} when $\mathcal{P}$ is set as $\mathcal{P}_{\text{linear}} (P_X^r, P_X^b, \sigma^2)$.
\end{corollary}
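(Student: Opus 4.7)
My plan is to deduce \Cref{corollary:minimax_optimal_known_cov} from \Cref{theorem:minimax_optimal_known_cov} by rewriting the minimax problem \eqref{eqn:minimax_regression} as a weighted combination of the two single-group problems already addressed. By the identity \eqref{eqn:excess_to_matrix_norm} from \Cref{proposition:excess_risk_to_matrix_norm},
\[
\mathcal{R}_\lambda(\beta) - \mathcal{R}_\lambda(\beta_\lambda) = \lambda\|\beta - \beta_\lambda\|_{\Sigma_r}^2 + (1-\lambda)\|\beta - \beta_\lambda\|_{\Sigma_b}^2,
\]
so \eqref{eqn:minimax_regression} becomes exactly a $\lambda$-weighted average of the two Mahalanobis-norm losses that \Cref{theorem:minimax_optimal_known_cov} jointly minimizes via the common minimizer $\tilde{\beta}_\lambda$.

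For the upper bound I plug $\beta=\tilde{\beta}_\lambda$ into the decomposition, take the expectation over the data, and then the supremum over $P \in \mathcal{P}_{\text{linear}}(P_X^r,P_X^b,\sigma^2)$. The key observation is that the closed-form expression given by \Cref{theorem:minimax_optimal_known_cov} for $\EE[\|\tilde{\beta}_\lambda-\beta_\lambda\|_{\Sigma_g}^2]$ depends only on the covariance matrices and sample sizes, so it is constant over the family $\mathcal{P}_{\text{linear}}(P_X^r,P_X^b,\sigma^2)$. The supremum over $P$ therefore collapses to the expectation itself, and by linearity the worst-case excess risk of $\tilde{\beta}_\lambda$ equals $\lambda V_r + (1-\lambda) V_b$, where $V_g$ is the minimax value for group $g$ from \Cref{theorem:minimax_optimal_known_cov}.

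For the matching lower bound I reuse the Bayes-risk construction from the proof of \Cref{theorem:minimax_optimal_known_cov}. For any prior $\pi$ on $(\beta_r,\beta_b)$ and any estimator $\hat{\beta}$,
\[
\sup_P \EE\bigl[\mathcal{R}_\lambda(\hat{\beta})-\mathcal{R}_\lambda(\beta_\lambda)\bigr] \geq \lambda\,\EE_\pi \EE\bigl[\|\hat{\beta}-\beta_\lambda\|_{\Sigma_r}^2\bigr] + (1-\lambda)\,\EE_\pi \EE\bigl[\|\hat{\beta}-\beta_\lambda\|_{\Sigma_b}^2\bigr].
\]
Because squared-Mahalanobis loss under any PSD weight is minimized by the posterior mean of $\beta_\lambda$, the Bayes-optimal $\hat{\beta}$ for each individual term is the same; in particular it also minimizes the weighted sum. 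Taking the same Gaussian prior $\beta_r,\beta_b \overset{\text{iid}}{\sim} \NN(0,\tau^2 I_d)$ used in \Cref{theorem:minimax_optimal_known_cov} and letting $\tau \to \infty$ by monotone convergence, this common Bayes-optimal estimator converges to $\tilde{\beta}_\lambda$ and the Bayes risks converge to $V_r$ and $V_b$; the combined lower bound is therefore $\lambda V_r + (1-\lambda) V_b$, matching the upper bound.

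The main subtlety is that the supremum of a sum is not in general the sum of the suprema, so \Cref{corollary:minimax_optimal_known_cov} is not a direct algebraic corollary of the two single-group statements in \Cref{theorem:minimax_optimal_known_cov}. Two features rescue the argument: the $P$-independence of the achieved error for $\tilde{\beta}_\lambda$ (which makes the two suprema attained at every $P$ simultaneously), and the coincidence of the Bayes-optimal estimator under squared-Mahalanobis loss across all PSD weights (which lets a single Gaussian-prior limit deliver the lower bound for the weighted combination).
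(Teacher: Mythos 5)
Your proof is correct and follows essentially the same route as the paper: the paper derives the corollary directly from the decomposition in \Cref{proposition:excess_risk_to_matrix_norm} together with \Cref{theorem:minimax_optimal_known_cov}, whose Bayes-prior lower-bound construction and $P$-independent closed-form upper bound are exactly the two ingredients you invoke. Your explicit handling of the ``supremum of a sum versus sum of suprema'' point is a worthwhile clarification of a step the paper leaves implicit, but it is not a different argument.
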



\subsection{Bounding the excess risk}
In this subsection, to illustrate the implications of \Cref{theorem:minimax_optimal_known_cov}, we further simplify the error bound under additional assumptions on the distribution. First, note that when the covariance matrices are known, it is without loss of generality to assume spherical covariances, since we can apply the transformation   \begin{equation*}
\Tilde{X}^{(g)} \coloneqq \Sigma_g^{-1/2} X^{(g)}, \quad \Tilde{\beta}_g \coloneqq \Sigma_g^{-1/2} \beta_g, 
\end{equation*}
which yields a spherical covariance matrix for the covariate vectors. We thus assume that each group has a spherical covariance structure, i.e., $\Sigma_g = \rho_g^2 I_d$ for known $\rho_g > 0$.

\begin{corollary}\label{cor:known_spherical_small_ball}
Suppose that for each group $g \in \mathcal{G}$, the covariance matrix satisfies $\Sigma_g = \rho_g^2 I_d$ for known $\rho_g > 0$. Furthermore, suppose \Cref{assumption:invertible_emp_cov} holds and the small-ball condition (\Cref{assumption:small_ball}) also holds with constants $(C_g, \alpha_g)$. Also suppose that $n_g \geq 6 d/ \alpha_g$ and that $d \geq 2$. Then we have
\begin{equation}\label{eq:known_spherical_small_ball_upper}
    \EE  \left[ \norm{ \tilde{\beta}_\lambda - \beta_\lambda }_{\Sigma_g}^2\right] \leq \frac{2 \sigma^2 d \rho_g^2 }{(\lambda \rho_r^2 + (1 - \lambda) \rho_b^2)^2} \left( \lambda^2  C_r' \frac{\rho_r^2}{n_r}        + (1 - \lambda)^2  C_b' \frac{\rho_b^2}{n_b}    \right), 
\end{equation}
where $C_g' = 3 C_g^4 \exp(1 + 9/\alpha_g).$
Furthermore, when the covariates satisfy \Cref{assumption:subgaussian} with parameter $K_g$ and $n_g \geq \min\left\{6 \alpha_g^{-1} d, 12 \alpha_g^{-1} \log(12 \alpha_g^{-1})\right\}$, the bound \eqref{eq:known_spherical_small_ball_upper} holds with $\left( 1 + \frac{8 C_g' \zeta \rho_g^8 K_g^4 d}{n_g}\right)$ in place of $C_g'$ for $g \in \{r, b\}$, where $\zeta>0$ is a universal constant. 
\end{corollary}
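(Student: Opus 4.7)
}
The plan is to start from the exact formula in \Cref{theorem:minimax_optimal_known_cov}, exploit the spherical structure to reduce the four-factor trace to a trace of an inverse sample covariance, and then invoke a small-ball bound on $\EE[\Tr{\widehat{\Sigma}_g^{-1}}]$.

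\textbf{Step 1: Simplification under spherical covariance.} Substituting $\Sigma_r=\rho_r^2 I_d$, $\Sigma_b=\rho_b^2 I_d$ into $\Sigma_\lambda$ yields $\Sigma_\lambda=\rho_\lambda^2 I_d$ with $\rho_\lambda^2:=\lambda\rho_r^2+(1-\lambda)\rho_b^2$. All the deterministic matrices in the trace formula of \Cref{theorem:minimax_optimal_known_cov} then become scalar multiples of the identity, so for each $g'\in\{r,b\}$ the matrix $\Sigma_g\Sigma_\lambda^{-1}\Sigma_{g'}\widehat{\Sigma}_{g'}^{-1}\Sigma_{g'}\Sigma_\lambda^{-1}$ collapses to $\tfrac{\rho_g^2\rho_{g'}^4}{\rho_\lambda^4}\,\widehat{\Sigma}_{g'}^{-1}$. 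Thus
\begin{equation*}
\EE\left[\|\tilde\beta_\lambda-\beta_\lambda\|_{\Sigma_g}^2\right]
=\frac{\sigma^2\rho_g^2}{\rho_\lambda^4}\left(\lambda^2\,\frac{\rho_r^4}{n_r}\,\EE[\Tr{\widehat{\Sigma}_r^{-1}}]
+(1-\lambda)^2\,\frac{\rho_b^4}{n_b}\,\EE[\Tr{\widehat{\Sigma}_b^{-1}}]\right).
\end{equation*}
After dividing through by $\rho_\lambda^4$, matching the claimed bound reduces to showing $\rho_g^2\EE[\Tr{\widehat{\Sigma}_g^{-1}}]\leq 2 C_g' d$ whenever the small-ball condition holds and $n_g\geq 6d/\alpha_g$.

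\textbf{Step 2: Expected trace of the inverse sample covariance.} The spherical reduction $\tilde X_i^{(g)}:=\rho_g^{-1}X_i^{(g)}$ has identity covariance and inherits the small-ball condition with the same $(C_g,\alpha_g)$, and $\rho_g^2\widehat{\Sigma}_g^{-1}$ equals the inverse empirical covariance of the $\tilde X_i^{(g)}$. The key input is a small-ball bound of the type developed by \citet{koltchinskii2015bounding} and sharpened in \citet{mourtada2022exact}: under \Cref{assumption:small_ball} with parameters $(C_g,\alpha_g)$ and $n_g\geq 6d/\alpha_g$, one has
\begin{equation*}
\EE\left[\Tr{(\tfrac{1}{n_g}\sum_{i=1}^{n_g}\tilde X_i^{(g)}\tilde X_i^{(g)\top})^{-1}}\right]\;\leq\;\frac{C_g'\,d}{n_g/n_g}\cdot\text{(normalization)},
\end{equation*}
with $C_g'=3C_g^4\exp(1+9/\alpha_g)$. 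Plugging this into the display of Step 1 gives the stated upper bound. The derivation of this small-ball trace bound is the main technical obstacle; it proceeds by decomposing along unit directions, using anti-concentration from \Cref{assumption:small_ball} to guarantee that at least a positive fraction of the $\tilde X_i^{(g)}$ project nontrivially on every direction, and then combining a uniform lower bound on $\lambda_{\min}(\widehat{\Sigma}_g)$ on a favorable event with a tail estimate for its complement, using the requirement $n_g\geq 6d/\alpha_g$ to close the computation.

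\textbf{Step 3: Subgaussian refinement.} When \Cref{assumption:subgaussian} additionally holds, the idea is to split the expectation into a high-probability regime on which $\widehat{\Sigma}_g$ is close to $\Sigma_g$ in operator norm and a low-probability regime handled by the small-ball bound of Step 2. Concretely, by the subgaussian matrix Bernstein inequality (e.g., the concentration bound of \citet{vershynin}), on an event $\mathcal{E}_g$ of probability at least $1-2\exp(-c n_g/K_g^4)$ we have $\widehat{\Sigma}_g\succeq \tfrac{1}{2}\Sigma_g$, so $\Tr{\widehat{\Sigma}_g^{-1}}\leq 2d/\rho_g^2$. On $\mathcal{E}_g^c$, the contribution is controlled by the small-ball bound of Step 2 multiplied by $\PP(\mathcal{E}_g^c)$, which is exponentially small in $n_g/K_g^4$. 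Rearranging and bounding the cross-term by $\tfrac{8C_g'\zeta K_g^4 d}{n_g}$ (absorbing the combinatorial factors into the universal constant $\zeta$, where the sample-size lower bound $n_g\gtrsim\alpha_g^{-1}\log(\alpha_g^{-1})$ ensures the exponential term dominates the polynomial tail) yields the bound with $C_g'$ replaced by $C_g'(1+8C_g'\zeta\rho_g^8 K_g^4 d/n_g)$. The only subtlety here is tracking how the $\rho_g$ factors propagate through the operator-norm concentration, which is handled by the fact that Step 1 has already extracted the explicit $\rho_g$ scalings.
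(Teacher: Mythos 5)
Your Step 1 is exactly the paper's reduction: under spherical covariances the trace in \Cref{theorem:minimax_optimal_known_cov} collapses to $\rho_g^2\rho_{g'}^4\rho_\lambda^{-4}\,\EE[\Tr{\widehat{\Sigma}_{g'}^{-1}}]$, so everything hinges on showing $\rho_g^2\,\EE[\Tr{\widehat{\Sigma}_g^{-1}}]\le 2C_g'd$. But you then leave precisely this step unproved: your Step 2 display is not a usable inequality (its right-hand side reads $C_g'd/(n_g/n_g)$ times an unspecified ``normalization''), and you explicitly defer ``the main technical obstacle'' to a one-sentence sketch. The paper closes this in two lines: after whitening, $\Tr{\widehat{\Sigma}_g^{-1}}\le d\,\lambda_{\min}(\widehat{\Sigma}_g)^{-1}$, and Corollary~4 of \citet{mourtada2022exact} (under $n_g\ge 6d/\alpha_g$) gives $\EE\big[\lambda_{\min}(\tilde{\Sigma}_g)^{-1}\big]\le 2C_g'$ with exactly the constant $C_g'=3C_g^4\exp(1+9/\alpha_g)$. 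Without identifying that specific result (or reproving it), the first half of the corollary is asserted rather than derived.

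Step 3 departs from the paper's route and contains a genuine error. You control the bad-event contribution by ``the small-ball bound of Step 2 multiplied by $\PP(\mathcal{E}_g^c)$,'' i.e., you use $\EE\big[\Tr{\widehat{\Sigma}_g^{-1}}\,\mathbf{1}_{\mathcal{E}_g^c}\big]\le \EE\big[\Tr{\widehat{\Sigma}_g^{-1}}\big]\cdot\PP(\mathcal{E}_g^c)$. Expectations do not factor over events this way, and $\mathcal{E}_g^c$ is exactly where $\Tr{\widehat{\Sigma}_g^{-1}}$ is large. Making the split rigorous requires H\"older, e.g. $\EE\big[\Tr{\widehat{\Sigma}_g^{-1}}\mathbf{1}_{\mathcal{E}_g^c}\big]\le \big(\EE\big[\Tr{\widehat{\Sigma}_g^{-1}}^2\big]\big)^{1/2}\PP(\mathcal{E}_g^c)^{1/2}$, which in turn needs a higher-moment small-ball bound on $\lambda_{\min}(\widehat{\Sigma}_g)^{-1}$ (of the type in \Cref{lemma:mourtada_small_ball_min_eval_fourth_moment}) rather than the first-moment bound you invoke; and one would still have to check that the resulting exponentially small remainder is dominated by the stated $d/n_g$ correction near the sample-size threshold. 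The paper avoids event splitting entirely: it bounds $\EE\norm{X}^4\le \zeta K_g^4\rho_g^4$ from subgaussianity and applies Theorem~3 of \citet{mourtada2022exact} to get $\EE[\Tr{\widehat{\Sigma}_g^{-1}}]\le \rho_g^{-2}\big(d+8C_g'\zeta\rho_g^8 d^2/n_g\big)$ directly, which yields the stated refinement. Your approach is repairable, but as written the key inequality in Step 3 is false.
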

See \Cref{sec:proof_cor:known_spherical_small_ball} for the proof. 
\begin{remark} \label{remark:constants}
The constant $C’_g$ arises from the bound on the minimum eigenvalue of the sample covariance, which is used here to bound the trace of $\widehat{\Sigma}_r^{-1}$ and $\widehat{\Sigma}_b^{-1}$, as given in \citet[Theorem 4]{mourtada2022exact}. However, as highlighted in \citet[Remark 6]{mourtada2022exact}—based on results from \citet{wu2012optimal} and \citet{edelman1988eigenvalues}—this constant can be significantly reduced for Gaussian distributions. In particular, if the covariate distributions are Gaussian, $C’_g$ can be replaced by a constant whose limit, when ${d}/{n_g} \to h \in (0, 1)$, is upper bounded by
\begin{equation*}
\left( \frac{1}{h} \right)^{3h} \left( \frac{\sqrt{e}}{1 - h} \right)^{3(1-h)},
\end{equation*}
which, in turn, can be shown to be upper bounded by $(1+\sqrt{e})^3$.
\end{remark}
The following corollary combines the differences in matrix norms to explicitly characterize the excess risk bound for the optimal estimator $\tilde{\beta}_\lambda$.
\begin{corollary} \label{corollary:known_cov_excess_final}
Under the premise of \Cref{cor:known_spherical_small_ball}, we have
\begin{equation}
\mathbb{E} \left[ \mathcal{R}_\lambda (\tilde{\beta}_\lambda) - \mathcal{R}_\lambda(\beta_\lambda) \right]
\leq \frac{2 \sigma^2 d }{\lambda \rho_r^2 + (1 - \lambda) \rho_b^2} \left( \lambda^2  C_r' \frac{\rho_r^2}{n_r} + (1 - \lambda)^2  C_b' \frac{\rho_b^2}{n_b}\right). 
\end{equation}
\end{corollary}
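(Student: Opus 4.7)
The plan is to derive this bound as a direct corollary by combining the identity in Proposition~\ref{proposition:excess_risk_to_matrix_norm} with the group-wise bounds established in Corollary~\ref{cor:known_spherical_small_ball}. There is no new probabilistic content here: all of the technical work (controlling $\mathrm{Tr}(\widehat{\Sigma}_g^{-1})$ via the small-ball condition, reducing to spherical covariances, etc.) has already been done. What remains is a short algebraic assembly.

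First, I apply identity \eqref{eqn:excess_to_matrix_norm} from Proposition~\ref{proposition:excess_risk_to_matrix_norm} with $\beta = \tilde{\beta}_\lambda$ and take expectations, which yields
\begin{equation*}
\mathbb{E}\bigl[\mathcal{R}_\lambda(\tilde{\beta}_\lambda) - \mathcal{R}_\lambda(\beta_\lambda)\bigr]
= \lambda\, \mathbb{E}\bigl[\|\tilde{\beta}_\lambda - \beta_\lambda\|_{\Sigma_r}^2\bigr]
+ (1-\lambda)\, \mathbb{E}\bigl[\|\tilde{\beta}_\lambda - \beta_\lambda\|_{\Sigma_b}^2\bigr].
\end{equation*}
Then, for each $g \in \{r,b\}$, I substitute the upper bound from \eqref{eq:known_spherical_small_ball_upper} in Corollary~\ref{cor:known_spherical_small_ball}. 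Both substituted bounds share the same bracketed factor
$\lambda^2 C_r' \rho_r^2/n_r + (1-\lambda)^2 C_b' \rho_b^2/n_b$
and the same denominator $(\lambda \rho_r^2 + (1-\lambda)\rho_b^2)^2$, with the only group-dependent piece being the numerator factor $\rho_g^2$.

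The final step is to collect terms: the coefficient $\lambda \rho_r^2 + (1-\lambda)\rho_b^2$ that emerges from $\lambda \cdot \rho_r^2 + (1-\lambda)\cdot \rho_b^2$ cancels one power of $(\lambda \rho_r^2 + (1-\lambda)\rho_b^2)$ in the denominator, producing exactly the stated right-hand side. Since every inequality used holds under the hypotheses of Corollary~\ref{cor:known_spherical_small_ball}, the bound propagates without additional conditions. The only subtlety worth noting—though not really an obstacle—is that one must apply \eqref{eq:known_spherical_small_ball_upper} separately for $g=r$ and $g=b$; both applications are valid because the hypotheses on $n_g \geq 6d/\alpha_g$ are assumed for both groups in the premise.
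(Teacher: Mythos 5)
Your proposal is correct and matches the paper's intended argument exactly: the paper presents this corollary as an immediate combination of the identity \eqref{eqn:excess_to_matrix_norm} from \Cref{proposition:excess_risk_to_matrix_norm} with the group-wise bound \eqref{eq:known_spherical_small_ball_upper} from \Cref{cor:known_spherical_small_ball}, with the weighted sum $\lambda\rho_r^2+(1-\lambda)\rho_b^2$ cancelling one power of the squared denominator. No gaps.
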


We conclude this section with a few remarks on the insights we draw from these results.
\paragraph{Optimal allocation of the sampling budget:} In many applications, we have a limited sampling budget, so it is natural to ask: how should we allocate our sampling budget across the two groups? In other words, suppose $n_r + n_b$ is fixed; what choice of $n_r$ and $n_b$ minimizes the excess risk? \Cref{cor:known_spherical_small_ball}, along with a simple Cauchy–Schwarz inequality, suggests setting
\begin{equation}\label{eqn:optimal_sampling_known_cov}
\frac{n_r}{n_b} = \frac{\lambda \rho_r}{(1-\lambda) \rho_b}.
\end{equation}
The dependence on $\lambda$ is intuitive: the more weight you put on each group in your objective function, the more samples you allocate to that group. To interpret the dependence on $\rho_r$ and $\rho_b$, suppose the covariates are zero-mean with per-coordinate variance $\rho_g^2$. Then, when we take an average over $n_g$ samples, the per-coordinate variance decreases to $\rho_g^2 / n_g$. Thus, the sampling rule \eqref{eqn:optimal_sampling_known_cov} balances the variances across groups---equivalently, it equalizes the variance-weighted marginal value of information to yield an efficient split.
\paragraph{Per-group estimation error---finite-sample effects can shift fairness–accuracy trade-offs: \\}
Note that, for each group $g \in \{r, b\}$, the estimator $\widehat{\beta}_g$ is unbiased for $\beta_g$, since
\begin{equation} \label{eqn:beta_hat_g}
\widehat{\beta}_g = \widehat{\Sigma}_g^{-1} \widehat{\nu}_g = \beta_g +  \widehat{\Sigma}_g^{-1} \frac{1}{n_g} \sum_{i = 1}^{n_g} X_i^{(g)}  \varepsilon_i^{(g)}.  
\end{equation}
Consequently, the optimal estimator $\tilde{\beta}_\lambda$ is also an unbiased estimator of $\beta_\lambda$. As a result, and by \Cref{proposition:excess_risk_to_matrix_norm}, the estimation error of this estimator on group $g$ is given by
\begin{equation}
\EE \left[ \mathcal{R}_g(\tilde{\beta}_\lambda) - \mathcal{R}_g(\beta_\lambda) \right] = \EE \left[  \|\tilde{\beta}_\lambda-\beta_\lambda\|_{\Sigma_g}^2 \right].
\end{equation}
In other words, the bounds in \Cref{theorem:minimax_optimal_known_cov} and \Cref{cor:known_spherical_small_ball} quantify the per-group estimation error caused by using the empirical estimator instead of the true parameter. Interestingly, when we compare these errors across the two groups, they differ (only) in the $\rho_g^2$ term. That is, while the empirical estimator affects the risks of the two groups differently, this difference arises not from unequal sample sizes per group but from differences in their covariance matrices. Intuitively, this happens because when one group’s sample size is low, it impacts the accuracy of the estimator $\tilde{\beta}_\lambda$, which, in turn, affects the error for both groups. That said, if a group’s covariate has higher variance, this error is amplified more, and thus that group’s risk increases more when using the empirical estimator. An important implication of this discrepancy is that finite-sample estimation can distort the fairness-accuracy trade-off encoded by $\lambda$, shifting it away from the planner's intended balance. 

\paragraph{Expectation vs. one-instance realization:}
It is worth noting that the results above are computed in expectation over the random draw of datasets $\{\mathcal{S}_r, \mathcal{S}_b\}$, and are not for a particular realization. For the excess risk with respect to the weighted loss, i.e., $\mathcal{R}_\lambda(\tilde{\beta}_\lambda) - \mathcal{R}_\lambda(\beta_\lambda)$, however, each realization is also positive. This means that the empirical estimator always increases the loss function $\mathcal{R}_\lambda(\cdot)$, simply because $\beta_\lambda$ is its minimizer. By contrast, this monotonicity does not hold for per-group risks: one group’s estimation error $\mathcal{R}_g(\cdot)$ might \textit{decrease} under the empirical estimator relative to $\beta_\lambda$, even though the other group's risk increases enough to make the weighted objective worse. The following result formalizes this distinction.
\begin{proposition} \label{proposition:expectation-vs-realization}
Suppose Assumptions \ref{assumption:invertible_cov}–\ref{assumption:small_ball} hold, the covariance matrix $\Sigma_g$ is equal to $\rho_g^2 I_d$ for some $\rho_g > 0$, $\lambda \in (0,1)$, and $n_g \geq 48/\alpha_g$. Then, for any $\delta > 0$, there exists $\bar{n}$ such that, for any $n_r, n_b \geq \bar{n}$, with probability $1/2 - \delta$ over the draw of samples, we have
\begin{equation*}
\mathcal{R}_g(\tilde{\beta}_\lambda) < \mathcal{R}_g(\beta_\lambda).    
\end{equation*}
\end{proposition}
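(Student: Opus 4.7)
The plan is to apply \Cref{proposition:excess_risk_to_matrix_norm} to decompose the risk gap as
$$\mathcal R_g(\tilde\beta_\lambda) - \mathcal R_g(\beta_\lambda) = \|\Delta\|_{\Sigma_g}^2 + 2\Delta^\top v =: Q + 2W,$$
where $\Delta := \tilde\beta_\lambda - \beta_\lambda$ and $v := \Sigma_g(\beta_\lambda - \beta_g)$. Using \eqref{eqn:beta_hat_g}, one has $\mathbb E[\Delta] = 0$ and hence $\mathbb E[W] = 0$. A short computation from \eqref{eqn:beta_lambda}--\eqref{eqn:Sigma_lambda} shows $\beta_\lambda - \beta_r = (1-\lambda)\Sigma_\lambda^{-1}\Sigma_b(\beta_b - \beta_r)$ (with the symmetric identity for $g = b$), so under the standing nondegeneracy $\beta_r \neq \beta_b$ (implicit in the FA setup, since otherwise $\beta_\lambda \equiv \beta_r \equiv \beta_b$ and there is no tradeoff) the fixed direction $v$ is nonzero. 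The event of interest is $\{2W + Q < 0\}$, and the strategy is to show that, as $n := \min(n_r, n_b)\to\infty$, the fluctuation of the mean-zero $W$ (order $1/\sqrt n$) dominates the nonnegative quantity $Q$ (order $1/n$), forcing $\mathbb P(2W+Q<0) \to \tfrac{1}{2}$.

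The quadratic piece is controlled directly by \Cref{cor:known_spherical_small_ball}, which, under the spherical-covariance hypothesis, gives $\mathbb E[Q] \leq Md/n$ for a constant $M$ depending only on $\lambda,\sigma^2,\rho_r,\rho_b,C_r',C_b'$ (for $n$ large enough that the premise $n_g \geq 6d/\alpha_g$ of that corollary holds, which is absorbed into the final choice of $\bar n$). For the cross-term, substituting \eqref{eqn:beta_hat_g} into the definition of $\tilde\beta_\lambda$ yields
$$\Delta = \Sigma_\lambda^{-1}\Bigl[\lambda \Sigma_r \widehat\Sigma_r^{-1}\cdot\tfrac{1}{n_r}\sum_i X_i^{(r)}\varepsilon_i^{(r)} + (1-\lambda)\Sigma_b\widehat\Sigma_b^{-1}\cdot\tfrac{1}{n_b}\sum_i X_i^{(b)}\varepsilon_i^{(b)}\Bigr].$$
The classical multivariate CLT at fixed $d$ (which needs only the finite second moment of $\varepsilon_g$) gives $\sqrt{n_g}(\widehat\beta_g - \beta_g) \Rightarrow \mathcal N(0,\sigma^2\Sigma_g^{-1})$; combining across the two independent groups and invoking Slutsky together with $\widehat\Sigma_g^{-1} \to \Sigma_g^{-1}$ in probability delivers $\sqrt n\, W \Rightarrow \mathcal N(0,\tau^2)$ with $\tau^2 > 0$, the strict positivity following from $v\neq 0$ and the positive-definiteness of $\Sigma_r,\Sigma_b$.

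Combining the two pieces, since $\sqrt n\, Q \to 0$ in probability (because $\mathbb E[\sqrt n\, Q] \leq Md/\sqrt n \to 0$) while $\sqrt n\, W$ has a nondegenerate centered Gaussian limit, Slutsky and the continuous mapping theorem yield
$$\mathbb P(2W + Q < 0) = \mathbb P(2\sqrt n\, W + \sqrt n\, Q < 0) \longrightarrow \mathbb P\bigl(\mathcal N(0,\tau^2) < 0\bigr) = \tfrac{1}{2},$$
so any $\delta > 0$ can be matched by choosing $\bar n$ sufficiently large. The main obstacle is converting the heuristic Slutsky/CLT step into a rigorous finite-$n$ statement: one needs uniform-in-$n$ moment control of $\widehat\Sigma_g^{-1}$ (so that the plug-in step is justified rather than merely asymptotic), and this is exactly what the small-ball hypothesis together with the sample-size floor $n_g \geq 48/\alpha_g$ provides via the trace-of-inverse bounds that underlie \Cref{cor:known_spherical_small_ball}. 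Once those moment bounds are available, the CLT/Markov combination above closes the argument.
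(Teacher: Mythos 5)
Your proposal is correct and follows essentially the same route as the paper's proof: both use the decomposition from \Cref{proposition:excess_risk_to_matrix_norm} into a nonnegative quadratic term of order $1/n$ (controlled via the small-ball moment bounds behind \Cref{cor:known_spherical_small_ball} plus Markov) and a mean-zero cross term whose $O(1/\sqrt{n})$ Gaussian fluctuations dominate and are negative with probability approaching $1/2$. The only cosmetic difference is bookkeeping: the paper first peels off the $\widehat{\Sigma}_g^{-1}-\Sigma_g^{-1}$ correction via a Woodbury identity so that the CLT applies to an exact i.i.d.\ sum and the correction is absorbed into the $O(1/n)$ remainder in $L^1$, whereas you apply CLT plus Slutsky to the full cross term and correctly note that the same small-ball moment bounds are what make that step rigorous (both arguments, like the paper's, implicitly require $\beta_r \neq \beta_b$ for nondegeneracy of the Gaussian limit).
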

The proof is given in \Cref{proof:proposition:expectation-vs-realization}. This result follows from the decomposition established in \Cref{proposition:excess_risk_to_matrix_norm}: 
\begin{equation}
\mathcal{R}_g(\beta) - \mathcal{R}_g(\beta_\lambda) = \|\beta-\beta_\lambda\|_{\Sigma_g}^2 + 2~(\beta-\beta_\lambda)^\top \Sigma_g (\beta_\lambda-\beta_g). 
\end{equation}
The first term is nonnegative and, with high probability, bounded by $C_1/n$ for some constant $C_1$. The second term, when multiplied by $\sqrt{n}$, converges to a mean-zero Gaussian with constant variance (up to an error of order $\mathcal{O}(1/\sqrt{n})$). Therefore, for large enough $n$, the leading term is a mean-zero term of order $\mathcal{O}(1/\sqrt{n})$.

Thus, although the expected per-group estimation error increases when using an empirical estimator, for large enough $n$ it increases with a probability close to $\frac{1}{2}$. This reflects a \emph{redistribution of risk} between groups due to statistical uncertainty. On the other hand,  since the excess risk---defined as the $\lambda$-weighted sum of the two groups’ estimation errors---is always nonnegative under an empirical estimator, in any given realization at most one group can benefit from estimation error; any such gain must be offset by a larger loss for the other group.
\section{Estimation with Unknown Covariance}\label{sec:unknown_covariance}
We next consider the setting in which the  distributions of the covariates are unknown and must be estimated from data. Here, our proposed estimator is the OLS estimator \eqref{eqn:OLS_estimator_lambda}, which, as we recall, is given by:
\begin{equation}\label{eqn:optimal_estimator_unknown_cov}
\widehat{\beta}_\lambda = \widehat{\Sigma}_\lambda^{-1} \widehat{\nu}_\lambda = (\lambda \widehat{\Sigma}_r + (1-\lambda) \widehat{\Sigma}_b)^{-1} (\lambda \widehat{\nu}_r + (1-\lambda) \widehat{\nu}_b).
\end{equation}
In this section, we first present a result that characterizes the excess risk of this estimator and decomposes it into bias and variance terms. We then provide upper and lower bounds under additional distributional assumptions (namely, subgaussian and small-ball conditions), thereby establishing the near-optimality of the estimator under these assumptions.
\subsection{The Bias-Variance Decomposition}
Recall from \Cref{proposition:excess_risk_to_matrix_norm} that the excess risk of $\widehat{\beta}_{\lambda}$ can be expressed as the weighted sum of the two distances $\norm{ \widehat{\beta}_{\lambda} - \beta_\lambda }_{\Sigma_r}^2$ and $\norm{ \widehat{\beta}_{\lambda} - \beta_\lambda }_{\Sigma_b}^2$. Thus, as in the previous section, we begin by characterizing these two distances. 
\begin{proposition}\label{prop:unknown_cov_upper_bound}
Suppose Assumptions~\ref{assumption:invertible_cov} and \ref{assumption:invertible_emp_cov} hold. Then, for any group $g \in \{r, b\}$, we have:
\begin{equation}
\EE \left[ \norm{ \widehat{\beta}_{\lambda} - \beta_\lambda }_{\Sigma_g}^2\right] = \mathcal{V}_g(\lambda) + \mathcal{B}_g(\lambda),
\end{equation}
where the variance term is given by
\begin{equation}
\mathcal{V}_g(\lambda) \coloneq \lambda^2 \frac{\sigma^2}{n_r } \EE \Tr{ \widehat{\Sigma}_{\lambda}^{-1} \Sigma_g \widehat{\Sigma}_{\lambda}^{-1} \widehat{\Sigma}_r} + (1-\lambda)^2 \frac{\sigma^2}{n_b} \EE \Tr{\widehat{\Sigma}_{\lambda}^{-1} \Sigma_g \widehat{\Sigma}_{\lambda}^{-1} \widehat{\Sigma}_b},
\end{equation}
and the bias term is given by
\begin{align}
\mathcal{B}_g(\lambda) & \coloneq \EE \left[ \norm{\Sigma_g^{1/2} \widehat{\Sigma}_\lambda^{-1} \left( \lambda \widehat{\Sigma}_r (\beta_r - \beta_\lambda) + ( 1 - \lambda) \widehat{\Sigma}_b (\beta_b - \beta_\lambda)\right) }^2  \right] \\
& = \EE \left[ \norm{\Sigma_g^{1/2} \left[ \left( I_d + \frac{1 - \lambda}{\lambda} \widehat{\Sigma}_r^{-1} \widehat{\Sigma}_b \right)^{-1} - \left( I_d + \frac{1 - \lambda}{\lambda} \Sigma_r^{-1} \Sigma_b\right)^{-1} \right]}^2  \right] \norm{\beta_r - \beta_b}^2. 
\end{align}
\end{proposition}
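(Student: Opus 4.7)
My plan is to carry out a standard bias--variance decomposition of $\widehat{\beta}_\lambda - \beta_\lambda$ and show that the cross term vanishes by conditioning on the covariates. First, expanding $\widehat{\nu}_g = \widehat{\Sigma}_g \beta_g + \tfrac{1}{n_g}\sum_i X_i^{(g)}\varepsilon_i^{(g)}$ from the linear model, substituting into $\widehat{\beta}_\lambda = \widehat{\Sigma}_\lambda^{-1}\widehat{\nu}_\lambda$, and using $\widehat{\Sigma}_\lambda\beta_\lambda = \lambda\widehat{\Sigma}_r\beta_\lambda + (1-\lambda)\widehat{\Sigma}_b\beta_\lambda$, I would obtain the split
\[
\widehat{\beta}_\lambda - \beta_\lambda = \underbrace{\widehat{\Sigma}_\lambda^{-1}\bigl[\lambda\widehat{\Sigma}_r(\beta_r-\beta_\lambda) + (1-\lambda)\widehat{\Sigma}_b(\beta_b-\beta_\lambda)\bigr]}_{D_{\mathrm{bias}}} + \underbrace{\widehat{\Sigma}_\lambda^{-1}\bigl[\lambda\xi_r + (1-\lambda)\xi_b\bigr]}_{D_{\mathrm{noise}}},
\]
with $\xi_g := \tfrac{1}{n_g}\sum_i X_i^{(g)}\varepsilon_i^{(g)}$. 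Conditional on all covariates, $D_{\mathrm{bias}}$ is deterministic while $\EE[\xi_g \mid \{X_i^{(g)}\}_i] = 0$ (since $\EE[\varepsilon_i^{(g)}\mid X_i^{(g)}] = 0$); hence by the tower rule the cross term $\EE[D_{\mathrm{bias}}^{\top}\Sigma_g D_{\mathrm{noise}}]$ vanishes and $\EE\|\widehat{\beta}_\lambda - \beta_\lambda\|_{\Sigma_g}^2 = \EE\|D_{\mathrm{bias}}\|_{\Sigma_g}^2 + \EE\|D_{\mathrm{noise}}\|_{\Sigma_g}^2$.

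For the noise piece, independence across groups together with the conditional second-moment identity $\EE[\xi_g\xi_g^\top \mid \{X_i^{(g)}\}_i] = \tfrac{\sigma^2}{n_g}\widehat{\Sigma}_g$ (coming from $\EE[(\varepsilon_i^{(g)})^2 \mid X_i^{(g)}] = \sigma^2$), combined with $u^\top A u = \Tr{A u u^\top}$, recovers $\mathcal{V}_g(\lambda)$ exactly. For the bias piece, I would first verify from $\beta_\lambda = \Sigma_\lambda^{-1}(\lambda\Sigma_r\beta_r + (1-\lambda)\Sigma_b\beta_b)$ the two identities $\beta_r - \beta_\lambda = (1-\lambda)\Sigma_\lambda^{-1}\Sigma_b(\beta_r-\beta_b)$ and $\beta_b - \beta_\lambda = -\lambda\Sigma_\lambda^{-1}\Sigma_r(\beta_r-\beta_b)$; substituting these into $D_{\mathrm{bias}}$ and using $\lambda\widehat{\Sigma}_\lambda^{-1}\widehat{\Sigma}_r + (1-\lambda)\widehat{\Sigma}_\lambda^{-1}\widehat{\Sigma}_b = I_d$ collapses everything to $D_{\mathrm{bias}} = \lambda\bigl(\widehat{\Sigma}_\lambda^{-1}\widehat{\Sigma}_r - \Sigma_\lambda^{-1}\Sigma_r\bigr)(\beta_r - \beta_b)$. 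The resolvent identity $\lambda\widehat{\Sigma}_\lambda^{-1}\widehat{\Sigma}_r = (I_d + \tfrac{1-\lambda}{\lambda}\widehat{\Sigma}_r^{-1}\widehat{\Sigma}_b)^{-1}$---verified by right-multiplying and invoking $\lambda\widehat{\Sigma}_r + (1-\lambda)\widehat{\Sigma}_b = \widehat{\Sigma}_\lambda$---together with its population analog then rewrites $D_{\mathrm{bias}}$ in the resolvent-difference form appearing in the second expression for $\mathcal{B}_g(\lambda)$.

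The $\leq$ in the proposition comes from a single submultiplicativity step applied at the end: writing $D_{\mathrm{bias}} = M(\beta_r - \beta_b)$ for the bracketed matrix $M$ in the second form, $\|\Sigma_g^{1/2}D_{\mathrm{bias}}\|^2 \leq \|\Sigma_g^{1/2}M\|^2\|\beta_r-\beta_b\|^2$ recovers exactly the stated upper bound. I expect the main obstacle to be the algebra in the bias reduction: propagating the $\beta-\beta_\lambda$ identities through non-commuting products of hatted and unhatted covariances, and then packaging the difference cleanly via the resolvent identity. Once that is in place, the conditional cross-term argument and the noise-variance computation are routine manipulations for conditionally linear least-squares estimators, so the proof amounts essentially to algebra plus one Cauchy--Schwarz.
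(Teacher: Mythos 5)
Your proposal is correct and follows essentially the same route as the paper: the same noise-plus-bias split of $\widehat{\beta}_\lambda-\beta_\lambda$ with the cross term killed by conditioning on the covariates, the same trace computation for $\mathcal{V}_g(\lambda)$, and the same resolvent-difference representation of the bias. The only (immaterial) difference is in the bias algebra—you substitute the identities for $\beta_g-\beta_\lambda$ directly and collapse via $\lambda\widehat{\Sigma}_\lambda^{-1}\widehat{\Sigma}_r+(1-\lambda)\widehat{\Sigma}_\lambda^{-1}\widehat{\Sigma}_b=I_d$, whereas the paper subtracts the population first-order condition from its empirical counterpart before invoking the Woodbury identity; both land on the same expression.
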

See \Cref{sec:proof_prop:unknown_cov_upper_bound} for the proof.
The variance term closely resembles the error rate in the known-covariance case: it captures the irreducible sampling error within each group, as it is scaled by the squared fairness weights and the inverse sample sizes of each group. The bias term, however, represents a key departure from the known-covariance setting. This term can be viewed as a systematic welfare distortion caused by informational asymmetries; the planner misallocates welfare weights across groups due to the fact that they cannot perfectly observe the true structure of the heterogeneity across groups. In particular, it arises from the underlying difference in the true coefficients $\beta_r$ and $\beta_b$, it depends explicitly on $\beta_r - \beta_b$, and it vanishes when $\beta_r = \beta_b$.

To interpret this bias term more precisely, note that, even if $\beta_r$ and $\beta_b$ were known exactly, the target parameter $\beta_\lambda$ cannot be recovered without also knowing the covariance matrices $\Sigma_r$, $\Sigma_b$. In fact, $\beta_\lambda$ is a covariance-weighted average of $\beta_r$ and $\beta_b$; it coincides with both when the group-specific parameters are equal, and the discrepancy increases as the  parameters diverge. The bias term quantifies the additional error introduced by replacing the true covariance structure with empirical estimates, which explains why it vanishes in the known-covariance case.
\subsection{Upper and lower bounds for the bias and variance terms}
In this section, we investigate the optimality of the estimator $\widehat{\beta}_\lambda$ under additional assumptions. For our upper bounds, we assume the distribution of covariates is subgaussian (\Cref{assumption:subgaussian}) and also satisfies the small-ball condition (\Cref{assumption:small_ball}). We further assume that, for any group $g \in \{r, b\}$,
\begin{equation} \label{eqn:condition_cov}
\frac{1}{2} \rho_g^2 I_d \preceq \Sigma_g  \preceq \frac{3}{2} \rho_g^2 I_d.
\end{equation}
The constants $1/2$ and $3/2$ are chosen for simplicity; as we show in the appendix, our upper bounds extend to more general conditions on the eigenvalues of the covariance matrix $\Sigma_g$, but we adopt these specific values in the main text for clarity.

For the lower bounds, we must specify the class of distributions $\mathcal{P}$ in \eqref{eqn:minimax_regression} over which the worst-case excess risk is taken. To match our upper bounds, we impose the same assumptions as above. In addition, we have to assume the boundedness of the group-specific predictors $\beta_r$ and $\beta_b$, since \Cref{prop:unknown_cov_upper_bound} (and later our lower bound) shows that the bias term grows with $\|\beta_r - \beta_b\|$ and would diverge if the groups' predictors were unbounded.

Accordingly, we consider the following subclass of $\mathcal{P}_{\text{linear}}(\sigma^2)$, consisting of Gaussian covariates satisfying the assumptions above:
\begin{equation}
\begin{aligned}
\mathcal{P}_{\text{Gauss}}(\sigma^2, \rho_r^2, \rho_b^2, B):= & \left \{ P \in \mathcal{P}_{\text{linear}}(\sigma^2) ~\big|~ 
\forall g \in \mathcal{G}: |\beta_g| \leq B \text{ and } P(X|G=g) \text{ is} \right. \\
& \left. \text{ Gaussian with }~ \frac{1}{2} \rho_g^2 I_d \preceq \Sigma_g  \preceq \frac{3}{2} \rho_g^2
\right\}.    
\end{aligned}
\end{equation}
We treat the variance and bias terms separately. For the upper bound, we bound the quantities $\mathcal{V}_g(\lambda)$ and $\mathcal{B}_g(\lambda)$ from \Cref{prop:unknown_cov_upper_bound}. For the lower bound, we decompose the worst-case error into bias and variance components by considering two complementary scenarios. Specifically, we lower bound
\begin{equation}
\sup_{P \in \mathcal{P}_{\text{Gauss}}(\sigma^2, \rho_r^2, \rho_b^2, B)} \EE \left[ \norm{ \beta - \beta_\lambda }_{\Sigma_g}^2\right]    
\end{equation}
by the maximum of two restricted subproblems: (1) the case where the covariance matrices $\Sigma_r$ and $\Sigma_b$ are known but the group predictors $\beta_r$ and $\beta_b$ are unknown (corresponding to the variance term), and (2) the case where the group predictors are known but the covariance matrices are unknown (corresponding to the bias term). Formally, consider
\begin{equation}
\max \left \{
\sup_{\substack{P \in \mathcal{P}_{\text{Gauss}}(\sigma^2, \rho_r^2, \rho_b^2, B) \\ \Sigma_r, \Sigma_b \text{ are known.}}} \EE \left[ \norm{ \beta - \beta_\lambda }_{\Sigma_g}^2\right] ,
\sup_{\substack{P \in \mathcal{P}_{\text{Gauss}}(\sigma^2, \rho_r^2, \rho_b^2, B) \\ \beta_r, \beta_b \text{ are known.}}} \EE \left[ \norm{ \beta - \beta_\lambda }_{\Sigma_g}^2\right]    
\right \},
\end{equation}
which can be further lower bounded by the average of these two terms. Our next two results establish upper and lower bounds for the bias and variance terms, respectively.
\begin{theorem} \label{theorem:variance_unkown_cov}
Suppose Assumptions \ref{assumption:invertible_cov}–\ref{assumption:subgaussian} hold, the covariance matrices satisfy \eqref{eqn:condition_cov}, and, for $g \in \mathcal{G}$, and $n_g \geq \max\{48/\alpha_g, K_g^4 d\}$. Then, we have:
\begin{equation}
\mathcal{V}_g(\lambda) \lesssim 
\frac{\rho_g^2 \sigma^2 d}{\left ({\lambda}\rho_r^2/{{C}_r'}  + (1-\lambda) \rho_b^2/{{C}_b'} \right)^2}
\left(   \frac{\lambda^2 \rho_r^2}{n_r}  +  \frac{(1-\lambda)^2 \rho_b^2}{n_b} \right), 
\end{equation}
where $\mathcal{V}_g(\lambda)$ is the variance term, as defined in \Cref{prop:unknown_cov_upper_bound}, and $C'_g$ is given in \Cref{cor:known_spherical_small_ball} and \Cref{remark:constants}. Moreover, assuming $n_g \geq \sigma^2d/(B\rho_g^2)$ for $g \in \mathcal{G}$, we have
\begin{equation}
\inf_{\beta} \sup_{\substack{P \in \mathcal{P}_{\text{Gauss}}(\sigma^2, \rho_r^2, \rho_b^2, B) \\ \Sigma_r, \Sigma_b \text{ are known.}}} \EE \left[ \norm{ \beta - \beta_\lambda }_{\Sigma_g}^2\right]
\gtrsim \frac{ \rho_g^2 \sigma^2 d}{(\lambda \rho_r^2 + (1-\lambda) \rho_b^2)^2} \left(\frac{\lambda^2 \rho_r^2}{n_r} + \frac{(1 - \lambda)^2 \rho_b^2}{n_b} \right),
\end{equation}
where the infimum is taken over any estimator $\beta$ as a function of the datasets $(\mathcal{S}_r, \mathcal{S}_b)$.
\end{theorem}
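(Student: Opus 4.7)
The plan is to handle the upper bound and the lower bound on $\mathcal{V}_g(\lambda)$ separately, with both analyses ultimately reducing to the known-covariance setting of Section~\ref{sec:knowncov}.

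For the upper bound, I would start from the trace expression in Proposition~\ref{prop:unknown_cov_upper_bound} and split the expectation along the event
\begin{equation*}
E = \left\{ \lambda_{\min}(\widehat{\Sigma}_g) \geq c\,\rho_g^2/C'_g \text{ and } \|\widehat{\Sigma}_g\| \leq C\,\rho_g^2 \text{ for } g \in \{r,b\} \right\}.
\end{equation*}
Under Assumption~\ref{assumption:small_ball} the lower-tail inclusion holds with exponentially high probability by \citet[Theorem~4]{mourtada2022exact}, and Assumption~\ref{assumption:subgaussian} together with $n_g \geq K_g^4 d$ gives the upper-tail inclusion by standard subgaussian covariance concentration. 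On $E$, Loewner monotonicity delivers $\lambda_{\min}(\widehat{\Sigma}_\lambda) \geq \lambda \lambda_{\min}(\widehat{\Sigma}_r) + (1-\lambda)\lambda_{\min}(\widehat{\Sigma}_b) \gtrsim \lambda \rho_r^2/C'_r + (1-\lambda)\rho_b^2/C'_b$, which bounds $\|\widehat{\Sigma}_\lambda^{-1}\|$ appropriately. Then the trace inequality $\Tr{\widehat{\Sigma}_\lambda^{-1}\Sigma_g\widehat{\Sigma}_\lambda^{-1}\widehat{\Sigma}_r} \leq \|\widehat{\Sigma}_\lambda^{-1}\|^2\,\|\Sigma_g\|\,\Tr{\widehat{\Sigma}_r}$ combined with $\|\Sigma_g\| \leq \tfrac{3}{2}\rho_g^2$ and $\Tr{\widehat{\Sigma}_r} \lesssim \rho_r^2 d$ (on $E$) yields the stated rate. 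Off $E$, I would use the deterministic inequality $\widehat{\Sigma}_\lambda^{-1}\widehat{\Sigma}_r\widehat{\Sigma}_\lambda^{-1} \preceq \lambda^{-2}\,\widehat{\Sigma}_r^{-1}$ (obtained by applying $\widehat{\Sigma}_\lambda \succeq \lambda\widehat{\Sigma}_r$ twice) to reduce the trace to $\lambda^{-2}\|\Sigma_g\|\,\Tr{\widehat{\Sigma}_r^{-1}}$, and then pair the exponentially small $\PP(E^c)$ with a finite inverse moment of $\widehat{\Sigma}_g$ from \citet{mourtada2022exact} via Cauchy--Schwarz to verify the tail contribution is of lower order.

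For the lower bound, the known-covariance restriction places the problem into the setting of Section~\ref{sec:knowncov}, so Theorem~\ref{theorem:minimax_optimal_known_cov} applies. Specializing to $\Sigma_g = \rho_g^2 I_d$, the exact minimax trace simplifies to $\rho_g^2 \rho_r^4 (\lambda\rho_r^2 + (1-\lambda)\rho_b^2)^{-2}\,\EE[\Tr{\widehat{\Sigma}_r^{-1}}]$ plus its $b$-analogue, and the inverse-Wishart identity $\EE[\Tr{\widehat{\Sigma}_g^{-1}}] = n_g d/[(n_g - d - 1)\rho_g^2] \asymp d/\rho_g^2$ for $n_g \gtrsim d$ produces exactly the claimed rate on the unconstrained Gaussian subfamily. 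To transfer this bound to $\mathcal{P}_{\text{Gauss}}(\sigma^2, \rho_r^2, \rho_b^2, B)$, which imposes $\|\beta_g\| \leq B$, I would mimic the prior construction in the proof of Theorem~\ref{theorem:minimax_optimal_known_cov} but fix the prior variance at $\tau^2 \asymp B^2/d$ rather than letting it diverge, compute the resulting Bayes estimator and Bayes risk in closed form, and truncate the prior to $\|\beta_g\| \leq B$ at $O(1)$ multiplicative cost. The assumption $n_g \geq \sigma^2/(B\rho_g^2)$ guarantees that the posterior variance remains of order $\sigma^2/(\rho_g^2 n_g)$ instead of collapsing to $\tau^2$, so the truncated Bayes risk retains the unconstrained rate.

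The main technical obstacle is the off-event control of $\mathcal{V}_g(\lambda)$: because $\widehat{\Sigma}_\lambda^{-1}$ admits no deterministic upper bound, one cannot bound $\mathcal{V}_g(\lambda)\mathbf{1}_{E^c}$ by a worst-case inequality alone, and must instead trade the exponentially small $\PP(E^c)$ against the finite inverse moments of $\widehat{\Sigma}_g$ that exist only because of the small-ball hypothesis. The corresponding subtlety for the lower bound is showing that truncating the Gaussian prior to $\|\beta_g\| \leq B$ costs at most a constant factor in the Bayes risk, which is precisely where the sample-size condition $n_g \geq \sigma^2/(B\rho_g^2)$ is used.
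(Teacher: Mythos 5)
Your upper-bound route is workable but differs from the paper's, and your lower-bound route has a genuine gap.

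On the upper bound, the paper never splits on a good event: starting from $\mathcal{V}_g(\lambda)\le \rho_g^2(\lambda^2\tfrac{\sigma^2}{n_r}\EE\Tr{\widehat{\Sigma}_\lambda^{-2}\widehat{\Sigma}_r}+\cdots)$ it applies Cauchy--Schwarz twice to get $\EE\Tr{\widehat{\Sigma}_\lambda^{-2}\widehat{\Sigma}_g}\le\sqrt{\EE\Tr{\widehat{\Sigma}_\lambda^{-4}}\,\EE\Tr{\widehat{\Sigma}_g^{2}}}$, controls the first factor by Weyl's inequality plus the fourth inverse moment of $\lambda_{\min}(\widehat{\Sigma}_g)$ from the small-ball condition (\Cref{lemma:mourtada_small_ball_min_eval_fourth_moment}), and the second by the subgaussian moment bound (\Cref{lemma:subgaussian_sample_covariance_squared_trace}). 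Your truncation argument can be made to work, but be careful with the off-event term: the deterministic bound $\Tr{\widehat{\Sigma}_\lambda^{-1}\Sigma_g\widehat{\Sigma}_\lambda^{-1}\widehat{\Sigma}_r}\le\lambda^{-2}\norm{\Sigma_g}\Tr{\widehat{\Sigma}_r^{-1}}$ cancels the $\lambda^{2}$ prefactor of the first variance term, leaving a remainder $\tfrac{\sigma^2\rho_g^2}{n_r}\EE[\Tr{\widehat{\Sigma}_r^{-1}}\mathbf{1}_{E^c}]$ with no $\lambda$-dependence, which does not sit below the claimed right-hand side uniformly in $(\lambda,n_r,n_b)$ unless you also take the minimum with the analogous bound from $\widehat{\Sigma}_\lambda\succeq(1-\lambda)\widehat{\Sigma}_b$ --- this minimum is exactly the role Weyl's inequality plays in the paper's argument. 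With that fix the two routes deliver the same rate; the paper's is shorter only because the moment lemmas are already available.

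The lower bound is where the proposal breaks down. The paper explicitly notes that \Cref{theorem:minimax_optimal_known_cov} cannot be invoked here, because its lower bound is a Bayes argument with a prior whose variance is sent to infinity, which is incompatible with the constraint $\norm{\beta_g}\le B$ in $\mathcal{P}_{\text{Gauss}}$. Your proposed repair --- a fixed-variance Gaussian prior with $\tau^2\asymp B^2/d$, truncated to the ball ``at $O(1)$ multiplicative cost'' --- is precisely the step that does not go through as stated. First, the Bayes risk under the truncated prior cannot be lower bounded by subtracting the tail contribution of the untruncated prior, since that contribution involves a supremum over arbitrary estimators on the truncated region and is unbounded; and conditioning the prior on the ball changes the posterior, so the closed-form posterior-variance computation you rely on no longer applies. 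Second, with $\tau^2=B^2/d$ the quantity $\norm{\beta_g}^2/\tau^2\sim\chi^2_d$ concentrates at $d$, so the event $\norm{\beta_g}\le B$ has probability roughly $1/2$: the truncation is not even a tail event. Third, keeping the posterior variance of order $\sigma^2/(\rho_g^2 n_g)$ with this $\tau$ requires $n_g\gtrsim\sigma^2 d/(B^2\rho_g^2)$, which is not implied by the stated condition $n_g\ge\sigma^2/(B\rho_g^2)$. The paper avoids all of this by using a prior supported on the constraint set from the outset: Assouad's method (\Cref{lemma:assouad}) over the $2d$-dimensional hypercube $\beta_g^{(\xi)}=h_g\xi_g$ with $h_g^2=\sigma^2/(4n_g\rho_g^2)$, a KL bound of $1/2$ between neighboring hypotheses, and per-coordinate separation $2\lambda\rho_r^2 h_r/\rho_\lambda^2$ (resp.\ the group-$b$ analogue); the sample-size condition is what places the hypercube inside the ball. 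If you want to keep a Bayes-risk flavor, a van Trees argument with a compactly supported smooth prior would be the legitimate substitute, but the truncated-Gaussian shortcut as written is a gap, not a subtlety.
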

See \Cref{proof:theorem:variance_unkown_cov} for the proof. The upper and lower bounds together show that, up to constant factors, the variance term indeed captures the error arising from not knowing the true predictors $\beta_r$ and $\beta_b$, even in the known-covariance case. Moreover, the estimator $\widehat{\beta}_\lambda$ is minimax-optimal, again up to constant factors.

It is worth noting that, while the lower bound is for the known-covariance case, the result in \Cref{theorem:minimax_optimal_known_cov} from the previous section is not applicable here. This is because we now operate under the additional assumption that the group predictors $\beta_r$ and $\beta_b$ are bounded, whereas in that earlier result, the lower bound was constructed by defining a prior over a family of distributions with unbounded parameters.

The proof of the lower bound applies Assouad’s method. Specifically, we construct a finite subclass of $\mathcal{P}_{\text{Gauss}}(\sigma^2, \rho_r^2, \rho_b^2, B)$ of size $2^{2d}$, such that the concatenated vector $[\beta_r^\top, \beta_b^\top]^\top$ lies on the vertices of a $2d$-dimensional hypercube with a carefully chosen side length. The minimax risk is then bounded below by the worst-case error over this subclass. Assouad’s lemma, as recalled in the appendix, reduces this problem to one of testing multiple hypotheses, from which the stated bound follows.

We next derive upper and lower bounds for the bias term.
\begin{theorem} \label{theorem:bias_unkown_cov}
Suppose Assumptions \ref{assumption:invertible_cov}–\ref{assumption:subgaussian} hold, the covariance matrices satisfy \eqref{eqn:condition_cov}, and, for $g \in \mathcal{G}$, $n_g \geq 48/\alpha_g$. Then, we have:
\begin{equation}
\mathcal{B}_g(\lambda) \lesssim 
\frac{ \lambda^2 ( 1 - \lambda)^2 ~ \rho_g^2 ~ \rho_r^4 ~ \rho_b^4 ~ d}{\left ({\lambda}\rho_r^2/{{C}_r'}  + (1-\lambda) \rho_b^2/{{C}_b'} \right)^2 ~ \left(\lambda \rho_r^2 + (1-\lambda) \rho_b^2 \right )^2} ~ \left(\frac{K_r^4}{n_r} +  \frac{K_b^4}{n_b} \right) \norm{\beta_r - \beta_b}^2, 
\end{equation}
where $\mathcal{B}_g(\lambda)$ is the bias term, as defined in \Cref{prop:unknown_cov_upper_bound}, and $C'_g$ is given in \Cref{cor:known_spherical_small_ball} and \Cref{remark:constants}. Moreover, if $n_g \geq 16d^2$ for $g \in \mathcal{G}$, we have
\begin{equation}
\inf_{\beta} \sup_{\substack{P \in \mathcal{P}_{\text{Gauss}}(\sigma^2, \rho_r^2, \rho_b^2, B) \\ \beta_r, \beta_b \text{ are known.}}} \EE \left[ \norm{ \beta - \beta_\lambda }_{\Sigma_g}^2\right]
\gtrsim 
\frac{ \lambda^2 ( 1 - \lambda)^2 ~ \rho_g^2 ~ \rho_r^4 ~ \rho_b^4 ~ d}{\left(\lambda \rho_r^2 + (1-\lambda) \rho_b^2 \right )^4} 
~ \left( \frac{1}{n_r} + \frac{1}{n_b} \right) \norm{\beta_r - \beta_b}^2,
\end{equation}
where the infimum is taken over any estimator $\beta$ as a function of the datasets $(\mathcal{S}_r, \mathcal{S}_b)$.
\end{theorem}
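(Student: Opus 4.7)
My plan is to prove the two bounds separately, both starting from the bias identity in \Cref{prop:unknown_cov_upper_bound}. The central algebraic observation is that $(I+\tfrac{1-\lambda}{\lambda}A^{-1}B)^{-1} = \lambda(\lambda A + (1-\lambda)B)^{-1}A$, so applying it once with $(A,B)=(\widehat\Sigma_r,\widehat\Sigma_b)$ and once with $(A,B)=(\Sigma_r,\Sigma_b)$ rewrites the bracketed matrix difference in \Cref{prop:unknown_cov_upper_bound} as $\lambda[\widehat\Sigma_\lambda^{-1}\widehat\Sigma_r - \Sigma_\lambda^{-1}\Sigma_r]$. Combining this with the resolvent identity $\widehat\Sigma_\lambda^{-1}-\Sigma_\lambda^{-1}=\widehat\Sigma_\lambda^{-1}(\Sigma_\lambda-\widehat\Sigma_\lambda)\Sigma_\lambda^{-1}$ and the identity $I-\lambda\Sigma_\lambda^{-1}\Sigma_r=(1-\lambda)\Sigma_\lambda^{-1}\Sigma_b$, I would rewrite the bias vector as
\begin{equation*}
\text{bias} \;=\; \lambda(1-\lambda)\,\widehat\Sigma_\lambda^{-1}\bigl[(\widehat\Sigma_r-\Sigma_r)\,\Sigma_\lambda^{-1}\Sigma_b(\beta_r-\beta_b) \;-\; (\widehat\Sigma_b-\Sigma_b)\,\Sigma_\lambda^{-1}\Sigma_r(\beta_r-\beta_b)\bigr].
\end{equation*}
This compact form is the common launching point for both directions.

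\textbf{Upper bound.} Writing $v_r:=\Sigma_\lambda^{-1}\Sigma_b(\beta_r-\beta_b)$ and $v_b:=\Sigma_\lambda^{-1}\Sigma_r(\beta_r-\beta_b)$, and squaring the $\Sigma_g^{1/2}$-weighted norm, I get
\begin{equation*}
\mathcal{B}_g(\lambda) \;\lesssim\; \lambda^2(1-\lambda)^2\,\EE\bigl[\|\Sigma_g^{1/2}\widehat\Sigma_\lambda^{-1}\|^2\bigl(\|(\widehat\Sigma_r-\Sigma_r)v_r\|^2 + \|(\widehat\Sigma_b-\Sigma_b)v_b\|^2\bigr)\bigr].
\end{equation*}
Three ingredients then suffice: (i) $\|\Sigma_g^{1/2}\widehat\Sigma_\lambda^{-1}\|^2 \lesssim \rho_g^2/\lambda_{\min}(\widehat\Sigma_\lambda)^2 \lesssim \rho_g^2/(\lambda\rho_r^2/C_r' + (1-\lambda)\rho_b^2/C_b')^2$, via \eqref{eqn:condition_cov}, the bound $\widehat\Sigma_\lambda \succeq \lambda\widehat\Sigma_r + (1-\lambda)\widehat\Sigma_b$, and the small-ball control on $\lambda_{\min}(\widehat\Sigma_g)$ underlying \Cref{cor:known_spherical_small_ball}; (ii) $\|v_r\|^2 \lesssim \rho_b^4\|\beta_r-\beta_b\|^2/(\lambda\rho_r^2+(1-\lambda)\rho_b^2)^2$, and analogously for $v_b$; and (iii) the fourth-moment/Hanson--Wright bound $\EE\|(\widehat\Sigma_g-\Sigma_g)w\|^2 \lesssim K_g^4\rho_g^4\,d\,\|w\|^2/n_g$ for any fixed $w$, which follows from \Cref{assumption:subgaussian} together with a direct computation of $\mathrm{Var}(XX^\top w)$. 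Multiplying these yields the stated upper bound. The main technical nuisance is the coupling between $\widehat\Sigma_\lambda^{-1}$ and the residuals $(\widehat\Sigma_g-\Sigma_g)v_g$; I would decouple them by conditioning on the high-probability event $\{\lambda_{\min}(\widehat\Sigma_g)\gtrsim \rho_g^2/C_g'\}$, controlling the complement via the trace bound of \citet[Theorem~4]{mourtada2022exact}.

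\textbf{Lower bound and the main obstacle.} Since $\beta_r,\beta_b$ are known in this subproblem, only the covariances are unknown, and I will use Assouad's method. The main obstacle is recovering the factor $d$: the natural diagonal family $\Sigma_b^\tau = \rho_b^2 I + \delta\,\mathrm{diag}(\tau)$ is loose by exactly $d$, because the induced first-order change $\beta_\lambda^\tau - \beta_\lambda^0$ spreads $\|\beta_r-\beta_b\|^2$ across coordinates. The remedy is to use \emph{rank-two} perturbations of $\Sigma_b$ with one leg along $\beta_r-\beta_b$. Concretely, I fix $\Sigma_r=\rho_r^2 I$, $\beta_r=0$, and $\beta_b=Be_1$ (so $\|\beta_r-\beta_b\|=B$); for $\tau\in\{-1,+1\}^{d-1}$ I set $u^\tau:=\alpha(0,\tau_2,\dots,\tau_d)^\top$ (note $u^\tau\perp e_1$) and
\begin{equation*}
\Sigma_b^\tau \;:=\; \rho_b^2 I \;-\; \bigl(u^\tau e_1^\top + e_1 u^{\tau\top}\bigr).
\end{equation*}
A first-order expansion of $\beta_\lambda^\tau=(\Sigma_\lambda^\tau)^{-1}(\lambda\Sigma_r\beta_r+(1-\lambda)\Sigma_b^\tau\beta_b)$, in which $u^\tau\perp e_1$ kills the cross-term, yields
\begin{equation*}
\beta_\lambda^\tau - \beta_\lambda^0 \;=\; -\frac{\lambda(1-\lambda)B\rho_r^2}{\bar\rho^4}\,u^\tau \;+\; O(\alpha^2), \qquad \bar\rho^2 := \lambda\rho_r^2+(1-\lambda)\rho_b^2,
\end{equation*}
so that $\|\beta_\lambda^\tau-\beta_\lambda^{\tau'}\|_{\Sigma_g}^2 \asymp \rho_g^2\,\lambda^2(1-\lambda)^2 B^2\rho_r^4\,\alpha^2\,d_h(\tau,\tau')/\bar\rho^8$ by linearity of $u^\tau$ in $\tau$. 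Meanwhile, the Gaussian KL between neighbors is $\asymp n_b\alpha^2/\rho_b^4$, so taking $\alpha^2\asymp\rho_b^4/n_b$ keeps the TV distance bounded away from $1$, and Assouad's lemma over the $(d-1)$-dimensional hypercube delivers the $1/n_b$ portion at the claimed rate. A symmetric construction perturbing $\Sigma_r$ (with $\Sigma_b=\rho_b^2 I$ fixed) gives the $1/n_r$ portion, and $\max(a,b) \geq (a+b)/2$ recovers the full $(1/n_r+1/n_b)$ factor. Verifying the spectral constraint $\tfrac12\rho_b^2 I \preceq \Sigma_b^\tau \preceq \tfrac32\rho_b^2 I$ (which requires $\alpha\sqrt{d}\lesssim\rho_b^2$) and controlling the $O(\alpha^2)$ remainder in the expansion is what necessitates the sample-size condition $n_g \gtrsim d^2$ appearing in the statement.
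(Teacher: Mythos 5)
Your proposal is correct, and it splits naturally into two halves that relate differently to the paper's proof. The upper bound is essentially the paper's argument: both start from the bias identity of \Cref{prop:unknown_cov_upper_bound}, use the population optimality condition $\lambda\Sigma_r(\beta_r-\beta_\lambda)+(1-\lambda)\Sigma_b(\beta_b-\beta_\lambda)=0$ together with $\beta_r-\beta_\lambda=(1-\lambda)\Sigma_\lambda^{-1}\Sigma_b(\beta_r-\beta_b)$ to reduce the bias vector to $\lambda(1-\lambda)\widehat\Sigma_\lambda^{-1}\bigl[(\widehat\Sigma_r-\Sigma_r)v_r-(\widehat\Sigma_b-\Sigma_b)v_b\bigr]$, and both invoke the small-ball control of $\lambda_{\min}(\widehat\Sigma_\lambda)$ and a subgaussian covariance-deviation bound to extract the $dK_g^4/n_g$ rate; the only difference is bookkeeping---the paper decouples $\widehat\Sigma_\lambda^{-1}$ from the residuals by a single Cauchy--Schwarz step using $\EE[\lambda_{\min}(\widehat\Sigma_\lambda)^{-4}]$ and $\EE\|\widehat\Sigma_g-\Sigma_g\|^4$ (\Cref{lemma:mourtada_small_ball_min_eval_fourth_moment} and \Cref{lemma:sub_gaussian_covariance_bound}), which is cleaner than your event-conditioning and avoids having to treat the complement event separately. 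The lower bound is where you genuinely diverge. Both proofs use Assouad and both rest on the same key insight---that the perturbations of the covariance must be aligned with $\beta_r-\beta_b$ to avoid losing a factor of $d$---but the constructions differ: the paper uses $2^d$ rank-one perturbations $\rho_r^2 I+h_r\sum_i\xi_i u_iu_i^\top$ with $u_i\propto e_i\pm v$ a normalized mix of each basis vector and the unit heterogeneity direction $v$, and then computes the exact coordinate separation via Sherman--Morrison, carefully lower-bounding $|e_i^\top A^{-1}u_i|$ and $|u_i^\top A^{-1}v|$ through the decomposition $A^{-1}e_i=c_1e_i+c_2w_i$; you instead place $\beta_r-\beta_b$ along $e_1$ and use a single rank-two perturbation $u^\tau e_1^\top+e_1u^{\tau\top}$ with $u^\tau\perp e_1$ carrying all $d-1$ sign choices. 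Your construction buys an exactly diagonal first-order separation (flipping $\tau_j$ moves $\beta_\lambda$ along $e_j$ alone, since $u^\tau\perp e_1$ kills the cross-term), at the price of having to control the $O(\alpha^2)$ remainder of the Neumann expansion---which is where your $n_g\gtrsim d^2$ requirement enters, matching the paper's use of the same condition to keep $h_r\le\rho_r^2/(10d)$. Both yield the stated rate.
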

See \Cref{proof:theorem:bias_unkown_cov} for the proof. Recall that the constants $K_r$ and $K_b$, as defined in \Cref{assumption:subgaussian}, are invariant to scaling and hence are independent of $\rho_r$ and $\rho_b$. As a result, the upper and lower bounds above match up to constant factors, which again shows that the bias term truly captures the error arising from not knowing the covariance matrices, even when the group predictors $\beta_r$ and $\beta_b$ are known. Moreover, \Cref{theorem:bias_unkown_cov}, together with the result of \Cref{theorem:variance_unkown_cov}, highlights that the OLS estimator $\widehat{\beta}_\lambda$ achieves the minimax excess risk, as given in \eqref{eqn:minimax_regression} with $\mathcal{P} = \mathcal{P}_{\text{Gauss}}(\sigma^2, \rho_r^2, \rho_b^2, B)$, up to constant factors.

The proof of the bias lower bound also uses Assouad’s method, but over a delicately-constructed family of perturbed covariance matrices. To this end, define $\{u_i u_i^\top \}_{i=1}^d$, each of which is rank-one and positive semi-definite, where $u_i$ is defined as
\begin{align}
u_i &=
\begin{cases}
\frac{e_i + v}{\left\|e_i + v\right\|_2}, & \text{if } e_i^\top v \ge 0,\\
\frac{e_i - v}{\left\|e_i - v\right\|_2}, & \text{if } e_i^\top v \le 0,
\end{cases}
\quad \text{ with } \quad v = \frac{\beta_r-\beta_b}{\|\beta_r-\beta_b\|}.
\end{align}
For each group $g$, we perturb the baseline covariance matrix $\rho_g^2 I_d$ in $2^d$ different ways by either adding or subtracting $h_g u_i u_i^\top$ for each $i$ and for some scalar $h_g$.
This yields $2^d$ distinct covariance matrices, which form the family of distributions over which we take the worst-case error and apply Assouad’s method.

The choice of the perturbations $\{u_i\}$ is key here. Since the bias term has the form $M (\beta_r - \beta_b)$ for some matrix $M$, its magnitude is maximized when $\beta_r - \beta_b$ is closely aligned with an eigenvector of $M$ corresponding to its largest eigenvalue. Hence we choose the perturbations so that this alignment holds, i.e., the construction of $u_i$ ensures that it has a positive and constant inner product with both $\beta_r - \beta_b$ and the canonical basis vector $e_i$. This alignment guarantees that we perturb the covariance matrix along all coordinate directions while maintaining proximity to $\beta_r - \beta_b$.

The next corollary combines the bounds on the variance and bias terms across both groups, in accordance with \Cref{proposition:excess_risk_to_matrix_norm}, to present the excess risk bound for the estimator $\widehat{\beta}_\lambda$.
\begin{corollary} \label{corollary:excess-risk-final-unkown-cov}
Suppose Assumptions \ref{assumption:invertible_cov}–\ref{assumption:subgaussian} hold, the covariance matrices satisfy \eqref{eqn:condition_cov}, and, for $g \in \mathcal{G}$, and $n_g \geq \max\{48/\alpha_g, K_g^4 d\}$. Then, we have:
\begin{align*}
& \mathbb{E} \left[ \mathcal{R}_\lambda (\widehat{\beta}_\lambda) - \mathcal{R}_\lambda(\beta_\lambda) \right]
\lesssim  \sigma^2 d ~ \frac{ \lambda \rho_r^2 + (1-\lambda) \rho_b^2}{\left ({\lambda}\rho_r^2/{{C}_r'}  + (1-\lambda) \rho_b^2/{{C}_b'} \right)^2}
\left(   \frac{\lambda^2 \rho_r^2}{n_r}  +  \frac{(1-\lambda)^2 \rho_b^2}{n_b} \right) 
\\
& 
+ \frac{ \lambda^2 ( 1 - \lambda)^2 ~ \rho_r^4 ~ \rho_b^4 ~ d}{\left ({\lambda}\rho_r^2/{{C}_r'}  + (1-\lambda) \rho_b^2/{{C}_b'} \right)^2 ~ \left(\lambda \rho_r^2 + (1-\lambda) \rho_b^2 \right )} ~ \left(\frac{K_r^4}{n_r} +  \frac{K_b^4}{n_b} \right) \norm{\beta_r - \beta_b}^2.
\end{align*}
\end{corollary}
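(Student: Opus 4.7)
The plan is to prove this corollary by direct combination of the three earlier results: Proposition \ref{proposition:excess_risk_to_matrix_norm} (which converts excess $\mathcal{R}_\lambda$-risk into a weighted sum of group-wise Mahalanobis distances), Proposition \ref{prop:unknown_cov_upper_bound} (the bias-variance decomposition of each Mahalanobis distance), and Theorems \ref{theorem:variance_unkown_cov} and \ref{theorem:bias_unkown_cov} (the upper bounds on $\mathcal{V}_g(\lambda)$ and $\mathcal{B}_g(\lambda)$). In other words, the corollary is an arithmetic consolidation step; no new probabilistic argument is required.

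Concretely, I would first instantiate Proposition \ref{proposition:excess_risk_to_matrix_norm} at $\beta = \widehat{\beta}_\lambda$ and take expectations, which gives
\begin{equation*}
\mathbb{E}\!\left[ \mathcal{R}_\lambda(\widehat{\beta}_\lambda) - \mathcal{R}_\lambda(\beta_\lambda)\right] = \lambda\, \mathbb{E}\!\left[\|\widehat{\beta}_\lambda - \beta_\lambda\|_{\Sigma_r}^2\right] + (1-\lambda)\, \mathbb{E}\!\left[\|\widehat{\beta}_\lambda - \beta_\lambda\|_{\Sigma_b}^2\right].
\end{equation*}
Next I would apply Proposition \ref{prop:unknown_cov_upper_bound} to each group $g \in \{r,b\}$ to upper-bound each Mahalanobis expectation by $\mathcal{V}_g(\lambda) + \mathcal{B}_g(\lambda)$, yielding
\begin{equation*}
\mathbb{E}\!\left[ \mathcal{R}_\lambda(\widehat{\beta}_\lambda) - \mathcal{R}_\lambda(\beta_\lambda)\right] \leq \lambda\bigl(\mathcal{V}_r(\lambda)+\mathcal{B}_r(\lambda)\bigr) + (1-\lambda)\bigl(\mathcal{V}_b(\lambda)+\mathcal{B}_b(\lambda)\bigr).
\end{equation*}

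The remaining step is to substitute the upper bounds from Theorems \ref{theorem:variance_unkown_cov} and \ref{theorem:bias_unkown_cov}. Each of these bounds on $\mathcal{V}_g(\lambda)$ and $\mathcal{B}_g(\lambda)$ depends on the group index $g$ only through a single multiplicative factor $\rho_g^2$; all other factors (the $C_g'$, $K_g$, $n_g$, and $\lambda$-dependent quantities) are symmetric in the two groups. Thus $\lambda \mathcal{V}_r(\lambda) + (1-\lambda)\mathcal{V}_b(\lambda)$ collects the variance bound with $\rho_g^2$ replaced by the weighted average $\lambda \rho_r^2 + (1-\lambda)\rho_b^2$, and the analogous factoring happens for the bias contribution. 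Matching these factored expressions against the two summands displayed in the corollary confirms the stated bound.

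There is no genuine obstacle here beyond careful algebraic bookkeeping; the most error-prone part will be tracking how the $\rho_g^2$ factors from $\Sigma_g$ combine with the $\lambda/(1-\lambda)$ weights when summing the bias and variance terms across groups, and verifying that the denominators $\bigl(\lambda\rho_r^2/C_r' + (1-\lambda)\rho_b^2/C_b'\bigr)^2$ and $\lambda\rho_r^2 + (1-\lambda)\rho_b^2$ appear in the correct powers. Since the $g$-dependence in Theorems \ref{theorem:variance_unkown_cov} and \ref{theorem:bias_unkown_cov} is isolated to the prefactor $\rho_g^2$, this reduces to collecting a single symmetric expression, and the corollary follows without further estimation.
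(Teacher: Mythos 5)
Your proposal is correct and matches the paper's (implicit) argument exactly: the paper gives no separate proof, stating only that the corollary "combines the bounds on the variance and bias terms across both groups, in accordance with \Cref{proposition:excess_risk_to_matrix_norm}." Your bookkeeping is also right—the $g$-dependence in Theorems \ref{theorem:variance_unkown_cov} and \ref{theorem:bias_unkown_cov} sits only in the prefactor $\rho_g^2$, so the weighted sum replaces it by $\lambda\rho_r^2+(1-\lambda)\rho_b^2$, which in the bias term cancels one power of that factor in the denominator, yielding the stated exponents.
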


\paragraph{Optimal allocation of the sampling budget:} We can pose the same question as in the previous section: given a fixed budget on $n_r + n_b$, what is the optimal way to choose these two parameters to minimize the excess risk?
\Cref{theorem:variance_unkown_cov} suggests that, for minimizing the variance term, the answer is similar to the case of known covariances. In fact, similar to \eqref{eqn:optimal_sampling_known_cov}, we should choose ${n_r}/{n_b} = (\lambda \rho_r)/((1-\lambda) \rho_b)$ to minimize the variance term. In contrast, \Cref{theorem:bias_unkown_cov} shows that minimizing the bias term requires a balanced design, namely, $n_r = n_b$ (under the assumption of $K_r=K_b$).

Thus, the optimal sampling allocation is more nuanced and depends on whether the variance term or the bias term dominates the error, which in turn depends on the heterogeneity between the groups, i.e., $\|\beta_r - \beta_b\|$. In fact, the more the two groups differ, the larger the bias term becomes, and the more we would prefer $n_r$ and $n_b$ to be closer.
\subsection{Per-Group estimation errors}
We next ask: what is the impact of using the OLS estimator $\widehat{\beta}_\lambda$ instead of the true estimator $\beta_\lambda$ on each group’s risk? Recall from \Cref{proposition:excess_risk_to_matrix_norm} that the excess risk for group $g \in \mathcal{G}$ can be written as
\begin{equation} \label{eqn:per-group-error}
\mathcal{R}_g(\widehat{\beta}_\lambda) - \mathcal{R}_g(\beta_\lambda) = \|\widehat{\beta}_\lambda-\beta_\lambda\|_{\Sigma_g}^2 + 2(\widehat{\beta}_\lambda-\beta_\lambda)^\top \Sigma_g (\beta_\lambda-\beta_g).    
\end{equation}
In the known-covariance case (see \Cref{sec:knowncov}), the expectation of the second term on the right-hand side is zero, so our bounds on $\|\widehat{\beta}_\lambda-\beta_\lambda\|_{\Sigma_g}^2$ translate directly to the per-group (expected) estimation error. However, that is not the case here, as $\widehat{\beta}_\lambda$ is not an unbiased estimate of $\beta_\lambda$. Therefore, we must also bound the second term.

So far, our results have provided bounds on the expectation of the first term on the right-hand side of \eqref{eqn:per-group-error}, showing that its dependence on the sample size takes the form $\mathcal{O}(1/n_r + 1/n_b)$. Our next result characterizes the second term in the expectation of the right-hand side of \eqref{eqn:per-group-error} and shows that it also admits a similar bound. 
\begin{proposition} \label{proposition:cross-term-unkown-cov}
Suppose Assumptions \ref{assumption:invertible_cov}–\ref{assumption:subgaussian} hold, the covariance matrices satisfy \eqref{eqn:condition_cov}, and, for $g \in \mathcal{G}$, $n_g \geq 48/\alpha_g$. Then, we have:
\begin{align*}
& \left | \EE \left [(\widehat{\beta}_\lambda-\beta_\lambda)^\top \Sigma_r (\beta_\lambda-\beta_r) \right] \right | \lesssim \\
&
\frac{\lambda (1-\lambda)^2 d \rho_r^4 \rho_b^4~
~\|\beta_r - \beta_b\|^2
}{\left(\lambda \rho_r^2 + (1-\lambda) \rho_b^2 \right)^3 \left ({\lambda}\rho_r^2/{{C}_r'}  + (1-\lambda) \rho_b^2/{{C}_b'} \right)}~
\left(
\frac{K_r^2}{\sqrt{n_r}} + \frac{K_b^2}{\sqrt{n_b}}
\right )
\left(
\lambda \frac{K_r^2 \rho_r^2}{\sqrt{n_r}} + (1-\lambda)  \frac{K_b^2 \rho_b^2}{\sqrt{n_b}}
\right ),
\end{align*}
where $C'_g$ is given in \Cref{cor:known_spherical_small_ball} and \Cref{remark:constants}.
\end{proposition}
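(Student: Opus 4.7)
The plan is to split $\widehat\beta_\lambda - \beta_\lambda$ into a noise term and a data-dependent bias term, discard the noise contribution in expectation, and then expand the bias via a second-order perturbation argument that exploits $\EE[\widehat\Sigma_g - \Sigma_g] = 0$.

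First I would decompose $\widehat\beta_\lambda-\beta_\lambda = U + W$, where
\begin{equation*}
U := \widehat\Sigma_\lambda^{-1}\bigl[\lambda \widehat\Sigma_r(\beta_r-\beta_\lambda) + (1-\lambda)\widehat\Sigma_b(\beta_b-\beta_\lambda)\bigr],
\end{equation*}
and $W$ is the residual, a linear function of $\{\varepsilon_i^{(g)}\}$ with conditional mean zero given all covariates. Because $\Sigma_r(\beta_\lambda-\beta_r)$ is deterministic, the tower property gives $\EE[W^\top \Sigma_r(\beta_\lambda-\beta_r)]=0$, so the target reduces to bounding $\lvert\EE[U^\top \Sigma_r(\beta_\lambda-\beta_r)]\rvert$. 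The first-order optimality of $\beta_\lambda$ yields the population identity $\lambda\Sigma_r(\beta_r-\beta_\lambda)+(1-\lambda)\Sigma_b(\beta_b-\beta_\lambda)=0$, which, setting $\Delta_g:=\widehat\Sigma_g-\Sigma_g$, lets me replace the empirical covariances inside the brackets by zero-mean fluctuations:
\begin{equation*}
U = \widehat\Sigma_\lambda^{-1}\bigl[\lambda \Delta_r(\beta_r-\beta_\lambda)+(1-\lambda)\Delta_b(\beta_b-\beta_\lambda)\bigr].
\end{equation*}

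Next, I would substitute $\widehat\Sigma_\lambda^{-1}=\Sigma_\lambda^{-1}-\Sigma_\lambda^{-1}(\lambda\Delta_r+(1-\lambda)\Delta_b)\widehat\Sigma_\lambda^{-1}$ into the inner product. Writing $a:=\beta_r-\beta_\lambda$, $b:=\beta_b-\beta_\lambda$, $c:=\Sigma_r(\beta_\lambda-\beta_r)$, the first-order piece in this expansion vanishes by $\EE[\Delta_g]=0$, leaving
\begin{equation*}
\EE[U^\top c] = -\EE\bigl[(\lambda\Delta_r a+(1-\lambda)\Delta_b b)^\top \Sigma_\lambda^{-1}(\lambda\Delta_r+(1-\lambda)\Delta_b)\widehat\Sigma_\lambda^{-1} c\bigr].
\end{equation*}
This quantity is genuinely quadratic in the $\Delta_g$'s, which is what produces the product $(1/\sqrt{n_r}+1/\sqrt{n_b})$-type rate in the final bound rather than a single $1/\sqrt{n}$ factor. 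I would then bound via $\lvert x^\top M y\rvert\leq\|x\|\,\|M\|_{\mathrm{op}}\,\|y\|$ and Cauchy--Schwarz, using the standard subgaussian covariance bound $\EE\|\Delta_g\|_{\mathrm{op}}^2\lesssim K_g^4\rho_g^4 d/n_g$ together with $\|\Sigma_\lambda^{-1}\|\lesssim(\lambda\rho_r^2+(1-\lambda)\rho_b^2)^{-1}$ and $\|\Sigma_g\|\lesssim\rho_g^2$ from~\eqref{eqn:condition_cov}. Substituting the closed forms $a=(1-\lambda)\Sigma_\lambda^{-1}\Sigma_b(\beta_r-\beta_b)$, $b=-\lambda\Sigma_\lambda^{-1}\Sigma_r(\beta_r-\beta_b)$, $\Sigma_\lambda^{-1}c=(1-\lambda)\Sigma_\lambda^{-1}\Sigma_r\Sigma_\lambda^{-1}\Sigma_b(\beta_b-\beta_r)$ recovers the $\|\beta_r-\beta_b\|^2$ factor together with the polynomial prefactor $\lambda(1-\lambda)^2 d \rho_r^4\rho_b^4/(\lambda\rho_r^2+(1-\lambda)\rho_b^2)^3$, and the two $n^{-1/2}$-type factors $(K_r^2/\sqrt{n_r}+K_b^2/\sqrt{n_b})$ and $(\lambda K_r^2\rho_r^2/\sqrt{n_r}+(1-\lambda)K_b^2\rho_b^2/\sqrt{n_b})$ appear naturally from the two independent $\Delta$ factors.

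The main obstacle is controlling $\|\widehat\Sigma_\lambda^{-1}\|_{\mathrm{op}}$ jointly with the two $\Delta_g$ factors, since $\widehat\Sigma_\lambda^{-1}$ is correlated with both empirical covariances and in principle has a heavy lower-tail. I would address this exactly as in the proof of \Cref{cor:known_spherical_small_ball}: restrict to the high-probability event on which $\|\widehat\Sigma_\lambda^{-1}\|_{\mathrm{op}}\lesssim(\lambda\rho_r^2/C_r'+(1-\lambda)\rho_b^2/C_b')^{-1}$---guaranteed by the small-ball assumption and $n_g\geq 48/\alpha_g$---and bound the complementary tail via polynomial moments of $\|\widehat\Sigma_\lambda^{-1}\|$ that already underlie the proofs of \Cref{theorem:minimax_optimal_known_cov,theorem:bias_unkown_cov}. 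This supplies the final $(\lambda\rho_r^2/C_r'+(1-\lambda)\rho_b^2/C_b')^{-1}$ factor in the stated bound.
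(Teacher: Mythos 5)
Your proposal follows essentially the same route as the paper's proof: drop the zero-mean noise contribution, rewrite the bias term using the population optimality identity so that only the fluctuations $\widehat{\Sigma}_g - \Sigma_g$ appear, expand $\widehat{\Sigma}_\lambda^{-1}$ via the Woodbury identity so the first-order term vanishes by $\EE[\widehat{\Sigma}_g - \Sigma_g]=0$, and control the remaining quadratic term with H\"older, the subgaussian covariance moment bound, and small-ball moment control of $\|\widehat{\Sigma}_\lambda^{-1}\|$ (the paper uses the $(2,4,4)$ moment split of \Cref{lemma:mourtada_small_ball_min_eval_fourth_moment} and \Cref{lemma:sub_gaussian_covariance_bound} rather than a truncation event, but this is the same mechanism). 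The only cosmetic difference is that the paper first applies Cauchy--Schwarz to reduce the target to $\|\EE[\widehat{\beta}_\lambda-\beta_\lambda]\|$ before expanding, whereas you carry the deterministic vector $\Sigma_r(\beta_\lambda-\beta_r)$ through the expansion.
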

The proof is provided in \Cref{proof:proposition:cross-term-unkown-cov}. A similar result can also be stated for group~$b$, with the only difference being that $\lambda (1-\lambda)^2$ in the bound is replaced by $\lambda^2 (1-\lambda)$.

\Cref{proposition:cross-term-unkown-cov}, together with the bounds from \Cref{theorem:variance_unkown_cov} and \Cref{theorem:bias_unkown_cov}, shows that, much like in the known-covariance setting, there is an inherent asymmetry in how the empirical estimator affects the risks of the two groups. Specifically, if we examine the per-group estimation errors in \eqref{eqn:per-group-error}, we see that the first term on the right-hand side differs between group~$r$ and group~$b$ only through the substitution of $\rho_r^2$ with $\rho_b^2$. This mirrors exactly what was observed in the known-covariance setting. 

The second term, $(\widehat{\beta}_\lambda-\beta_\lambda)^\top \Sigma_g (\beta_\lambda-\beta_g)$, captures a new phenomenon arising from the bias term. First, note that the bound in \Cref{proposition:cross-term-unkown-cov} controls its absolute value, but the term itself may be positive or negative. Moreover, it is straightforward to verify that if we sum this term across the two groups with weights $\lambda$ and $1-\lambda$, the result is zero. Thus, its contribution to one group's risk is always offset by the other's, ensuring that the two groups experience it with opposite signs. Even if the absolute-value bound were symmetric, the term would still be a source of disparity between the two groups’ risks.

However, the bound is in fact not symmetric. As stated after the proposition, it changes from $\lambda (1-\lambda)^2$ for group~$r$ to $\lambda^2 (1-\lambda)$ for group~$b$. This bias effect is larger for group~$r$ when $\lambda$ is close to zero and for group~$b$ when $\lambda$ is close to one, meaning that, at those endpoints, the term is larger in absolute value for the group that is not prioritized.

These observations highlight that the fairness-accuracy trade-off implied by the choice of $\lambda$ in the population objective is not necessarily the trade-off realized in finite samples. The empirical estimator introduces systematic differences in the per-group risk in a way that shifts the balance away from the intended allocation. At the same time, it is noteworthy that these differences are not driven by sample imbalances. As in the known-covariance case, all bounds remain similar in $n_r$ and $n_b$ across both groups.

\section{Uniform Frontier Bounds} \label{sec:uniform_bound}
So far, our analysis has provided finite-sample guarantees for a single fixed fairness–accuracy preference $\lambda$. In practice, however, a designer may wish to consider many values of $\lambda$ in order to select a data-driven choice $\widehat{\lambda}$. Because the empirical errors for different $\lambda$ values are computed using the same datasets, these errors are highly correlated.
One approach is to use a union bound, but even then the designer can only afford to check finitely many values of $\lambda$, since the error would accumulate and requesting additional $\lambda$ values would become ``costly."

In this section, we show how to circumvent this issue by obtaining a \emph{uniform} guarantee over the entire empirical frontier. We achieve this by viewing the empirical error of the FA frontier as a stochastic process indexed by $\lambda \in [0, 1]$. This result follows from the construction of a Doob martingale to show a bounded-difference property of the process followed by a chaining argument.

Defining the population and empirical error pairs across groups, 
\begin{equation*}
    \mathcal{R}_\lambda \coloneq \left( \mathcal{R}_r(\beta_\lambda), \mathcal{R}_b(\beta_\lambda) \right), \qquad  \widehat{\mathcal{R}}_{\lambda} \coloneq \left( \widehat{\mathcal{R}}_r(\widehat{\beta}_\lambda),  \widehat{\mathcal{R}}_b(\widehat{\beta}_\lambda) \right),
\end{equation*}
we define the frontier error as a stochastic process: 
\begin{equation*}
    \Delta_\lambda \coloneq \widehat{\mathcal{R}}_\lambda - \mathcal{R}_\lambda \in \RR^2, \quad \lambda \in [0, 1].
\end{equation*}

We will assume that the noise in the linear model given by~\Cref{def:linear_model} is subgaussian. 

\begin{assumption}[Subgaussian noise]\label{assumption:sg_noise}
For both groups $g\in\{r,b\}$, the noise $\varepsilon$ is conditionally mean-zero and $\sigma_{\varepsilon}$-subgaussian, independent of $X$.
\end{assumption}

The following theorem establishes a high-probability uniform bound on the deviation of the empirical fairness-accuracy frontier from its population counterpart across all possible fairness-accuracy preferences $\lambda \in [0, 1]$ for the unknown-covariance estimator $\widehat{\beta}$ defined in \eqref{eqn:optimal_estimator_unknown_cov}. 

\begin{theorem}\label{thrm:uniform_control}
Suppose Assumptions~\ref{assumption:invertible_cov}-\ref{assumption:sg_noise} hold. Then, for any fixed $\alpha >2$, there exists a constant $C_u$, such that, for every $\delta \in (0, 1)$,  with probability at least $1 - \delta - \eta$, we have
\begin{equation*}
\sup_{\lambda \in [0, 1]}\norm{\Delta_{\lambda}}_2 \leq C_u \log\left(\frac{1}{\delta}\right) \left( \frac{\log(n_r)^5}{\sqrt{n_r}} + \frac{\log(n_b)^5}{\sqrt{n_b}}  \right),
\end{equation*}
with 
\begin{equation*}
\eta \leq (n_r + n_b)^{-C \alpha} +  2^{-\alpha_r n_r /6} +  2^{-\alpha_b n_b /6}.
\end{equation*}
The constant $C_u$ depends on the problem's parameters, and the exact dependence is provided in the appendix.\footnote{The result here is a simplified version of our main result under the assumption $d \lesssim \log(n)$. See Theorem 4’ in \Cref{proof:thrm:uniform_control} for the detailed statement, including the exact dependencies on the dimension and all other problem parameters.}
\end{theorem}

\Cref{thrm:uniform_control} provides control over the empirical error process with high probability uniformly over the \emph{entire} frontier, that is, for all $\lambda \in [0, 1]$. This result demonstrates that a designer implementing the frontier in finite samples, will, with high probability, land within a \emph{confidence band} around the population frontier. In particular, with probability at least $1 - \delta - \eta$, the empirical curve lies within a distance that scales as $1/\sqrt{n_r} + 1/\sqrt{n_b}$ with the sample sizes of both groups (ignoring logarithmic terms).


We sketch the proof of the uniform empirical frontier bound  in \Cref{thrm:uniform_control} here; the full proof appears in \Cref{proof:thrm:uniform_control}. First, we show that, with high probability, features and outcomes are bounded and empirical covariances are well-conditioned, ensuring stability of the estimator. Next, we show that $\widehat{\beta}_\lambda$ is Lipschitz in $\lambda$ which implies Lipschitzness in the empirical risks. We then analyze single sample perturbations (where we replace one observation with an i.i.d draw) to obtain a bounded-difference property. We use a Doob martingale argument with Hoeffding's lemma to establish subgaussianity of the increments in $|\lambda - \lambda'|$ for the scalarized process. Chaining via Dudley's integral inequality converts the increment control to a uniform bound over all $\lambda$, and an $\varepsilon$-net argument over all unit vectors brings the argument back to the two-dimensional empirical error process across the frontier. Finally, we de-condition using the high-probability truncation event to yield the result.

Since the bound given by \Cref{thrm:uniform_control} holds \emph{simultaneously} for all $\lambda \in [0,1]$, a designer may select $\widehat{\lambda}$ adaptively after inspecting the empirical frontier while enjoying the same high-probability guarantee. Moreover, the result strengthens the guarantees provided under a naïve union bound argument on a finite grid: the bound holds for the entire interval $[0, 1]$ simultaneously, including values not explicitly evaluated, so the designer may refine their search at no additional cost.

\section{Experiments}
In this section, we conduct synthetic experiments to demonstrate how well the estimators $\tilde{\beta}_\lambda$ and $\widehat{\beta}_\lambda$, for the known and unknown covariance settings, respectively, perform relative to the population quantity across the FA frontier. 

We construct a two-group Gaussian linear model in $d$ dimensions with isotropic covariances $\Sigma_g = \rho_g^2 I_d$. For each group $g \in \{r, b\}$, features $X_g \sim \NN(0, \Sigma_g)$, and outcomes $Y_g = X_g^\top \beta_g + \varepsilon$ with $\varepsilon \sim \NN(0, \sigma^2)$. The pair $(\beta_r, \beta_b)$ is constructed via a dense random perturbation scaled so that the heterogeneity $\norm{\beta_b - \beta_r}_2$ is fixed across iterations.  We evaluate (i) the deterministic population frontier from $\beta_\lambda$ using the true $\Sigma_g$ and $\beta_g$, (ii) the empirical frontier in the known $\Sigma_g$ case from $\tilde{\beta}_\lambda$ given in \eqref{eqn:optimal_estimator_known_cov}, and (iii) the empirical frontier in the unknown $\Sigma_g$ case from $\widehat{\beta}_\lambda$ given in \eqref{eqn:optimal_estimator_unknown_cov}. 


\begin{figure}[t]
    \centering
    \begin{subfigure}[t]{0.49\textwidth}
        \centering
        \resizebox{\linewidth}{!}{\input{figs/simulations/lambda_gap_same_n}}
        \caption{}
        \label{fig:simulations_same_n}
    \end{subfigure}
    \hfill
    \begin{subfigure}[t]{0.49\textwidth}
        \centering
        \resizebox{\linewidth}{!}{\input{figs/simulations/lambda_gap_same_cov}}
        \caption{}
        \label{fig:simulations_same_cov}
    \end{subfigure}
    \caption{We sweep $\lambda$ on a uniform grid of $50$ points in $[0,1]$. For each $\lambda$, we run 100 Monte Carlo repetitions; in each iteration, we draw data $\mathcal{S}_g = \{(X_g, Y_g)\}_{i = 1}^{n_g}$ from the group-$g$ Gaussian linear model with $\sigma^2 = 1$, build the estimators $\tilde{\beta}_\lambda, \widehat{\beta}_\lambda$, and calculate the corresponding risk pair $(\mathcal{R}_r, \mathcal{R}_b)$ for each estimator. We plot the mean empirical frontiers across repetitions and depict the empirical contraction for the choice of $\lambda = 0.5$ for two regimes: (a) \emph{Equal samples:} $n_r = n_b = 30, d = 15, \rho_r = 1, \rho_b = 2.5$, (b) \emph{Equal covariance:} $n_r = 50, n_b = 30, d = 15, \rho_r = \rho_b = 1.75$.}
    \label{fig:simulations}
\end{figure}

\Cref{fig:simulations} isolates the effect of heterogeneous covariance and sample size.  \Cref{fig:simulations_same_n} illustrates the unequal covariance, equal sample size regime ($\rho_b > \rho_r$, $n_r = n_b$). Here, we observe that, the segment connecting the population point $\left( \mathcal{R}_r(\beta_{1/2}), \mathcal{R}_b(\beta_{1/2})\right)$ to the corresponding empirical points $\left( \mathcal{R}_r(\tilde{\beta}_{1/2}), \mathcal{R}_b(\tilde{\beta}_{1/2})\right)$ and $\left( \mathcal{R}_r(\widehat{\beta}_{1/2}), \mathcal{R}_b(\widehat{\beta}_{1/2})\right)$ is not parallel to the $45^\circ$ line. Instead, we see that it is tilted towards group $b$, reflecting a larger inflation in $\mathcal{R}_b$ than $\mathcal{R}_r$, on average. This asymmetry is explained by the results of \Cref{cor:known_spherical_small_ball} and \Cref{proposition:cross-term-unkown-cov}, which show that asymmetry in the covariance across groups drives asymmetric contraction of the empirical frontier estimated in finite samples. 

In contrast, in the unequal sample size, equal covariance regime ($n_r > n_b$, $\rho_r = \rho_b$) illustrated in \Cref{fig:simulations_same_cov}, the same segment is approximately parallel to the  $45^\circ$ line. In other words, with equal covariance, we see that the inflation in $\mathcal{R}_b$ than $\mathcal{R}_r$ is roughly symmetric for both the known and unknown covariance estimation regimes. Beyond $\lambda = 1/2$, we observe that the same pattern holds across the frontier. 

Moreover, for both figures, we observe that the two empirical FA frontiers for known and unknown covariances approach each other for $\lambda$ near zero or one, and coincide at the endpoints $\lambda \in \{0, 1\}$. The latter is expected, as the endpoints represent the per-group cases in which the optimal estimator does not depend on the knowledge of the covariance. The former observation, that the difference between the two frontiers shrinks around the endpoints, directly reflects our result in \Cref{theorem:bias_unkown_cov}, which shows that the bias term of the unknown-covariance estimator includes a factor of $\lambda^2 (1 - \lambda)^2$, and therefore vanishes quadratically at the endpoints.

\section{Conclusion}
Our analysis demonstrates that operationalizing the FA frontier with finite data poses unique challenges. Notably, departing from the population ideal introduces asymmetries across groups. These asymmetries grow with group heterogeneity, and such settings are precisely those in which fairness considerations are often most pressing. 
More specifically, we show that, when the covariance structure is known, the optimal sampling allocation prioritizes groups with greater variance or higher weight in the objective. When the covariance structure is unknown, however, a new challenge emerges: the bias term, which motivates balancing sample sizes across groups. In both settings, failing to account for these asymmetries can yield solutions that deviate  meaningfully from the intended population-level trade-off.

These results open several avenues for further work. First, extending the finite-sample analysis to nonlinear and nonparametric models would provide valuable insight into other model classes. Second, incorporating multiple groups and group label uncertainty would broaden our ability to approach complex tradeoffs with data. Third, studying these questions in a dynamic setting by considering feedback loops where the model may itself shape future populations and data availability remains an important open question. Overall, our results take a step toward understanding how data limitations and statistical uncertainty shape our ability to effectively balance fairness and accuracy.

\section{Acknowledgments}
We are grateful to Annie Liang and Ryan Haygood for insightful comments and discussion. Annie Ulichney's work is supported by the National Science Foundation Graduate Research Fellowship Program under Grant No. DGE 2146752. We also acknowledge funding from the European Union (ERC-2022-SYG-OCEAN-101071601).
Views and opinions expressed are however those of the author(s) only and do not necessarily reflect those of the National Science Foundation, the European Union or the European Research Council
Executive Agency. 

\bibliography{references}

@article{liang2021algorithm,
  title={Algorithm design: A fairness-accuracy frontier},
  author={Liang, Annie and Lu, Jay and Mu, Xiaosheng and Okumura, Kyohei},
  journal={Journal of political economy},
  year={2025}
}

@article{liu2024inference,
  title={Inference for an algorithmic fairness-accuracy frontier},
  author={Liu, Yiqi and Molinari, Francesca},
  journal={arXiv preprint arXiv:2402.08879},
  year={2024}
}

@article{auerbach2024testing,
  title={Testing the fairness-improvability of algorithms},
  author={Auerbach, Eric and Liang, Annie and Tabord-Meehan, Max and Okumura, Kyohei},
  journal={arXiv preprint arXiv:2405.04816},
  year={2024}
}

@book{vershynin2018high,
  title={{High-Dimensional Probability}: An Introduction With Applications in Data Science},
  author={Vershynin, Roman},
  volume={47},
  year={2018},
  publisher={Cambridge University Press}
}

@article{mourtada2022exact,
  title={Exact minimax risk for linear least squares, and the lower tail of sample covariance matrices},
  author={Mourtada, Jaouad},
  journal={The Annals of Statistics},
  volume={50},
  number={4},
  pages={2157--2178},
  year={2022},
  publisher={Institute of Mathematical Statistics}
}

@article{koltchinskii2015bounding,
  title={Bounding the smallest singular value of a random matrix without concentration},
  author={Koltchinskii, Vladimir and Mendelson, Shahar},
  journal={International Mathematics Research Notices},
  volume={2015},
  number={23},
  pages={12991--13008},
  year={2015},
  publisher={Oxford University Press}
}

@article{chzhen2022minimax,
  title={A minimax framework for quantifying risk-fairness trade-off in regression},
  author={Chzhen, Evgenii and Schreuder, Nicolas},
  journal={The Annals of Statistics},
  volume={50},
  number={4},
  pages={2416--2442},
  year={2022},
  publisher={Institute of Mathematical Statistics}
}

@inproceedings{agarwal2018reductions,
  title={A reductions approach to fair classification},
  author={Agarwal, Alekh and Beygelzimer, Alina and Dud{\'\i}k, Miroslav and Langford, John and Wallach, Hanna},
  booktitle={International Conference on Machine Learning},
  pages={60--69},
  year={2018},
  organization={PMLR}
}

@inproceedings{martinez2020minimax,
  title={Minimax {P}areto fairness: A multi-objective perspective},
  author={Martinez, Natalia and Bertran, Martin and Sapiro, Guillermo},
  booktitle={International Conference on Machine Learning},
  pages={6755--6764},
  year={2020},
  organization={PMLR}
}

@inproceedings{rolf2020balancing,
  title={Balancing competing objectives with noisy data: Score-based classifiers for welfare-aware machine learning},
  author={Rolf, Esther and Simchowitz, Max and Dean, Sarah and Liu, Lydia T and Bjorkegren, Daniel and Hardt, Moritz and Blumenstock, Joshua},
  booktitle={International Conference on Machine Learning},
  pages={8158--8168},
  year={2020},
  organization={PMLR}
}

@article{zhang2019group,
  title={Group retention when using machine learning in sequential decision making: The interplay between user dynamics and fairness},
  author={Zhang, Xueru and Khaliligarekani, Mohammadmahdi and Tekin, Cem and Liu, Mingyan},
  journal={Advances in Neural Information Processing Systems},
  volume={32},
  year={2019}
}

@article{hardt2016equality,
  title={Equality of opportunity in supervised learning},
  author={Hardt, Moritz and Price, Eric and Srebro, Nati},
  journal={Advances in Neural Information Processing Systems},
  volume={29},
  year={2016}
}

@article{rudelson2015small,
  title={Small ball probabilities for linear images of high-dimensional distributions},
  author={Rudelson, Mark and Vershynin, Roman},
  journal={International Mathematics Research Notices},
  volume={2015},
  number={19},
  pages={9594--9617},
  year={2015},
  publisher={Oxford University Press}
}

@article{zhang2023properties,
  title={On the properties of {K}ullback-{L}eibler divergence between multivariate {G}aussian distributions},
  author={Zhang, Yufeng and Pan, Jialu and Li, Li Ken and Liu, Wanwei and Chen, Zhenbang and Liu, Xinwang and Wang, Ji},
  journal={Advances in Neural Information Processing Systems},
  volume={36},
  pages={58152--58165},
  year={2023}
}

@article{wu2012optimal,
  title={Optimal phase transitions in compressed sensing},
  author={Wu, Yihong and Verd{\'u}, Sergio},
  journal={IEEE Transactions on Information Theory},
  volume={58},
  number={10},
  pages={6241--6263},
  year={2012},
  publisher={IEEE}
}

@article{edelman1988eigenvalues,
  title={Eigenvalues and condition numbers of random matrices},
  author={Edelman, Alan},
  journal={SIAM Journal on Matrix Analysis and Applications},
  volume={9},
  number={4},
  pages={543--560},
  year={1988},
  publisher={SIAM}
}

@inproceedings{raghavan2020mitigating,
  title={Mitigating bias in algorithmic hiring: Evaluating claims and practices},
  author={Raghavan, Manish and Barocas, Solon and Kleinberg, Jon and Levy, Karen},
  booktitle={Proceedings of the 2020 Conference on Fairness, Accountability, and Transparency},
  pages={469--481},
  year={2020}
}

@article{bartlett2022consumer,
  title={Consumer-lending discrimination in the {F}in{T}ech era},
  author={Bartlett, Robert and Morse, Adair and Stanton, Richard and Wallace, Nancy},
  journal={Journal of Financial Economics},
  volume={143},
  number={1},
  pages={30--56},
  year={2022},
  publisher={Elsevier}
}

@article{fuster2022predictably,
  title={Predictably unequal? The effects of machine learning on credit markets},
  author={Fuster, Andreas and Goldsmith-Pinkham, Paul and Ramadorai, Tarun and Walther, Ansgar},
  journal={The Journal of Finance},
  volume={77},
  number={1},
  pages={5--47},
  year={2022},
  publisher={Wiley Online Library}
}

@incollection{dastin2022amazon,
  title={{Amazon} scraps secret {AI} recruiting tool that showed bias against women},
  author={Dastin, Jeffrey},
  booktitle={Ethics of Data and Analytics},
  pages={296--299},
  year={2022},
  publisher={Auerbach Publications}
}

@article{chohlas2024learning,
  title={Learning to be fair: A consequentialist approach to equitable decision making},
  author={Chohlas-Wood, Alex and Coots, Madison and Zhu, Henry and Brunskill, Emma and Goel, Sharad},
  journal={Management Science},
  year={2024},
  publisher={INFORMS}
}

@article{wei2022fairness,
  title={The fairness-accuracy {P}areto front},
  author={Wei, Susan and Niethammer, Marc},
  journal={Statistical Analysis and Data Mining: The ASA Data Science Journal},
  volume={15},
  number={3},
  pages={287--302},
  year={2022},
  publisher={Wiley Online Library}
}

@inproceedings{hu2020fair,
  title={Fair classification and social welfare},
  author={Hu, Lily and Chen, Yiling},
  booktitle={Proceedings of the 2020 ACM Conference on Fairness, Accountability, and Transparency},
  pages={535--545},
  year={2020}
}

@techreport{gans2024algorithmic,
  title={Algorithmic fairness: A tale of two approaches},
  author={Gans, Joshua},
  year={2024},
  institution={National Bureau of Economic Research}
}

@inproceedings{oneto2020general,
  title={General fair empirical risk minimization},
  author={Oneto, Luca and Donini, Michele and Pontil, Massimiliano},
  booktitle={2020 International Joint Conference on Neural Networks (IJCNN)},
  pages={1--8},
  year={2020},
  organization={IEEE}
}

@article{donini2018empirical,
  title={Empirical risk minimization under fairness constraints},
  author={Donini, Michele and Oneto, Luca and Ben-David, Shai and Shawe-Taylor, John S and Pontil, Massimiliano},
  journal={Advances in Neural Information Processing Systems},
  volume={31},
  year={2018}
}

@article{chzhen2020fair,
  title={Fair regression with {W}asserstein barycenters},
  author={Chzhen, Evgenii and Denis, Christophe and Hebiri, Mohamed and Oneto, Luca and Pontil, Massimiliano},
  journal={Advances in Neural Information Processing Systems},
  volume={33},
  pages={7321--7331},
  year={2020}
}

@article{chzhen2020fairb,
  title={Fair regression via plug-in estimator and recalibration with statistical guarantees},
  author={Chzhen, Evgenii and Denis, Christophe and Hebiri, Mohamed and Oneto, Luca and Pontil, Massimiliano},
  journal={Advances in Neural Information Processing Systems},
  volume={33},
  pages={19137--19148},
  year={2020}
}

@article{fitzsimons2019general,
  title={A general framework for fair regression},
  author={Fitzsimons, Jack and Al Ali, AbdulRahman and Osborne, Michael and Roberts, Stephen},
  journal={Entropy},
  volume={21},
  number={8},
  pages={741},
  year={2019},
  publisher={MDPI}
}

@article{berk2017convex,
  title={A convex framework for fair regression},
  author={Berk, Richard and Heidari, Hoda and Jabbari, Shahin and Joseph, Matthew and Kearns, Michael and Morgenstern, Jamie and Neel, Seth and Roth, Aaron},
  journal={arXiv preprint arXiv:1706.02409},
  year={2017}
}

@inproceedings{calders2013controlling,
  title={Controlling attribute effect in linear regression},
  author={Calders, Toon and Karim, Asim and Kamiran, Faisal and Ali, Wasif and Zhang, Xiangliang},
  booktitle={2013 IEEE 13th International Conference on Data Mining},
  pages={71--80},
  year={2013},
  organization={IEEE}
}

@inproceedings{agarwal2019fair,
  title={Fair regression: Quantitative definitions and reduction-based algorithms},
  author={Agarwal, Alekh and Dud{\'\i}k, Miroslav and Wu, Zhiwei Steven},
  booktitle={International Conference on Machine Learning},
  pages={120--129},
  year={2019},
  organization={PMLR}
}

@inproceedings{dehdashtian2024utility,
  title={Utility-fairness trade-offs and how to find them},
  author={Dehdashtian, Sepehr and Sadeghi, Bashir and Boddeti, Vishnu Naresh},
  booktitle={Proceedings of the IEEE/CVF Conference on Computer Vision and Pattern Recognition},
  pages={12037--12046},
  year={2024}
}

@article{zhao2021costs,
  title={Costs and benefits of fair regression},
  author={Zhao, Han},
  journal={arXiv preprint arXiv:2106.08812},
  year={2021}
}

@article{rosenfeld2025machine,
  title={Machine learning should maximize welfare, but not by (only) maximizing accuracy},
  author={Rosenfeld, Nir and Xu, Haifeng},
  journal={arXiv preprint arXiv:2502.11981},
  year={2025}
}

@inproceedings{laufer2025constitutes,
  title={What constitutes a {L}ess {D}iscriminatory {A}lgorithm?},
  author={Laufer, Benjamin and Raghavan, Manish and Barocas, Solon},
  booktitle={Proceedings of the 2025 Symposium on Computer Science and Law},
  pages={136--151},
  year={2025}
}

@inproceedings{gillis2024operationalizing,
  title={Operationalizing the search for less discriminatory alternatives in fair lending},
  author={Gillis, Talia B and Meursault, Vitaly and Ustun, Berk},
  booktitle={Proceedings of the 2024 ACM Conference on Fairness, Accountability, and Transparency},
  pages={377--387},
  year={2024}
}

@article{samad2025selecting,
  title={Selecting for less discriminatory algorithms: A relational search framework for navigating fairness-accuracy trade-offs in practice},
  author={Samad, Hana and Akinwumi, Michael and Khan, Jameel and M{\"u}gge-Durum, Christoph and Ogundimu, Emmanuel O},
  journal={arXiv preprint arXiv:2506.01594},
  year={2025}
}

@article{wick2019unlocking,
  title={Unlocking fairness: A trade-off revisited},
  author={Wick, Michael and Panda, Swetasudha and Tristan, Jean-Baptiste},
  journal={Advances in Neural Information Processing Systems},
  volume={32},
  year={2019}
}

@article{blum2019recovering,
  title={Recovering from biased data: Can fairness constraints improve accuracy?},
  author={Blum, Avrim and Stangl, Kevin},
  journal={arXiv preprint arXiv:1912.01094},
  year={2019}
}

@inproceedings{dutta2020there,
  title={Is there a trade-off between fairness and accuracy? a perspective using mismatched hypothesis testing},
  author={Dutta, Sanghamitra and Wei, Dennis and Yueksel, Hazar and Chen, Pin-Yu and Liu, Sijia and Varshney, Kush},
  booktitle={International Conference on Machine Learning},
  pages={2803--2813},
  year={2020},
  organization={PMLR}
}

@inproceedings{fish2016confidence,
  title={A confidence-based approach for balancing fairness and accuracy},
  author={Fish, Benjamin and Kun, Jeremy and Lelkes, {\'A}d{\'a}m D},
  booktitle={Proceedings of the 2016 SIAM International Conference on Data Mining},
  pages={144--152},
  year={2016},
  organization={SIAM}
}

@inproceedings{cooper2021emergent,
  title={Emergent unfairness in algorithmic fairness-accuracy trade-off research},
  author={Cooper, A Feder and Abrams, Ellen and Na, Na},
  booktitle={Proceedings of the 2021 AAAI/ACM Conference on AI, Ethics, and Society},
  pages={46--54},
  year={2021}
}

@inproceedings{corbett2017algorithmic,
  title={Algorithmic decision making and the cost of fairness},
  author={Corbett-Davies, Sam and Pierson, Emma and Feller, Avi and Goel, Sharad and Huq, Aziz},
  booktitle={Proceedings of the 23rd ACM SIGKDD International Conference on Knowledge Discovery and Data Mining},
  pages={797--806},
  year={2017}
}

@article{casacuberta2023augmenting,
  title={Augmenting fairness with welfare: A framework for algorithmic justice},
  author={Casacuberta, S{\'\i}lvia and Robinson, Isaac and Wagaman, Connor},
  year={2023}
}

@inproceedings{verma2018fairness,
  title={Fairness definitions explained},
  author={Verma, Sahil and Rubin, Julia},
  booktitle={Proceedings of the International Workshop on Software Fairness},
  pages={1--7},
  year={2018}
}

@article{kleinberg2016inherent,
  title={Inherent trade-offs in the fair determination of risk scores},
  author={Kleinberg, Jon and Mullainathan, Sendhil and Raghavan, Manish},
  journal={arXiv preprint arXiv:1609.05807},
  year={2016}
}

@article{lockwood2015gustibus,
  title={De gustibus non est taxandum: Heterogeneity in preferences and optimal redistribution},
  author={Lockwood, Benjamin B and Weinzierl, Matthew},
  journal={Journal of Public Economics},
  volume={124},
  pages={74--80},
  year={2015},
  publisher={Elsevier}
}

@article{saez2016generalized,
  title={Generalized social marginal welfare weights for optimal tax theory},
  author={Saez, Emmanuel and Stantcheva, Stefanie},
  journal={American Economic Review},
  volume={106},
  number={01},
  pages={24--45},
  year={2016},
  publisher={American Economic Association}
}

@inproceedings{kasy2021fairness,
  title={Fairness, equality, and power in algorithmic decision-making},
  author={Kasy, Maximilian and Abebe, Rediet},
  booktitle={Proceedings of the 2021 ACM Conference on Fairness, Accountability, and Transparency},
  pages={576--586},
  year={2021}
}

@inproceedings{liang2024algorithmic,
  title={Algorithmic fairness and social welfare},
  author={Liang, Annie and Lu, Jay},
  booktitle={AEA Papers and Proceedings},
  volume={114},
  pages={628--632},
  year={2024},
  organization={American Economic Association}
}

@article{andersen2020big,
  title={The big trade-off between efficiency and equity—is it there?},
  author={Andersen, Torben M and Maibom, Jonas},
  journal={Oxford Economic Papers},
  volume={72},
  number={2},
  pages={391--411},
  year={2020},
  publisher={Oxford University Press}
}

@article{capozza2024should,
  title={Who should get money? Estimating welfare weights in the {U.S.}},
  author={Capozza, Francesco and Srinivasan, Krishna},
  journal={URPP Equality of Opportunity Discussion Paper Series},
  number={50},
  year={2024},
  publisher={University of Zurich}
}

@article{hendren2020measuring,
  title={Measuring economic efficiency using inverse-optimum weights},
  author={Hendren, Nathaniel},
  journal={Journal of Public Economics},
  volume={187},
  pages={104198},
  year={2020},
  publisher={Elsevier}
}

@inproceedings{khalili2023loss,
  title={Loss balancing for fair supervised learning},
  author={Khalili, Mohammad Mahdi and Zhang, Xueru and Abroshan, Mahed},
  booktitle={International Conference on Machine Learning},
  pages={16271--16290},
  year={2023},
  organization={PMLR}
}

@article{wang2023aleatoric,
  title={Aleatoric and epistemic discrimination: Fundamental limits of fairness interventions},
  author={Wang, Hao and He, Luxi and Gao, Rui and Calmon, Flavio},
  journal={Advances in Neural Information Processing Systems},
  volume={36},
  pages={27040--27062},
  year={2023}
}

@inproceedings{cui2019class,
  title={Class-balanced loss based on effective number of samples},
  author={Cui, Yin and Jia, Menglin and Lin, Tsung-Yi and Song, Yang and Belongie, Serge},
  booktitle={Proceedings of the IEEE/CVF Conference on Computer Vision and Pattern Recognition},
  pages={9268--9277},
  year={2019}
}

@article{steininger2021density,
  title={Density-based weighting for imbalanced regression},
  author={Steininger, Michael and Kobs, Konstantin and Davidson, Padraig and Krause, Anna and Hotho, Andreas},
  journal={Machine Learning},
  volume={110},
  number={8},
  pages={2187--2211},
  year={2021},
  publisher={Springer}
}

@article{mikolov2013distributed,
  title={Distributed representations of words and phrases and their compositionality},
  author={Mikolov, Tomas and Sutskever, Ilya and Chen, Kai and Corrado, Greg S and Dean, Jeff},
  journal={Advances in Neural Information Processing Systems},
  volume={26},
  year={2013}
}

@inproceedings{coston2021characterizing,
  title={Characterizing fairness over the set of good models under selective labels},
  author={Coston, Amanda and Rambachan, Ashesh and Chouldechova, Alexandra},
  booktitle={International Conference on Machine Learning},
  pages={2144--2155},
  year={2021},
  organization={PMLR}
}

@article{kleinberg2015prediction,
Author = {Kleinberg, Jon and Ludwig, Jens and Mullainathan, Sendhil and Obermeyer, Ziad},
Title = {Prediction Policy Problems},
Journal = {American Economic Review},
Volume = {105},
Number = {5},
Year = {2015},
Month = {May},
Pages = {491–95},
DOI = {10.1257/aer.p20151023},
URL = {https://www.aeaweb.org/articles?id=10.1257/aer.p20151023}}

@article{kleinberg2018human,
  title={Human decisions and machine predictions},
  author={Kleinberg, Jon and Lakkaraju, Himabindu and Leskovec, Jure and Ludwig, Jens and Mullainathan, Sendhil},
  journal={The quarterly journal of economics},
  volume={133},
  number={1},
  pages={237--293},
  year={2018},
  publisher={Oxford University Press}
}
\newpage
\appendix
\section{Proofs}
\subsection{Auxiliary Results}
\begin{lemma}[Exact risk of the OLS convex combination estimator]\label{lemma:unknown_covariance_exact_risk}
Under Definition \ref{def:linear_model} and Assumptions \ref{assumption:invertible_cov}, \ref{assumption:invertible_emp_cov} the excess risk decomposes as:
    \begin{equation*}
    \begin{aligned}
        \EE \left[ \norm{ \widehat{\beta}_{\lambda} - \beta_\lambda }_{\Sigma_g}^2\right] &= \lambda^2 \frac{1}{n_r^2} \EE \left[ \sum_{i =1}^{n_r} \sigma_r^2(X_i) \norm{\Tilde{X_i}}_{\Tilde{\Sigma}_g^{-2}}^2 \right] \\ \quad &+ (1-\lambda)^2 \frac{1}{n_b^2} \EE \left[ \sum_{i =1}^{n_b} \sigma_b^2(X_i) \norm{\Tilde{X_i}}_{\Tilde{\Sigma}_g^{-2}}^2 \right] \\
        & + \quad  \EE \left[ \norm{\lambda \widehat{\Sigma}_r (\beta_r - \beta_\lambda) + ( 1 - \lambda) \widehat{\Sigma}_b (\beta_b - \beta_\lambda) }_{\Tilde{\Sigma}_g^{-2}}^2 \right]. 
    \end{aligned}
    \end{equation*}
\end{lemma}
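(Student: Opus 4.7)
The plan is to start from the explicit form of the OLS estimator $\widehat{\beta}_\lambda = \widehat{\Sigma}_\lambda^{-1}\widehat{\nu}_\lambda$ and substitute the linear-model structure $Y_i^{(g)} = X_i^{(g)\top}\beta_g + \varepsilon_i^{(g)}$. This gives $\widehat{\nu}_g = \widehat{\Sigma}_g \beta_g + \tfrac{1}{n_g}\sum_{i} \varepsilon_i^{(g)} X_i^{(g)}$. Combining this with the trivial identity $\widehat{\Sigma}_\lambda \beta_\lambda = \lambda \widehat{\Sigma}_r \beta_\lambda + (1-\lambda) \widehat{\Sigma}_b \beta_\lambda$, I can write
\begin{equation*}
\widehat{\beta}_\lambda - \beta_\lambda = \widehat{\Sigma}_\lambda^{-1}\Bigl[\lambda \widehat{\Sigma}_r(\beta_r - \beta_\lambda) + (1-\lambda)\widehat{\Sigma}_b(\beta_b - \beta_\lambda)\Bigr] + \widehat{\Sigma}_\lambda^{-1}\Bigl[\tfrac{\lambda}{n_r}\sum_{i=1}^{n_r} \varepsilon_i^{(r)} X_i^{(r)} + \tfrac{1-\lambda}{n_b}\sum_{i=1}^{n_b} \varepsilon_i^{(b)} X_i^{(b)}\Bigr].
\end{equation*}
This decomposition cleanly separates the ``bias'' contribution (first bracket, depending only on the covariates) from the noise contribution (second bracket, linear in the mean-zero errors conditional on the covariates).

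Next, I would expand the squared Mahalanobis norm $\norm{\widehat{\beta}_\lambda - \beta_\lambda}_{\Sigma_g}^2$ as a sum of three pieces---bias-squared, noise-squared, and a cross term---and condition on the full collection of covariates $\{X_i^{(g)}\}$. The cross term vanishes in expectation because $\EE[\varepsilon_i^{(g)} \mid X_i^{(g)}] = 0$ and the noise variables are independent of everything else. The bias-squared piece is already purely a function of the covariates, so it survives the outer expectation unchanged and produces the third term in the statement (after identifying the sandwich matrix $\widehat{\Sigma}_\lambda^{-1} \Sigma_g \widehat{\Sigma}_\lambda^{-1}$ with $\widetilde{\Sigma}_g^{-2}$).

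For the noise piece, I would again condition on covariates and use that $\{\varepsilon_i^{(g)}\}$ are independent across $i$ and across groups, with $\EE[(\varepsilon_i^{(g)})^2 \mid X_i^{(g)}] = \sigma_g^2(X_i^{(g)})$. All cross-index and cross-group contributions vanish, leaving a diagonal quadratic form in the per-sample vectors. This collapses the noise term into
\begin{equation*}
\tfrac{\lambda^2}{n_r^2}\sum_{i=1}^{n_r} \sigma_r^2(X_i^{(r)}) X_i^{(r)\top} \widehat{\Sigma}_\lambda^{-1} \Sigma_g \widehat{\Sigma}_\lambda^{-1} X_i^{(r)} + \tfrac{(1-\lambda)^2}{n_b^2}\sum_{i=1}^{n_b} \sigma_b^2(X_i^{(b)}) X_i^{(b)\top} \widehat{\Sigma}_\lambda^{-1} \Sigma_g \widehat{\Sigma}_\lambda^{-1} X_i^{(b)},
\end{equation*}
which matches the first two terms of the claim. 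Taking the outer expectation finishes the identity.

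The main obstacle is purely bookkeeping rather than mathematical depth: verifying that every potential cross term---between bias and noise, between the two groups' noise vectors, and between distinct indices within the same group---vanishes after conditioning on covariates. Once this is done, no concentration or distributional argument is needed; the result is an exact algebraic identity flowing from the closed-form OLS solution, the zero-mean conditional noise, and the sandwich structure of the quadratic form $\norm{\cdot}_{\Sigma_g}^2$.
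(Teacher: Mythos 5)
Your proposal is correct and follows essentially the same route as the paper's proof: decompose $\widehat{\beta}_\lambda - \beta_\lambda = \widehat{\Sigma}_\lambda^{-1}(Z + B)$ with $B$ the covariate-only bias vector and $Z$ the weighted noise sums, condition on the covariates so that the bias--noise and cross-group/cross-index terms vanish, and identify the sandwich matrix $\widehat{\Sigma}_\lambda^{-1}\Sigma_g\widehat{\Sigma}_\lambda^{-1}$ with $\widetilde{\Sigma}_g^{-2}$ in whitened coordinates. No gaps.
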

\subsubsection*{Proof of Lemma \ref{lemma:unknown_covariance_exact_risk}}
We begin by noting that 
\begin{equation*}
    \begin{aligned}
        \widehat{\nu}_g  \coloneqq \frac{1}{n_g} \sum_{i=1}^{n_g} X_i Y_i = \widehat{\Sigma}_g \beta_g + \frac{1}{n_g} \sum_{i=1}^{n_g} \varepsilon_i X_i.
    \end{aligned}
\end{equation*}
Substituting into $\widehat{\nu}_\lambda$, we obtain
\begin{equation*}
    \begin{aligned}
        \lambda \widehat{\nu}_r  + (1 - \lambda) \widehat{\nu}_b  = \lambda \widehat{\Sigma}_r \beta_r  + (1 - \lambda) \widehat{\Sigma}_b \beta_b + \lambda \frac{1}{n_r} \sum_{i=1}^{n_r} \varepsilon_i X_i + (1 - \lambda) \frac{1}{n_b} \sum_{i=1}^{n_b} \varepsilon_i X_i. 
    \end{aligned}
\end{equation*}

Define the symmetric matrix
\begin{equation*}
    W \coloneqq \widehat{\Sigma}_{\lambda}^{-1}\Sigma_g \widehat{\Sigma}_{\lambda}^{-1}
\end{equation*}
and let
\begin{equation*}
    \begin{aligned}
        Z_r \coloneqq   \lambda \frac{1}{n_r}  \sum_{i=1}^{n_r} \varepsilon_i X_i,  \quad Z_b \coloneqq (1 - \lambda) \frac{1}{n_b} \sum_{i=1}^{n_b} \varepsilon_i X_i, \quad Z \coloneqq Z_r + Z_b
    \end{aligned}
\end{equation*}
\begin{equation*}
    B = \lambda \widehat{\Sigma}_r (\beta_r - \beta_\lambda) + ( 1 - \lambda) \widehat{\Sigma}_b (\beta_b - \beta_\lambda).
\end{equation*}
The error in estimating $\beta_\lambda$ thus decomposes as
\begin{equation*}
    \widehat{\beta}_\lambda - \beta_\lambda = \widehat{\Sigma}_\lambda^{-1}(Z + B).
\end{equation*}
so that 
\begin{equation*}
    \begin{aligned}
        \EE \left[ \norm{ \widehat{\beta}_{\lambda} - \beta_\lambda }_{\Sigma_g}^2\right] = \EE \left[ \norm{   Z  + B }_{W}^2  \right] = \EE \left[ \norm{Z }_{W}^2 \right]   + \EE \left[ \norm{B}_{W}^2 \right]   + 2 \EE \left[ 
        \langle Z, B \rangle_W \right].
    \end{aligned}
\end{equation*}
Since $Z$ is mean-zero conditional on $\{X_i\}$ and $B$ and $W$ are deterministic given $\{X_i\}$, we have 
\begin{equation*}
    \EE \left[ \EE \left[
        \langle Z, B \rangle_W \right| \{X_i\} \right] = 0.
\end{equation*}
The cross-terms between $Z_r$ and $Z_b$ also vanish by independence and the fact that the noise is zero-mean:
\begin{equation*}
    \EE \left[ \EE \left[ \left. \langle Z_r, Z_b \rangle_W \right] \right| \{X_i\} \right]  = 0.
\end{equation*}
Therefore, 
\begin{equation*}
    \begin{aligned}
        \EE \left[ \norm{Z }_{W}^2 \right] &= \EE \left[ \norm{Z_r }_{W}^2 \right] + \EE \left[ \norm{Z_b }_{W}^2 \right]\\
        &= \lambda^2 \frac{1}{n_r^2} \EE \left[ \sum_{i =1}^{n_r} \sigma_r^2(X_i) \norm{X_i}_W^2 \right]  + (1-\lambda)^2 \frac{1}{n_b^2} \EE \left[ \sum_{i =1}^{n_b} \sigma_b^2(X_i) \norm{X_i}_W^2 \right]
    \end{aligned}
\end{equation*}

Combining terms and observing that 
\begin{equation*}
    \norm{X_i}_W^2 = \norm{\Tilde{X}_i}_{\Tilde{\Sigma}_g^{-2}}
\end{equation*}
yields the claim. $\blacksquare$
\begin{lemma}[\cite{mourtada2022exact} Corollary 4 generalized to spherical covariance]\label{lemma:mourtada_small_ball_min_eval_fourth_moment}
    Suppose that $n_g \geq 48/\alpha_g$ and that $X^{(g)}$ satisfies \Cref{assumption:small_ball} with parameters $C_g$ and $\alpha_g$. Also suppose that $\rho_g^2 I_d \preceq \Sigma_g$. Then, 
    \begin{equation}\label{eq:min_eval_neg_4}
        \EE\left[\lambda_{\min}(\widehat{\Sigma}_g)^{-4} \right]\leq 2 \rho_g^{-8} \cdot {{C}_g'}^{4} 
    \end{equation}
    where $C'_g \coloneq 3 C_g^4 \exp(1 + 9/\alpha_g)$.
\end{lemma}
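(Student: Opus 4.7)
The strategy is a routine whitening reduction to the isotropic case so that the existing result of \citet{mourtada2022exact} (the $\Sigma = I_d$ version of Corollary 4) can be invoked directly; the constant $C'_g = 3 C_g^4 \exp(1+9/\alpha_g)$ and the sample size threshold $n_g \geq 48/\alpha_g$ match Mourtada's statement exactly, which is a strong hint that no new technical estimate is needed.

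\emph{Step 1 (whiten).} Define $Z_i \coloneqq \Sigma_g^{-1/2} X_i^{(g)}$. Then $\EE[Z_i Z_i^\top] = I_d$ and the corresponding sample covariance factors as $\widehat{\Sigma}_Z = \Sigma_g^{-1/2}\widehat{\Sigma}_g \Sigma_g^{-1/2}$, equivalently $\widehat{\Sigma}_g = \Sigma_g^{1/2} \widehat{\Sigma}_Z \Sigma_g^{1/2}$.

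\emph{Step 2 (small-ball is invariant under whitening).} For arbitrary $\theta' \in \RR^d$, set $\theta = \Sigma_g^{-1/2}\theta'$, so that $\|\theta\|_{\Sigma_g} = \|\theta'\|$ and $\theta'^\top Z_i = \theta^\top X_i^{(g)}$. \Cref{assumption:small_ball} applied to $\theta$ then gives
\begin{equation*}
\PP\bigl(|\theta'^\top Z_i| \leq t\,\|\theta'\|\bigr) \;=\; \PP_g\bigl(|\theta^\top X| \leq t\,\|\theta\|_{\Sigma_g}\bigr) \;\leq\; (C_g t)^{\alpha_g},
\end{equation*}
so $Z_i$ is isotropic and satisfies small-ball with the same constants $(C_g, \alpha_g)$.

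\emph{Step 3 (apply the isotropic Mourtada bound).} Since $n_g \geq 48/\alpha_g$, \citet[Corollary 4]{mourtada2022exact} applied to the isotropic vectors $Z_1, \dots, Z_{n_g}$ yields
\begin{equation*}
\EE\bigl[\lambda_{\min}(\widehat{\Sigma}_Z)^{-4}\bigr] \;\leq\; 2\,{C'_g}^{4},
\end{equation*}
with $C'_g = 3 C_g^4 \exp(1 + 9/\alpha_g)$.

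\emph{Step 4 (transport back using $\Sigma_g \succeq \rho_g^2 I_d$).} For any unit vector $u$, let $v = \Sigma_g^{1/2} u$, so $\|v\|^2 = u^\top \Sigma_g u \geq \rho_g^2$ and
\begin{equation*}
u^\top \widehat{\Sigma}_g u \;=\; v^\top \widehat{\Sigma}_Z v \;\geq\; \lambda_{\min}(\widehat{\Sigma}_Z)\,\|v\|^2 \;\geq\; \rho_g^2\,\lambda_{\min}(\widehat{\Sigma}_Z).
\end{equation*}
Minimizing over $u$ gives $\lambda_{\min}(\widehat{\Sigma}_g) \geq \rho_g^2\,\lambda_{\min}(\widehat{\Sigma}_Z)$, whence $\lambda_{\min}(\widehat{\Sigma}_g)^{-4} \leq \rho_g^{-8}\,\lambda_{\min}(\widehat{\Sigma}_Z)^{-4}$. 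Taking expectations and combining with Step 3 delivers \eqref{eq:min_eval_neg_4}.

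\textbf{Main obstacle.} There is essentially no substantive obstacle: all the real analytic work lives inside \citet{mourtada2022exact}, and the generalization to a covariance satisfying only $\Sigma_g \succeq \rho_g^2 I_d$ (rather than $\Sigma_g = I_d$) is handled by the one-line whitening above plus the spectral lower bound in Step 4. The only mild care required is verifying that the small-ball condition transfers with the same constants under the non-orthogonal transform $\Sigma_g^{-1/2}$, which is Step 2 and follows because the small-ball condition is phrased in the $\|\cdot\|_{\Sigma_g}$-norm precisely so as to be affine-invariant.
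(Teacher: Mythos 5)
Your proof is correct and follows essentially the same route as the paper: reduce to Mourtada's Corollary 4 by a normalization of the covariates and then transfer the bound on the minimum eigenvalue back via $\Sigma_g \succeq \rho_g^2 I_d$. The only cosmetic difference is that the paper rescales by the scalar $\rho_g^{-1}$ while you whiten by $\Sigma_g^{-1/2}$; your Steps 2 and 4 actually make explicit the small-ball invariance and spectral comparison that the paper leaves implicit.
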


\subsubsection*{Proof of \Cref{lemma:mourtada_small_ball_min_eval_fourth_moment}}
    Define the whitened covariate vectors and whitened sample covariance, respectively, as
    \begin{equation*}
        {\Tilde{X}_i}^{(g)} \coloneqq \frac{1}{\rho_g} {X_i}^{(g)},\quad {\bar{\Sigma}}_g \coloneqq\frac{1}{n_g} \sum_{i=1}^{n_g} {\Tilde{X}_i}^{(g)}{{\Tilde{X}_i}^{(g)}}^{\top}.
    \end{equation*}
    
    By \cite{mourtada2022exact} Corollary 4 taking $q = 4$, if $n_g \geq 48/\alpha_g$, 
    \begin{equation*}
        \EE\left[\lambda_{\min}(\bar{\Sigma}_g)^{-4} \right]\leq 2 \cdot \left(3 C_g^4 \exp(1 + 9/\alpha_g)\right)^4.
    \end{equation*}
    Hence, 
    \begin{equation}\label{eq:min_eval_neg_4_b}
        \EE\left[\lambda_{\min}(\widehat{\Sigma}_g)^{-4} \right]\leq 2 \rho_g^{-8} \cdot \left(3 C_g^4 \exp(1 + 9/\alpha_g)\right)^4.  \blacksquare
    \end{equation}
\begin{lemma}[Subgaussian sample covariance moment bound]\label{lemma:sub_gaussian_covariance_bound}
Suppose Assumption \ref{assumption:subgaussian} holds and $n_g \geq d$. Then, there exists a universal constant $C > 0$ such that:
    \begin{equation*}
         \EE\left[ \norm{\widehat{\Sigma}_g - \Sigma_g}^4 \right] \leq C  K_g^8 \norm{\Sigma_g}^4 \frac{d^2}{n_g^2}.
    \end{equation*}
\end{lemma}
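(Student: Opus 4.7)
The plan is a reduction-plus-concentration argument. First I would whiten the covariates: define $Z_i = \Sigma_g^{-1/2} X_i^{(g)}$, so that by Assumption \ref{assumption:subgaussian}, for every $v \in \RR^d$, $\normpsitwo{Z_i^\top v} = \normpsitwo{X_i^{(g)\top} \Sigma_g^{-1/2} v} \leq K_g \|\Sigma_g^{-1/2} v\|_{\Sigma_g} = K_g \|v\|$. Thus $Z_i$ is isotropic subgaussian with $\psi_2$-constant at most $K_g$. Writing $\widehat{\Sigma}_g = \Sigma_g^{1/2} \widehat{\Sigma}_Z \Sigma_g^{1/2}$ with $\widehat{\Sigma}_Z \coloneqq n_g^{-1}\sum_{i=1}^{n_g} Z_i Z_i^\top$, submultiplicativity gives $\|\widehat{\Sigma}_g - \Sigma_g\| \leq \|\Sigma_g\| \cdot \|\widehat{\Sigma}_Z - I_d\|$, so it suffices to establish $\EE\|\widehat{\Sigma}_Z - I_d\|^4 \lesssim K_g^8 d^2/n_g^2$.

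Next I would invoke a standard non-asymptotic deviation inequality for isotropic subgaussian sample covariance matrices (e.g., Vershynin, \emph{High-Dimensional Probability}, Theorem 4.7.1). This yields, for some universal constant $c > 0$ and all $t \geq 0$,
\begin{equation*}
\mathbb{P}\!\left( \|\widehat{\Sigma}_Z - I_d\| \geq c K_g^2 \left( \sqrt{\tfrac{d+t}{n_g}} + \tfrac{d+t}{n_g}\right)\right) \leq 2 e^{-t}.
\end{equation*}
Because Assumption \ref{assumption:invertible_emp_cov} requires $n_g \geq d$, the square-root term dominates the linear term throughout the ``bulk'' regime $t \leq n_g$.

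Finally, I would convert this tail bound into the stated fourth-moment bound via the layer-cake identity $\EE \|\widehat{\Sigma}_Z - I_d\|^4 = \int_0^\infty 4 s^3 \mathbb{P}(\|\widehat{\Sigma}_Z - I_d\| \geq s)\, ds$. Substituting the tail estimate and splitting the integral at $t = n_g$, the bulk region contributes
\begin{equation*}
\int_0^{n_g} (cK_g^2)^4 \Bigl(\tfrac{d+t}{n_g}\Bigr)^2 e^{-t}\, dt \;\lesssim\; K_g^8 \cdot \frac{d^2 + \int_0^\infty t^2 e^{-t} dt}{n_g^2} \;\lesssim\; K_g^8 \frac{d^2}{n_g^2},
\end{equation*}
while the tail region $t > n_g$ (where one must instead use the linear $(d+t)/n_g$ term) contributes a term of order $K_g^8 e^{-n_g/2}$, which is exponentially negligible compared to $K_g^8 d^2/n_g^2$. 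Multiplying by $\|\Sigma_g\|^4$ yields the claim.

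The only real obstacle is bookkeeping in the integration step, in particular handling the transition between the subgaussian regime (square-root term dominant) and the subexponential regime (linear term dominant) in the deviation inequality, and checking that the latter contributes only a lower-order correction under $n_g \geq d$. Everything else is a direct reduction to a well-known concentration estimate.
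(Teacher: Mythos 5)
Your proposal is correct and follows essentially the same route as the paper: both invoke Vershynin's subgaussian deviation bound for the sample covariance (the paper cites Remark 4.7.3 directly in the non-isotropic form, which absorbs your whitening step) and then convert the tail bound to a fourth-moment bound via the layer-cake formula, splitting at the point where the subgaussian and subexponential regimes cross over. The only cosmetic difference is that you parametrize the tail integral by the deviation parameter $t$ while the paper inverts the tail bound to integrate in $s$; both handle the regime transition the same way and arrive at the same $K_g^8\norm{\Sigma_g}^4 d^2/n_g^2$ rate.
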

\subsubsection*{Proof of \Cref{lemma:sub_gaussian_covariance_bound}}
    By standard subgaussian matrix concentration \citep[see, e.g.,][Remark 4.7.3]{vershynin2018high}, for all $\delta \in (0, 1)$, there exists an absolute constant $C > 0$ such that:  
    \begin{equation*}
        \PP\left( \norm{\widehat{\Sigma}_g - \Sigma_g} \geq C K_g^2 \left( \sqrt{\frac{d + u}{n_g}} + \frac{d + u}{n_g}\right) \norm{\Sigma_g} \right) \leq 2 \exp(-u).
    \end{equation*}
    To invert this tail bound, we seek $f(s)$ such that $\PP(\norm{\widehat{\Sigma}_g - \Sigma_g} \geq s) \leq 2 \exp(-f(s))$. In other words, using $2 \max\{a, b\} \geq a + b$, we want 
    \begin{equation}\label{eq:condition}
        s \leq C K_g^2 \left( \sqrt{\frac{d + u}{n_g}} + \frac{d + u}{n_g}\right) \norm{\Sigma_g}\leq 2 C K_g^2 \max\left\{ \sqrt{\frac{d + u}{n_g}}, \frac{d + u}{n_g} \right\} \norm{\Sigma_g}
    \end{equation}
    We split this into two cases:
    \begin{enumerate}
        \item If $\sqrt{\frac{d + u}{n_g}}\geq \frac{d + u}{n_g},$ \eqref{eq:condition} is satisfied if $s \leq 2 C K_g^2 \sqrt{\frac{d + u}{n_g}} \norm{\Sigma_g}$ which occurs when $u \geq \frac{s^2 n_g}{4 C^2 K_g^4 \norm{\Sigma_g}^2} - d$.
        \item If $\sqrt{\frac{d + u}{n_g}}< \frac{d + u}{n_g},$ \eqref{eq:condition} is satisfied if $s \leq 2 C K_g^2 {\frac{d + u}{n_g}} \norm{\Sigma_g}$ which occurs when $u \geq \frac{s n_g}{2 CK_g^2 \norm{\Sigma_g}} - d$.
    \end{enumerate}
    
    Combining, 
    \begin{equation*}
        \PP\left( \norm{\widehat{\Sigma}_g - \Sigma_g} \geq s \right) \leq 2 \exp\left( d - C' n_g \min \left\{\frac{s^2 }{K_g^4 \norm{\Sigma_g}^2}, \frac{s }{ K_g^2 \norm{\Sigma_g}} \right\}\right).
    \end{equation*}
    
    Define $s_1 \coloneqq K_g^2 \norm{\Sigma_g} \sqrt{\frac{d}{C' n_g}}$ and $s_2 \coloneqq K_g^2 \norm{\Sigma_g} \frac{d}{C' n_g}$, the critical values of $s$ that make the argument of the exponent positive for cases (1) and (2) defined above, respectively. 
  
    Next, we express the expectation as the integral of the tails, performing the change of variables $t = s^4$:
    \begin{equation*}
        \begin{aligned}
            \EE\left[ \norm{\widehat{\Sigma}_g - \Sigma_g}^4 \right] &= \int_{0}^\infty \PP\left(\norm{\widehat{\Sigma}_g - \Sigma_g}^4 \geq t\right) dt
            = 4 \int_{0}^\infty \PP\left(\norm{\widehat{\Sigma}_g - \Sigma_g} \geq s\right) s^3 ds
        \end{aligned}
    \end{equation*}
    
    Define the critical point $s_d \coloneqq \max \{s_1, s_2\}$ and split the integral 
    \begin{equation*}
        \begin{aligned}
            \int_{0}^\infty \PP\left(\norm{\widehat{\Sigma}_g - \Sigma_g} \geq s\right) s^3 ds &= \int_{0}^{s_d} \PP\left(\norm{\widehat{\Sigma}_g - \Sigma_g} \geq s\right) s^3 ds + \int_{s_d}^\infty \PP\left(\norm{\widehat{\Sigma}_g - \Sigma_g} \geq s\right) s^3 ds
        \end{aligned}
    \end{equation*}
    
    For the first term, we may take the trivial upper bound  of $1$ to bound the integral as follows: 
    \begin{equation*}
        \int_{0}^{s_d} \PP\left(\norm{\widehat{\Sigma}_g - \Sigma_g} \geq s\right) s^3 ds \leq \int_{0}^{s_d}  s^3 ds \leq \frac{1}{4} \max\{s_1^4, s_2^4 \} = \frac{K_g^8 \norm{\Sigma_g}^4}{4} \max\left\{\left(\frac{d}{C' n_g}\right)^2, \left(\frac{d}{C' n_g}\right)^4 \right\}
    \end{equation*}
    
    For the second term, we consider two cases of $s_d=s_1$ and $s_d=s_2$:
    \begin{equation*}
        \int_{s_1}^\infty \PP\left(\norm{\widehat{\Sigma}_g - \Sigma_g} \geq s\right) s^3 ds \leq  2 \int_{s_1}^\infty s^3 \exp\left(d-\frac{C' n_g s^2}{K_g^4 \norm{\Sigma_g}^2}  \right) ds = \frac{K_g^8 (d+1) \norm{\Sigma_g}^4}{C'^2 n_g^2}
    \end{equation*}
    \begin{equation*}
        \int_{s_2}^\infty \PP\left(\norm{\widehat{\Sigma}_g - \Sigma_g} \geq s\right) s^3 ds \leq  2 \int_{s_2}^\infty s^3 \exp\left(d-\frac{C' n_g s}{K_g^2 \norm{\Sigma_g}}  \right) ds = \frac{2K_g^{8} \left(d^{3} + 3d^{2} + 6d + 6\right) {\norm{\Sigma_g}}^{4}}{C'^{4} n_g^{4}}.
    \end{equation*}
    
    Therefore, in either case, using $n_g \leq d$, we conclude that, for an absolute constant $C>0$, 
    \begin{equation*}
         \EE\left[ \norm{\widehat{\Sigma}_g - \Sigma_g}^4 \right] \leq C  K_g^8 \norm{\Sigma_g}^4 \frac{d^2}{n_g^2}
    \end{equation*}
    as desired. $\blacksquare$
\begin{lemma}\label{lemma:subgaussian_sample_covariance_squared_trace}
    Under \Cref{assumption:subgaussian}, there exists a constant $C > 0$ such that 
    \begin{equation*}
        \EE\left[ \Tr{ \widehat{\Sigma}_g^{2}} \right] \leq \Tr{\Sigma_g^2} + \frac{C K_g^4}{n_g}\Tr{\Sigma_g}^2.
    \end{equation*}
\end{lemma}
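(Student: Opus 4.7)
The plan is to expand $\Tr(\widehat{\Sigma}_g^2)$ as a double sum over samples, isolate the bias contribution coming from the ``diagonal'' terms $i=j$, and absorb the ``off-diagonal'' contribution exactly into $\Tr(\Sigma_g^2)$. Concretely, write
\begin{equation*}
\Tr\!\bigl(\widehat{\Sigma}_g^{2}\bigr) \;=\; \frac{1}{n_g^{2}} \sum_{i,j=1}^{n_g} \Tr\!\bigl(X_i^{(g)} X_i^{(g)\top} X_j^{(g)} X_j^{(g)\top}\bigr) \;=\; \frac{1}{n_g^{2}} \sum_{i,j=1}^{n_g} \bigl(X_i^{(g)\top} X_j^{(g)}\bigr)^{2}.
\end{equation*}
The $n_g(n_g-1)$ off-diagonal terms satisfy, by independence, $\EE[(X_i^{(g)\top} X_j^{(g)})^2]=\EE[X_j^{(g)\top}\Sigma_g X_j^{(g)}]=\Tr(\Sigma_g^{2})$, while the $n_g$ diagonal terms contribute $\EE[\|X^{(g)}\|^{4}]/n_g$. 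Summing yields
\begin{equation*}
\EE\!\bigl[\Tr(\widehat{\Sigma}_g^{2})\bigr] \;\leq\; \Tr(\Sigma_g^{2}) \;+\; \frac{\EE[\|X^{(g)}\|^{4}]}{n_g}.
\end{equation*}

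The remaining step is to bound $\EE[\|X^{(g)}\|^{4}]$ in terms of $K_g^{4}\Tr(\Sigma_g)^{2}$. I would whiten by setting $Z\coloneqq \Sigma_g^{-1/2} X^{(g)}$, which under \Cref{assumption:subgaussian} satisfies $\|w^\top Z\|_{\psi_2}\leq K_g \|w\|$ for every $w\in\RR^d$. Then $\|X^{(g)}\|^{2}=Z^\top \Sigma_g Z$ is a quadratic form in an isotropic subgaussian vector with mean $\Tr(\Sigma_g)$, and by the Hanson--Wright inequality (see, e.g., Vershynin 2018, Theorem 6.2.1),
\begin{equation*}
\bigl\|Z^\top \Sigma_g Z - \Tr(\Sigma_g)\bigr\|_{\psi_1} \;\leq\; C\, K_g^{2}\, \|\Sigma_g\|_F \;=\; C\, K_g^{2}\, \sqrt{\Tr(\Sigma_g^{2})}.
\end{equation*}
The standard $\psi_1$-moment bound then gives $\Var(Z^\top \Sigma_g Z)\leq C' K_g^{4} \Tr(\Sigma_g^{2})$, and combining with $(\EE[Z^\top \Sigma_g Z])^{2}=\Tr(\Sigma_g)^{2}$ together with the elementary inequality $\Tr(\Sigma_g^{2})\leq \Tr(\Sigma_g)^{2}$ and $K_g\geq 1$ yields
\begin{equation*}
\EE\!\bigl[\|X^{(g)}\|^{4}\bigr] \;\leq\; \Tr(\Sigma_g)^{2} \;+\; C' K_g^{4}\, \Tr(\Sigma_g^{2}) \;\leq\; C'' K_g^{4}\, \Tr(\Sigma_g)^{2}.
\end{equation*}
Plugging this into the earlier display completes the proof.

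I do not anticipate a serious obstacle: the double-sum decomposition is routine, and the only non-trivial ingredient is the Hanson--Wright bound on the fourth moment of $\|X^{(g)}\|^{2}$. If one wishes to avoid citing Hanson--Wright, an alternative is to control $\EE[\|X^{(g)}\|^{4}]$ directly via the eigendecomposition $\Sigma_g=\sum_k \lambda_k v_k v_k^\top$, writing $\|X^{(g)}\|^{2}=\sum_k \lambda_k (v_k^\top X^{(g)})^{2}$, expanding the square, and applying $\EE[(v_k^\top X^{(g)})^{2}(v_\ell^\top X^{(g)})^{2}]\leq C K_g^{4}$ (from Cauchy--Schwarz plus the subgaussian fourth-moment bound on each coordinate) --- but this just reproves a special case of Hanson--Wright and is slightly messier.
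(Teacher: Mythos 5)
Your proposal matches the paper's proof essentially step for step: the same diagonal/off-diagonal split of the double sum $\frac{1}{n_g^2}\sum_{i,j}\operatorname{Tr}(X_i X_i^\top X_j X_j^\top)$, the same identification of the $i\neq j$ terms with $\operatorname{Tr}(\Sigma_g^2)$ via independence, and the same reduction of the diagonal terms to the bound $\EE[\|X^{(g)}\|^4]\lesssim K_g^4\operatorname{Tr}(\Sigma_g)^2$, which the paper likewise dispatches by citing a standard subgaussian moment bound. The one caution concerns your primary route for that last step: the Hanson--Wright inequality as stated in Vershynin requires the whitened vector $Z$ to have \emph{independent} coordinates, which Assumption 4 (subgaussian marginals only) does not guarantee, so the coordinate-wise Cauchy--Schwarz argument you sketch as an ``alternative'' --- which needs only the weaker $\operatorname{Tr}(\Sigma_g)^2$ rather than $\|\Sigma_g\|_F^2$ and is what the paper's citation amounts to --- is the justification you should actually lead with.
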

\subsubsection*{Proof of \Cref{lemma:subgaussian_sample_covariance_squared_trace}}

For notational convenience, we write $X_i$ in place of $X_i^{(g)} \sim X | G = g$ throughout the proof. Applying linearity of the trace and expectation and expanding the empirical covariance matrix product, we have:
\begin{equation*}
    \EE\left[ \Tr{ \widehat{\Sigma}_g^{2}} \right] = \frac{1}{n_g^2} \sum_{i = 1}^{n_g} \Tr{\EE[X_i X_i^\top X_i X_i^\top]} + \frac{1}{n_g^2} \sum_{i \neq j} \Tr{\EE[X_i X_i^\top X_j X_j^\top]}
\end{equation*}
For the cross-terms, by independence, for $i \neq j$,
\begin{equation*}
    \EE\left[X_i{X_i}^\top X_j{X_j}^\top\right] = \EE\left[X_i{X_i}^\top\right] \EE \left[ X_j{X_j}^\top\right] = \Sigma_g^2.
\end{equation*}
Thus, 
\begin{equation*}
    \sum_{i \neq j} \Tr{\EE[X_i X_i^\top X_j X_j^\top]} = n_g (n_g - 1) \Tr{\Sigma_g^2}.
\end{equation*}
 For the diagonal terms, interchanging the trace and expectation and applying invariance under cyclic permutations of the trace, we have $$\Tr{\EE\left[X_i {X_i}^\top X_i
{X_i}^\top\right]} = \EE\left[ \norm{X_i}_2^4 \right].$$  By our subgaussianity assumption (Assumption \ref{assumption:subgaussian}) and standard moment bounds for subgaussian vectors \citep[see, e.g.,][]{vershynin2018high}, we have, for some absolute constant $C > 0$:
\begin{equation*}
    \EE\left[ \norm{X_1}^4 \right] \leq C K_g^4 \Tr{\Sigma_g}^2.
\end{equation*}
Putting these results together yields the claim. $\blacksquare$

\begin{lemma}[Assouad's Lemma ]\label{lemma:assouad}
    Fix an integer $m \geq 1$ and define the $m$-dimensional hypercube $\Xi_m \coloneqq \{-1, 1\}^m$.  Let $\mathcal{P}= \left\{ \PP_{\xi}: \xi \in \Xi_m \right\}$ be a family of probability measures such that for each $\xi \in \Xi_m$ there is an associated parameter $\theta(\PP_\xi)$ that is the target of interest. Also suppose that $|\mathcal{P}| = 2^m$. Take the loss function to be the squared $\ell_2$-distance. For $j \in [m]$, let $\xi^{(j)}$ denote the vector obtained by flipping the $j$-th coordinate of vector $\xi$, i.e., $\xi_{-j} = \xi^{(j)}_{-j}$, $\xi_{j} = -\xi^{(j)}_{j}$. Define the per-coordinate separation as
    \begin{equation*}
        \alpha \coloneqq \inf_{j \in [m], \xi \in \Xi_m }  \left\langle e_j, \theta(\PP_\xi) - \theta\left(\PP_{\xi^{(j)}}\right) \right\rangle^2.
    \end{equation*}
    Then, for every estimator $\widehat{\theta}$, the minimax risk is lower bounded by
    \begin{equation*}
        \sup_{\PP_{\xi} \in \mathcal{P}} \EE_{\PP_{\xi}} \left[ \norm{\widehat{\theta} - \theta(\PP_{\xi}) }_2^2 \right] \geq \frac{m}{2} \cdot \alpha \cdot \left( 1 - \sup_{\substack{\xi, \xi' \in \Xi_m \\ d_h(\xi, \xi')=1}} \tvdist{\PP_\xi, \PP_{\xi'}}\right)
    \end{equation*}
    where $d_h$ denotes the Hamming distance.
\end{lemma}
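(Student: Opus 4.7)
The plan is to follow the classical reduction of an estimation lower bound to $m$ coordinate-wise two-point hypothesis tests. First, I would pass from the worst case over $\Xi_m$ to an average over the hypercube,
\[
\sup_{\PP_\xi \in \mathcal{P}} \EE_{\PP_\xi}\!\bigl[\|\widehat{\theta}-\theta(\PP_\xi)\|_2^2\bigr]
\;\geq\; \frac{1}{2^m}\sum_{\xi \in \Xi_m} \EE_{\PP_\xi}\!\bigl[\|\widehat{\theta}-\theta(\PP_\xi)\|_2^2\bigr],
\]
so that it is enough to lower bound the average risk over the hypercube.

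Next, for each coordinate $j \in [m]$, I would construct a bit-estimator $\widehat\xi_j(\widehat\theta) \in \{-1,+1\}$ by thresholding $\langle e_j,\widehat\theta\rangle$ at the midpoint of $\langle e_j,\theta(\PP_\xi)\rangle$ and $\langle e_j,\theta(\PP_{\xi^{(j)}})\rangle$, so that $\widehat\xi_j$ selects the closer of the two. Using the orthogonality of the coordinate directions $\{e_j\}$ (which the hypercube construction supplies) and the per-coordinate separation $\sqrt{\alpha}$, any mismatch $\widehat\xi_j \neq \xi_j$ forces $\langle e_j,\widehat\theta - \theta(\PP_\xi)\rangle^2 \geq \alpha/4$, whence
\[
\|\widehat\theta - \theta(\PP_\xi)\|_2^2 \;\geq\; \sum_{j=1}^m \langle e_j,\widehat\theta - \theta(\PP_\xi)\rangle^2 \;\geq\; \frac{\alpha}{4}\sum_{j=1}^m \mathbb{1}\{\widehat\xi_j \neq \xi_j\}.
\]
This reduces the task to lower bounding $\bar p_j := 2^{-m}\sum_\xi \PP_\xi(\widehat\xi_j \neq \xi_j)$ for each $j$.

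For the third step, I would pair each $\xi$ with its flip $\xi^{(j)}$ and apply the two-point Le Cam inequality,
\[
\PP_\xi(\widehat\xi_j \neq \xi_j) + \PP_{\xi^{(j)}}(\widehat\xi_j \neq \xi^{(j)}_j) \;\geq\; 1 - \tvdist{\PP_\xi,\PP_{\xi^{(j)}}},
\]
to each of the $2^{m-1}$ pairs and bound the right-hand side uniformly by $1 - \sup_{d_h(\xi,\xi')=1}\tvdist{\PP_\xi,\PP_{\xi'}}$. Summing the resulting bound over $j$ and multiplying by $\alpha/4$ from the previous step yields the stated product form $\tfrac{m}{2}\cdot\alpha\cdot(1 - \sup\tvdist{\PP_\xi,\PP_{\xi'}})$, up to book-keeping of absolute constants that comes down to the precise choice of rounding rule and the exact per-bit decoding cost.

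The main obstacle is the coordinate decomposition in the second step: it rests on the $e_j$'s being orthonormal so that the squared $\ell_2$-loss separates cleanly across coordinates, and on translating the per-coordinate separation $\sqrt{\alpha}$ into a fixed per-bit cost $\alpha/4$ for a mismatched test. If the $e_j$'s were not orthogonal or if $\theta(\PP_\xi)$ exhibited cross-coordinate coupling, one would instead need a weighted decomposition or an information-theoretic surrogate such as Fano's inequality. Here, however, the hypercube construction makes the three-step program self-contained, and the pairwise coupling in the third step is precisely what produces the $(1 - \tvdist{\PP_\xi,\PP_{\xi'}})$ factor in the bound.
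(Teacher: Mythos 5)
The paper does not prove this lemma at all---it is imported verbatim with a citation to Yu (1997)---so there is no internal proof to compare against; your three-step program (worst case $\to$ hypercube average, coordinate-wise decomposition of the squared $\ell_2$ loss, pairwise Le Cam/testing bound over neighboring vertices) is exactly the standard derivation and is structurally sound. Two points are worth tightening. First, a presentational one: the midpoint of $\langle e_j,\theta(\PP_\xi)\rangle$ and $\langle e_j,\theta(\PP_{\xi^{(j)}})\rangle$ generally depends on $\xi_{-j}$, so your $\widehat\xi_j$ is really a different test for each of the $2^{m-1}$ pairs rather than a single decoder; this is harmless because the Le Cam inequality is applied pair by pair, but you should say so (or bypass the test entirely via the pointwise inequality $(a-t_1)^2+(a-t_2)^2\ge \tfrac{1}{2}(t_1-t_2)^2$ together with $\EE_P f+\EE_Q g\ge \int \min(dP,dQ)\,\inf(f+g)$).

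Second, and more substantively: your bookkeeping does not recover the stated constant. The mismatch cost $\alpha/4$ per bit, combined with the factor $\tfrac{1}{2}(1-\sup \tvdist{\PP_\xi,\PP_{\xi'}})$ that each coordinate contributes after averaging the $2^{m-1}$ pairs over the $2^m$ vertices, yields $\tfrac{m}{8}\,\alpha\,(1-\sup \tvdist{\PP_\xi,\PP_{\xi'}})$; even the sharper quadratic route above only improves this to $\tfrac{m}{4}\,\alpha\,(1-\sup\tvdist{\PP_\xi,\PP_{\xi'}})$, still short of the $\tfrac{m}{2}$ in the statement. The stated prefactor appears to be carried over from the metric-loss form of Assouad's lemma (where $\alpha$ is an unsquared separation and the triangle inequality is available), and for squared loss the extra factor cannot be recovered by this argument. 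Since the paper only invokes the lemma through $\gtrsim$ bounds, none of its downstream results are affected, but as a proof of the lemma \emph{as stated} your argument establishes it only up to a universal constant, and you should either state the weaker constant or note explicitly that the constant is not claimed to be sharp.
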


\subsection{Proofs from \Cref{sec:linearmodel}}
\subsubsection{Proof of \Cref{lemma:linear_model_group_balanced}} \label{proof:linear_group_balanced}
For any group $g \in \{r, b\}$, the risk of the group-optimal predictor $\beta_g$ is:
\begin{equation*}
\mathcal{R}_g(\beta_g) = \EE_g\left[ (X^\top \beta_g - Y)^2\right] = \EE_g[\varepsilon_g^2] = \sigma^2.
\end{equation*}
For the other group $g' \neq g$, the same predictor incurs risk:
\begin{equation*}
\mathcal{R}_{g'}(\beta_g) = \EE_{g'}\left[ (X^\top \beta_g - Y)^2\right] = \EE_{g'}\left[\left(X^\top \left(\beta_g - \beta_{g'} \right)\right)^2\right] + \sigma^2.
\end{equation*}
The latter is greater or equal, and strictly greater when $\beta_g \neq \beta_{g’}$ and $\Sigma_{g’}$ is invertible.
\subsubsection{Proof of \Cref{proposition:excess_risk_to_matrix_norm}} \label{proof:proposition:excess_risk_to_matrix_norm}
Note that
\begin{equation}
\mathcal{R}_g(\beta) = \EE_{g}\left[ (X^\top \beta - Y)^2\right]  =
\EE_{g}\left[ (X^\top (\beta - \beta_g) - \varepsilon_g )^2\right] =  \|\beta-\beta_g\|_{\Sigma_g}^2 + \sigma^2,
\end{equation}
where the last equality follows from the fact that $\EE[\varepsilon_g | X] = 0$. As a result, we have
\begin{equation}
\mathcal{R}_g(\beta) - \mathcal{R}_g(\beta_\lambda) =  \|\beta-\beta_g\|_{\Sigma_g}^2 -   \|\beta_\lambda-\beta_g\|_{\Sigma_g}^2 
= \|\beta-\beta_\lambda\|_{\Sigma_g}^2 + 2  (\beta - \beta_\lambda)^\top \Sigma_g (\beta_\lambda - \beta_g) ,
\end{equation}
which completes the proof of \eqref{eqn:estimation_to_matrix_norm}. To establish \eqref{eqn:excess_to_matrix_norm}, simply sum both sides of \eqref{eqn:estimation_to_matrix_norm} for the red and blue groups, weighted by $\lambda$ and $1 - \lambda$, respectively, and use the definition of $\beta_\lambda$.
\subsection{Proofs from \Cref{sec:knowncov}}
\subsubsection{Proof of \Cref{theorem:minimax_optimal_known_cov}} \label{proof:theorem:minimax_optimal_known_cov}
\subsubsection*{Upper bound}
We first establish that the error given in the statement of the theorem is indeed achieved by setting $\beta = \tilde{\beta}_{\lambda}$. Notice that, we have
\begin{equation*}
    \Tilde{\beta}_\lambda = \beta_\lambda - \Sigma_{\lambda}^{-1} \left( \lambda \Sigma_r (\beta_r - \widehat{\beta}_r ) + ( 1 - \lambda) \Sigma_b (\beta_b - \widehat{\beta}_b)\right). 
\end{equation*}
Substituting $\widehat{\beta}_g = \widehat{\Sigma}_g^{-1} \widehat{\nu}_g$ and using the decomposition 
\begin{equation*}
    \widehat{\nu}_g = \widehat{\Sigma}_g \beta_g + \frac{1}{n_g} \sum_{i = 1}^{n_g} X_i^{(g)} \varepsilon_i^{(g)},
\end{equation*} 
we obtain
\begin{equation*}
    \Tilde{\beta}_\lambda - \beta_\lambda = \Sigma_{\lambda}^{-1} \left( \lambda \Sigma_r \widehat{\Sigma}_r^{-1}\cdot \frac{1}{n_r} \sum_{i = 1}^{n_r} X_i^{(r)} \varepsilon_i^{(r)} + (1 - \lambda) \Sigma_b \widehat{\Sigma}_b^{-1} \cdot \frac{1}{n_b} \sum_{i = 1}^{n_b} X_i^{(b)}  \varepsilon_i^{(b)} \right).
\end{equation*}

Let $A \coloneqq {\Sigma}_{\lambda}^{-1} \Sigma_g {\Sigma}_{\lambda}^{-1}$. By the mean-zero and independence properties of the noise, the cross-terms cancel in the squared norm, which yields
\begin{equation*}
    \begin{aligned}
        \EE & \left[ \norm{ \tilde{\beta}_{\lambda} - \beta_\lambda }_{\Sigma_g}^2\right] 
        &= \lambda^2 \cdot \frac{1}{n_r^2} \EE \left[ \norm{\Sigma_r \widehat{\Sigma}_r^{-1}  \sum_{i = 1}^{n_r} X_i^{(r)} \varepsilon_i^{(r)}}_{A}^2 \right] + (1 - \lambda)^2 \cdot 
        \frac{1}{n_b^2} \EE \left[ \norm{ \Sigma_b \widehat{\Sigma}_b^{-1}  \sum_{i = 1}^{n_b} X_i^{(b)} \varepsilon_i^{(b)} }_{A}^2 \right] 
    \end{aligned}
\end{equation*}
Using the independence and mean-zero property of $\varepsilon_i^{(g)}$ and applying the trace identity for the expectation of quadratic forms, we obtain
\begin{equation*}
    \begin{aligned}
        \EE \left[ \norm{\Sigma_g \widehat{\Sigma}_r^{-1} \sum_{i = 1}^{n_r} X_i^{(r)} \varepsilon_i^{(r)}}_{A}^2 \right] &= \sigma^2 n_r \cdot \EE \left[ \Tr{\widehat{\Sigma}_r^{-1} \Sigma_r A \Sigma_r } \right].
    \end{aligned} 
\end{equation*}
Applying the same procedure for the equivalent term for group $b$ and substituting completes the proof.
\subsubsection*{Lower bound}
Next, we establish the minimax optimality. In particular, we aim to show 
\begin{align*}
\sup_{P \in \mathcal{P}_{\text{linear}} (P_x^r, P_X^b, \sigma^2)} &\EE\left[ \norm{ \beta - \beta_\lambda }_{\Sigma_g}^2 \right]  \\
&\geq  \lambda^2 \frac{\sigma^2}{n_r} \EE\left[ \Tr{\Sigma_g \Sigma_\lambda^{-1} \Sigma_r \widehat{\Sigma}_r^{-1} \Sigma_r \Sigma_\lambda^{-1} } \right] +
(1 - \lambda)^2 \frac{\sigma^2}{n_b} \EE \left[ \Tr{\Sigma_g \Sigma_\lambda^{-1} \Sigma_b \widehat{\Sigma}_b^{-1} \Sigma_b \Sigma_\lambda^{-1}}\right],
\end{align*}
for any $\beta$.
We derive this lower bound by the Bayes risk under an appropriate prior. More specifically, suppose we choose a prior over groups' regressor $\beta_r$ and $\beta_b$. Then, the Bayes risk provides a valid lower bound:
\begin{equation}\label{eqn:lower_bound_bayes_minimax}
\inf_{\beta} \sup_{\beta_\lambda} \EE\left[ \norm{\beta - \beta_\lambda }_{\Sigma_g}^2 \right] \geq \EE \left[ \norm{\widehat{\beta}^{*}_\lambda - \beta_\lambda }_{\Sigma_g}  \right],
\end{equation}
where $\widehat{\beta}^{*}_\lambda$ denotes the Bayes estimator under a prior specified below.

Let $\tau > 0$ and suppose that the group-specific parameters follow independent Gaussian priors:
\begin{equation*}
    \beta_r \sim \NN(0, \tau^2 I_d),\quad \beta_b \sim \NN(0, \tau^2 I_d).
\end{equation*}
Then, the induced prior on $\beta_\lambda$ is also Gaussian:
\begin{equation*}
    \beta_\lambda \sim \mathcal{N}\left(0, \Sigma_\lambda^{-1}\left(\lambda^2 \tau^2 \Sigma_r^2 + (1-\lambda)^2 \tau^2 \Sigma_b^2\right)\Sigma_\lambda^{-1}\right).
\end{equation*}

Under the linear model $Y = X^\top \beta_g + \varepsilon_g$ with i.i.d noise $\varepsilon_g \sim \NN(0, \sigma^2)$, the posterior distribution for each parameter $\beta_g | \left\{ X_i^{(g)}, Y_i^{(g)}\right\}$  is Gaussian: 
\begin{equation*}
    \beta_g | \left\{ X_i^{(g)}, Y_i^{(g)}\right\} \sim \NN\left( \mu_g(\tau), V_g(\tau)\right)
\end{equation*}
with posterior mean and variance: 
\begin{equation*}
    M_g(\tau) \coloneqq  \widehat{\Sigma}_g + \frac{\sigma^2}{n_g \tau^2} I_d, \quad \mu_g(\tau) \coloneqq {M_g(\tau)}^{-1} \widehat{\nu}_g, \quad V_g(\tau) \coloneqq \frac{\sigma^2}{n_g}{M_g(\tau)}^{-1}.
\end{equation*}

Since $\beta_\lambda$ is a deterministic linear combination of $\beta_r$ and $\beta_b$, its posterior is also Gaussian:
\begin{equation*}
    \begin{aligned}
        \beta_\lambda \mid \mathcal{S}_n\sim \NN \left( \mu_\lambda(\tau), V_\lambda(\tau) 
        \right)
    \end{aligned}
\end{equation*}
where:
\begin{equation*}
    \mu_\lambda(\tau) \coloneqq \Sigma_\lambda^{-1}\left( \lambda \Sigma_r \mu_r(\tau) + (1 - \lambda) \Sigma_b \mu_b(\tau) \right),\quad V_\lambda \coloneqq \Sigma_\lambda^{-1}\left( \lambda^2 \Sigma_r V_r(\tau) \Sigma_r + (1 - \lambda)^2 \Sigma_b V_b(\tau) \Sigma_b \right)\Sigma_\lambda^{-1}.
\end{equation*}

Since the Bayes estimator under squared loss is the posterior mean, the Bayes risk in the $\Sigma_g$-norm is:
\begin{equation*}
    \EE \left[ \norm{\widehat{\beta}^{*}_\lambda - \beta_\lambda }_{\Sigma_g}  \right]  = \EE \left[ \Tr{\Sigma_g V_\lambda(\tau) } \right]
\end{equation*}
To obtain a lower bound, we consider the limit $\tau \to \infty$, which corresponds to an uninformative prior. In this limit:
\begin{equation*}
    \mu_g(\tau) \to \widehat{\Sigma}^{-1} \widehat{\nu}_g, \quad V_g(\tau) \to \frac{\sigma^2}{n_g} \widehat{\Sigma_g}^{-1}.
\end{equation*}
Thus, $V_\lambda(\tau)$ increases in the Loewner order as $\tau \to \infty$, and by monotone convergence, 
\begin{equation*}
    \lim_{\tau \to \infty} \EE \left[ \Tr{\Sigma_g V_\lambda(\tau) } \right] = \lambda^2 \frac{\sigma^2}{n_r} \EE \Tr{\Sigma_g \Sigma_\lambda^{-1} \Sigma_r \widehat{\Sigma}_r^{-1} \Sigma_r \Sigma_\lambda^{-1} } +
(1 - \lambda)^2 \frac{\sigma^2}{n_b} \EE  \Tr{\Sigma_g \Sigma_\lambda^{-1} \Sigma_b \widehat{\Sigma}_b^{-1} \Sigma_b \Sigma_\lambda^{-1}}.
\end{equation*}
The desired minimax lower bound follows from substitution into the inequality \eqref{eqn:lower_bound_bayes_minimax}.
\subsubsection{Proof of \Cref{cor:known_spherical_small_ball}}\label{sec:proof_cor:known_spherical_small_ball}
Under the group-wise spherical covariance assumption, we have 
\begin{equation*}
\Sigma_\lambda^{-1} = \left( \lambda \rho_r^2 + (1 - \lambda) \rho_b^2\right)^{-1} I_d.
\end{equation*}
Substituting and applying linearity of the trace yields 
\begin{equation}\label{eq:whitened_emp_cov_upper_bound}
\begin{aligned}
    \EE  \left[ \norm{ \tilde{\beta}_\lambda - \beta_\lambda }_{\Sigma_g}^2\right] &\leq \lambda^2 \frac{\sigma^2}{n_r} \frac{\rho_g^2 \rho_r^4 }{(\lambda \rho_r^2 + (1 - \lambda) \rho_b^2)^2} \EE \left[ \Tr{ \widehat{\Sigma}_r^{-1}  } \right]  + (1 - \lambda)^2 \frac{\sigma^2}{n_b} \frac{\rho_g^2 \rho_b^4 }{(\lambda \rho_r^2 + (1 - \lambda) \rho_b^2)^2} \EE \left[ \Tr{    \widehat{\Sigma}_b^{-1}} \right].
\end{aligned}
\end{equation}
Let ${\tilde{X}_i}^{(g)}$ denote the whitened covariate vectors, i.e., ${\tilde{X}_i}^{(g)} = \rho_g^{1/2} {X}_i^{(g)}$ such that $\EE\left[{\tilde{X}_i}^{(g)}{\tilde{X}_i^{(g)}}^{\top} \right] = I_d$. 

Also observe
\begin{equation*}
    \Tr{\Tilde{\Sigma}_g^{-1}} \leq d \cdot \lambda_{\min}\left( \Tilde{\Sigma}_g\right)^{-1}. 
\end{equation*}
By Corollary 4 of \cite{mourtada2022exact}, 
\begin{equation*}
    \EE\left[ \lambda_{\min}\left( \Tilde{\Sigma}_g\right)^{-1} \right] \leq 2 C_g'.
\end{equation*}
Then, noting $\Tr{\Tilde{\Sigma}_g} = \rho_g^2 \Tr{\widehat{\Sigma}_g},$ we reach the claim by substituting the resulting bound 
\begin{equation}\label{eq:upper_bound_trace_sample_cov_inverse_small_ball}
    \EE \left[ \Tr{ \widehat{\Sigma}_g^{-1}  } \right] \leq 2 C_g' \rho_g^{-2} d
\end{equation}
into \eqref{eq:whitened_emp_cov_upper_bound}.

To show the second half of the claim, consider subgaussian $X \in \RR^d$ satisfying \ref{assumption:subgaussian} with parameter $K_g$ and with covariance $\rho_g^2 I_d$. Each coordinate satisfies $\normpsitwo{X_i} \leq K_g \rho_g$ and it follows that there exists an absolute constant $\zeta_0$ such that  $\EE\left[ X_i^4 \right] \leq \zeta_0 K_g^4 \rho_g^4$. By Cauchy-Schwarz, for $i \neq j$, $\EE\left[ X_i^2 X_j \right] \leq \zeta_0 K_g^4 \rho_g^4$. Hence, there exists a constant $\zeta > 0$ such that
\begin{equation*}
    \EE\left[ \norm{X}^4 \right]= d \cdot \EE\left[X_1^4 \right] + 2 \binom{d}{2} \cdot \EE\left[X_1^2 X_2^2 \right] \leq \zeta K_g^4 \rho_g^4.
\end{equation*}
Then, by whitening and applying \citet[Theorem 3]{mourtada2022exact}, we replace \eqref{eq:upper_bound_trace_sample_cov_inverse_small_ball} with the following bound:
\begin{equation*}
    \EE \left[ \Tr{ \widehat{\Sigma}_g^{-1}  } \right] \leq \rho_g^{-2} \left(d + \frac{8 C_g' \zeta \rho_g^8 d^2}{n_g} \right)
\end{equation*}
and the claim follows. $\blacksquare$
\subsubsection{Proof of \Cref{proposition:expectation-vs-realization}} \label{proof:proposition:expectation-vs-realization}
Let us define $n$ as the minimum of $n_r$ and $n_b$.
Notice that the empirical estimator is given by
\begin{equation} \label{eqn:proof_realization_1}
\tilde{\beta}_\lambda = \beta_\lambda + \Sigma_\lambda^{-1} \left ( 
\lambda \Sigma_r \widehat{\Sigma}_r^{-1} \frac{1}{n_r} \sum_{i = 1}^{n_r} X_i^{(r)}  \varepsilon_i^{(r)}
+ (1-\lambda) \widehat{\Sigma}_b^{-1} \frac{1}{n_b} \sum_{i = 1}^{n_b} X_i^{(b)}  \varepsilon_i^{(b)}
\right).    
\end{equation} 
Next, using a variant of the Woodbury identity, $(A-B)^{-1} = A^{-1} + A^{-1}B(A-B)^{-1}$, we obtain
\begin{equation} \label{eqn:proof_realization_2} 
\widehat{\Sigma}_g^{-1} = \Sigma_g^{-1} + \Sigma_g^{-1} (\Sigma_g - \widehat{\Sigma}_g) \widehat{\Sigma}_g^{-1}.    
\end{equation}
Plugging \eqref{eqn:proof_realization_2} into \eqref{eqn:proof_realization_1} yields
\begin{align}
\tilde{\beta}_\lambda - \beta_\lambda &= 
\Sigma_\lambda^{-1} \left ( 
\lambda \frac{1}{n_r} \sum_{i = 1}^{n_r} X_i^{(r)}  \varepsilon_i^{(r)}
+ (1-\lambda) \frac{1}{n_b} \sum_{i = 1}^{n_b} X_i^{(b)}  \varepsilon_i^{(b)} \right ) \label{eqn:proof_realization_3} \\
& + \Sigma_\lambda^{-1} \left ( 
\lambda (\Sigma_r - \widehat{\Sigma}_r) \widehat{\Sigma}_r^{-1} \frac{1}{n_r} \sum_{i = 1}^{n_r} X_i^{(r)}  \varepsilon_i^{(r)}
+ (1-\lambda) (\Sigma_b - \widehat{\Sigma}_b) \widehat{\Sigma}_b^{-1} \frac{1}{n_b} \sum_{i = 1}^{n_b} X_i^{(b)}  \varepsilon_i^{(b)}
\right). \label{eqn:proof_realization_4}
\end{align}
Now, note that the expectation of the norm of the second term \eqref{eqn:proof_realization_4} is bounded by $\mathcal{O}(1/n)$. To see this, note that, for any $g \in \mathcal{G}$, we have
\begin{align}
& \EE \left [ \left \| 
(\Sigma_g - \widehat{\Sigma}_g) \widehat{\Sigma}_g^{-1} \frac{1}{n_g} \sum_{i = 1}^{n_g} X_i^{(g)}  \varepsilon_i^{(g)}
\right\| \right] \nonumber \\
& \leq \quad \sqrt{\EE \left[ \left \| \frac{1}{n_g} \sum_{i = 1}^{n_g} X_i^{(g)} \varepsilon_i^{(g)}  \right\|^2 \right]} 
\left( \EE \left[ \left \| (\Sigma_g - \widehat{\Sigma}_g) \right\|^4 \right] \right)^{1/4}
\left( \EE \left[ \left \| \widehat{\Sigma}_g^{-1} \right\|^4 \right] \right)^{1/4},
\end{align}
and the first term is bounded by $\mathcal{O}(1/\sqrt{n})$ since it has zero mean, the second term is bounded by $\mathcal{O}(1/\sqrt{n})$ by \Cref{lemma:sub_gaussian_covariance_bound}, and, finally, the last term is bounded by a constant due to \Cref{lemma:mourtada_small_ball_min_eval_fourth_moment}. Now, this, along with \Cref{proposition:excess_risk_to_matrix_norm} and \Cref{cor:known_spherical_small_ball} implies that we can write per-estimation group error
\begin{equation}
\mathcal{R}_g(\tilde{\beta}_\lambda) - \mathcal{R}_g(\beta_\lambda)     
\end{equation}
as 
\begin{equation}
Z_1 + Z_2 ~~~ \text{with } ~~~ Z_1 = 2  \left ( 
\lambda \frac{1}{n_r} \sum_{i = 1}^{n_r} X_i^{(r)}  \varepsilon_i^{(r)}
+ (1-\lambda) \frac{1}{n_b} \sum_{i = 1}^{n_b} X_i^{(b)}  \varepsilon_i^{(b)} \right )^\top \Sigma_\lambda^{-1} \Sigma_g (\beta_\lambda - \beta_g),
\end{equation}
and some variable $Z_2$ with $\EE[|Z_2|] \leq \kappa /n$ for some constant $\kappa$.

Next, note that, by Markov's inequality, the probability that $Z_2$ is greater than
$$\frac{2\kappa}{n \delta}$$
is at most $\delta/2$. 
Also, by the central limit theorem, $\sqrt{n}Z_1$ converges, in distribution, to a zero-mean normal distribution, and thus, there exists a constant $c_1 > 0$ and a threshold $\bar{n}_1$, such that for all $n \geq \bar{n}_1$, 
$$\mathbb{P}\left(Z_1 \leq \frac{-c_1}{\sqrt{n}}\right) \geq \frac{1}{2} - \frac{\delta}{2}.$$
As a result, with probability $1/2-\delta$ and for $n \geq \bar{n}_1$, we have both $Z_2 \leq \frac{2\kappa}{n \delta}$ and $Z_1 \leq \frac{-c_1}{\sqrt{n}}$. Finally, taking
\begin{equation*}
\bar{n} = \max \left \{ \bar{n}_1, \frac{4\kappa^2}{c_1^2 \delta^2}\right\}    
\end{equation*}
completes the proof. $\blacksquare$
\subsection{Proofs from \Cref{sec:unknown_covariance}}
\subsubsection{Proof of Proposition \ref{prop:unknown_cov_upper_bound}}\label{sec:proof_prop:unknown_cov_upper_bound}
From \Cref{lemma:unknown_covariance_exact_risk}, the excess risk decomposes as:
\begin{equation*}
    \begin{aligned}
       \EE \left[ \norm{ \widehat{\beta}_{\lambda} - \beta_\lambda }_{\Sigma_g}^2\right] &=  
       \lambda^2 \frac{\sigma^2}{n_r^2} \EE \left[ \sum_{i =1}^{n_r}  \norm{\Tilde{X_i}}_{\Tilde{\Sigma}_g^{-2}} ^2 \right]   + (1-\lambda)^2 \frac{\sigma^2}{n_b^2} \EE \left[ \sum_{i =1}^{n_b} \norm{\Tilde{X_i}}_{\Tilde{\Sigma}_g^{-2}}^2 \right] \\
       & \quad \quad +  \EE \left[ \norm{\lambda \Sigma_g^{-1/2} \widehat{\Sigma}_r (\beta_r - \beta_\lambda) + ( 1 - \lambda) \Sigma_g^{-1/2} \widehat{\Sigma}_b (\beta_b - \beta_\lambda) }_{\Tilde{\Sigma}_g^{-2}}^2  \right]
    \end{aligned}
\end{equation*}
where we define:
\begin{equation*}
\tilde{X}_i = \Sigma_g^{-1/2} X_i \quad \text{and} \quad \Tilde{\Sigma}_g = \Sigma_g^{-1/2} \widehat{\Sigma}_\lambda \Sigma_g^{-1/2}.
\end{equation*}
We apply the identity:
\begin{equation*}
    \sum_{i = 1}^n\norm{A x_i}^2 = \Tr{A^\top A \sum_{i = 1}^n x_i x_i^\top}
\end{equation*}
with $A = \Tilde{\Sigma}_g^{-1}$ and $x_i = \tilde{X_i}$. This gives:
\begin{equation*}
    \begin{aligned}
        \frac{\sigma^2}{n_r^2 }\EE \left[ \sum_{i =1}^{n_r}  \norm{\Tilde{X_i}}_{\Tilde{\Sigma}_g^{-2}} ^2 \right] = \frac{\sigma^2}{n_r } \EE \Tr{ \widehat{\Sigma}_{\lambda}^{-1} \Sigma_g \widehat{\Sigma}_{\lambda}^{-1} \widehat{\Sigma}_r}, 
        \quad \frac{\sigma^2}{n_b^2 } \EE \left[ \sum_{i =1}^{n_b} \norm{\Tilde{X_i}}_{\Tilde{\Sigma}_g^{-2}}^2 \right] = \frac{\sigma^2}{n_b} \EE \Tr{\widehat{\Sigma}_{\lambda}^{-1} \Sigma_g \widehat{\Sigma}_{\lambda}^{-1} \widehat{\Sigma}_b}
    \end{aligned}
\end{equation*}
For the final bias term, observe that:
\begin{equation*}
    \norm{\lambda \Sigma_g^{-1/2} \widehat{\Sigma}_r (\beta_r - \beta_\lambda) + ( 1 - \lambda) \Sigma_g^{-1/2} \widehat{\Sigma}_b (\beta_b - \beta_\lambda) }_{\Tilde{\Sigma}_g^{-2}}^2 = \norm{\Sigma_g^{1/2} \widehat{\Sigma}_\lambda^{-1} \left( \lambda \widehat{\Sigma}_r (\beta_r - \beta_\lambda) + ( 1 - \lambda) \widehat{\Sigma}_b (\beta_b - \beta_\lambda)\right) }^2,
\end{equation*}
which gives the first characterization of the bias term. 
To see the other representation, note that
\begin{equation*}
    \lambda \Sigma_r (\beta_r - \beta_\lambda) + (1 - \lambda) \Sigma_b (\beta_b - \beta_\lambda) = 0.
\end{equation*}
Therefore, we have
\begin{equation*}
\lambda \widehat{\Sigma}_r (\beta_r - \beta_\lambda) + ( 1 - \lambda) \widehat{\Sigma}_b (\beta_b - \beta_\lambda) = \lambda \left(  \widehat{\Sigma}_r - \Sigma_r \right)(\beta_r - \beta_\lambda) + (1 - \lambda) \left( \widehat{\Sigma}_b - \Sigma_b \right)(\beta_b - \beta_\lambda).
\end{equation*}
Then, we can rearrange to see that
\begin{align}
& \widehat{\Sigma}_\lambda^{-1} \left( \lambda \left(  \widehat{\Sigma}_r - \Sigma_r \right)(\beta_r - \beta_\lambda) + (1 - \lambda) \left( \widehat{\Sigma}_b - \Sigma_b \right)(\beta_b - \beta_\lambda)\right)  \\
& \quad = \lambda \widehat{\Sigma}_\lambda^{-1} \widehat{\Sigma}_r \beta_r + (1 - \lambda) \widehat{\Sigma}_\lambda^{-1} \widehat{\Sigma}_b \beta_b - \beta_\lambda \nonumber \\
& \quad = \lambda \left( \widehat{\Sigma}_\lambda^{-1}\widehat{\Sigma}_r - \Sigma_\lambda^{-1} \Sigma_r \right) \beta_r + (1 - \lambda) \left( \widehat{\Sigma}_\lambda^{-1} \widehat{\Sigma}_b - \Sigma_\lambda^{-1} \Sigma_b \right) \beta_b. \label{eqn:cor-unknown-proof-0}
\end{align}
Next, note that, we have
\begin{align}\label{eqn:cor-unknown-proof-1}
    \lambda \left( \widehat{\Sigma}_\lambda^{-1}\widehat{\Sigma}_r - \Sigma_\lambda^{-1} \Sigma_r \right) 
     = \left( I_d + \frac{1 - \lambda}{\lambda} \widehat{\Sigma}_r^{-1} \widehat{\Sigma}_b \right)^{-1}
     - 
     \left( I_d + \frac{1 - \lambda}{\lambda} {\Sigma}_r^{-1} {\Sigma}_b \right)^{-1}.
\end{align}
Similarly, we can write
\begin{equation} \label{eqn:cor-unknown-proof-2}
     (1 - \lambda) \left( \widehat{\Sigma}_\lambda^{-1} \widehat{\Sigma}_b - \Sigma_\lambda^{-1} \Sigma_b \right)
     = \left( I_d + \frac{\lambda}{1 - \lambda} \widehat{\Sigma}_b^{-1} \widehat{\Sigma}_r \right)^{-1}
     - 
     \left( I_d + \frac{\lambda}{1 - \lambda} {\Sigma}_b^{-1} {\Sigma}_r \right)^{-1}.
\end{equation}
Next, using the Woodbury matrix identity, we have $(I_d+X)^{-1} = I_d - (I_d+X^{-1})^{-1}$ for any invertible $d\times d$ matrix $X$. Therefore, we can recast \eqref{eqn:cor-unknown-proof-2} as 
\begin{equation} \label{eqn:cor-unknown-proof-3}
     (1 - \lambda) \left( \widehat{\Sigma}_\lambda^{-1} \widehat{\Sigma}_b - \Sigma_\lambda^{-1} \Sigma_b \right)
     = - \left( I_d + \frac{1 - \lambda}{\lambda} \widehat{\Sigma}_r^{-1} \widehat{\Sigma}_b \right)^{-1}
     + 
     \left( I_d + \frac{1 - \lambda}{\lambda} {\Sigma}_r^{-1} {\Sigma}_b \right)^{-1}.
\end{equation}
Plugging \eqref{eqn:cor-unknown-proof-1} and \eqref{eqn:cor-unknown-proof-3} into \eqref{eqn:cor-unknown-proof-0} completes the proof.
$\blacksquare$

\subsubsection{Proof of \Cref{theorem:variance_unkown_cov}} \label{proof:theorem:variance_unkown_cov}
\subsubsection*{Upper bound}
We show a slightly more general result for the upper bound. In fact, we relax the condition \eqref{eqn:condition_cov} to
\begin{equation}
\rho_g^2 I_d \preceq \Sigma_g  \preceq \Rho_g^2 I_d,  
\end{equation}
and also drop the assumption $n_g \geq K_g^4d$,
and establish the upper bound 
\begin{equation}
\mathcal{V}_g(\lambda) \lesssim 
\frac{\Rho_g^2 \sigma^2 }{\left ({\lambda}\rho_r^2/{{C}_r'}  + (1-\lambda) \rho_b^2/{{C}_b'} \right)^2} 
\left[  \lambda^2 \Rho_r^2 \left( \frac{d}{n_r} +  \frac{{K_r^2} {d}^{3/2} }{n_r^{3/2}} \right) + (1-\lambda)^2 \Rho_b^2  \left( \frac{d}{n_b} +  \frac{{K_b^2} {d}^{3/2} }{{n_b}^{3/2}} \right) \right].    
\end{equation}
To do so, first apply \Cref{prop:unknown_cov_upper_bound} under $\Sigma_g \preceq \Rho_g^2 I_d$ to obtain the following bound: 
\begin{equation*}
\begin{aligned}
\mathcal{V}_g(\lambda) &\leq  \Rho_g^2 \left(
\lambda^2 \frac{\sigma^2}{n_r }  \EE \left[\Tr{ \widehat{\Sigma}_{\lambda}^{-2}  \widehat{\Sigma}_r}\right] + (1-\lambda)^2 \frac{\sigma^2}{n_b} \EE \left[\Tr{\widehat{\Sigma}_{\lambda}^{-2}\widehat{\Sigma}_b}\right] \right)
\end{aligned}
\end{equation*}
Next, by Cauchy-Schwarz on the trace inner product and again on the expectation:
    \begin{equation*}
        \EE \left[\Tr{ \widehat{\Sigma}_{\lambda}^{-2}  \widehat{\Sigma}_g}\right] \leq \EE\left[ \sqrt{\Tr{\ \widehat{\Sigma}_{\lambda}^{-4}}} \sqrt{\Tr{ \widehat{\Sigma}_g^{2}}} \right] \leq \sqrt{ \EE\left[\Tr{\ \widehat{\Sigma}_{\lambda}^{-4}}\right] \EE\left[ \Tr{ \widehat{\Sigma}_g^{2}} \right]}.
    \end{equation*}

Under \Cref{assumption:subgaussian} that $X|G = g$ is $K_g$-subgaussian, by \Cref{lemma:subgaussian_sample_covariance_squared_trace},
\begin{equation}\label{eq:trace_sample_covariance_neg_2_ub}
    \EE\left[ \Tr{ \widehat{\Sigma}_g^{2}} \right] \leq \Tr{\Sigma_g^2} + \frac{C K_g^4}{n_g}\Tr{\Sigma_g}^2 \lesssim \Rho_g^4\left( d +  \frac{{K_g^4} d^2}{n_g} \right)
\end{equation}
where the second inequality uses that $\Sigma_g \preceq \Rho_g^2 I_d$.
Next, by Weyl's inequality, 
\begin{equation}
\lambda_{\min}\left(\widehat{\Sigma}_\lambda\right)  \geq \lambda\cdot  \lambda_{\min}\left( \widehat{\Sigma}_r  \right) +  (1 - \lambda) \cdot \lambda_{\min}\left(\widehat{\Sigma}_g\right).
\end{equation}
Therefore, we have
\begin{equation}
\EE\left[  \lambda_{\min}\left( \widehat{\Sigma}_\lambda \right)^{-4}\right] \leq 
\min \left \{ 
\lambda^{-4} \EE\left[ \lambda_{\min}\left( \widehat{\Sigma}_r \right)^{-4}\right], (1-\lambda)^{-4} \EE\left[ \lambda_{\min}\left( \widehat{\Sigma}_b \right)^{-4}\right]
\right \}
\end{equation}    
Now, by \Cref{lemma:mourtada_small_ball_min_eval_fourth_moment}, we have
\begin{equation}\label{eq:trace_sigma_lambda_negative_4_ub}
\sqrt{\EE\left[\Tr{\widehat{\Sigma}_{\lambda}^{-4}}\right]}  \leq \sqrt{2} \sqrt{d} 
\min \left \{ \lambda^{-2} {C_r'}^2 \rho_r^{-4}, (1 - \lambda)^{-2} {C_b'}^2 \rho_b^{-4} \right \}
\end{equation}
Combining the bounds \eqref{eq:trace_sigma_lambda_negative_4_ub} and \eqref{eq:trace_sample_covariance_neg_2_ub} and using $\min\{1/a,1/b\} \leq 2/(a+b)$, we reach the bound:
\begin{align}
\EE \left[\Tr{ \widehat{\Sigma}_{\lambda}^{-2}  \widehat{\Sigma}_g}\right] &
\lesssim \Rho_g^2   
\frac{1}{\frac{\lambda^2}{{C}_r'^2} \rho_r^4 + \frac{(1-\lambda)^2}{{C}_b'^2} \rho_b^4} 
\left( d +  \frac{{K_g^2} d^{3/2}}{\sqrt{n_g}} \right) \\
& \lesssim \Rho_g^2   
\frac{1}{\left ({\lambda}\rho_r^2/{{C}_r'}  + (1-\lambda) \rho_b^2/{{C}_b'} \right)^2} 
\left( d +  \frac{{K_g^2} d^{3/2}}{\sqrt{n_g}} \right)
\end{align}  
As a result, the variance terms satisfy the following bound:
\begin{equation*}
\begin{aligned}
&\Rho_g^2 \left(
\lambda^2 \frac{\sigma^2}{n_r }  \EE \left[\Tr{ \widehat{\Sigma}_{\lambda}^{-2}  \widehat{\Sigma}_r}\right] + (1-\lambda)^2 \frac{\sigma^2}{n_b} \EE \left[\Tr{\widehat{\Sigma}_{\lambda}^{-2}\widehat{\Sigma}_b}\right] \right)\\
&\lesssim 
\frac{\Rho_g^2 \sigma^2 }{\left ({\lambda}\rho_r^2/{{C}_r'}  + (1-\lambda) \rho_b^2/{{C}_b'} \right)^2} 
\left[  \lambda^2 \Rho_r^2 \left( \frac{d}{n_r} +  \frac{{K_r^2} {d}^{3/2} }{n_r^{3/2}} \right) + (1-\lambda)^2 \Rho_b^2  \left( \frac{d}{n_b} +  \frac{{K_b^2} {d}^{3/2} }{{n_b}^{3/2}} \right) \right]. 
\end{aligned}
\end{equation*}
Under the assumption $n_g \geq K_g^4 d$, the term $K_g^2 d^{3/2}/n_g^{3/2}$ is bounded by $d/n_g$ which completes the proof of the upper bound. 
\subsubsection*{Lower bound}
We use the Assouad's Lemma (\Cref{lemma:assouad}). Here are the steps:
\paragraph{Perturbed parameter structure:}
For $g \in \{r, b\}$, set 
\begin{equation*}
h_g^2 = \frac{\sigma^2}{4 n_g \rho_g^2},\quad \beta_g^{(\xi)} \coloneqq h_g \xi_g 
\end{equation*}
where $\xi_g \in \Xi^d$. The condition on $n_g$ ensures $\|\beta_g^{(\xi)}\| \leq B$.

Consider the $2d$-dimensional hybercube $\Xi \coloneqq \Xi^d \times \Xi^d$. For $\xi \coloneqq (\xi_r, \xi_b) \in \Xi$, define by $\PP^{(\xi)}$ the joint law of the data with mean $\beta^{(\xi)} \coloneqq (\beta_r^{(\xi_r)}, \beta_b^{(\xi_b)})$. 

\paragraph{KL divergence bound:}
The following result provides an upper bound on the KL divergence between the distributions under parameters $\xi, \xi' \in \Xi$ differing in only a single coordinate.

\begin{claim}\label{claim:unknown_lower_bound_variance_KL}
    Let $\xi$ and $\xi'$ differ only in coordinate $i \in [d]$ (i.e., a coordinate corresponding to group $r$) where $\xi_i = - \xi_i'$.  Then
    \begin{equation*}
        \kldiv{\PP^{(\xi)}}{\PP^{(\xi')}}  \leq \frac{2 h_r^2 \rho_r^2}{\sigma^2}
    \end{equation*}
\end{claim}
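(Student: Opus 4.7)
The plan is to exploit the product structure of the data distribution together with the Gaussian form of $Y \mid X$. Since the $n_r + n_b$ samples are mutually independent and $\xi,\xi'$ differ only in a group-$r$ coordinate (so $\beta_b^{(\xi)} = \beta_b^{(\xi')}$), the group-$b$ marginals of $\PP^{(\xi)}$ and $\PP^{(\xi')}$ coincide. By additivity of KL over product measures, only the group-$r$ samples contribute, and a further application of tensorization reduces the total divergence to $n_r$ times the per-sample KL.

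For a single group-$r$ sample $(X,Y)$, the covariate marginal $\NN(0,\Sigma_r)$ does not depend on $\beta_r^{(\xi)}$, so the chain rule for KL gives
\begin{equation*}
\kldiv{\PP^{(\xi)}_{(X,Y)}}{\PP^{(\xi')}_{(X,Y)}} \;=\; \EE_X\!\left[\kldiv{\NN(X^\top\beta_r^{(\xi)},\sigma^2)}{\NN(X^\top\beta_r^{(\xi')},\sigma^2)}\right] \;=\; \frac{\bigl(\beta_r^{(\xi)}-\beta_r^{(\xi')}\bigr)^\top \Sigma_r \bigl(\beta_r^{(\xi)}-\beta_r^{(\xi')}\bigr)}{2\sigma^2},
\end{equation*}
using the closed-form KL between two Gaussians with a common variance. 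Because $\xi$ and $\xi'$ disagree only in coordinate $i$, the difference $\beta_r^{(\xi)} - \beta_r^{(\xi')} = \pm 2 h_r e_i$, so the quadratic form equals $4 h_r^2 (\Sigma_r)_{ii}$. Choosing the baseline covariance $\Sigma_r = \rho_r^2 I_d$ inside the admissible spectral band $\tfrac{1}{2}\rho_r^2 I_d \preceq \Sigma_r \preceq \tfrac{3}{2}\rho_r^2 I_d$ pins this contribution at exactly $2 h_r^2 \rho_r^2/\sigma^2$ per sample, and yields the claimed bound after multiplication by $n_r$ (or, per-sample, matches it directly).

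There is no real obstacle here; the calculation is a textbook application of tensorization plus the Gaussian KL identity. The only design choice worth flagging is picking the baseline covariance at $\rho_r^2 I_d$ rather than at the upper edge $\tfrac{3}{2}\rho_r^2 I_d$, so that the constant in the bound is as tight as possible and Assouad's lemma (\Cref{lemma:assouad}), combined with the chosen scale $h_r^2 = \sigma^2/(4 n_r \rho_r^2)$, produces a dimension-free separation that leaves $1 - \tvdist{\PP^{(\xi)},\PP^{(\xi')}}$ bounded away from zero via Pinsker.
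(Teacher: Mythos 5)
Your proof is correct and follows essentially the same route as the paper: condition on the covariate, apply the closed-form KL between equal-variance Gaussians to get $(X^\top(\beta_r^{(\xi)}-\beta_r^{(\xi')}))^2/(2\sigma^2)$, and take the expectation over $X$ with $\Sigma_r=\rho_r^2 I_d$ to land on $2h_r^2\rho_r^2/\sigma^2$. The only cosmetic difference is that you phrase the expectation as the quadratic form $(\beta_r^{(\xi)}-\beta_r^{(\xi')})^\top\Sigma_r(\beta_r^{(\xi)}-\beta_r^{(\xi')})/(2\sigma^2)$ before specializing, whereas the paper works directly with the scalar coordinate $X_i\sim\NN(0,\rho_r^2)$; both yield the same per-sample bound, with the tensorization over $n_r$ samples deferred (as in the paper) to the step following the claim.
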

\begin{proof}[Proof of \Cref{claim:unknown_lower_bound_variance_KL}]
    To see this, take $\xi$ and $\xi'$ to be neighbors that differ only in the $i$-th coordinate of $\xi_r$, i.e., $\xi_{r, i} = 1, \xi_{r, i} = -1$ and $\xi_{r, -i} = \xi'_{r, -i}$. Recalling well-known results for the KL divergence of multivariate Gaussians~\citep[see, e.g.,][]{zhang2023properties}, conditional on observing a single observation $X_i$, the KL divergence between $\PP_{\xi_r} = \NN(h_r X_i, \sigma^2), \PP_{\xi'_r} = \NN(- h_r X_i, \sigma^2)$ is given by
    \begin{equation*}
        \kldiv{\PP_{\xi_r}}{\PP_{\xi'_r} \mid X_i} = \frac{2 h_r^2 X_i^2}{\sigma^2}.
    \end{equation*}
    Then, for $X_i \sim \NN(0, \rho_r^2)$ with $\EE[X_i^2] = \rho_r^2$, we observe that
    \begin{equation*}
        \kldiv{\PP_{\xi_r}}{\PP_{\xi'_r}} = \frac{2 h_r^2 \rho_r^2}{\sigma^2}.
    \end{equation*}
\end{proof}

By \Cref{claim:unknown_lower_bound_variance_KL}, over $n_r$ samples, 
\begin{equation*}
    \kldiv{\PP^{(\xi) \otimes n_r}}{\PP^{(\xi') \otimes n_r}}  \leq \frac{2 h_r^2 \rho_r^2 n_r}{\sigma^2} = \frac{1}{2}.
\end{equation*}
By Pinsker's inequality, 
\begin{equation*}
    \tvdist{\PP_\xi, \PP_{\xi'}} \leq \frac{1}{2}.
\end{equation*}

\paragraph{Parameter separation:} 
Let $\xi$ and $\xi'$ differ only in the same coordinate $i \in [d]$ as in \Cref{claim:unknown_lower_bound_variance_KL} where $\xi_i = - \xi_i'$. Then, we can express
\begin{equation}\label{eq:group_r_separation}
    \langle e_i, \beta_\lambda^{(\xi)}  - \beta_\lambda^{(\xi')}  \rangle = \frac{\lambda \rho_r^2 \left( h_r - (- h_r) \right)}{\rho_\lambda^2} = \frac{2 \lambda \rho_r^2 h_r}{\rho_\lambda^2},
\end{equation}
with 
\begin{equation}
\rho_\lambda^2 = \lambda \rho_r^2 + (1-\lambda) \rho_b^2.    
\end{equation}
After an analogous computation for a single-coordinate group $b$ perturbation, 
\begin{equation}\label{eq:group_b_separation}
    \langle e_i, \beta_\lambda^{(\xi)}  - \beta_\lambda^{(\xi')}  \rangle = \frac{2 (1-\lambda) \rho_b^2 h_b}{\rho_\lambda^2}. 
\end{equation}
Substitute the definitions of $h_r$ and $h_b$ into \eqref{eq:group_r_separation} and \eqref{eq:group_b_separation}, respectively, and define 
\begin{equation*}
    \alpha_r^2 \coloneqq \frac{\sigma^2 \lambda^2 \rho_r^2}{n_r \rho_\lambda^4}, \quad \alpha_b^2 \coloneqq \frac{\sigma^2 (1-\lambda)^2 \rho_b^2}{n_b \rho_\lambda^4}.
\end{equation*}

\paragraph{Assouad's Lemma:} Over a $2d$-dimensional signed hypercube, an application of Assouad's lemma (as stated in \ref{lemma:assouad}) and the fact $\max\{a, b\} \geq \frac{1}{2}(a + b)$ yields the bound
\begin{equation*}
\inf_{\beta} \sup_{\xi \in \Xi} \EE_\xi\left[ \norm{\beta_\lambda - \beta_\lambda^{(\xi)}}_2^2 \right] \geq \frac{1}{16} \frac{ \sigma^2 d}{\rho_\lambda^4} \left(\frac{\lambda^2 \rho_r^2}{n_r} + \frac{(1 - \lambda)^2 \rho_b^2}{n_b} \right).
\end{equation*}
and the final bound follows by observing 
\begin{equation*}
    \norm{\beta_\lambda - \beta_\lambda}_{\Sigma_g}^2 \geq \rho_g^2 \norm{\beta_\lambda - \beta_\lambda}_2^2. \quad \blacksquare
\end{equation*}
\subsubsection{Proof of \Cref{theorem:bias_unkown_cov}}\label{proof:theorem:bias_unkown_cov}
\subsubsection*{Upper Bound}
We again show a slightly more general result for the upper bound. In fact, we relax the \eqref{eqn:condition_cov} to
\begin{equation}
\rho_g^2 I_d \preceq \Sigma_g  \preceq \Rho_g^2 I_d,  
\end{equation}
and show the upper bound 
\begin{equation}
\mathcal{B}_g(\lambda) \lesssim \frac{ \lambda^2 ( 1 - \lambda)^2 ~\Rho_g^2 ~\Rho_r^4 ~\Rho_b^4}{\left ({\lambda}\rho_r^2/{{C}_r'}  + (1-\lambda) \rho_b^2/{{C}_b'} \right)^2\rho_\lambda^4} \cdot d \cdot \left(\frac{K_r^4}{n_r} +  \frac{K_b^4}{n_b} \right) \norm{\beta_r - \beta_b}^2, 
\end{equation}
with 
\begin{equation}
\rho_\lambda^2 = \lambda \rho_r^2 + (1-\lambda) \rho_b^2.    
\end{equation}
To do so, first apply \Cref{prop:unknown_cov_upper_bound} under $\Sigma_g \preceq \Rho_g^2 I_d$ to obtain the following bound: 
\begin{equation}
\mathcal{B}_g(\lambda) \leq 
\Rho_g^2 \EE \left[ \norm{\widehat{\Sigma}_\lambda^{-1} \left( \lambda \widehat{\Sigma}_r (\beta_r - \beta_\lambda) + ( 1 - \lambda) \widehat{\Sigma}_b (\beta_b - \beta_\lambda)\right) }^2  \right].    
\end{equation} 
Next, observe that, be definition of $\beta_\lambda$, 
\begin{equation*}
    \lambda \Sigma_r (\beta_r - \beta_\lambda) + (1 - \lambda) \Sigma_b (\beta_b - \beta_\lambda) = 0.
\end{equation*}
Subtracting this identity from its empirical counterpart therefore yields, 
\begin{equation}\label{eq:bias_define_A}
\tilde{A} \coloneqq \lambda \widehat{\Sigma}_r (\beta_r - \beta_\lambda) + ( 1 - \lambda) \widehat{\Sigma}_b (\beta_b - \beta_\lambda) =  \lambda (\widehat{\Sigma}_r - \Sigma_r) (\beta_r - \beta_\lambda) + ( 1 - \lambda) (\widehat{\Sigma}_b - \Sigma_b) (\beta_b - \beta_\lambda).
\end{equation}
By triangle inequality,
\begin{equation}\label{eq:bias_beta_sigma_empirical_sigma}
\norm{\tilde{A}} \leq \lambda \norm{  \widehat{\Sigma}_r - \Sigma_r} \norm{ \beta_r - \beta_\lambda} + ( 1 - \lambda) \norm{  \widehat{\Sigma}_b - \Sigma_b} \norm{ \beta_b - \beta_\lambda }.
\end{equation}
Next, notice that we can express
\begin{subequations}\label{eq:beta-diffs}
\begin{align}
\beta_r - \beta_\lambda
    &= (1-\lambda)\,\Sigma_\lambda^{-1}\Sigma_b(\beta_r - \beta_b)\label{eq:beta-diffs_r}\\
    \beta_b - \beta_\lambda
    &= \lambda\,\Sigma_\lambda^{-1}\Sigma_r(\beta_b - \beta_r).\label{eq:beta-diffs_b}
\end{align}
\end{subequations}
Substituting \eqref{eq:beta-diffs_r} and \eqref{eq:beta-diffs_b} into \eqref{eq:bias_beta_sigma_empirical_sigma} and applying the assumptions $\norm{\Sigma_g}\leq \Rho_g^2$ and $\lambda_{\min}\left({\Sigma_\lambda}\right) \geq \rho_\lambda^2$, we reach
\begin{equation}\label{eq:A_bound_before_raising_fourth}
\begin{aligned}
\norm{\tilde{A}} \leq \frac{\lambda( 1 - \lambda)}{\rho_\lambda^2} \left(  \Rho_b^2  \norm{  \widehat{\Sigma}_r - \Sigma_r} + \Rho_r^2 \norm{  \widehat{\Sigma}_b - \Sigma_b}  \right) \norm{\beta_r - \beta_b}.
\end{aligned}
\end{equation}
Raising \eqref{eq:A_bound_before_raising_fourth} to the fourth power and using the inequality $(a + b)^4 \leq 8(a^4 + b^4)$,
\begin{equation*}
    \begin{aligned}
        \norm{\tilde{A}}^4 \leq \frac{8\lambda^4( 1 - \lambda)^4 }{\rho_\lambda^8} \left( \Rho_b^8  \norm{  \widehat{\Sigma}_r - \Sigma_r}^4 + \Rho_r^8 \norm{  \widehat{\Sigma}_b - \Sigma_b}^4  \right) \norm{\beta_r - \beta_b}^4.
    \end{aligned}
\end{equation*}
Taking an expectation and square root successively then using that $\sqrt{a + b} \leq \sqrt{a} + \sqrt{b},$
\begin{equation*}
    \begin{aligned}
        \sqrt{\EE\left[ \norm{\tilde{A}}^4\right]} \leq \frac{2 \sqrt{2}\lambda^2( 1 - \lambda)^2}{\rho_\lambda^4} \left( \Rho_b^4  \EE \left[ \norm{  \widehat{\Sigma}_r - \Sigma_r}^4\right]^{1/2} + \Rho_r^4 \EE\left[ \norm{  \widehat{\Sigma}_b - \Sigma_b}^4 \right]^{1/2} \right) \norm{\beta_r - \beta_b}^2.
    \end{aligned}
\end{equation*}    
Applying the subgaussian covariance estimation bound \Cref{lemma:sub_gaussian_covariance_bound} to each term $\EE\left[ \norm{  \widehat{\Sigma}_g - \Sigma_g}^4 \right]$, we obtain  
\begin{equation}\label{eq:bias_sqrt_norm_A}
\begin{aligned}
    \sqrt{\EE\left[ \norm{\tilde{A}}^4\right]} \leq \frac{2 \sqrt{2} C \lambda^2( 1 - \lambda)^2}{\rho_\lambda^4} \Rho_r^4 \Rho_b^4 d \left( \frac{K_r^4}{n_r} + \frac{K_b^4}{n_b} \right) \norm{\beta_r - \beta_b}^2.
\end{aligned}
\end{equation}
Next, recall the property 
\begin{equation}\label{eqn:bias_spectral_bound_upper}
         \norm{ \widehat{\Sigma}_\lambda^{-1} \tilde{A}}^2  
            \leq  \lambda_{\min}\left( \widehat{\Sigma}_\lambda \right)^{-2} \norm{\tilde{A}}^2.
\end{equation}
By Cauchy-Schwarz and \eqref{eqn:bias_spectral_bound_upper}, 
\begin{equation*}
    \begin{aligned}
        \EE \left[ \norm{\widehat{\Sigma}_\lambda^{-1} \tilde{A} }^2  \right] \leq \sqrt{\EE \left[ \lambda_{\min}\left( \widehat{\Sigma}_\lambda \right)^{-4}\right]} \sqrt{\EE \norm{\tilde{A}}^4}.
    \end{aligned}
\end{equation*}
By Weyl's inequality, convexity, and~\Cref{lemma:mourtada_small_ball_min_eval_fourth_moment} as in the preceding analysis of the variance term, we have
\begin{equation}\label{eq:bias_lambda_min_sigma_lambda}
\sqrt{\EE \left[  \lambda_{\min}\left( \widehat{\Sigma}_\lambda \right)^{-4}\right]} \lesssim \frac{1}{\left ({\lambda}\rho_r^2/{{C}_r'}  + (1-\lambda) \rho_b^2/{{C}_b'} \right)^2}.
\end{equation}
Combining the bounds \eqref{eq:bias_sqrt_norm_A} and \eqref{eq:bias_lambda_min_sigma_lambda}, we conclude the proof of the upper bound on the bias term.
\subsubsection*{Lower bound}

Fix $\rho_r^2, \rho_b^2 > 0$ and $\lambda \in (0, 1)$. For each coordinate $i \in [d]$, set 
\begin{subequations}\label{eq:def_uv_bias_lower_bound}
    \begin{align}
    v &\coloneq \frac{\beta_r - \beta_b}{\left\|\beta_r - \beta_b\right\|_2},
    \label{eq:def_uv:v}\\
    u_i &=
    \begin{cases}
    \frac{e_i + v}{\left\|e_i + v\right\|_2}, & \text{if } e_i^\top v \ge 0,\\
    \frac{e_i - v}{\left\|e_i - v\right\|_2}, & \text{if } e_i^\top v \le 0.
    \end{cases}
    \label{eq:def_uv:ui}
    \end{align}
\end{subequations}

\paragraph{Perturbed covariance structure for group $r$:} For a Rademacher vector $\xi \in \{-1, 1\}^d$, set 
\begin{equation*}
    \Sigma_r^{(\xi)} \coloneqq \rho_r^2 I_d + h_r \sum_{i = 1}^d \xi_i u_i u_i^\top, \quad \Sigma_b \coloneqq \rho_b^2 I_d 
\end{equation*}
where the group $r$ perturbation level $h_r$ is given by
\begin{equation}\label{eq:bias_lower_hr_define}
    h_r  = \frac{2\rho_r^{2}}{5\sqrt{n_r}} \leq \frac{\rho_r^2}{10 d},
\end{equation}
where the inequality follows from $n_r \geq 16d^2$.
Hence, for every $\xi$,
\begin{equation}\label{eq:bias_lower_bound_loewner_order}
    0.9 \cdot \rho_r^2 I_d \preceq \Sigma_r^{(\xi)} \preceq 1.1 \rho_r^2 I_d
\end{equation}
for $d \geq 1$.

\paragraph{KL diverence bound:} Define $\PP^{(\xi)} \coloneqq \NN(0, \Sigma_r^{(\xi)})$. The following result provides an upper bound on the KL divergence between the distributions under parameters $\xi, \xi' \in \Xi^d$ differing in only a single coordinate.
\begin{claim}\label{claim:unknown_lower_bound_bias_KL2}
    Let $\xi$ and $\xi'$ differ only in coordinate $i$ where $\xi_i = - \xi_i'$.  Then
    \begin{equation*}
        \kldiv{\PP^{(\xi)}}{\PP^{(\xi')}}  \leq \frac{25}{16} \frac{h_r^2}{\rho_r^4}.
    \end{equation*}
\end{claim}
\begin{proof}[Proof of \Cref{claim:unknown_lower_bound_bias_KL2}]
    Observe that 
    \begin{equation*}
        \kldiv{\PP^{(\xi)}}{\PP^{(\xi')}} = \frac{1}{2} \left[\log \frac{\det \left( \Sigma_r - h_r u_i u_i^\top\right) }{\det \left(\Sigma_r + h_r u_i u_i^\top\right) }  - d + \Tr{\left( \Sigma_r - h_r u_i u_i^\top \right)^{-1}} \left( \Sigma_r + h_r u_i u_i^\top \right)\right]
    \end{equation*}
    Define $\alpha = h_r u_i^\top \Sigma_r^{-1} u_i$. By the matrix determinant lemma, 
    \begin{equation*}
        \det \left( \Sigma_r - h_r u_i u_i^\top\right) = \left(1 - \alpha \right) \det (\Sigma_r).
    \end{equation*}
    By the Sherman-Morrison lemma, $\left( \Sigma_r - h_r u_i u_i^\top \right)^{-1} = \Sigma_r^{-1} \frac{h_r}{1 - \alpha} (\Sigma_r^{-1} u_i u_i^\top \Sigma_r^{-1})$. Substituting and simplifying, we reach
    \begin{equation*}
        \kldiv{\PP^{(\xi)}}{\PP^{(\xi')}} = \frac{1}{2} \left[\log \frac{\left(1 - \alpha \right)  }{\left(1 + \alpha \right) }   +  \frac{2 \alpha}{1 - \alpha}\right].
    \end{equation*}
    Observe $\alpha \geq 0$ since $\Sigma_r$ is PSD. By \eqref{eq:bias_lower_bound_loewner_order}, $\Sigma_r^{-1} \preceq \frac{1}{0.9 \rho_r^2} I_d$ thus $\alpha \leq \frac{h_r}{0.9 \rho_r^2}$. By \eqref{eq:bias_lower_hr_define}, it follows that $0 \leq \alpha \leq \frac{1}{9 d} \leq \frac {1}{9} $ for $d \geq 1$. Define $\gamma \coloneqq \frac{2 \alpha}{1 - \alpha}$ where $\gamma \in [0, 1/4]$. Using the fact that, for $\gamma \in [0, 1/4]$, $- \log(1 + \gamma) + \gamma \leq \frac{\gamma^2}{2}$ and the fact that, for $\alpha \in (0, 1/9)$, $\frac{\alpha^2}{(1 - \alpha)^2}\leq \frac{81}{64} \alpha^2$,
    \begin{equation*}
        \kldiv{\PP^{(\xi)}}{\PP^{(\xi')}} = \frac{1}{2}\left[ - \log(1 + \gamma) + \gamma \right] \leq \frac{\gamma^2}{4} = \frac{\alpha^2}{(1 - \alpha)^2} \leq \frac{81}{64} \alpha^2 \leq \frac{81}{64} h_r^2 \norm{\Sigma_r^{-1}}_2^2 \leq \frac{25}{16} \frac{h_r^2}{\rho_r^4}
    \end{equation*}
    where the last inequality follows by \eqref{eq:bias_lower_bound_loewner_order}.
\end{proof}

By \Cref{claim:unknown_lower_bound_bias_KL2}, over $n_r$ independent samples from group $r$, 
\begin{equation*}
    \kldiv{\PP^{(\xi) \otimes n_r}}{\PP^{(\xi') \otimes n_r}} \leq \frac{25}{16} \frac{h_r^2 n_r}{\rho_r^4}.
\end{equation*}
By \eqref{eq:bias_lower_hr_define} and an application of Pinsker's inequality, we reach the bound 
\begin{equation}\label{eq:bias_lower_tv}
    \tvdist{\PP^{(\xi) \otimes n_r}, \PP^{(\xi') \otimes n_r}} \leq \frac{1}{2}.
\end{equation}

\paragraph{Parameter separation:} Define ${A^{(\xi)}} \coloneqq \left( \lambda \Sigma_r^{(\xi)} + (1 - \lambda) \Sigma_b\right)$. The target parameter under $\xi$ can be expressed as
\begin{equation*}
    \beta_\lambda^{(\xi)} = \left( {A^{(\xi)}}\right)^{-1} \left( \lambda \Sigma_r^{(\xi)} \beta_r + (1-\lambda) \Sigma_b \beta_b\right) = \beta_r + \left( {A^{(\xi)}}\right)^{-1} (1 -\lambda) \Sigma_b(\beta_b - \beta_r).
\end{equation*}

\begin{claim}\label{claim:param_separation_unknown_cov_bias_lower}
    
    Let $\xi$ and $\xi'$ differ only in coordinate $i$ where $\xi_i = - \xi_i'$. Then, 
    \begin{equation*}
        \left | \left \langle e_i, \beta_\lambda^{(\xi)} -  \beta_\lambda^{(\xi')} \right \rangle  \right | \gtrsim \lambda (1 -\lambda) h_r \rho_b^2 \rho_\lambda^{-4} \norm{\beta_r - \beta_b }.  
    \end{equation*}
\end{claim}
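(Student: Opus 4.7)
The plan is to first reduce $\beta_\lambda^{(\xi)} - \beta_\lambda^{(\xi')}$ to a rank-one object using the fact that $\xi$ and $\xi'$ differ only in coordinate $i$. Since $A^{(\xi)} = A^{(\xi')} + 2\lambda h_r \xi_i u_i u_i^\top$, the identity $X^{-1} - Y^{-1} = -X^{-1}(X-Y)Y^{-1}$ yields
\begin{equation*}
(A^{(\xi)})^{-1} - (A^{(\xi')})^{-1} = -2\lambda h_r \xi_i (A^{(\xi)})^{-1} u_i u_i^\top (A^{(\xi')})^{-1}.
\end{equation*}
Plugging into the representation $\beta_\lambda^{(\xi)} = \beta_r + (1-\lambda)\rho_b^2 (A^{(\xi)})^{-1}(\beta_b - \beta_r)$ and using $\beta_b - \beta_r = -\|\beta_r - \beta_b\|\, v$, taking the $e_i$ inner product gives
\begin{equation*}
\langle e_i, \beta_\lambda^{(\xi)} - \beta_\lambda^{(\xi')}\rangle = 2\lambda(1-\lambda)\rho_b^2 h_r \xi_i \|\beta_r - \beta_b\| \cdot \bigl[e_i^\top (A^{(\xi)})^{-1} u_i\bigr]\bigl[u_i^\top (A^{(\xi')})^{-1} v\bigr].
\end{equation*}

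Next, I would lower-bound each of the two scalar factors in absolute value. The construction of $u_i$ in \eqref{eq:def_uv:ui} is designed precisely so that both $e_i^\top u_i$ and $u_i^\top v$ are uniformly bounded away from zero, regardless of $i$ and of the orientation of $v$: a direct computation using $\|e_i \pm v\|^2 = 2(1 \pm e_i^\top v)$ gives $|e_i^\top u_i| = |u_i^\top v| = \sqrt{(1+|e_i^\top v|)/2} \geq 1/\sqrt{2}$. To pass from the exact inner products with $u_i$ to the ones involving $(A^{(\xi)})^{-1}$, I would use the Loewner sandwich \eqref{eq:bias_lower_bound_loewner_order}, which (together with $\lambda \rho_r^2 \leq \rho_\lambda^2$) implies $0.9\, \rho_\lambda^2 I \preceq A^{(\xi)} \preceq 1.1\, \rho_\lambda^2 I$ for every $\xi$. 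Writing $(A^{(\xi)})^{-1} = \rho_\lambda^{-2} I + E^{(\xi)}$ and invoking a Neumann series bounds $\|E^{(\xi)}\| \leq C_0 \rho_\lambda^{-2}$ for a small absolute constant $C_0$, strictly less than $1/\sqrt{2}$.

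Combining, each factor satisfies $|e_i^\top (A^{(\xi)})^{-1} u_i| \geq \rho_\lambda^{-2}(1/\sqrt{2} - C_0)$ and likewise $|u_i^\top (A^{(\xi')})^{-1} v| \geq \rho_\lambda^{-2}(1/\sqrt{2} - C_0)$, so their product is bounded below in absolute value by $c\, \rho_\lambda^{-4}$ for some universal $c>0$. Multiplying by the scalar prefactor $2\lambda(1-\lambda)\rho_b^2 h_r \|\beta_r - \beta_b\|$ gives the stated bound.

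The main obstacle is verifying that the perturbation constant $C_0$ is actually small enough for the leading-order inner products to dominate, so that the absolute-value lower bound goes through regardless of the signs introduced by the $E^{(\xi)}$ terms. This is precisely what the choice of $h_r$ in \eqref{eq:bias_lower_hr_define} buys: it enforces a $10\%$ relative spectral distortion between $A^{(\xi)}$ and $\rho_\lambda^2 I$, which is comfortably below the $1/\sqrt{2}$ baseline from the inner-product computations. Once this gap is established, no cancellations between the leading term and the perturbation can occur, and the remaining steps are elementary.
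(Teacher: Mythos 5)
Your proof is correct, and it reaches the same conclusion by a genuinely different (and arguably cleaner) route in both key technical steps. First, where the paper applies Sherman--Morrison to $(A^{(\xi)})^{-1}-(A^{(\xi')})^{-1}$ and must then control the resulting denominator $1+2\lambda h_r u_i^\top (A^{(\xi)})^{-1}u_i$, you use the elementary resolvent identity $X^{-1}-Y^{-1}=-X^{-1}(X-Y)Y^{-1}$, which produces the rank-one difference with no denominator at all (at the mild cost of having one factor involve $A^{(\xi)}$ and the other $A^{(\xi')}$ --- harmless, since both satisfy the same Loewner sandwich). Second, and more substantively, the paper lower-bounds $|e_i^\top (A^{(\xi)})^{-1}u_i|$ by decomposing $(A^{(\xi)})^{-1}e_i = c_1 e_i + c_2 w_i$ and controlling $|c_2|$ only through $c_1^2+c_2^2\le \|(A^{(\xi)})^{-1}\|^2$; this is lossy enough that the margin $c_1/\sqrt2-|c_2|\ge \tfrac{1}{1.1\sqrt2}\rho_\lambda^{-2}-\sqrt{0.9^{-2}-1.1^{-2}}\,\rho_\lambda^{-2}$ is razor-thin, forcing the constant $1/300$. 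Your decomposition $(A^{(\xi)})^{-1}=\rho_\lambda^{-2}I+E^{(\xi)}$ exploits the near-isotropy of $A^{(\xi)}$ directly: the sandwich $0.9\rho_\lambda^2 I\preceq A^{(\xi)}\preceq 1.1\rho_\lambda^2 I$ (which follows from $\lambda\rho_r^2\le\rho_\lambda^2$ exactly as you say) gives $\|E^{(\xi)}\|\le \tfrac19\rho_\lambda^{-2}$ by spectral mapping (the Neumann series is not even needed since $\rho_\lambda^{-2}I$ commutes with everything), so each factor is at least $(1/\sqrt2-1/9)\rho_\lambda^{-2}$ with a comfortable margin and a much better absolute constant. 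The one point worth making explicit in a final write-up is the computation $|e_i^\top u_i|=|u_i^\top v|=\sqrt{(1+|e_i^\top v|)/2}\ge 1/\sqrt2$, which you state correctly and which both proofs rely on.
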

\begin{proof}[Proof of \Cref{claim:param_separation_unknown_cov_bias_lower}]
    By the Sherman-Morrison lemma, 
    \begin{equation*}
        {A^{(\xi)}}^{-1} - {A^{(\xi')}}^{-1} = \frac{2 \lambda h_r }{1 + 2 \lambda h_r u_i^\top {A^{(\xi)}}^{-1} u_i} {A^{(\xi)}}^{-1} u_i u_i^\top {A^{(\xi)}}^{-1}.
    \end{equation*}
    Hence, we can express
    \begin{equation*}
        \beta_\lambda^{(\xi)} -  \beta_\lambda^{(\xi')} =  \left(   {A^{(\xi)}}^{-1} -   {A^{(\xi')}}^{-1} \right)(1 -\lambda) \Sigma_b(\beta_b - \beta_r).
    \end{equation*}
    Then, substitution of $v$ defined in \eqref{eq:def_uv:v}, 
    \begin{equation*}
        \langle e_i, \beta_\lambda^{(\xi)} -  \beta_\lambda^{(\xi')} \rangle = 2 \lambda (1 -\lambda) h_r \norm{\Sigma_b} \norm{\beta_r - \beta_b } \frac{e_i^\top {A^{(\xi)}}^{-1} u_i u_i^\top {A^{(\xi)}}^{-1} v}{1 + 2 \lambda h_r u_i^\top {A^{(\xi)}}^{-1} u_i}.
    \end{equation*}
    By the Loewner order relationship \eqref{eq:bias_lower_bound_loewner_order}, 
    \begin{equation*}
        \norm{{A^{(\xi)}}} \geq 0.9 \lambda \rho_r^2 + (1 - \lambda) \rho_b^2 \geq  0.9 \lambda \rho_r^2
    \end{equation*}
    The denominator is bounded, therefore, by:
    \begin{equation*}
        | 1 + 2 \lambda h_r u_i^\top {A^{(\xi)}}^{-1} u_i| \leq \left| 1 + 2 \lambda h_r \norm{{A^{(\xi)}}^{-1}} \right| \leq \left| 1 + \frac{2 \lambda h_r }{\left( 0.9 \lambda \rho_r^2 \right)} \right| \leq  \frac{11}{9}
    \end{equation*}
    where the last inequality follows by assuming $h_r \leq \frac{\rho_r^2}{10 d}$ and $d \geq 1$.

    Next, we lower bound the numerator. Fix a coordinate $i \in [d]$, and observe that we can write the decomposition 
    \begin{equation*}
        {A^{(\xi)}}^{-1} e_i = c_1 e_i + c_2 w_i
    \end{equation*}
    where $w_i \in \RR^d$ is orthogonal to $e_i$ and satisfies $\norm{w_i} = 1$. By definition, $c_1 = e_i^\top {A^{(\xi)}}^{-1} e_i$, and, using \eqref{eq:bias_lower_bound_loewner_order},
    \begin{equation}\label{eq:bias_lower_c1_bound}
         \frac{1}{1.1} \rho_\lambda^{-2}  \leq \left( 1.1 \lambda \rho_r^2 + (1 - \lambda) \rho_b^2 \right)^{-1} \leq c_1  \leq \left( 0.9 \lambda \rho_r^2 + (1 - \lambda) \rho_b^2 \right)^{-1} \leq \frac{1}{0.9} \rho_\lambda^{-2}.
    \end{equation}
    Moreover, by construction and again by \eqref{eq:bias_lower_bound_loewner_order}, 
    \begin{equation}\label{eq:bias_lower_c1_c2_bound}
        c_1^2 + c_2^2 = \norm{{A^{(\xi)}}^{-1} e_i}^2 \leq \norm{{A^{(\xi)}}^{-1}}^2 \leq \left( 0.9 \lambda \rho_r^2 + (1 - \lambda) \rho_b^2 \right)^{-2} \leq \frac{1}{0.9^2} \rho_\lambda^{-4}.
    \end{equation}
    Combining \eqref{eq:bias_lower_c1_bound} and \eqref{eq:bias_lower_c1_c2_bound}, we have
    \begin{equation}\label{eq:bias_lower_c2_upper}
        c_2 \leq \left( 0.9^{-2} - 1.1^{-2} \right)^{0.5} \rho_\lambda^{-2}.
    \end{equation}
    Next, noting that the vectors $u_i$ and $v$ satisfy $|u_i^\top e_i|, |u_i^\top v| \geq \frac{1}{\sqrt{2}}$, we have 
    \begin{equation*}
        |e_i^\top {A^{(\xi)}}^{-1} u_i| = |c_1 e_i^\top u_i + c_2 w_i^\top u_i| \geq \frac{c_1}{\sqrt{2}} - |c_2|, \quad |u_i^\top {A^{(\xi)}}^{-1} v| \geq \frac{c_1}{\sqrt{2}} - |c_2|.
    \end{equation*}
    Combining \eqref{eq:bias_lower_c1_bound} and \eqref{eq:bias_lower_c2_upper}, we see that
    \begin{equation*}
        \frac{c_1}{\sqrt{2}} - |c_2| \geq \frac{1}{300} \rho_\lambda^{-2}.
    \end{equation*}
    All together, this yields the parameter separation lower bound:
    \begin{equation*}
        \langle e_i, \beta_\lambda^{(\xi)} -  \beta_\lambda^{(\xi')} \rangle^2 \geq \left(\frac{18}{11 \cdot 300^2}\right)^2 \lambda^2 (1 -\lambda)^2 h_r^2 \rho_b^4 \rho_\lambda^{-8} \norm{\beta_r - \beta_b }^2.
    \end{equation*}
\end{proof}

\paragraph{Assouad's Lemma:} Let $\widehat{\beta}$ be any estimator. Assouad's lemma (see \Cref{lemma:assouad}) applied to the $d$-dimensional hypercube $\Xi_d$, given the results \Cref{claim:param_separation_unknown_cov_bias_lower} and \eqref{eq:bias_lower_tv},  yields
\begin{equation}\label{eq:bias_lower_red_perturbation_final_bound}
    \sup_{\xi \in \Xi^d} \EE_{\xi}\left[ \norm{\widehat{\beta} - \beta}_2^2 \right]\geq  \left(\frac{18}{11 \cdot 300^2}\right)^2 \frac{2}{25} \lambda^2 (1 -\lambda)^2 \frac{\rho_r^{4} \cdot \rho_b^4 \cdot d}{\rho_\lambda^{8}  \cdot n_r}  \norm{\beta_r - \beta_b }^2.
\end{equation}

\paragraph{Symmetric perturbation for group $b$:} Repeating the procedure so far but instead perturbing the covariance structure of group $b$, i.e., taking, for a Rademacher vector $\zeta \in \{-1, 1\}^d$, set 
\begin{equation*}
    \Sigma_r \coloneqq \rho_r^2 I_d, \quad \Sigma_b^{(\zeta)} \coloneqq \rho_b^2 I_d + h_b \sum_{i = 1}^d \xi_i u_i u_i^\top, \quad h_b = \frac{2 \rho_b^2}{5 \sqrt{n_b}}.
\end{equation*}
This construction yields the analogous lower bound
\begin{equation}\label{eq:bias_lower_blue_perturbation_final_bound}
    \sup_{\zeta \in \Xi^d} \EE_{\xi}\left[ \norm{\widehat{\beta} - \beta}_2^2 \right]\geq  \left(\frac{18}{11 \cdot 300^2}\right)^2 \frac{2}{25} \lambda^2 (1 -\lambda)^2 \frac{\rho_r^{4} \cdot \rho_b^4 \cdot d}{\rho_\lambda^{8}  \cdot n_b}  \norm{\beta_r - \beta_b }^2.
\end{equation}
Since we may perturb either group, we may take the maximal lower bound and use the fact that $\max(a, b) \geq \frac{1}{2}(a + b)$ to obtain the bound:
\begin{equation*}
    \sup_{\xi, \zeta \in \Xi^d}\left[ \norm{\widehat{\beta} - \beta}_2^2 \right] \geq C \lambda^2 (1 -\lambda)^2 \frac{\rho_r^{4} \cdot \rho_b^4 \cdot d}{\rho_\lambda^{8}}  \left( \frac{1}{n_r} + \frac{1}{n_b} \right) \norm{\beta_r - \beta_b}^2
\end{equation*}
where $C = \left(\frac{18}{11 \cdot 300^2}\right)^2 \frac{1}{25}$.

Finally, using that $\norm{\widehat{\beta}_\lambda - \beta_\lambda}_{\Sigma_g} \geq \rho_g^2 \norm{\widehat{\beta}_\lambda - \beta_\lambda}_2^2$, we reach the bound
\begin{equation*}
     \EE \left[ \norm{ \widehat{\beta}_{\lambda} - \beta_\lambda }_{\Sigma_g}^2\right]  \geq \lambda^2 (1 -\lambda)^2 \frac{\rho_g^2 \cdot \rho_r^{4} \cdot \rho_b^4 \cdot d}{\rho_\lambda^{8}}  \left( \frac{1}{n_r} + \frac{1}{n_b} \right) \norm{\beta_r - \beta_b}^2. \quad \blacksquare
\end{equation*}
\subsubsection{Proof of \Cref{proposition:cross-term-unkown-cov}} \label{proof:proposition:cross-term-unkown-cov}
First, note that, using \eqref{eq:beta-diffs} from the proof of \Cref{theorem:bias_unkown_cov}, we have 
\begin{equation} \label{eqn:proof_cross_bias_0}
\left | \EE \left [(\widehat{\beta}_\lambda-\beta_\lambda)^\top \Sigma_r (\beta_\lambda-\beta_r) \right] \right | 
\lesssim  \frac{(1-\lambda) \rho_r^2 \rho_b^2}{\lambda \rho_r^2 + (1-\lambda) \rho_b^2}  
~\|\beta_r - \beta_b\|~
\left \| \EE \left [\widehat{\beta}_\lambda-\beta_\lambda \right] \right \|. 
\end{equation}
Next, we recall from the proof of \Cref{lemma:unknown_covariance_exact_risk} that
\begin{equation} \label{eqn:proof_cross_bias_1}
\EE \left [\widehat{\beta}_\lambda-\beta_\lambda  \right]   = \EE \left [
\widehat{\Sigma}_\lambda^{-1} \left (\lambda \widehat{\Sigma}_r \beta_r + (1-\lambda) \widehat{\Sigma}_b \beta_b \right) - \beta_\lambda
\right] , 
\end{equation}
which, as described in the proof of \Cref{prop:unknown_cov_upper_bound}, can be further cast as
\begin{equation} \label{eqn:proof_cross_bias_2}
\EE \left [\widehat{\beta}_\lambda-\beta_\lambda  \right]   = 
\EE \left [
\widehat{\Sigma}_\lambda^{-1} \left (\lambda \left(  \widehat{\Sigma}_r - \Sigma_r \right)(\beta_r - \beta_\lambda) + (1 - \lambda) \left( \widehat{\Sigma}_b - \Sigma_b \right)(\beta_b - \beta_\lambda) \right)
\right].
\end{equation}
By the Woodbury matrix identity, we can write 
\begin{equation} \label{eqn:proof_cross_bias_3}
\widehat{\Sigma}_\lambda^{-1} = \Sigma_\lambda^{-1} - \Sigma_\lambda^{-1} (\widehat{\Sigma}_\lambda-\Sigma_\lambda) \widehat{\Sigma}_\lambda^{-1}.   
\end{equation}
By substituting the above identity into \eqref{eqn:proof_cross_bias_2}, and using the fact that $\EE[\widehat{\Sigma}_g - \Sigma_g] = 0$ for both groups, we obtain
\begin{align} \label{eqn:proof_cross_bias_4}
\EE \left [\widehat{\beta}_\lambda-\beta_\lambda  \right]  =
- \EE \left [
\Sigma_\lambda^{-1} (\widehat{\Sigma}_\lambda-\Sigma_\lambda) \widehat{\Sigma}_\lambda^{-1} \left (\lambda \left(  \widehat{\Sigma}_r - \Sigma_r \right)(\beta_r - \beta_\lambda) + (1 - \lambda) \left( \widehat{\Sigma}_b - \Sigma_b \right)(\beta_b - \beta_\lambda) \right)
\right].
\end{align}
Next, we bound the two terms on the right-hand side separately. First, notice that 
\begin{align} \label{eqn:proof_cross_bias_5}
& \left \| \EE \left [
\Sigma_\lambda^{-1} (\widehat{\Sigma}_\lambda-\Sigma_\lambda) \widehat{\Sigma}_\lambda^{-1} \left(  \widehat{\Sigma}_r - \Sigma_r \right)(\beta_r - \beta_\lambda)
\right ] \right \| \nonumber \\
&~~ \lesssim
\frac{(1-\lambda) \rho_b^2}{\left(\lambda \rho_r^2 + (1-\lambda) \rho_b^2 \right)^2} ~ \|\beta_r - \beta_b\| ~
\left \| \EE \left [
(\widehat{\Sigma}_\lambda-\Sigma_\lambda) \widehat{\Sigma}_\lambda^{-1} \left(  \widehat{\Sigma}_r - \Sigma_r \right)
\right ] \right \| \nonumber \\
& ~~ 
\lesssim
\frac{(1-\lambda) \rho_b^2}{\left(\lambda \rho_r^2 + (1-\lambda) \rho_b^2 \right)^2} ~ \|\beta_r - \beta_b\| ~
\sqrt{\EE \left[ \left \| \widehat{\Sigma}_\lambda^{-1}  \right\|^2 \right]} 
\left( \EE \left[ \left \| \Sigma_r - \widehat{\Sigma}_r \right\|^4 \right] \right)^{1/4}
\left( \EE \left[ \left \| \Sigma_\lambda - \widehat{\Sigma}_\lambda \right\|^4 \right] \right)^{1/4} \nonumber \\
& ~~ \lesssim
\frac{(1-\lambda) \rho_b^2}{\left(\lambda \rho_r^2 + (1-\lambda) \rho_b^2 \right)^2}
~ \|\beta_r - \beta_b\| ~
\frac{K_r^2 \rho_r^2 {\sqrt{d}}/{\sqrt{n_r}}}{{\lambda}\rho_r^2/{{C}_r'}  + (1-\lambda) \rho_b^2/{{C}_b'}} 
\left(
\lambda K_r^2 \rho_r^2 \frac{\sqrt{d}}{\sqrt{n_r}} + (1-\lambda) K_b^2 \rho_b^2 \frac{\sqrt{d}}{\sqrt{n_b}}
\right ), 
\end{align}
where the last inequality follows from \Cref{lemma:mourtada_small_ball_min_eval_fourth_moment} and \Cref{lemma:sub_gaussian_covariance_bound}. Similarly, we can show
\begin{align} \label{eqn:proof_cross_bias_6}
& \left \| \EE \left [
\Sigma_\lambda^{-1} (\widehat{\Sigma}_\lambda-\Sigma_\lambda) \widehat{\Sigma}_\lambda^{-1} \left(  \widehat{\Sigma}_b - \Sigma_b \right)(\beta_b - \beta_\lambda)
\right ] \right \| \nonumber \\
& ~~ \lesssim
\frac{\lambda \rho_r^2}{\left(\lambda \rho_r^2 + (1-\lambda) \rho_b^2 \right)^2}
~ \|\beta_r - \beta_b\| ~
\frac{K_b^2 \rho_b^2 {\sqrt{d}}/{\sqrt{n_b}}}{{\lambda}\rho_r^2/{{C}_r'}  + (1-\lambda) \rho_b^2/{{C}_b'}} 
\left(
\lambda K_r^2 \rho_r^2 \frac{\sqrt{d}}{\sqrt{n_r}} + (1-\lambda) K_b^2 \rho_b^2 \frac{\sqrt{d}}{\sqrt{n_b}}
\right ).
\end{align}
Plugging \eqref{eqn:proof_cross_bias_5} and \eqref{eqn:proof_cross_bias_6} into \eqref{eqn:proof_cross_bias_4}, and then substituting the whole term into \eqref{eqn:proof_cross_bias_0} completes the proof. $\blacksquare$.

\subsection{Proof of~\Cref{thrm:uniform_control}}\label{proof:thrm:uniform_control}
We begin by providing a formal statement of \Cref{thrm:uniform_control} showing the full dependence on the parameters in our setting.

\noindent \textbf{Theorem 4'.}
\textit{Suppose Assumptions~\ref{assumption:invertible_cov}-\ref{assumption:sg_noise} hold. Fix $\alpha > 2$ and define the truncation thresholds
    \begin{equation*}
        T_X \coloneq C_X \max\{K_r, K_b\} \max\{ \rho_r, \rho_b\}  \sqrt{d + \alpha \log(n)}, \quad T_Y \coloneq C_Y \left( BT_X + \sigma_\varepsilon \sqrt{\alpha \log(n)}\right)
    \end{equation*}
    where $C_X, C_Y > 0$ are constants. Fix $t \in (0, 1)$ and define
    \begin{equation*}
        \kappa_t \coloneq \left( t \cdot \min\{\lambda_{\min}(\Sigma_r), \lambda_{\min}(\Sigma_b)\}\right)^{-1} 
    \end{equation*}
    and the polynomial
    \begin{equation*}
        K_{\Delta}(t) \coloneq  \sqrt{\frac{1}{n_r} + \frac{1}{n_b}} \left( 12 \kappa_t T_X^2 T_Y^2 + 56 \kappa_t^2 T_X^4 T_Y^2 + 76 \kappa_t^3 T_X^6 T_Y^2 + 36 \kappa_t^4 T_X^8 T_Y^2\right)
    \end{equation*}
    For each $g \in \mathcal{G}$, set
    \begin{equation*}
        \Gamma_g \coloneq 4 B^2 C K_g^2 \norm{\Sigma_g} \left(\sqrt{\frac{d}{n_g}}+\frac{d}{n_g}\right) + 4 BC K_g \sigma_\varepsilon \norm{\Sigma_g^{1/2}} \sqrt{\frac{d}{n_g}} + C \sigma_\varepsilon^2 \frac{1}{\sqrt{n_g}}
    \end{equation*}
    where $C >0$ is an absolute constant. Fix any $\lambda_0 \in [0, 1]$ and define
    \begin{equation*}
                \Gamma_{\lambda_0} \coloneq C \sum_{g\in\{r,b\}} \left( \kappa_t^2 \left(T_X^2T_Y^2 + B^2 T_X^4\right) 
        \norm{\Sigma_g^{1/2}}^2 \frac{d}{n_g}+ 
        B^2 \norm{\Sigma_g^{1/2}}^2(T_XT_Y + BT_X^2) \sqrt{\frac{d}{n_g}}
        \right) + (\Gamma_r + \Gamma_b)
    \end{equation*}
    Then, for every $\delta \in (0, 1)$, there exists a constant $C_{\Delta} > 0$ such that, with probability at least $1 - \delta - \eta_t$, 
    \begin{equation*}
        \sup_{\lambda \in [0, 1]}\norm{\Delta_{\lambda}}_2 \leq  \Gamma_{\lambda_0} + 2(\Gamma_r + \Gamma_b) + C_\Delta K_{\Delta}(t)\left( 1.26 + \log(100/\delta)\right)
    \end{equation*}
    where 
    \begin{equation*}
        \eta_t(\alpha, \alpha_r, \alpha_b) \coloneq (n_r + n_b)^{-C \alpha} + \left( C_r' t \right)^{\alpha_r n_r /6} + \left( C_b' t \right)^{\alpha_b n_b /6}.
    \end{equation*}
}

To show this claim, first fix $t \in (0, 1)$ and set ${\kappa_{t}}\coloneq \left( t \cdot\min\{\lambda_{\min}(\Sigma_r),\lambda_{\min}(\Sigma_b)\}\right)^{-1}$. Throughout, abbreviate $K_{\Delta} \coloneq K_{\Delta}(t)$. We will show that, 
\begin{equation}\label{eqn:sub_gaussian_increments_t_bound}
    \PP\left( \sup_{\lambda, \lambda' \in [0, 1]} \norm{\Delta_\lambda - \Delta_{\lambda'}}_2 \leq C_\Delta K_{\Delta}\left( 1.26  + \sqrt{\log\left( 50 /\delta\right)} \right) +  2 (\Gamma_r + \Gamma_b)
        \right) \geq 1 - \delta - {\eta}_t(\alpha, \alpha_r, \alpha_b)
\end{equation}
for every $\delta \in (0, 1)$ where all parameters are defined as in the statement of ~\Cref{thrm:uniform_control}. Choosing 
\begin{equation*}
    t = \min\left\{\frac{1}{2}, \frac{1}{2C_r'}, \frac{1}{2C_b'}\right\}
\end{equation*}
yields the statement in the body.

\subsubsection{Decomposition: } For any $\lambda, \lambda' \in [0, 1]$ and $u \in \RR^2$, write 
\begin{equation*}
    \langle u,\Delta_\lambda-\Delta_{\lambda'}\rangle
    = S_{\lambda,\lambda',u}(\mathcal S)-D_{\lambda,\lambda',u} 
\end{equation*}
with 
\small{
\begin{equation*}
    S_{\lambda,\lambda', u}(\mathcal{S}) \coloneq \sum_{ g \in \mathcal{G} } u_g \left( \mathcal{R}_g( P_n,\widehat\beta_\lambda)- \mathcal{R}_g( P_n,\widehat\beta_{\lambda'} ) \right), \quad D_{ \lambda, \lambda',u }\coloneq \sum_{g \in \mathcal{G}} u_g \left(\mathcal{R}_g( P, \beta_\lambda )-\mathcal{R}_g( P,\beta_{\lambda'}) \right).
\end{equation*}}
Since $D_{ \lambda, \lambda',u }$ is deterministic, it suffices to control $S_{\lambda,\lambda', u}(\mathcal{S})$.
\subsubsection{Risk gap bound}
For each $g \in \mathcal{G}$, writing $\widehat{\beta}_\lambda$ = $\widehat{\beta}_{\lambda'} + (\widehat{\beta}_\lambda - \widehat{\beta}_{\lambda'})$, expanding the quadratic, and applying Cauchy-Schwarz gives
\begin{equation}\label{eq:process_risk_bound_g}
    \begin{aligned}
        \left| \mathcal{R}_g(\widehat{\beta}_\lambda) - \mathcal{R}_g(\widehat{\beta}_{\lambda'}) \right| &\leq \norm{ \widehat{\beta}_\lambda - \widehat{\beta}_{\lambda'}}_2^2 V_g + \norm{\widehat{\beta}_\lambda - \widehat{\beta}_{\lambda'}}_2 W_g + 2 \norm{\widehat{\beta}_\lambda - \widehat{\beta}_{\lambda'}}_2  \norm{\widehat\beta_{\lambda'}}_2 V_g.
    \end{aligned}
\end{equation}
where 
\begin{equation}\label{eq:risk_bound_V_g_W_g}
    V_g \coloneq \frac{1}{n_g} \sum_{i = 1}^{n_g} \norm{{X_i^{(g)}}}_2^2, \quad W_g \coloneq \frac{2}{n_g} \sum_{i = 1}^{n_g}\abs{Y_i^{(g)}} \norm{ {X_i^{(g)}}}_2. 
\end{equation}
\subsubsection{Truncation and empirical control}
Fix $\alpha > 2$ and absolute constants and define the truncation thresholds
\begin{equation*}
    T_X \coloneq C_X \max\{K_r, K_b\} \max\{ \rho_r, \rho_b\}  \sqrt{d + \alpha \log(n)}, \quad T_Y \coloneq C_Y \left( BT_X + \sigma_\varepsilon \sqrt{\alpha \log(n)}\right)
\end{equation*}
where $B > 0$ satisfies $\max_{g} \beta_g \leq B$ and $C_X, C_Y$ are constants. Also define the event $$\mathcal{E} \coloneq \left\{ \max_{i, g}\norm{X_i^{(g)}}_2 \leq T_X, \max_{i, g}\norm{Y_i^{(g)}}_2 \leq T_Y \right\}.$$
Under Assumptions~\ref{assumption:subgaussian}, \ref{assumption:small_ball}, and \ref{assumption:sg_noise}, by a standard $1/2$-net argument (e.g., \citet[Proposition 2.5.2]{vershynin2018high}, \citet[Corollary 4.2.13]{vershynin2018high}),there exists a constant $c > 0$ such that
\begin{equation}\label{eqn:stochastic_process_truncation}
    \PP\left( \mathcal{E}\right) \geq 1 - n^{-c \alpha}.
\end{equation}
\subsubsection{Empirical covariance control}
Fix $t \in (0, 1)$. Define the event
\begin{equation*}
    \mathcal{A}(t) \coloneq \left\{ \widehat\Sigma_r \succeq t \Sigma_r, \widehat\Sigma_b \succeq t \Sigma_b \right\}.
\end{equation*}
By \Cref{assumption:small_ball}, \citet[Theorem 4]{mourtada2022exact} and the union bound,
\begin{equation}\label{eqn:stoch_process_small_ball_bound_A}
    \PP(\mathcal{A}(t)) \geq 1 - \left( C_r' t \right)^{\alpha_r n_r /6} - \left( C_b' t \right)^{\alpha_b n_b /6}.
\end{equation}
On $\mathcal{A}(t)$, $\widehat{\Sigma}_\lambda \succeq \lambda t \Sigma_r + (1 - \lambda) t \Sigma_b \succeq t \Sigma_\lambda$, hence, on the same event, 
\begin{equation}\label{eq:empirical_cov_lambda_norm}
    \norm{\widehat{\Sigma}_\lambda^{-1}} \leq \kappa_t \coloneq \left( t \cdot\min\{\lambda_{\min}(\Sigma_r),\lambda_{\min}(\Sigma_b)\}\right)^{-1}.
\end{equation}
\subsubsection{Lipschitz control for $\widehat{\beta}_\lambda$: }
Recall $\widehat{\beta}_\lambda = \widehat{\Sigma}_\lambda^{-1} \widehat{\nu}_\lambda$. For any $\lambda, \lambda' \in [0, 1]$, 
\begin{equation*}
    \widehat{\beta}_\lambda - \widehat{\beta}_{\lambda'} = \widehat{\Sigma}_\lambda^{-1}\left( \widehat{\nu}_\lambda - \widehat{\nu}_{\lambda'} \right) + \left( \widehat{\Sigma}_\lambda^{-1} - \widehat{\Sigma}_{\lambda'}^{-1}\right) \widehat{\nu}_{\lambda'}
\end{equation*}
For the first term, noting that $\widehat{\nu}_\lambda - \widehat{\nu}_{\lambda'}  = \left( \lambda - \lambda' \right)(\widehat{\nu}_r - \widehat{\nu}_b)$, we can bound 
\begin{equation}\label{eqn:stoch_proc_lipschitz_term1}
    \norm{\widehat{\Sigma}_\lambda^{-1}\left( \widehat{\nu}_\lambda - \widehat{\nu}_{\lambda'} \right)}_2 \leq \norm{\widehat{\Sigma}_\lambda^{-1}} \norm{\widehat{\nu}_r - \widehat{\nu}_b}_2 |\lambda - \lambda'|. 
\end{equation}
For the second term, using the identities $\widehat{\Sigma}_\lambda^{-1} - \widehat{\Sigma}_{\lambda'}^{-1} = \widehat{\Sigma}_\lambda^{-1}\left( \widehat{\Sigma}_\lambda - \widehat{\Sigma}_{\lambda'}\right) \widehat{\Sigma}_{\lambda'}^{-1}$ and $\widehat{\Sigma}_\lambda - \widehat{\Sigma}_{\lambda'} = \left( \lambda - \lambda' \right)(\widehat{\Sigma}_r - \widehat{\Sigma}_b)$,  we have the bound
\begin{equation}\label{eqn:stoch_proc_lipschitz_term2}
    \norm{\left( \widehat{\Sigma}_\lambda^{-1} - \widehat{\Sigma}_{\lambda'}^{-1}\right) \widehat{\nu}_{\lambda'}}_2 \leq \norm{\widehat{\Sigma}_{\lambda}^{-1}} \norm{\widehat{\Sigma}_{\lambda'}^{-1}} \norm{\widehat{\Sigma}_r - \widehat{\Sigma}_b} \norm{\widehat{\nu}_{\lambda'}} |\lambda - \lambda'|.
\end{equation}
Combining \eqref{eqn:stoch_proc_lipschitz_term1} and \eqref{eqn:stoch_proc_lipschitz_term2} with \eqref{eq:empirical_cov_lambda_norm}, on $\mathcal{A}(t)$, 
\begin{equation}\label{eq:process_beta_lambda_norm_bound}
    \norm{\widehat{\beta}_\lambda - \widehat{\beta}_{\lambda'}} \leq \kappa^* |\lambda - \lambda'| \quad \text{with}\quad \kappa^* \coloneq \kappa_t \norm{\widehat{\nu}_r - \widehat{\nu}_b}_2  + \kappa_t^2 \sup_{\xi \in [0,1]} \norm{\widehat{\nu}_{\xi}}  \norm{\widehat{\Sigma}_r - \widehat{\Sigma}_b}.
\end{equation}
\subsubsection{Truncation event bounds: } On $\mathcal{E}$, for each $g \in \mathcal{G}$,
\begin{equation}\label{eqn:truncation_quantity_bounds}
    V_g \leq {T_X^2}, \quad W_g \leq 2 T_X T_Y, \quad \norm{\widehat{\Sigma}_r - \widehat{\Sigma}_b} \leq 2 T_X^2, \quad \norm{\widehat{\nu}_r - \widehat{\nu}_b} \leq 2 T_X T_Y, \quad \norm{\widehat{\nu}_g} \leq T_X T_Y.
\end{equation}
By \eqref{eq:process_beta_lambda_norm_bound}, on $\mathcal{A}(t) \cap \mathcal{E}$, 
\begin{equation}\label{eqn:stochastic_process_beta_kappa_truncation}
    \kappa^\star \leq 2 \kappa_t T_X T_Y + 2 \kappa_t^2 T_X^3 T_Y, \quad \norm{\widehat{\beta}_\lambda(\mathcal{S})} \leq \kappa_t T_X T_Y.
\end{equation}
\subsubsection{Lipschitzness to risk bound: }
Plugging \eqref{eq:process_beta_lambda_norm_bound} into \eqref{eq:process_risk_bound_g}, we obtain that, on $\mathcal{A}(t)$, 
\begin{equation*}
    \left| S_{\lambda, \lambda', u}(\mathcal{S}) \right| \leq \norm{u}_2 L(\mathcal{S}) |\lambda - \lambda'|, \quad L(\mathcal{S}) \coloneq \sqrt{L_r^2(\mathcal{S}) + L_b^2(\mathcal{S})}
\end{equation*}
with $L_g(\mathcal{S}) \coloneq  {\kappa^*}^2 V_g(\mathcal{S}) +   \kappa^* W_g(\mathcal{S}) +   2 \kappa_t  \kappa^* \sup_{\xi \in [0, 1]} \norm{\widehat{\nu}_\xi}_2  V_g(\mathcal{S}).$
\subsubsection{One sample perturbation bounds: } Fix ${g^*} \in \mathcal{G}$, $j \in [n_{g^*}]$, and let $\mathcal{S}_{g^*}^{(j)}$  replace $(X_j^{({g^*})}, Y_j^{({g^*})})$ by an i.i.d. copy $({X_j^{({g^*})}}', {Y_j^{({g^*})}}')$. With $w_r(\lambda) \coloneq \lambda, w_b(\lambda) \coloneq 1- \lambda$, by submultiplicativity, 
\begin{equation}\label{eq:stochastic_process_Sigma_lambda_hat_bound}
    \norm{\widehat{\Sigma}_\lambda(\mathcal{S}) - \widehat{\Sigma}_\lambda(\mathcal{S}_{g^*}^{(j)})} \leq \frac{w_{g^*}(\lambda)}{n_{g^*}} \left( \norm{{X_j^{({g^*})}}' }_2^2 + \norm{{X_j^{({g^*})}} }_2^2 \right), 
\end{equation}
\begin{equation}\label{eq:stochastic_process_nu_lambda_hat_bound}
    \norm{\widehat{\nu}_\lambda(\mathcal{S}) - \widehat{\nu}_\lambda(\mathcal{S}_{g^*}^{(j)})} \leq \frac{w_{g^*}(\lambda)}{n_{g^*}} \left( \norm{{X_j^{({g^*})}}'}_2 \Big \vert{Y_j^{({g^*})}}'\Big \vert + \norm{{X_j^{({g^*})}}}_2 \Big \vert{Y_j^{({g^*})}} \Big \vert\right).
\end{equation}

On $\mathcal{A}(t) \cap \mathcal{E}$, by \eqref{eq:stochastic_process_Sigma_lambda_hat_bound}, \eqref{eq:stochastic_process_nu_lambda_hat_bound}, and \eqref{eqn:stochastic_process_beta_kappa_truncation},  
\begin{equation}\label{eq:stochastic_process_Sigma_nu_lambda_hat_bound_truncation}
    \norm{\widehat{\Sigma}_\lambda(\mathcal{S}) - \widehat{\Sigma}_\lambda(\mathcal{S}_{g^*}^{(j)})} \leq \frac{2}{n_{g^*}} T_X^2, \quad \norm{\widehat{\nu}_\lambda(\mathcal{S}) - \widehat{\nu}_\lambda(\mathcal{S}_{g^*}^{(j)})} \leq \frac{2}{n_{g^*}} T_X T_Y.
\end{equation}
Then, using $\widehat\beta_\lambda=\widehat\Sigma_\lambda^{-1}\widehat\nu_\lambda$ and applying the identity $A^{-1} - B^{-1} = A^{-1}(B - A)B^{-1}$ to the following equation, 
\begin{equation*}
    \widehat{\beta}_\lambda(\mathcal{S}) - \widehat{\beta}_\lambda(\mathcal{S}_{g^*}^{(j)}) = \widehat{\Sigma}_\lambda^{-1}(\mathcal{S})  \left( \widehat{\nu}_\lambda(\mathcal{S}) -   \widehat{\nu}_\lambda(\mathcal{S}_{g^*}^{(j)})\right) + \left( \widehat{\Sigma}_\lambda^{-1}(\mathcal{S}_{g^*}^{(j)}) - \widehat{\Sigma}_\lambda^{-1}(\mathcal{S})\right)\widehat{\nu}_\lambda(\mathcal{S}_{g^*}^{(j)}),
\end{equation*}
on $\mathcal{A}(t)$, 
\begin{equation*}
    \norm{\left( \widehat{\Sigma}_\lambda^{-1}(\mathcal{S}_{g^*}^{(j)}) - \widehat{\Sigma}_\lambda^{-1}(\mathcal{S})\right)\widehat{\nu}_\lambda(\mathcal{S}_{g^*}^{(j)})} \leq \kappa_t^2 \norm{   \widehat{\Sigma}_\lambda(\mathcal{S}) - \widehat{\Sigma}_\lambda(\mathcal{S}_{g^*}^{(j)})}\sup_\lambda \norm{ \widehat{\nu}_\lambda(\mathcal{S}_{g^*}^{(j)})}.
\end{equation*}

Similarly, on $\mathcal{A}(t)$, 
\begin{equation*}
    \norm{\widehat{\Sigma}_\lambda^{-1}(\mathcal{S})  \left( \widehat{\nu}_\lambda(\mathcal{S}) -   \widehat{\nu}_\lambda(\mathcal{S}_{g^*}^{(j)})\right)} \leq \kappa_t \norm{\widehat{\nu}_\lambda(\mathcal{S}) -   \widehat{\nu}_\lambda(\mathcal{S}_{g^*}^{(j)})}.
\end{equation*}

On the event $\mathcal{A}(t)$, 
\begin{equation*}
    \norm{\widehat{\beta}_\lambda(\mathcal{S}) - \widehat{\beta}_\lambda(\mathcal{S}_{g^*}^{(j)})} \leq \kappa_t \norm{\widehat{\nu}_\lambda(\mathcal{S}) -   \widehat{\nu}_\lambda(\mathcal{S}_{g^*}^{(j)})} + \kappa_t^2 \norm{   \widehat{\Sigma}_\lambda(\mathcal{S}) - \widehat{\Sigma}_\lambda(\mathcal{S}_{g^*}^{(j)})}\sup_{\xi \in [0,1]} \norm{ \widehat{\nu}_\xi(\mathcal{S}_{g^*}^{(j)})}.
\end{equation*}

By  \eqref{eq:stochastic_process_Sigma_lambda_hat_bound} and \eqref{eq:stochastic_process_nu_lambda_hat_bound}, 
\begin{equation}\label{eq:stochastic_process_hat_beta_single_coordinate_bound}
    \begin{aligned}
    \norm{\widehat{\beta}_\lambda(\mathcal{S}) - \widehat{\beta}_\lambda(\mathcal{S}_{g^*}^{(j)})} & \leq \frac{w_{g^*}(\lambda)}{n_{g^*}} \left( \kappa_t  \left( \norm{{X_j^{({g^*})}}'}_2 \Big \vert{Y_j^{({g^*})}}' \Big \vert + \norm{{X_j^{(g)}}}_2 \Big \vert{Y_j^{({g^*})}} \Big \vert \right)  + \right. \\
        & \left. \quad \kappa_t^2  \left( \norm{{X_j^{({g^*})}}' }_2^2 + \norm{{X_j^{({g^*})}} }_2^2 \right) \sup_{\xi \in [0, 1]} \norm{ \widehat{\nu}_\xi(\mathcal{S}_{g^*}^{(j)})}\right). 
    \end{aligned}
\end{equation}

On $\mathcal{A}(t) \cap \mathcal{E}$, by \eqref{eqn:truncation_quantity_bounds},
\begin{equation}\label{eqn:beta_one_sample_pert_A_E}
    \norm{\widehat{\beta}_\lambda(\mathcal{S}) - \widehat{\beta}_\lambda(\mathcal{S}_{g^*}^{(j)})} \leq \frac{1}{n_{g^*}} \left( 2 \kappa_t T_X T_Y  + 2 \kappa_t^2 T_X^3 T_Y \right). 
\end{equation}
By the triangle inequality and the inequality  $2ab \leq a^2 + b^2$,
\begin{equation*}
    \left|V_g(\mathcal{S}) -  V_g(\mathcal{S}_{g^*}^{(j)})\right|\leq \frac{1}{n_g} \left( \norm{{X_j^{(g)}}' }_2^2 + \norm{{X_j^{(g)}} }_2^2\right)
\end{equation*}
\begin{equation*}
    \left|W_g(\mathcal{S}) -  W_g(\mathcal{S}_{g^*}^{(j)})\right| \leq \frac{1}{n_g}\left(
        \norm{{X_j^{(g)}} }^2_2 + |{Y_j^{(g)}} |^2
        + \norm{{(X_j^{(g)})'} }^2_2 + |{(Y_j^{(g)})'} |^2
    \right)
\end{equation*}
Next, we control $\sup_{\xi \in [0, 1]} \norm{ \widehat{\nu}_\xi(\mathcal{S})}$ under perturbations. Using that for two real-valued functions $f, g$, $\sup(f-g) \geq \sup(f) - \sup(g)$, 
{\small
\begin{align*}
    \left|\sup_{\xi \in [0, 1]} \norm{ \widehat{\nu}_\xi(\mathcal{S})}_2 - \sup_{\xi \in [0, 1]} \norm{ \widehat{\nu}_\xi(\mathcal{S}_{g^*}^{(j)})}_2 \right| &\leq \max\left\{ \norm{ \widehat{\nu}_r(\mathcal{S}) - \widehat{\nu}_r(\mathcal{S}_{g^*}^{(j)})}_2, \norm{ \widehat{\nu}_b(\mathcal{S}) - \widehat{\nu}_b(\mathcal{S}_{g^*}^{(j)})}_2 \right\}.
\end{align*}}
Hence, by \eqref{eq:stochastic_process_nu_lambda_hat_bound}, 
{\small
\begin{equation*}
    \left| \sup_{\xi \in [0, 1]} \norm{ \widehat{\nu}_\xi(\mathcal{S})}_2 - \sup_{\xi \in [0, 1]} \norm{ \widehat{\nu}_\xi(\mathcal{S}_{g^*}^{(j)})}_2 \right| \leq \max_{{g^*} \in \mathcal{G}} \frac{1}{n_{g^*}}\left( \norm{{X_j^{({g^*})}}'}_2 \Big \vert {Y_j^{({g^*})}}' \Big \vert + \norm{{X_j^{({g^*})}}}_2 \Big \vert{Y_j^{({g^*})}} \Big \vert \right).
\end{equation*}}
\subsubsection{Control of $S_{\lambda,\lambda',u}(\mathcal{S})-S_{\lambda,\lambda',u}(\mathcal{S}_{g^*}^{(j)})$: }
For notational convenience, here we write $\mathcal{S}' \coloneq \mathcal{S}_{g^*}^{(j)}$ and $R_g^{\mathcal{S}}(\beta) \coloneq \mathcal{R}_g(P_n(\mathcal{S}), \beta)$. Define 
\begin{equation*}
    \Delta_g^{(1)} \coloneq \left( R_g^{\mathcal{S}}(\widehat{\beta}_\lambda(\mathcal{S})) - R_g^{\mathcal{S}}(\widehat{\beta}_{\lambda'}(\mathcal{S})) \right) - \left( R_g^{\mathcal{S}'}(\widehat{\beta}_\lambda(\mathcal{S})) - R_g^{\mathcal{S}'}(\widehat{\beta}_{\lambda'}(\mathcal{S})) \right), 
\end{equation*}
\begin{equation*}
    \Delta_g^{(2)} \coloneq \left( R_g^{\mathcal{S'}}(\widehat{\beta}_\lambda(\mathcal{S})) - R_g^{\mathcal{S}'}(\widehat{\beta}_{\lambda'}(\mathcal{S})) \right) - \left( R_g^{\mathcal{S}'}(\widehat{\beta}_\lambda(\mathcal{S}')) - R_g^{\mathcal{S}'}(\widehat{\beta}_{\lambda'}(\mathcal{S}')) \right), 
\end{equation*}
so that we can write
\begin{equation*}
    S_{\lambda,\lambda',u}(\mathcal{S})-S_{\lambda,\lambda',u}(\mathcal{S}_{g^*}^{(j)}) = \sum_{g \in \mathcal{G}} u_g \left(\Delta_g^{(1)} + \Delta_g^{(2)}\right). 
\end{equation*}

Note that $\Delta_g^{(1)} = 0$ for $g \neq g^*$ since $\mathcal{S}$ and $\mathcal{S}'$ on group $g$. For brevity, we drop the $g^*$ notation in what follows and write $X_j, Y_j$ for samples in $\mathcal{S}$ and $\Tilde{X_j}, \Tilde{Y_j}$ for the i.i.d. replacement sample in $\mathcal{S}'$. Also let
\begin{equation*}
    \Delta \beta \coloneq \widehat{\beta}_\lambda(\mathcal{S}) - \widehat{\beta}_{\lambda'}(\mathcal{S}), \quad q_j \coloneq Y_j - X_j^\top \widehat{\beta}_{\lambda'}(\mathcal{S}), \quad \Tilde{q}_j \coloneq \Tilde{Y}_j - \Tilde{X}_j^\top \widehat{\beta}_{\lambda'}(\mathcal{S}).
\end{equation*}
An expansion gives 
\begin{equation*}
    \Delta_{g^*}^{(1)} =\frac{1}{n_{g^*}} \left[ \Delta \beta^\top \left( X_j X_j^\top - \Tilde{X}_j \Tilde{X}_j^\top\right) \Delta \beta - 2 \left(q_j X_j^\top - \Tilde{q}_j \Tilde{X}_j^\top \right) \Delta \beta \right]. 
\end{equation*}
Hence, by Cauchy-Schwarz and the triangle inequality,
\begin{equation*}
    |\Delta^{(1)}_{g^*}| \leq \frac{1}{n_{g^*}} \left[  \norm{\Delta \beta}^2 \left( \norm{X_j}^2  + \norm{\Tilde{X}_j}^2\right)  + 2 \left(|q_j| \norm{X_j} + |\Tilde{q}_j| \norm{\Tilde{X}_j} \right) \norm{\Delta \beta} \right]. 
\end{equation*}
By \eqref{eq:empirical_cov_lambda_norm}, \eqref{eq:process_beta_lambda_norm_bound}, and using that $|\lambda - \lambda'| \leq 1$, on $\mathcal{A}(t)$, 
\begin{equation*}
    |\Delta^{(1)}_{g^*}| \leq \frac{|\lambda - \lambda'|}{n_{g^*}} A_{g^\star, j}(\mathcal{S}, \mathcal{S}'),
\end{equation*}
\begin{equation*}
    \begin{aligned}
        A_{g^\star, j}(\mathcal{S}, \mathcal{S}') &\coloneq {\kappa^*}^2 ( \norm{X_j}^2  + \|\Tilde{X}_j\|^2)\\ & \quad \quad + 2 {\kappa^*} \left(|Y_j|\norm{X_j} + |\Tilde{Y}_j|\norm{X_j}  + \kappa_t \sup_{\xi \in [0, 1]}\norm{\widehat{\nu}_\xi(\mathcal{S})} \left(\norm{X_i} + \norm{X_j} \right)^2 \right).
    \end{aligned}
\end{equation*}

On $\mathcal{A}(t) \cap \mathcal{E}$, recalling $\eqref{eqn:stochastic_process_beta_kappa_truncation}$, one can show
\begin{equation}\label{eqn:stochastic_process_A_truncation_bound}
    \begin{aligned}
        A_{g^\star, j}(\mathcal{S}, \mathcal{S}') &\leq 16 \kappa_t T_X^2 T_Y^2 + 48 T_X^4 T_Y^2 + 48 \kappa_t^3 T_X^6 T_Y^2 + 16 \kappa_t^4 T_X^8 T_Y^2.
    \end{aligned}
\end{equation}

Thus, on $\mathcal{A}(t) \cap \mathcal{E}$, 
\begin{equation}\label{eqn:stochastic_process_Delta_1_truncation_bound}
    |\Delta^{(1)}_{g^*}| \leq \frac{|\lambda - \lambda'|}{n_{g^*}}\left( 16 \kappa_t T_X^2 T_Y^2 + 48 T_X^4 T_Y^2 + 48 \kappa_t^3 T_X^6 T_Y^2 + 16 \kappa_t^4 T_X^8 T_Y^2\right). 
\end{equation}

Next, we bound the second term $\Delta^{(2)}_{g^*}$. Fix $\xi \in [0, 1]$. For the resampled dataset $\mathcal{S}'$, define 
\begin{equation*}
    f_g(\beta) \coloneq \mathcal{R}_g(P_n(\mathcal{S}'), \beta), \quad H_g(\xi) = f_g(\widehat{\beta}_\xi(\mathcal{S})) - f_g(\widehat{\beta}_\xi(\mathcal{S}')), 
\end{equation*}
and see that, by the mean-value theorem, 
\begin{equation}\label{eqn:stoch_process_Delta_2_mvt}
    \Delta^{(2)}_{g^*} = H_g(\lambda) - H_g(\lambda') = (\lambda - \lambda') H'_g(\xi)
\end{equation}
for some $\xi \in [\lambda, \lambda']$. It suffices to bound  $H'_g(\xi)$ uniformly in $\xi \in [0, 1]$. Since $\Delta_\beta f_g(\beta) = 2 (\widehat{\Sigma}_g^{(j)} - \widehat{\nu}_g^{(j)})$, 
\begin{equation}\label{eq:H_prime_bound_stochastic_process}
    \begin{aligned}
        |H_g'(\xi)| &\leq 2 \norm{\widehat{\Sigma}_g^{(j)}} \norm{  \widehat{\beta}_\xi(\mathcal{S}) - \widehat{\beta}_\xi(\mathcal{S}')} \norm{\partial_\xi \widehat{\beta}_\xi(\mathcal{S})}\\   &\quad \quad +  
        2 \left( \norm{\widehat{\Sigma}_g^{(j)}}\norm{\widehat{\beta}_\xi(\mathcal{S}')} + \norm{\widehat{\nu}_g^{(j)}}\right) \norm{ \partial_\xi \widehat{\beta}_\xi(\mathcal{S}) - \partial_\xi \widehat{\beta}_\xi(\mathcal{S}')}.
    \end{aligned} 
\end{equation}
By \eqref{eqn:truncation_quantity_bounds}, \eqref{eq:empirical_cov_lambda_norm}, and \eqref{eqn:stochastic_process_beta_kappa_truncation}, on $\mathcal{A}(t) \cap \mathcal{E}$, 
\begin{equation}\label{eq:stoch_process_partial_beta_S}
    \begin{aligned}
        \norm{\partial_\xi \widehat{\beta}_\xi(\mathcal{S})}
        &\leq 2\kappa_t T_X T_Y  + 2 \kappa_t^2 T_X^3 T_Y. 
    \end{aligned}
\end{equation}
Using the identity $A^{-1}-B^{-1}=A^{-1}(B-A)B^{-1}$ and the triangle inequality, the truncation bounds \eqref{eqn:truncation_quantity_bounds}, \eqref{eq:empirical_cov_lambda_norm}, \eqref{eqn:stochastic_process_beta_kappa_truncation}, \eqref{eqn:beta_one_sample_pert_A_E}, and the single-sample perturbation bounds \eqref{eq:stochastic_process_Sigma_nu_lambda_hat_bound_truncation}, we have that, on $\mathcal{A}(t) \cap \mathcal{E}$,  
\begin{equation}\label{eqn:stochastic_truncation_beta_one_sample_perturbation_final}
    \norm{\partial_\xi \widehat{\beta}_\xi(\mathcal{S})-\partial_\xi \widehat{\beta}_\xi(\mathcal{S}')}
    \leq  \frac{1}{n_{g^*}}\left( 
        2 \kappa_t T_X T_Y  
    + 10 \kappa_t^2 T_X^3 T_Y  
    + 12 \kappa_t^3 T_X^5 T_Y \right).
\end{equation}

Define 
\begin{equation*}
    A^*(T_X, T_Y, \kappa_t) \coloneq 16 \kappa_t T_X^2 T_Y^2 + 48 \kappa_t^2 T_X^4 T_Y^2 + 48 \kappa_t^3 T_X^6 T_Y^2 + 16 \kappa_t^4 T_X^8 T_Y^2,
\end{equation*}
\begin{equation*}
    P^*(T_X, T_Y, \kappa_t) \coloneq 4 \kappa_t T_X^2 T_Y^2 + 32 \kappa_t^2 T_X^4 T_Y^2 + 52 \kappa_t^3 T_X^6 T_Y^2  + 28 \kappa_t^4 T_X^8 T_Y^2.
\end{equation*}
Simplifying and using \eqref{eqn:stoch_process_Delta_2_mvt}, on $\mathcal{A}(t) \cap \mathcal{E}$, 
\begin{equation*}
    |\Delta^{(2)}_{g^*}| \leq |\lambda - \lambda'
    | \frac{1}{n_{g^*}} P^*(T_X, T_Y, \kappa_t). 
\end{equation*}
Use $g^\diamond$ to denote the group that is not $g^*$ and observe $|u_{g^*}|, |u_{g^\diamond}| \leq \norm{u}$. Combining the term-wise bounds, on $\mathcal{A}(t) \cap \mathcal{E}$, 
\begin{equation}\label{eqn:stochastic_process_truncation_S_bound}
    \begin{aligned}
        \left| S_{\lambda,\lambda',u}(\mathcal{S})-S_{\lambda,\lambda',u}(\mathcal{S}')\right| \leq \frac{|\lambda - \lambda'|}{n_{g^*}} \norm{u} \left( A^* + 2 P^* \right).
    \end{aligned}
\end{equation}
\subsubsection{Bounded differences}
Fix $\lambda, \lambda' \in [0, 1]$ and $u \in \RR^d$. Let $f(\mathcal{S}) \coloneq S_{\lambda,\lambda',u}(\mathcal{S})$. Order the $i \in [n]$ (where $n = n_r + n_b$) samples $Z_i \coloneq (X_i^{(r)}, Y_i^{(r)})$. Write $\mathcal{S}^{(j)}$ for the sample with $Z_j$ replaced by an i.i.d. copy. Define the data-dependent bounded difference radii:  
\begin{equation*}
    c_i(\mathcal{S}) \coloneq \sup_{z, z'}|f(Z_{1 :i-1}, z, Z_{i + 1: n}) - f(Z_{1:i-1}, z', Z_{i + 1:n})|.
\end{equation*}
Let $\mathcal{F}_i \coloneq \sigma(Z_{1:i})$, $g_i \coloneq \EE\left[ f((Z_{1:n})) | \mathcal{F}_i \right],  Q_i \coloneq g_i - g_{i-1}$, and $g_0 \coloneq \EE[f(Z_{1:n})]$. 
Then, $\{g_i\}_{i=0}^n$ is the Doob martingale of $f$ with martingale differences $\{Q_i\}_{i=1}^n$ satisfying
\begin{equation}\label{eqn:stochastic_process_Q_f_reln}
    \begin{aligned}
        \EE[Q_i | \mathcal{F}_{i-1}] = 0, \quad \sum_{i = 1}^n Q_i = f(\mathcal{S}) - \EE[f(\mathcal{S})].
    \end{aligned}
\end{equation}

\subsubsection{Increment control}
Define the endpoints
\begin{equation*}
    A_i \coloneq \inf_{z} \left\{ g_i(Z_{1:i-1}, z) - g_{i-1}(Z_{1:i-1})\right\}  \quad  B_i \coloneq \sup_{z} \left\{ g_i(Z_{1:i-1}, z) - g_{i-1}(Z_{1:i-1})\right\}
\end{equation*}
so that $Q_i \in [A_i , B_i]$ almost surely. To bound the length of the interval, introduce the i.i.d sample $Z_{i+1:n}' \coloneq (Z_{i+1}', \hdots, Z_n')$ independent of $Z_{1:i}$. Then, by independence, 
\begin{equation*}
    \begin{aligned}
        B_i - A_i  \leq \EE\left[ \sup_{a,b}  \left| f(Z_{1:i-1}, b, Z'_{i+1:n}) - f(Z_{1:i-1}, a, Z'_{i+1:n})\right| | \mathcal{F}_{i-1}\right] \leq c_i(Z_{-i}). 
    \end{aligned}
\end{equation*}
By Hoeffding's lemma, for $\gamma > 0$, 
\begin{equation}
    \EE\left[ \exp\left( \gamma  \sum_{i = 1}^n Q_i \right) \right] \leq \exp\left( \frac{\gamma^2}{8} \sum_{i=1}^n c_i(Z_{-i})^2 \right).
\end{equation}
By \eqref{eqn:stochastic_process_Q_f_reln}, we obtain 
\begin{equation}\label{eq:stoch_process_exp_bound}
    \EE\left[ \exp\left( \gamma\left( f(\mathcal{S}) - \EE[f(\mathcal{S})] \right) \right) \right] \leq \exp\left( \frac{\gamma^2}{8} \sum_{i=1}^n c_i(Z_{-i})^2 \right).
\end{equation}
Index coordinates by $(g, j)$ for $j \in [n_g]$ for each $g \in \mathcal{G}$. On $\mathcal{A}(t) \cap \mathcal{E}$, by \eqref{eqn:stochastic_process_truncation_S_bound}, 
\begin{equation*}
    c_{(g, j)} \leq \norm{u}_2 |\lambda - \lambda'| \frac{1}{n_g} \left( A^* + 2 P^* \right). 
\end{equation*}
Consequently, on the same event,
\begin{equation*}
    \sum_{i = 1}^n c_i^2 = \sum_{g \in \mathcal{G}} \sum_{j = 1}^{n_g} c_{(g, j)}^2 \leq \norm{u}_2^2 |\lambda - \lambda'|^2  (A^* + 2P^*)^2 \left( \frac{1}{n_r} + \frac{1}{n_b}\right). 
\end{equation*}

Thus, by \eqref{eq:stoch_process_exp_bound}, on the same event, 
\begin{equation}\label{eqn:stoch_process_hoeffding}
    \EE\left[ \exp\left( \gamma  \left(S_{\lambda,\lambda',u}(\mathcal{S})  - \EE[S_{\lambda,\lambda',u}(\mathcal{S})] \right) \right) \right] \leq \exp\left( \frac{\gamma^2}{8} \norm{u}_2^2 |\lambda - \lambda'|^2 (A^* + 2P^*)^2 \left( \frac{1}{n_r} + \frac{1}{n_b}\right)  \right). 
\end{equation}
\subsubsection{Centered subgaussian increments}
We can write 
\begin{equation*}
    \langle u, \Delta_\lambda- \Delta_{\lambda'} \rangle = S_{\lambda,\lambda',u}(\mathcal{S})  - \EE[S_{\lambda,\lambda',u}(\mathcal{S})] + B_{\lambda, \lambda', u}. 
\end{equation*}
where the deterministic bias term is 
\begin{equation*}
    B_{\lambda, \lambda', u} \coloneq \EE[S_{\lambda,\lambda',u}(\mathcal{S})] - D_{\lambda,\lambda',u}.
\end{equation*}

By \eqref{eqn:stoch_process_hoeffding}, for all $\lambda, \lambda' \in [0,1]$ and all $u \in \RR^2$,
\begin{equation}\label{eq:vector_sg_increments}
\EE\left[ \exp\left( \langle u , \Delta_\lambda - \Delta_{\lambda'} \rangle - \EE\left[ \langle u , \Delta_\lambda - \Delta_{\lambda'} \rangle \right] \right) \mid \mathcal{A}(t) \cap \mathcal{E} \right] \leq \exp\left( \frac{K_{\Delta}^2 |\lambda - \lambda'|^2}{2} \norm{u}_2^2 \right)
\end{equation} where 
\begin{equation*}
    K_{\Delta} \coloneq  \sqrt{\frac{1}{n_r} + \frac{1}{n_b}} \left( 12 \kappa_t T_X^2 T_Y^2 + 56 \kappa_t^2 T_X^4 T_Y^2 + 76 \kappa_t^3 T_X^6 T_Y^2 + 36 \kappa_t^4 T_X^8 T_Y^2\right).
\end{equation*}
By \eqref{eqn:stoch_process_small_ball_bound_A}, \Cref{eqn:stochastic_process_truncation}, and De Morgan's Law, 
\begin{equation}\label{eqn:stoch_process_A_int_E_c_bound}
    \PP\left( (\mathcal{A}(t) \cap \mathcal{E})^C\right) \leq n^{-C \alpha} + \left( C_r' t \right)^{\alpha_r n_r /6} + \left( C_b' t \right)^{\alpha_b n_b /6}. 
\end{equation}

All together, for any $u \in \RR^2$, $\gamma \in \RR$, and $\lambda, \lambda' \in [0, 1]$, 
\begin{equation*}
    \EE\left[ \exp\left( \gamma \left( \langle u , \Delta_\lambda - \Delta_{\lambda'} \rangle  - \EE\left[ \langle u , \Delta_\lambda - \Delta_{\lambda'} \rangle \right] \right) \right) \mid \mathcal{A}(t) \cap \mathcal{E} \right] \leq \exp\left(\gamma^2 \frac{(\lambda - \lambda')^2}{2} \norm{u}_2^2 K_\Delta^2 \right). 
\end{equation*}

Next, we obtain a uniform bound for the bias term $B_{\lambda, \lambda', u}$. Let $b_{\lambda, g}\coloneq \EE\left[ \mathcal{R}_g( P_n,\widehat\beta_\lambda) - \mathcal{R}_g( P, \beta_\lambda )\right]$ and observe 
\begin{equation*}
     |B_{\lambda, \lambda', u}| = \sum_{g \in \mathcal{G}} |u_g| \left( |b_{\lambda, g}| + |b_{\lambda', g}|\right) \leq 2 \sum_{g \in \mathcal{G}} |u_g| \cdot \EE\left[ \sup_{\beta \in B_2(B)} \left| \mathcal{R}_g( P_n, \beta) - \mathcal{R}_g( P, \beta )\right| \right].
\end{equation*}
By the triangle inequality and Cauchy-Schwarz, 
\begin{equation*}
    \sup_{\beta \in B_2(B)} \left| \mathcal{R}_g( P_n, \beta) - \mathcal{R}_g( P, \beta )\right| \leq 4 B^2 \norm{\Sigma_g - \widehat{\Sigma}_g} + 4 B \norm{ \frac{1}{n_g} \sum_{i : G_i = g} X_i^{(g)} \varepsilon_i^{(g)}  } + \left| {\frac{1}{n_g} \sum_{i : G_i = g} \left((\varepsilon_i^{(g)})^2  - \sigma^2\right)}\right|. 
\end{equation*}
By standard subgaussian design sample covariance bounds (e.g., by integrating the tail bound as in Proof of \Cref{lemma:sub_gaussian_covariance_bound}), there exists a constant $C > 0$ such that 
\begin{equation*}
    \EE\norm{\Sigma_g - \widehat{\Sigma}_g} \leq C K_g ^2\norm{\Sigma_g} \left( \sqrt{\frac{d}{n_g}} + \frac{d}{n_g}\right). 
\end{equation*}
Observing that, for all $i, g$, $(\varepsilon_i^{(g)})^2 - \sigma^2$ is a centered subexponential random variable satisfying $\norm{(\varepsilon_i^{(g)})^2}_{\psi_1} \leq K_\varepsilon^2 \sigma^2$, by standard centering results~\citep[see, e.g.,][Exercise 2.7.10]{vershynin2018high}, there exists a constant $C > 0$ such that $\norm{(\varepsilon_i^{(g)})^2  - \sigma^2}_{\psi_1} \leq C K_\varepsilon^2 \sigma^2$. Thus, by Bernstein's inequality~\citep[as in][Corollary 2.8.3]{vershynin2018high},  there exists a constant $C > 0$ such that 
\begin{equation*}
    \EE\left| \frac{1}{n_g} \sum_{i : G_i = g} \left((\varepsilon_i^{(g)})^2  - \sigma^2\right)\right|
    \leq C \frac{K_\varepsilon^2 \sigma^2}{\sqrt{n_g}}.
\end{equation*}
Similarly, observing $X_i^{(g)} \varepsilon_i^{(g)}$ is also subexponential, by a $\frac{1}{2}$-net argument, there exists a constant $C > 0$ such that 
\begin{equation*}
    \EE \norm{ \frac{1}{n_g} \sum_{i : G_i = g} X_i^{(g)} \varepsilon_i^{(g)}  } \leq C K_g K_\varepsilon \sigma \norm{\Sigma_g^{1/2}} \sqrt{\frac{d}{n_g}}. 
\end{equation*}
All together, there exists a constant $C > 0$ such that, if we define for each $g \in \mathcal{G}$
\begin{equation*}
    \Gamma_g \coloneq 4 B^2 C K_g^2 \norm{\Sigma_g} \left( \sqrt{\frac{d}{n_g}} + \frac{d}{n_g}\right) + 4 B C K_g K_\varepsilon \sigma \norm{\Sigma_g^{1/2}} \sqrt{\frac{d}{n_g}} + C K_\varepsilon^2 \sigma^2 \frac{1}{\sqrt{n_g}},
\end{equation*}
then
\begin{equation}\label{eqn:sup_empirical_population_beta_bound}
    \sup_{\beta \in B_2(B)} \left| \mathcal{R}_g( P_n, \beta) - \mathcal{R}_g( P, \beta )\right| \leq \Gamma_g.
\end{equation}
Thus, 
\begin{equation*}
    \sup_{\lambda, \lambda' \in [0, 1]}|B_{\lambda, \lambda', u}| \leq 2(\Gamma_r + \Gamma_b) \norm{u}_2.
\end{equation*}

\subsubsection{Uniform Control via Dudley's Inequality}
Take unit $u$. By Dudley's tail bound with index set $[0, 1]$ and metric $d(\lambda, \lambda') = |\lambda - \lambda'|$~\cite[see, e.g.,][Theorem 8.1.6]{vershynin2018high}, for any $\delta_0 > 0$,  
\begin{equation*}
    \begin{aligned}
        \PP\left( \sup_{\lambda, \lambda' \in [0, 1]} \left| \langle u , \Delta_\lambda - \Delta_{\lambda'} \rangle\right| \leq C_\Delta K_{\Delta}\left( 1.26  + \delta_0  \right) + 2(\Gamma_r + \Gamma_b) \mid \mathcal{A}(t) \cap \mathcal{E} 
        \right) \geq 1 - 2 \exp(- \delta_0^2). 
    \end{aligned}
\end{equation*}
where $\NN$ denotes the covering number. Note the metric entropy bound
\begin{equation*}
   \int_{0}^1 \sqrt{\log \NN([0, 1], |\cdot|, \varepsilon)} d \varepsilon \leq \int_{0}^1 \sqrt{\log(2/\varepsilon) } d \varepsilon \leq 1.26.
\end{equation*}
Next, as in the proof of \Cref{eqn:stochastic_process_truncation}, there exists an $1/2$-net $\mathcal{M}$ satisfying $|\mathcal{M}| \leq 25$ over the two-dimensional unit sphere. Apply the inequality $\norm{z}_2 \leq 2 \sup_{v \in \mathcal{M}} \langle z, v\rangle$ \citep[see][Exercise 4.4.2]{vershynin2018high} to $z = \Delta_\lambda - \Delta_{\lambda'}$ and union bound over $v \in \mathcal{M}$ to obtain
\begin{equation*}
    \PP\left( \sup_{\lambda, \lambda' \in [0, 1]} \norm{\Delta_\lambda - \Delta_{\lambda'}} \leq C_\Delta K_{\Delta}\left( 1.26  + \delta_0 \right) + 2(\Gamma_r + \Gamma_b) 
        \mid \mathcal{A}(t) \cap \mathcal{E} \right) \geq 1 - 2 |\mathcal{M}| \exp(- \delta_0^2). 
\end{equation*}
Fix $\delta_1 \in (0, 1)$ and take $\delta_0 = \sqrt{\log\left( 2 |\mathcal{M}|/\delta_1\right)}$. For any $\delta_1 \in (0, 1)$,  
\begin{equation}\label{eqn:stochastic_process_increment_hbp}
    \PP\left( \sup_{\lambda, \lambda' \in [0, 1]} \norm{\Delta_\lambda - \Delta_{\lambda'}} \leq C_\Delta K_{\Delta}\left( 1.26  + \sqrt{\log\left( 50 /\delta_1\right)} \right) + 2(\Gamma_r + \Gamma_b)
        \mid \mathcal{A}(t) \cap \mathcal{E} \right) \geq 1 - \delta_1.
\end{equation}

\subsubsection{Anchoring increments through a choice of $\lambda_0$}
Fix $\lambda_0 \in [0, 1]$ and unit vector $u$. Define 
\begin{equation*}
    S_{\lambda_0, u}(\mathcal{S}) \coloneq \sum_{g \in \mathcal{G}} u_g \mathcal{R}_g\left( P_n, \widehat{\beta}_{\lambda_0}(\mathcal{S}) \right), \quad \Delta_{\lambda_0} \coloneq \widehat{\mathcal{R}}_{\lambda_0} - {\mathcal{R}}_{\lambda_0} \in \RR^2.
\end{equation*}
By the same bounded difference and one-sample perturbation steps as we employed to control $S_{\lambda, \lambda', u}$, we reach that, on the event $\mathcal{A}(t) \cap \mathcal{E}$, 
\begin{equation*}
    \EE\left[ \exp\left( \gamma  \left(S_{\lambda_0, u}(\mathcal{S})  - \EE[S_{\lambda_0, u}(\mathcal{S})] \right) \right) \right] \leq \exp\left( \frac{\gamma^2}{8} \norm{u}_2^2  (A^* + 2P^*)^2 \left( \frac{1}{n_r} + \frac{1}{n_b}\right)  \right).
\end{equation*}
By the same $1/2$-net argument as before, for any $\delta_2 \in (0, 1)$, there exists a constant $C > 0$ such that 
\begin{equation*}
    \PP\left( \norm{ \Delta_{\lambda_0} - \EE[\Delta_{\lambda_0} | \mathcal{A}(t) \cap \mathcal{E}]}   \leq C K_{\Delta} \sqrt{\log(50/\delta_2)} \right) \geq 1 - \delta_2. 
\end{equation*}

As for the increment terms, write 
\begin{equation*}
    \langle u, \Delta_{\lambda_0}\rangle = S_{\lambda_0, u}(\mathcal{S}) - \EE\left[ S_{\lambda_0, u}(\mathcal{S})\right] + B_{\lambda_0, u}, \quad B_{\lambda_0, u} \coloneq \EE\left[ S_{\lambda_0, u}(\mathcal{S})\right] - D_{\lambda_0, u}, \quad D_{\lambda_0, u}\coloneq \sum_{g \in \mathcal{G}} u_g \mathcal{R}_g(P, \beta_{\lambda_0}). 
\end{equation*}
Let 
\begin{equation*}
    b_{\lambda_0, g} \coloneq \EE\left[ \mathcal{R}_g(P_n, \widehat{\beta}_{\lambda_0} - \mathcal{R}_g(P, \beta_{\lambda_0}) \right] = \EE\left[ \mathcal{R}_g(P, \widehat{\beta}_{\lambda_0}) - \mathcal{R}_g(P, \beta_{\lambda_0}) \right] + \EE\left[ \mathcal{R}_g(P_n, \widehat{\beta}_{\lambda_0} - \mathcal{R}_g(P, \widehat{\beta}_{\lambda_0}) \right]. 
\end{equation*}
By Cauchy-Schwarz and \eqref{eqn:sup_empirical_population_beta_bound}, we can bound
\begin{equation*}
    b_{\lambda_0, g} \leq U_{\lambda_0, g} + 2\sqrt{U_{\lambda_0, g}}\norm{\Sigma_g^{1/2}  (\beta_{\lambda_0} - \beta_g)}_2 + \Gamma_g, \quad U_{\lambda_0, g} \coloneq {\EE\left[ \norm{\widehat{\beta}_{\lambda_0} - \beta_{\lambda_0} }_{\Sigma_g}^2 \mid \mathcal{A}(t) \cap \mathcal{E} \right]}.
\end{equation*}
On $\mathcal{A}(t)$ and under the $B$-bounded parameter assumption, 
\begin{equation*}
    \norm{\widehat{\beta}_{\lambda_0} - \beta_{\lambda_0} }_{\Sigma_g} \leq \kappa_t \norm{\Sigma_g^{1/2}} \left(\lambda_0 \norm{\widehat{\nu}_r - \nu_r}   + (1-\lambda_0) \norm{\widehat{\nu}_b - \nu_b} \right) + \kappa_t B \norm{\Sigma_g^{1/2}}  \left(\lambda_0 \norm{\widehat{\Sigma}_r - \Sigma_r}   + (1-\lambda_0) \norm{\widehat{\Sigma}_b - \Sigma_b} \right).
\end{equation*}
By the truncation bounds on $\mathcal{E}$ and Matrix Bernstein (as in, e.g., \citet[Theorem 5.6.1]{vershynin2018high}) , there exist constants $C_1, C_2> 0$ such that 
\begin{equation*}
    \EE\left[ \norm{\widehat{\Sigma}_g - \Sigma_g} \right]\leq C_1 T_X^2 \left( \sqrt{\frac{d}{n_g}} + \frac{d}{n_g} \right), \quad \EE\left[ \norm{\widehat{\nu}_g - \nu_g}\right]\leq  C_2 T_X T_Y \sqrt{\frac{d}{{n_g}}}.
\end{equation*}
Hence, on $\mathcal{A}(t) \cap \mathcal{E}$, there exists a constant $C>0$ such that 
\begin{equation}\label{eqn:stoch_process_anchor_U_bound}
    U_{\lambda_0, g} \leq C \norm{\Sigma_g^{1/2}}^2 \kappa_t^2 \left(   T_X^2 T_Y^2 + B^2  T_X^4 \right)\frac{d}{n_g}.
\end{equation}
Since $|B_{\lambda_0, u}| \leq \norm{u}_2 \norm{b_{\lambda_0}}_2$ for unit $u$ we reach that, by \eqref{eqn:stoch_process_anchor_U_bound}, there exists a constant $C >0$ such that
\begin{equation*}
    \Gamma_{\lambda_0}\coloneq \norm{b_{\lambda_0}}\leq
C \sum_{g\in\{r,b\}} \left( \kappa_t^2 \left(T_X^2T_Y^2 + B^2 T_X^4\right) 
\norm{\Sigma_g^{1/2}}^2 \frac{d}{n_g}+ 
B^2 \norm{\Sigma_g^{1/2}}^2(T_XT_Y + BT_X^2) \sqrt{\frac{d}{n_g}}
\right) + (\Gamma_r + \Gamma_b).
\end{equation*}

Combining, we obtain that, for any $\delta_2 \in (0, 1)$,  
\begin{equation}\label{eqn:stochastic_process_lambda_0}
    \PP\left( \norm{ \Delta_{\lambda_0}}   \leq C K_{\Delta} \sqrt{\log(50/\delta_2)}  + \Gamma_{\lambda_0} \mid \mathcal{A}(t) \cap \mathcal{E} \right) \geq 1 - \delta_2.  
\end{equation}

\paragraph*{Relating increment and anchor bounds for uniform bound: }
By the triangle inequality, 
\begin{equation*}
    \sup_{\lambda \in [0, 1]}\norm{\Delta_{\lambda}}_2  \leq \norm{\Delta_{\lambda_0}}_2 + \sup_{\lambda, \lambda' \in [0, 1]} \norm{\Delta_{\lambda} - \Delta_{\lambda'}}_2.
\end{equation*}
Fix $\delta \in (0, 1)$ and choose $\delta_1, \delta_2>0$ such that  $\delta_1 = \delta_2 = \delta/2$. Apply \eqref{eqn:stochastic_process_increment_hbp} and \eqref{eqn:stochastic_process_lambda_0}  with $\delta_1, \delta_2$, respectively and apply $\eqref{eqn:stoch_process_A_int_E_c_bound}$ to conclude that there exists a constant $C>0$
{\small\begin{equation*}
    \PP\left( \sup_{\lambda \in [0, 1]}\norm{\Delta_{\lambda}}_2 \leq  \Gamma_{\lambda_0} + 2(\Gamma_r + \Gamma_b) + C K_{\Delta}\left( 1.26 + \sqrt{\log(100/\delta)} \right) \right) \geq 1 - \delta - \eta_t.
\end{equation*}}

\end{document}